\documentclass[12pt,twoside]{kidgerthesis}

\usepackage{amssymb}
\usepackage{amsmath}
\usepackage{amsthm}
\newcommand{\nicefrac}[2]{\tfrac{#1}{#2}}
\usepackage{subcaption}
\usepackage{booktabs}
\usepackage{tabularx}
\usepackage{ltablex}
\usepackage{multirow}
\usepackage{arydshln}
\usepackage{thmtools}
\usepackage{pbox}
\usepackage{dsfont}
\usepackage[ruled,vlined,noend]{algorithm2e}
\usepackage{thmtools}
\usepackage{pifont}
\usepackage{ifthen}
\usepackage{intcalc}
\usepackage[backend=biber, style=alphabetic, sorting=nyt, giveninits=true]{biblatex}
\addbibresource{thesis.bib}
\usepackage{tikz}
\tikzset{>=latex}
\usepackage{pgfplots}
\pgfplotsset{compat=1.10}
\usepgfplotslibrary{fillbetween}
\usetikzlibrary{matrix, decorations.pathreplacing, calligraphy, arrows.meta, patterns}
\usepackage[raggedright]{titlesec}
\usepackage{imakeidx}
\usepackage{hyperref}
%\definecolor{mydarkblue}{rgb}{0,0.08,0.45}
\hypersetup{
	linktoc=all,
%	colorlinks=true,
%	linkcolor=mydarkblue,
%	citecolor=mydarkblue,
%	filecolor=mydarkblue,
%	urlcolor=mydarkblue
	hidelinks=true,
}
\makeindex[intoc]
\usepackage{fancyhdr}
\fancyhf{}
\cfoot{\thepage}
\lhead[\footnotesize\leftmark]{}
\rhead[]{\footnotesize\rightmark}

\hyphenation{time-stamp}

\setlength{\parskip}{6pt}
\setlength{\parindent}{0pt}

% DRAFT VERSION
%\title{On Neural Differential Equations\\(DRAFT COPY)}
%\rhead{DRAFT COPY}
\title{On Neural Differential Equations}

\author{Patrick Kidger}
\college{Mathematical Institute}

\degree{Doctor of Philosophy}
\degreedate{Trinity 2021}

\newcommand{\dd}{\mathrm{d}}
\newcommand{\reals}{\mathbb{R}}
\newcommand{\complexes}{\mathbb{C}}
\newcommand{\naturals}{\mathbb{N}}
\newcommand{\prob}{\mathbb{P}}
\newcommand{\expect}{\mathbb{E}}
\newcommand{\cov}{\mathrm{Cov}}
\newcommand{\bigO}[1]{\mathcal{O}(#1)}
\newcommand{\eval}[2]{\left.#1\right|_{#2}}
\newcommand{\abs}[1]{\left|#1\right|}
\newcommand{\norm}[1]{\left\|#1\right\|}
\newcommand{\set}[2]{\left\{#1\,\left\vert\,#2\vphantom{#1}\right\}\right.}
\newcommand{\indicator}[1]{\mathds{1}_{#1}}
\newcommand{\BV}{\mathrm{BV}}
\newcommand{\Lip}{\mathrm{Lip}}
\newcommand{\softmax}{\mathrm{softmax}}
\newcommand{\normal}[2]{\mathcal{N}\left(#1, #2\right)}
\newcommand{\uniform}[2]{\mathrm{Uniform}[#1, #2]}
\newcommand{\sigmoid}{\mathrm{sigmoid}}
\newcommand{\sig}{\mathrm{sig}}
\newcommand{\logsig}{\mathrm{logsig}}
\newcommand{\Tau}{\mathcal{T}}
\newcommand{\adj}{\Gamma}
\newcommand{\trace}{\mathrm{tr}}
\newcommand{\diag}{\mathrm{diag}}
\newcommand{\kl}[2]{\mathrm{KL}\left(#1\middle\|#2\right)}
\newcommand{\restr}[2]{{\left.\kern-\nulldelimiterspace #1 \right|_{#2}}}
\newcommand{\eye}[1]{I_{#1 \times #1}}
\newcommand{\affine}{L_b}

\newcommand{\tssize}{\omega}
\newcommand{\floor}[1]{\left\lfloor#1\right\rfloor}
\newcommand{\realpart}{\mathrm{Re}}

\newcommand{\ts}[1]{\mathcal{T\!S}\!\left(#1\right)}
\newcommand{\tsalt}[1]{\mathcal{T\!S}\!\left(#1\right)}

\newtheorem{theorem}{Theorem}[chapter]
\newtheorem{theorem-informal}[theorem]{(Informal) Theorem}

\newtheorem{lemma}[theorem]{Lemma}
\newtheorem{corollary}[theorem]{Corollary}
\theoremstyle{plain}
\newtheorem{definition}[theorem]{Definition}
\newtheorem{remark}[theorem]{Remark}
\newtheorem{example}[theorem]{Example}
\newtheorem*{notation*}{Notation}

\begin{document}

%this baselineskip gives sufficient line spacing for an examiner to easily
%markup the thesis with comments
%\baselineskip=18pt plus1pt

%set the number of sectioning levels that get number and appear in the contents
\setcounter{secnumdepth}{3}
\setcounter{tocdepth}{2}

\maketitle

\begin{romanpages}          % start roman page numbering
\begin{centeredpreamble}{Abstract}
	The conjoining of dynamical systems and deep learning has become a topic of great interest. In particular, \textit{neural differential equations} (NDEs) demonstrate that neural networks and differential equation are two sides of the same coin. Traditional parameterised differential equations are a special case. Many popular neural network architectures, such as residual networks and recurrent networks, are discretisations.
	
	NDEs are suitable for tackling generative problems, dynamical systems, and time series (particularly in physics, finance, \ldots) and are thus of interest to both modern machine learning and traditional mathematical modelling. NDEs offer high-capacity function approximation, strong priors on model space, the ability to handle irregular data, memory efficiency, and a wealth of available theory on both sides.
	
	This doctoral thesis provides an in-depth survey of the field.
	
	Topics include: neural \textit{ordinary} differential equations (e.g. for hybrid neural/mechanistic modelling of physical systems); neural \textit{controlled} differential equations (e.g. for learning functions of irregular time series); and neural \textit{stochastic} differential equations (e.g. to produce generative models capable of representing complex stochastic dynamics, or sampling from complex high-dimensional distributions).
	
	Further topics include: numerical methods for NDEs (e.g. reversible differential equations solvers, backpropagation through differential equations, Brownian reconstruction); symbolic regression for dynamical systems (e.g. via regularised evolution); and deep implicit models (e.g. deep equilibrium models, differentiable optimisation).
	
	We anticipate this thesis will be of interest to anyone interested in the marriage of deep learning with dynamical systems, and hope it will provide a useful reference for the current state of the art.
	
%	This thesis is split into five main sections: neural \textit{ordinary} differential equations (ODEs); neural \textit{controlled} differential equations (CDEs); neural \textit{stochastic} differential equations (SDEs); numerical methods for NDEs; a miscellany of connections to adjacent topics such as symbolic regression and deep implicit layers.
%	
%	On neural ODEs we will cover topics such as hybrid neural/mechanistic modelling of physical systems, or how dynamical systems may be used to inspire (discrete) neural network design.
%	
%	Neural CDEs and neural SDEs are both continuous-time analogues for RNNs. Neural CDEs may be used to model functions of time series; the continuous-time approach will allow us to easily handle irregular data. Meanwhile, neural SDEs are generative models for the time series themselves, and are capable of representing complex stochastic dynamics.
%	
%	On numerical methods, we cover both off-the-shelf numerical solvers and specially tailored \textit{reversible solvers}. We discuss how to backpropagate through NDEs via discretise-then-optimise, optimise-then-discretise, and other options besides. We solve the problem of reconstructing Brownian motion, necessary to train neural SDEs.
\end{centeredpreamble}
\addcontentsline{toc}{section}{Contents}
\tableofcontents            % generate and include a table of contents
%\listoffigures              % generate and include a list of figures
\begin{preamble}{Originality}
	\subsection*{Statement}
	The writing of this thesis is my original work. The material in this thesis is either (a) my original work either with or without collaborators, or (b) where relevant prior or concurrent work included for reference, so as to provide a survey of the field.
	
	\subsection*{Papers}
	This thesis contains material from the following papers on neural differential equations (organised chronologically):
	
	\newlength{\paperspacing}
	\setlength{\paperspacing}{5pt}
	
	\vspace{\paperspacing}
	
	\textbf{Neural Controlled Differential Equations for Irregular Time Series}\\
	Patrick Kidger, James Morrill, James Foster, Terry Lyons\\
	\textit{Neural Information Processing Systems}, 2020
	
	\vspace{\paperspacing}
	
	\textbf{``Hey, that's not an ODE'': Faster ODE Adjoints via Seminorms}\\
	Patrick Kidger, Ricky T. Q. Chen, Terry Lyons\\
	\textit{International Conference on Machine Learning}, 2021
	
	\vspace{\paperspacing}
	
	\textbf{Neural Rough Differential Equations for Long Time Series}\\
	James Morrill, Cristopher Salvi, Patrick Kidger, James Foster, Terry Lyons\\
	\textit{International Conference on Machine Learning}, 2021
	
	\vspace{\paperspacing}
	
	\textbf{Neural SDEs as Infinite-Dimensional GANs}\\
	Patrick Kidger, James Foster, Xuechen Li, Harald Oberhauser, Terry Lyons\\
    \textit{International Conference on Machine Learning}, 2021
    
	\vspace{\paperspacing}
	
	\textbf{Efficient and Accurate Gradients for Neural SDEs}\\
	Patrick Kidger, James Foster, Xuechen Li, Terry Lyons\\
	\textit{Neural Information Processing Systems}, 2021
	
	\vspace{\paperspacing}
	
	\textbf{Neural Controlled Differential Equations for Online Prediction Tasks}\\
	James Morrill, Patrick Kidger, Lingyi Yang, Terry Lyons\\
	\textit{arXiv:2106.11028}, 2021
	
	\subsection*{Open source software}
	A substantial component of my PhD has been the democratisation of neural differential equations via open-source software development. In particular I have authored or otherwise had a substantial hand in developing:

	\vspace{\paperspacing}
	
	\textbf{Diffrax}\\
	Ordinary, controlled, and stochastic differential equation solvers for JAX.\\
	\url{https://github.com/patrick-kidger/diffrax}
	
	\vspace{\paperspacing}
	
	\textbf{torchdiffeq}\\
	Ordinary differential equation solvers for PyTorch.\\
	\url{https://github.com/rtqichen/torchdiffeq}
	
	\vspace{\paperspacing}
	
	\textbf{torchcde}\\
	Controlled differential equation solvers for PyTorch.\\
	\url{https://github.com/patrick-kidger/torchcde}
	
	\vspace{\paperspacing}
	
	\textbf{torchsde}\\
	Stochastic differential equation solvers for PyTorch.\\
	\url{https://github.com/google-research/torchsde}
	
	\subsection*{Breakdown of contributions}
	My personal contributions to each paper break down as follows.
	
	For the `Neural Controlled Differential Equations for Irregular Time Series' paper. I did the entirety of this paper. James Morrill and James Foster had concurrently worked on similar ideas and were included as authors on the paper as a courtesy.
	
	For the `{``}Hey, that's not an ODE'': Faster ODE Adjoints via Seminorms' paper. I had the idea, theory, wrote the library implementation, and handled the neural CDE and Hamiltonian experiments. Ricky T. Q. Chen performed the experiments for the continuous normalising flows. The written text was joint work between both of us. (And whilst of course it does not appear in the final paper, Ricky T. Q. Chen handled most of the rebuttal.)
	
	For the `Neural Rough Differential Equations for Long Time Series' paper. Cristopher Salvi had the idea of using the log-ODE method to reduce a neural CDE to an ODE. I spotted the practical application to long time series. James Morrill implemented it. James Foster helped with the theory. The written text was joint work between me and James Morrill.
	
	For the `Neural SDEs as Infinite-Dimensional GANs' paper. I had the basic idea, basic theory, and and wrote all of the experimental code. James Foster provided the necessary knowledge of SDE numerics. Xuechen Li had already started writing (and released an early version of) the `torchsde' software library we used. Xuechen Li and I jointly performed subsequent development of the `torchsde' library to extend it for this paper. The more complete idea for the paper was fleshed out jointly in conversations between all three of us. The written text was joint work between all three of us. (Finally, I owe James Foster a debt of thanks: during the development of this paper, he kindly fielded endless questions from me on the topic of SDE numerics.)
	
	For the `Efficient and Accurate Gradients for Neural SDEs' paper. I had the idea and the theory for the Brownian Interval. I had the idea and the theory for gradient-penalty-free training of SDE-GANs. I wrote all the code for this paper. James Foster and I independently had the idea to look for an algebraically reversible SDE solver; the reversible Heun method we ended up using was due to just James Foster. Xuechen Li was included as an author as a courtesy, as the two neural SDE papers were originally intended to be published together as a single paper.
	
	For the `Neural Controlled Differential Equations for Online Prediction Tasks' paper. I had the idea and the abstract theory for this paper. James Morrill came up with cubic Hermite splines with backward differences, and handled the implementation. Lingyi Yang assisted with some datasets.
	
	In every case Terry Lyons was included on each paper as my supervisor.
	
	\subsection*{Previously unpublished}
	
	This thesis includes some previously unpublished material on various topics related to neural differential equations. (Usually on material that was only `half a paper' in size.) This includes material on symbolic regression, universal approximation, parameterisations of neural differential equations, and sensitivities of differential equations.
	
	\subsection*{Other}
	\paragraph{Papers} My PhD work has included several other papers \cite{bonnier2019deep, kidger2020deep, generalised-shapelets, generalised-signatures, signatory}, but as they cover other topics -- ranging from rough path theory to universal approximation -- they do not form a part of this thesis.
	
	\paragraph{Software} Likewise, my PhD work has included the development of several other software libraries \cite{fromfile, torchtyping, equinox, sympytorch}. These software libraries are for the Julia, PyTorch and JAX ecosystems, and offer a variety of tools such as improved import systems, rich type annotations for tensors, and the elevation of parameterised functions to first-class `PyTrees'.
	
	Once again these are not included in this thesis.
\end{preamble}
\begin{preamble}{Acknowledgements}
	A doctoral degree doesn't happen in a vacuum. Getting this far has meant the involvement of numerous people, all of whom I am incredibly fortunate to have in my life.
	
	First and foremost I would like to thank my parents, Penny and Alex. I am so, so lucky to have been raised in the environment that I was, with the opportunities you gave me. You have always been my personal champions.
	
	Mum -- I know having me go to Oxford was always a dream come true for you. Finishing this doctorate means finishing the journey of a lifetime, and it's one that you started me on. Everything I know about mathematics I learnt from you.
	
	Dad -- from electronics to electromagnetism, my fondest memories of childhood are all the time we spent together on the back of an envelope. I don't doubt where my love of this subject comes from. This thesis isn't quite one of those envelopes, but I hope it comes close.
	
	%Dad -- I remember when you dropped me off to begin the first day of my undergraduate. There were the usual hugs and handshakes and bittersweet goodbyes, but most of all what I remember was the fierce look of pride in your eyes, and I could see you were holding back tears.
	
	Truthfully, I have been drafting and redrafting what to say here, but what can compare to 25 years of unconditional support? I cannot put into words how blessed I feel to have you as my parents.
	
	Thank you to my sister Eleanor, who has always been there for me. Our 4am discussions on topics from philosophy to biology were time well spent. Your kindness inspires me to be a better person. Now -- go and get your own doctorate!
	
	I love you all.
	
	Thank you to all my friends for all the time we have spent together. There are two people who deserve to be highlighted in particular.
	
	To Chloe: thank you. You have been a constant presence in my PhD life, from start to finish. In times of crisis you have offered to make more shopping trips on my behalf than I can count. You have been the best friend a best friend can have.
	
	Thank you to Juliette: for friendship, food, and the south of France. (Where this document began.) Lockdown with you was unquestionably one of the best, and happiest, times of my life.
	
	Thank you to all of my academic collaborators: Ricky T. Q. Chen, Xuechen Li, Miles Cranmer, James Morrill, James Foster, Cristopher Salvi, Adeline Fermanian, Lingyi Yang, Patric Bonnier, and Imanol Perez Arribas.
	
	Across late nights, failed experiments, all-too-soon deadlines, and endless redrafting of a paper or rebuttal -- in a very real way, this work exists because of you.
	
	A particular thank you must go to David Duvenaud, Ben Hambly, James Foster, and Ben Walker, who diligently proofread this manuscript for errors. Thanks to their efforts many typographical mistakes and mathematical boo-boos were squashed. (As is traditional, any errors that remain are of course mine alone.)
	
	Last and certainly not least, thank you to my supervisor, Terry Lyons. Whenever I have needed your help, you have been generous with your time. Whenever I have needed something for my research, you have gone out of your way to help me obtain it. Your guidance over our many conversations has shaped me into the researcher I am today.
\end{preamble}
\end{romanpages}            % end roman page numbering

\pagestyle{fancy}
\chapter{Introduction}\label{chapter:motivation}
\section{Motivation}
We have two goals in writing this document. One: to satisfy the requirements of a PhD, by writing a thesis describing our original research. Two: to give an accessible survey of the new, rapidly developing, and in our opinion very exciting field of \textit{neural differential equations}. To the best of our knowledge this is the first survey to have been written on the topic.

We hope this will prove useful to the interested reader! Along the way we shall cover a wide variety of applications, both to classical mathematical modelling, and to typical machine learning problems.

\subsection{Getting started}
\paragraph{Prerequisites}
We will assume throughout that the reader is familiar with the basics of ODEs and with the basics of modern deep learning, but we will not assume an in-depth knowledge of either. On the basis that many of our readers may come from a traditional applied mathematics background without much exposure to deep learning, then Appendix \ref{appendix:deep} also provides a summary of the relevant deep learning concepts we shall assume. It also provides references for learning more about deep learning.

The material on neural SDEs will assume familiarity with SDEs.

Beyond these (relatively weak) assumptions, we will introduce concepts as we need them. Various parts of the text will touch on topics such as rough path theory, or numerical methods for differential equations. In each case we assume little-to-no familiarity on the part of the reader, and where necessary provide references for learning more about them.

The next chapter (on neural ODEs) makes an effort to explicitly spell out even `elementary' details such as the existence of solutions to ordinary differential equations, or the use of cross entropy as a loss function. Later chapters assume increasing levels of sophistication; it is recommended to read them in sequential order.

\paragraph{Code}
The reader interested in applying these techniques is strongly encouraged to write some example code.

Each chapter contains a few numerical examples -- usually on toy datasets for ease of understanding. The corresponding code is both available and well-documented: they can be found as the examples of the Diffrax software library \cite{diffrax}, which is written for the JAX framework \cite{jax2018github}.\index{Diffrax}

Indeed standard software libraries for solving and differentiating differential equations make working with NDEs essentially easy. These are discussed in Section \ref{section:numerical:software} (including both Diffrax and other options for other frameworks). These libraries are again well-documented and contain numerous examples.

\paragraph{Experiments}
The material here focuses on presenting the theory of NDEs; correspondingly our numerical examples will tend to be on toy datasets chosen for ease of understanding. Real world (and possibly very large scale) applications of these techniques may be found in the original papers, which are referenced in the text alongside each individual topic.

\subsection{What is a neural differential equation anyway?}

A \textit{neural differential equation} is a differential equation using a neural network to parameterise the vector field. The canonical example is a \textit{neural ordinary differential equation} \cite{neural-odes}:\vspace{-3pt}
\begin{equation*}
	y(0) = y_0 \qquad \frac{\dd y}{\dd t}(t) = f_\theta(t, y(t)).
\end{equation*}

Here $\theta$ represents some vector of learnt parameters, $f_\theta \colon \reals \times \reals^{d_1 \times \cdots \times d_k} \to \reals^{d_1 \times \cdots \times d_k}$ is any standard neural architecture, and $y \colon [0, T] \to \reals^{d_1 \times \cdots \times d_k}$ is the solution. For many applications $f_\theta$ will just be a simple feedforward network.

The central idea now is to use a differential equation solver as part of a learnt differentiable computation graph (the sort of computation graph ubiquitous to deep learning).

As a simple example, suppose we observe some picture $y_0 \in \reals^{3 \times 32 \times 32}$ (RGB and $32 \times 32$ pixels), and wish to classify it as a picture of a cat or as a picture of a dog.

We proceed by taking $y(0) = y_0$ as the initial condition of the neural ODE, and evolve the ODE until some time $T$. An affine transformation\footnote{Commonly referred to as a `linear' transformation in deep learning, although this is not technically correct in the mathematical sense of the word. An affine transformation\index{Affine transformation} takes the form $x \mapsto Wx + b$ with potentially nonzero bias $b$; a linear transformation is one for which $b=0$. The difference will occasionally be important to us so we endeavour to make the distinction.} $\ell_\theta \colon \reals^{3 \times 32 \times 32} \to \reals^2$ is then applied, followed by a softmax, so that the output may be interpreted as a length-2 tuple $(\prob(\text{picture is of a cat}), \prob(\text{picture is of a dog}))$.

This is summarised pictorially in Figure \ref{figure:computation-graph}. In conventional mathematical notation, this computation may be denoted
\begin{equation*}
	\softmax\left(\ell_\theta\left(y(0) + \int_0^T f_\theta(t, y(t)) \,\dd t\right)\right).
\end{equation*}

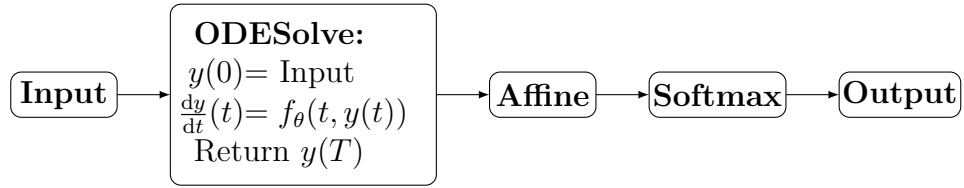
\begin{figure}
	\begin{center}
    \begin{tikzpicture}
    \draw[rounded corners] (-0.1,0.9) rectangle ++(1.4, 0.6) node[pos=.5] {\textbf{Input}};
    \draw[rounded corners] (2,0) rectangle ++(3.5, 2.4) node[pos=.5] {\begin{minipage}{8em}
    \begin{tabular}{@{}r@{}l}
        \multicolumn{2}{l}{\textbf{ODESolve:}}\\
        $y(0)$ &$=$ Input\\
        $\frac{\dd y}{\dd t}(t)$ &$= f_\theta(t, y(t))$\\
        \multicolumn{2}{l}{Return $y(T)$}
    \end{tabular}
    \end{minipage}};
    \draw[rounded corners] (6.2,0.9) rectangle ++(1.4, 0.6) node[pos=.5] {\textbf{Affine}};
    \draw[rounded corners] (8.3,0.9) rectangle ++(1.8, 0.6) node[pos=.5] {\textbf{Softmax}};
    \draw[rounded corners] (10.8,0.9) rectangle ++(1.6, 0.6) node[pos=.5] {\textbf{Output}};
    
    \draw[->] (1.3, 1.2) -- (2, 1.2);
    \draw[->] (5.5, 1.2) -- (6.2, 1.2);
    \draw[->] (7.6, 1.2) -- (8.3, 1.2);
    \draw[->] (10.1, 1.2) -- (10.8, 1.2);
    \end{tikzpicture}
\end{center}
	\caption{Computation graph for a simple neural ODE.}\label{figure:computation-graph}
\end{figure}

The parameters of the model are $\theta$. The computation graph may be backpropagated through and trained via stochastic gradient descent in the usual way. We will discuss how to backpropagate through an ODE solve in Section \ref{section:numerical:adjoint-ode}.

In total, then: there is a neural network $f_\theta$, embedded in a differential equation for $y$, embedded in a neural network (the overall computation graph).

\subsection{A familiar example}
%\paragraph{The SIR model}
\index{SIR model}
A potentially familiar example of a `neural' differential equation is the classic SIR model:
\begin{equation*}
	\frac{\dd}{\dd t} \begin{pmatrix}s(t)\\i(t)\\r(t)\end{pmatrix} = \begin{pmatrix}-b\,s(t)\,i(t)\\b\,s(t)\,i(t) - k\,i(t)\\k\,i(t)\end{pmatrix}.
\end{equation*}
This is used in mathematical epidemiology to describe the spread of a disease within a population.\footnote{A rather topical choice, with this thesis having being prepared during the global Covid-19 pandemic.} The quantity $s$ represents the susceptible (uninfected) portion of the population, the quantity $i$ represents the infected portion of the population, and the quantity $r$ represents the removed (recovered or deceased) portion of the population.

The vector field is theoretically derived, with parameters $b$ and $k$ describing the infectivity and the (recovery + mortality) rates respectively.

The right hand side may be regarded as a particular differentiable computation graph:

\begin{center}
\begin{tikzpicture}
\draw[dashed] (-2.2, -1.1) rectangle ++(3, 1.9) node[pos=.5] {\small\hspace{-3em} Inputs};
\draw[rounded corners=3pt] (0, 0) rectangle ++(0.7, 0.7) node[pos=.5] {$s(t)$};
\draw[rounded corners=3pt] (0, -1) rectangle ++(0.7, 0.7) node[pos=.5] {$i(t)$};

\draw[dashed] (-2.2, -3.1) rectangle ++(3, 1.9) node[pos=.5] {\small\hspace{-2.8em} Parameters};
\draw[rounded corners=3pt] (0, -2) rectangle ++(0.7, 0.7) node[pos=.5] {$b$};
\draw[rounded corners=3pt] (0, -3) rectangle ++(0.7, 0.7) node[pos=.5] {$k$};

\draw[rounded corners=3pt] (1, -0.5) rectangle ++(0.7, 0.7) node[pos=.5] {$\times$};
\draw[rounded corners=3pt] (2, -0.5) rectangle ++(0.7, 0.7) node[pos=.5] {$\times$};
\draw[rounded corners=3pt] (3, -0.5) rectangle ++(0.7, 0.7) node[pos=.5] {$-$};

\draw[-{Latex[length=1.5mm,width=1.5mm]}] (0.7, 0.3) -- (1, -0.15);
\draw[-{Latex[length=1.5mm,width=1.5mm]}] (0.7, -0.7) -- (1, -0.15);
\draw[-{Latex[length=1.5mm,width=1.5mm]}] (1.7, -0.15) -- (2, -0.15);
\draw[-{Latex[length=1.5mm,width=1.5mm]}] (0.7, -1.65) -- (2, -0.2);
\draw[-{Latex[length=1.5mm,width=1.5mm]}] (2.7, -0.15) -- (3, -0.15);
\draw[-{Latex[length=1.5mm,width=1.5mm]}] (0.7, -0.65) -- (2, -1.65);
\draw[-{Latex[length=1.5mm,width=1.5mm]}] (0.7, -2.65) -- (2, -1.65);

\draw[-{Latex[length=1.5mm,width=1.5mm]}] (2.7, -0.15) -- (3, -0.9);
\draw[-{Latex[length=1.5mm,width=1.5mm]}] (2.7, -1.65) -- (3, -0.9);

\draw[-{Latex[length=1.5mm,width=1.5mm]}] (3.7, -0.15) -- (4.5, -0.15);
\draw[-{Latex[length=1.5mm,width=1.5mm]}] (3.7, -0.9) -- (4.5, -1.15);
\draw[-{Latex[length=1.5mm,width=1.5mm]}] (2.7, -1.65) -- (4.5, -2.15);

\draw[rounded corners=3pt] (2, -2) rectangle ++(0.7, 0.7) node[pos=.5] {$\times$};

\draw[rounded corners=3pt] (3, -1.25) rectangle ++(0.7, 0.7) node[pos=.5] {$-$};

\draw[rounded corners=3pt] (4.5, -0.5) rectangle ++(2.9, 0.7) node[pos=.5] {$-bs(t)i(t)$};
\draw[rounded corners=3pt] (4.5, -1.5) rectangle ++(2.9, 0.7) node[pos=.5] {$bs(t)i(t) - ki(t)$};
\draw[rounded corners=3pt] (4.5, -2.5) rectangle ++(2.9, 0.7) node[pos=.5] {$ki(t)$};
\draw[dashed] (4.4, -2.6) rectangle ++(4.7, 2.9) node[pos=.5] {\small\hspace{7.5em} Outputs};
\end{tikzpicture}
\end{center}

The parameters may be fitted by setting up a loss between the trajectories of the model and the observed trajectories in the data, backpropagating through the model, and applying stochastic gradient descent.

This is precisely the same procedure as the more general neural ODEs we introduced earlier. At first glance, the NDE approach of `putting a neural network in a differential equation' may seem unusual, but it is actually in line with standard practice. All that has happened is to change the parameterisation of the vector field.

%\paragraph{The Heston model}\index{Heston model}
%Such examples of parameterised differential equations are of course ubiquitous throughout mathematical modelling. As another example, we may consider the Heston model
%    \begin{align*}
%	\dd S(t) &= \mu S(t) \,\dd t + \sqrt{\nu(t)}S(t)\,\dd W_1(t),\\
%	\dd \nu(t) &= \kappa(\theta - \nu(t)) \,\dd t + \xi\sqrt{\nu(t)}\,\dd W_2(t).
%\end{align*}
%This is an SDE with trainable parameters $\mu, \kappa, \theta, \xi$, and is used within mathematical finance to model the evolution of asset prices $S$. Much the same story now applies as with the SIR model.

\subsection{Continuous-depth neural networks}\label{section:motivation:resnet-is-node}
We have just seen how neural differential equations may be approached via traditional mathematical modelling. They may also be arrived at via modern deep learning.

Recall the formulation of a residual network \cite{resnet}\index{Residual networks}:
\begin{equation}\label{eq:resnet}
	y_{j + 1} = y_j + f_\theta(j, y_j),
\end{equation}
where $f_\theta(j, \,\cdot\,)$ is the $j$-th residual block. (The parameters of all blocks are concatenated together into $\theta$.)

Now recall the neural ODE
\begin{equation*}
	\frac{\dd y}{\dd t}(t) = f_\theta(t, y(t)).
\end{equation*}
Discretising this via the explicit Euler method at times $t_j$ uniformly separated by $\Delta t$ gives
\begin{equation*}
	\frac{y(t_{j + 1}) - y(t_j)}{\Delta t} \approx \frac{\dd y}{\dd t}(t_j) = f_\theta(t_j, y(t_j)),
\end{equation*}
so that
\begin{equation*}
	y(t_{j + 1}) = y({t_j}) + \Delta t f_\theta(t_j, y(t_j)).
\end{equation*}
Absorbing the $\Delta t$ into the $f_\theta$, we recover the formulation of equation \eqref{eq:resnet}.

Having made this observation -- that neural ODEs are the continuous limit of residual networks -- we may be prompted to start making other connections.

It transpires that the key features of a GRU \cite{gru} or an LSTM \cite{lstm}, over generic recurrent networks, are updates rules that look suspiciously like discretised differential equations (Chapter \ref{chapter:neural-cde}). StyleGAN2 \cite{stylegan2} and (score based) diffusion models \cite{song2021scorebased} are simply discretised SDEs (Chapter \ref{chapter:neural-sde}). Coupling layers in invertible neural networks \cite{invertible-residual-networks} turn out to be related to reversible differential equation solvers (Chapter \ref{chapter:numerical}). And so on.

By coincidence (or, as the idea becomes more popular, by design) many of the most effective and popular deep learning architectures resemble differential equations. Perhaps we should not be surprised: differential equations have been the dominant modelling paradigm for centuries; they are not so easily toppled.

\subsection{An important distinction}\index{Physics-informed neural network}
There has been a line of work on obtaining numerical approximations to the solution $y$ of an ODE $\nicefrac{\dd y}{\dd t} = f(t, y(t))$ by representing the solution as some neural network $y = y_\theta$.

Perhaps $f$ is known, and the model $y_\theta$ is fitted by minimising a loss function of the form 
\begin{equation}\label{eq:motivation:_pinn}
\min_\theta \frac{1}{N}\sum_{i=1}^N\norm{\frac{\dd y_\theta}{\dd t}(t_i) - f(t_i, y_\theta(t_i))}
\end{equation}
for some points $t_i \in [0, T]$. As such each solution to the differential equation is obtained by solving an optimisation problem. This has strong overtones of collocation methods or finite element methods. This is a popular line of work; see for example \cite{lagaris2, lagaris1, Han8505, NEURIPS2018_d7a84628, JMLR:v19:18-046, PhysRevD.100.016002, RAISSI2019686, Fang2019, zubov2021neuralpde} amongst many others.

This is known as a physics-informed neural network (PINN). PINNs are effective when generalised to some PDEs, in particular nonlocal or high-dimensional PDEs, for which traditional solvers are computationally expensive. (Although in most regimes traditional solvers are still the more efficient choice.) \cite{zubov2021neuralpde} provide an overview.

However, we emphasise that \textit{this is a distinct notion to neural differential equations}. NDEs use neural networks to \textit{specify} differential equations. Equation \eqref{eq:motivation:_pinn} uses neural networks to \textit{obtain solutions to prespecified} differential equations. This distinction is a common point of confusion, especially as the PDE equivalent of \eqref{eq:motivation:_pinn} is sometimes referred to as a `neural partial differential equation'.

\section{The case for neural differential equations}\label{section:motivation:applications}
\subsection{Applications}
To this author's knowledge, there are four main applications for neural differential equations:

\paragraph{Physical (financial, biological, \ldots) modelling}
Mechanistic theory-driven differential equation models are already ubiquitous in classical mathematical modelling. However, such theory-driven models will at some point fail to capture the details of reality. By combining existing models with deep learning (with its high-capacity function approximators), we may close the gap between theory and observation.

\paragraph{Time series}
Messy or irregular data is ubiquitous in time series. Different channels may be observed at different frequencies, data may be missing, time series may be of variable lengths, and so on. Treating discrete data in a continuous-time regime offers a way to treat irregular data on the same footing as `regular' data.

Connections to topics such as system identification and reinforcement learning may also be made here, although they will not feature heavily in the present work.

\paragraph{Generative modelling}
Generative modelling studies how to model some target distribution $\nu$, from which typically we only have samples. The usual framework is to pick a `friendly' distribution $\mu$, and then learn a map $F$ such that (the pushforward) $F(\mu)$ approximates the target distribution $\nu$.

It transpires that effective choices for $F$ are derived from differential equations. For example with continuous normalising flows then $\mu$ may be a normal distribution (Section \ref{section:ode:cnf}); in the case of a neural SDE then $\mu$ may be (the law of) a Brownian motion (Chapter \ref{chapter:neural-sde}).
	
\paragraph{Inspiration}
Traditional `discrete' deep learning is widely applicable, and rightly so. We have already seen the parallels between differential equations and deep learning: a highly successful strategy for the development of deep learning models is simply to take the appropriate differential equation, and then discretise it.

%\paragraph{Other use cases?}
%There have been some application of `continuous' neural differential equations (that is to say, not one of the discrete approximations they inspire) to topics such as image classification. This is certainly possible: parameterise $f_\theta$ as a convolutional neural network, take $y(t) \in \reals^{3 \times 32 \times 32}$ (an RGB image of $32\times32$ pixels), and proceed as in Figure \ref{figure:computation-graph}.
%
%However, it is (at least in this author's opinion) not clear what the benefit of this approach is over a standard convolutional residual network, so we shall discuss such applications sparingly.

\subsection{Advantages}
In summary, neural differential equations offers a best-of-both-worlds approach.

The neural network-like structure offers high-capacity function approximation and easy trainability. 

The differential equation-like structure offers strong priors on model space, memory efficiency, and theoretical understanding via a well-understood and battle-tested literature.

Relative to the classical differential equation literature, neural differential equations have essentially unprecedented modelling capacity. Relative to the modern deep learning literature, neural differential equations offer a coherent theory of `what makes a good model'.

\section{A note on history}
Practically speaking, the \textit{topic} of neural differential equations become a \textit{field} only a few years ago, starting with the explosion of interest following \cite{neural-odes}; other prominent recent work also includes \cite{E2017, haber-ruthotto}.

However, many of the basic ideas can be found in substantially older literature, often from the 1990s. For example in \cite{historical-1}, a neural ODE is trained to match the dynamics of a chemical reaction, using an MLP for the vector field. Meanwhile the basics of learning a controlled dynamical system are given in \cite{historical-2}. \cite{historical-3} consider hybridising neural ODEs with traditional theory-driven mechanistic modelling, and \cite{historical-4} use implicit integrators in conjunction with neural ODEs to learn stiff dynamical systems.

This list of examples is by no means exhaustive. %By modern standards these examples would be considered elementary; a statement meant without criticism.
The above references are all short and make for easy reading, so the curious reader is encouraged to look them up.
\chapter{Neural Ordinary Differential Equations}\label{chapter:neural-ode}
\section{Introduction}\label{section:neural-ode-overview}
By far the most common neural differential equation is a neural ODE \cite{neural-odes}:
\begin{equation}\label{eq:neural-ode}
	y(0) = y_0 \qquad \frac{\dd y}{\dd t}(t) = f_\theta(t, y(t)),
\end{equation}
where $y_0 \in \reals^{d_1 \times \ldots \times d_k}$ is an any-dimensional tensor, $\theta$ represents some vector of learnt parameters, and $f_\theta \colon \reals \times \reals^{d_1 \times \ldots \times d_k} \to \reals^{d_1 \times \ldots \times d_k}$ is a neural network. Typically $f_\theta$ will be some standard simple neural architecture, such as a feedforward or convolutional network.

\subsection{Existence and uniqueness}\index{Existence}\index{Uniqueness} The first question typically asked (at least by mathematicians) is about existence and uniqueness of a solution to equation \eqref{eq:neural-ode}. This is straightforward. Provided $f_\theta$ is Lipschitz -- something which is typically true of a neural network, which is usually a composition of Lipschitz functions -- then Picard's existence theorem \cite[Theorem 110C]{butcher2016numerical} applies:

\begin{theorem}[Picard's Existence Theorem]\label{theorem:picard-ode}\index{Picard's existence theorem}
	Let $f \colon [0, T] \times \reals^d \to \reals^d$ be continuous in $t$ and uniformly Lipschitz\footnote{That is, it is Lipschitz in $y$ and the Lipschitz constant is independent of $t$: there exists $C > 0$ such that for all $t, y_1, y_2$ then $\norm{f_\theta(t, y_1) - f_\theta(t, y_2)} \leq C \norm{y_1 - y_2}$.} in $y$. Let $y_0 \in \reals^d$. Then there exists a unique differentiable $y \colon [0, T] \to \reals^d$ satisfying
	\begin{equation*}
			y(0) = y_0 \qquad \frac{\dd y}{\dd t}(t) = f(t, y(t)).
	\end{equation*}
\end{theorem}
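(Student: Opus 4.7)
The plan is to convert the differential equation into a fixed-point problem for an integral operator and then invoke the Banach contraction mapping theorem. By the fundamental theorem of calculus, a continuously differentiable $y$ satisfies the initial value problem if and only if it satisfies the integral equation
\begin{equation*}
    y(t) = y_0 + \int_0^t f(s, y(s))\,\dd s.
\end{equation*}
This motivates defining the Picard operator $\Phi \colon C([0,T];\reals^d) \to C([0,T];\reals^d)$ by $(\Phi y)(t) = y_0 + \int_0^t f(s, y(s))\,\dd s$. Continuity of $f$ in $t$ plus Lipschitzness in $y$ ensures $\Phi y$ is well-defined and continuous, so $\Phi$ maps the space into itself.

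Next I would equip $C([0,T];\reals^d)$ with the Bielecki norm $\norm{y}_\lambda = \sup_{t\in[0,T]} e^{-\lambda t}\norm{y(t)}$ for some $\lambda > C$, where $C$ is the Lipschitz constant of $f$. This norm is equivalent to the usual sup norm, so the space is still Banach. The key calculation is then
\begin{equation*}
    e^{-\lambda t}\norm{(\Phi y_1)(t) - (\Phi y_2)(t)} \leq e^{-\lambda t}\int_0^t C \norm{y_1(s) - y_2(s)}\,\dd s \leq \frac{C}{\lambda}\norm{y_1 - y_2}_\lambda,
\end{equation*}
after pulling $e^{-\lambda s}$ in and out of the integral. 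Taking the supremum over $t$ yields $\norm{\Phi y_1 - \Phi y_2}_\lambda \leq (C/\lambda)\norm{y_1 - y_2}_\lambda$, which is a strict contraction since $C/\lambda < 1$.

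By the Banach fixed-point theorem, $\Phi$ has a unique fixed point $y \in C([0,T];\reals^d)$, which solves the integral equation and therefore, by differentiating through the upper limit of the integral (valid since $s \mapsto f(s, y(s))$ is continuous), solves the ODE. Uniqueness follows from uniqueness of the fixed point, since any differentiable solution of the ODE must satisfy the integral equation.

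The main obstacle I would watch out for is getting a \emph{global} solution on all of $[0,T]$ in one go, rather than just a local solution that then needs to be extended by iterating the argument. The Bielecki-norm trick side-steps this cleanly: by paying a factor of $e^{-\lambda t}$ we absorb any blow-up in the Grönwall-type estimate and obtain a contraction on the full interval at once, which is why it is preferable here to the more elementary small-interval argument. The only other thing to check is that $\Phi$ genuinely maps continuous functions to continuous functions, which is a standard application of dominated convergence given that $f$ is continuous and locally bounded.
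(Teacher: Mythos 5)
The paper does not prove this theorem; it cites it directly to Butcher's textbook \cite[Theorem 110C]{butcher2016numerical}, so there is no in-paper proof to compare against. Your argument is nonetheless a correct, standard proof of the Picard--Lindel{\"o}f theorem via the Banach fixed-point theorem, and the Bielecki-norm trick is the cleaner of the usual routes to a global solution on $[0,T]$ in a single pass. (The alternatives are to obtain a local solution and extend it by iteration, or to show that some iterate $\Phi^n$ is a contraction on the unweighted sup norm via the $C^n T^n / n!$ estimate; your weighted norm absorbs the same factorial decay into the exponential weight.) Two small points. First, the final continuity check for $\Phi$ does not need dominated convergence: continuity of $f$ in $t$ together with the uniform Lipschitz condition in $y$ imply joint continuity of $f$, so for continuous $y$ the integrand $s \mapsto f(s, y(s))$ is continuous and $\Phi y$ is in fact $C^1$ directly by the fundamental theorem of calculus. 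Second, the statement asks only for a differentiable $y$, whereas you work with the integral equation assuming $C^1$; there is no gap, because any merely differentiable solution of the ODE has derivative $y' = f(\cdot, y(\cdot))$, which is continuous by the same joint-continuity observation, so such a $y$ is automatically $C^1$ and satisfies the integral equation.
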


\subsection{Evaluation and training}
As compared to models that are not differential equations, there are two extra concerns that must generally be kept in mind.

First, we must be able to obtain numerical solutions to the differential equation. (An analytic solution will essentially never be available.) Second, we must be able to backpropagate through the differential equation, to obtain gradients for its parameters $\theta$.

Software for performing these tasks is now standardised (Section \ref{section:numerical:software}), so we are free to focus on the task of constructing the model architecture itself. A more in-depth look at evaluation and backpropagation is given in Chapter \ref{chapter:numerical}.

\section{Applications}
\subsection{Image classification}\label{section:image-classification}\label{section:ode:image-classification}\index{Image classification}
Image classification with CNNs is nearly everybody's first introduction to deep learning. It is a natural place to start discussing neural differential equations too.

\paragraph{Dataset} Suppose we observe some images, represented as a 3-dimensional tensor $\reals^{3 \times 32 \times 32}$, corresponding to channels (red, green, blue), height (32 pixels), and width (32 pixels) respectively. Suppose each image has a corresponding class label in $\reals^{10}$, corresponding to a one-hot encoding of what the image is a picture of: perhaps aeroplane, car, bird, cat, deer, dog, frog, horse, ship or lorry.

\paragraph{Model} Let $f_\theta \colon \reals \times \reals^{3 \times 32 \times 32} \to \reals^{3 \times 32 \times 32}$ be a convolutional neural network, and let $\ell_\theta \colon \reals^{3 \times 32 \times 32} \to \reals^{10}$ be affine.

Then we may define an image classification model as
\begin{align*}
	\phi &\colon \reals^{3 \times 32 \times 32} \to \reals^{10},\\
	\phi &\colon y_0 \mapsto \softmax(\ell_\theta(y(T))),
\end{align*}
where $y \colon [0, T] \to \reals^{3 \times 32 \times 32}$ solves
\begin{equation*}
	y(0) = y_0, \qquad \frac{\dd y}{\dd t}(t) = f_\theta(t, y(t)).
\end{equation*}

\paragraph{Loss function} 
By using an appropriate loss function (cross entropy) between this output and the true label, we may train this model so that its output is the probability that the input image is of each of these classes.

Explicitly: given a dataset of images $a_i \in \reals^{3 \times 32 \times 32}$ with corresponding labels $b_i \in \reals^{10}$, for samples $i=1,\ldots,N$, we may minimise the cross-entropy
\begin{equation*}
	-\frac{1}{N}\sum_{i = 1}^N b_i \cdot \log \phi(a_i)
\end{equation*}
by training $\theta$, where $\cdot$ denotes a dot product and $\log$ is taken elementwise.

\paragraph{This example is an example only.} In practice, for applications such as image classification there is usually little to be gained by using a continuous-time model. Traditional residual networks (that is, explicitly discretised neural ODEs) are simply easier to work with.

As such this example is an example only. We do not actually suggest using neural ODEs for this task, for which standard neural networks are likely to be superior.

\paragraph{The manifold hypothesis}\index{Manifold hypothesis} Neural ODEs interact elegantly with the manifold hypothesis (that the data lies on or near some low-dimensional manifold embedded in the higher-dimensional feature space; Appendix \ref{appendix:deep:manifold-hypothesis}). The ODE describes a flow along which to evolve the data manifold.

\subsection{Physical modelling with inductive biases}\label{section:ode:hybrid}\index{Inductive biases}
Endowing a model with any known structure of a problem is known as giving the model an \textit{inductive bias}. `Soft' biases through penalty terms are one common example. `Hard' biases through explicit architectural choices are another.

Physical problems often have known structure, and so a common theme has been to build in inductive biases by hybridising neural networks into this structure. It is this author's prediction that this will shortly become a standard technique in the toolbox of applied mathematical modelling. (If, arguably, it isn't already.)

%There are many variations -- common keywords include universal differential equations, Hamiltonian neural networks, Lagrangian neural networks, physics-informed neural networks, and so on.% In the interests of brevity we shall not discuss every case in detail.

\subsubsection{Universal differential equations}\label{section:ode:ude}\index{Universal differential equations}

% Parameterised differential equations are the mainstay of engineering and applied mathematics departments the world over.
\index{Lotka--Volterra model}
Consider the Lotka-Volterra model, which is a well known approach for modelling the interaction between a predator species and a prey species:
\begin{align}
	\frac{\dd x}{\dd t}(t) = \alpha x(t) - \beta x(t) y(t) \in \reals,\nonumber\\
	\frac{\dd y}{\dd t}(t) = -\gamma x(t) + \delta x(t) y(t) \in \reals.\label{eq:lotka-volterra}
\end{align}
Here, $x(t) \in \reals$ and $y(t) \in \reals$ represent the size of the population of the prey and predator species respectively, at each time $t \in [0, T]$. The right hand side is theoretically constructed, representing interactions between these species.

This theory will not usually be perfectly accurate, however. There will be some gap between the theoretical prediction and what is observed in practice. To remedy this, and letting $f_\theta, g_\theta \colon \reals^2 \to \reals$ be neural networks, we may instead consider the model
\begin{align}
	\frac{\dd x}{\dd t}(t) = \alpha x(t) - \beta x(t) y(t) + f_\theta(x(t), y(t)) \in \reals,\nonumber\\
	\frac{\dd y}{\dd t}(t) = -\gamma x(t) + \delta x(t) y(t) + g_\theta(x(t), y(t)) \in \reals,\label{eq:lotka-volterra2}
\end{align}
in which an existing theoretical model is augmented with a neural network correction term.

We broadly refer to this approach as a \textit{universal differential equation}, a term due to \cite{rackauckas2020universal}.\footnote{There is little unified terminology here. Other authors have considered essentially the same idea under other names; conversely \cite{rackauckas2020universal} additionally consider variations and extensions to SDEs, PDEs, and so on.}

\paragraph{Loss function and training} Suppose we observe data $x_{i}(t_j) \in \reals$, $y_{i}(t_j) \in \reals$, where $i=1,\ldots,N$ denote independent observations of the target process (from different initial conditions) and $j=1,\ldots,M$ correspond to different times $t_j \in [0, T]$, with $t_1 = 0$. In practice we may only have $N=1$, which may be sufficient provided $M$ is large enough.

For either \eqref{eq:lotka-volterra} or \eqref{eq:lotka-volterra2}, let $x_{x_0, y_0}(t)$ denote $x(t)$ given initial condition $x(0) = x_0$ and $y(0) = y_0$. Similarly for $y_{x_0, y_0}(t)$.

Then we may fit both \eqref{eq:lotka-volterra} and \eqref{eq:lotka-volterra2} in precisely the same way: stochastic gradient descent with respect to the loss function
\begin{equation*}
	\frac{1}{NM}\sum_{i=1}^N\sum_{j=1}^M (x_{x_i(0), y_i(0)}(t_j) - x_{i}(t_j))^2 + (y_{x_i(0), y_i(0)}(t_j) - y_{i}(t_j))^2.
\end{equation*}

In switching from \eqref{eq:lotka-volterra} to \eqref{eq:lotka-volterra2}, then no fundamental part of the modelling procedure has changed.

\begin{remark}
	The above presentation implicitly assumes that the locations of the observations $t_j$ were the same for both $x$ and $y$, and were the same for all training samples. This is just for simplicity of presentation and is not necessary in general.
\end{remark}

\paragraph{High capacity function approximation} By switching from \eqref{eq:lotka-volterra} to \eqref{eq:lotka-volterra2}, the high-capacity function approximation provided by the neural networks $f_\theta$, $g_\theta$ offers a way to close the gap between theory and practice. The neural network may be used to model the residual between the theoretical and the observed data.

The use of a neural network is an admission that \textit{there is behaviour we do not understand}: but through this augmentation, we can at least model.

These networks will frequently be very small by the standards of deep learning: \cite{ling_kurzawski_templeton_2016} consider a feedforward network of 10 layers each of width 10. \cite{rackauckas2020universal} consider feedforward networks of width 32 and a single hidden layer.

\paragraph{Use cases} This approach becomes natural whenever one is attempting to model complex poorly understood behaviour, and for which there is sufficient data that the theoretical model clearly falls short.

Derivation of closure relations is a neat example. In this case, the differential equation features a term that lacks a precise theoretical description (representing the effects over scales smaller that the numerical solver can resolve), so the strategy becomes to approximate this term with a neural network, and learn this term from data.

Turbulence modelling is a popular example of this. In a Reynolds-averaged Navier Stokes model, \cite{ling_kurzawski_templeton_2016} approximate the closure relation (the Reynolds stresses) using a neural network carefully designed to satisfy certain physical invariances. See also the substantial follow-up literature: \cite{turb1, turb2, turb3} and so on. Meanwhile as part of a climate model for the ocean, \cite{ramadhan2020capturing} model a closure relation (for turbulent vertical heat flux) using a small MLP.

\paragraph{How to train your UDE} Training \eqref{eq:lotka-volterra2} directly (via gradient descent) may not produce an interpretable model. The parameters $\alpha, \beta, \gamma, \delta$ may not necessarily correspond to their usual quantities, if the neural network has modelled some part of the behaviour as well.

One resolution is to fit \eqref{eq:lotka-volterra} first, use its parameters to initialise $\alpha, \beta, \gamma, \delta$ in \eqref{eq:lotka-volterra2}, and then train only the network parameters $\theta$. This will ensure that the neural network only fits the residual between the theoretical model and the observed data.

Another option is to regularise the norm of the neural network \cite{yin2021augmenting}, so that it is used only when necessary.

Another concern when training is that the model may become stuck in a local minimum. (Because the neural networks used with UDEs are often very small.) This may be mitigated by training on the first proportion of a time series (say the first 10\%) before training on the whole time series; more generally setting some `length schedule'\index{Length schedule} that uses an increasing fraction of the time series as training progresses.

\subsubsection{Hamiltonian neural networks}\label{section:ode:hamiltonian}\index{Hamiltonian neural networks}
Another approach is to suppose that the observed dynamics evolve according to a Hamiltonian system; a realistic assumption for many physical systems. With respect to some known canonical coordinates $q, p \in \reals^d$ and an unknown Hamiltonian function $H \colon \reals^d \times \reals^d \to \reals$, the system is assumed to evolve according to
\begin{align*}
	\frac{\dd q}{\dd t}(t) &= \frac{\partial H}{\partial p}(p(t), q(t)),\\
	\frac{\dd p}{\dd t}(t) &= -\frac{\partial H}{\partial q}(p(t), q(t)).
\end{align*}
By parameterising $H = H_\theta$ as some general neural network (for example just an MLP), this system may be learnt much like a universal differential equation -- in this case, the inductive bias is encoded through the use of a Hamiltonian-derived vector field, rather than explicit inclusion of known terms \cite{greydanus2019hamiltonian}.

\paragraph{Parameterisations of the Hamiltonian}
The Hamiltonian itself could be parameterised as an unstructured neural network, like an MLP. Alternatively one can go further, by parameterising the Hamiltonian according to kinetic and potential energy
\begin{equation*}
	H_\theta(q, p) = \frac{1}{2}p^\top M_\theta^{-1}(q)p + V_\theta(q),
\end{equation*}
where now $M_\theta$ is a learnt positive-definite mass matrix, and $V_\theta$ is a learnt potential energy \cite{symode2, Zhong2020Symplectic}.

\paragraph{Control terms}
Encoding this minimal amount of prior knowledge also makes available tools from classical dynamics. For example, we may suppose that the system responds to a control term $\beta$ according to
\begin{align*}
	\frac{\dd q}{\dd t}(t) &= \frac{\partial H}{\partial p}(p(t), q(t)),\\
	\frac{\dd p}{\dd t}(t) &= -\frac{\partial H}{\partial q}(p(t), q(t)) + g_\theta(q)\beta(q),
\end{align*}
where $g_\theta$ is some neural network. After the system has been learnt from data, then controllers may be synthesised from this description \cite{Zhong2020Symplectic}.

\subsubsection{Lagrangian neural networks}\index{Lagrangian neural networks}
One weakness of the Hamiltonian approach is that it assumes knowledge of the canonical coordinates $q, p$. In general our observed data from a dynamical system may not match up against this canonical structure.

An alternative is to instead parameterise the Lagrangian. Given positions $q \in \reals^d$ and velocities $\dot{q} = \nicefrac{\dd q}{\dd t} \in \reals^d$, a Lagrangian is parameterised as some neural network function of them both, $\mathcal{L}_\theta(q, \dot{q})$. The Euler--Lagrange equations state that a system with Lagrangian $\mathcal{L}_\theta$ evolves according to
\begin{equation*}
	\frac{\dd}{\dd t}\frac{\partial \mathcal{L}_\theta}{\partial \dot{q}} = \frac{\partial \mathcal{L}_\theta}{\partial q}.
\end{equation*}
Rearranging, we may obtain
\begin{equation*}
	\ddot{q} = \left(\frac{\partial^2 \mathcal{L}_\theta}{\partial^2\dot{q}}\right)^{-1} \left(\frac{\partial \mathcal{L}_\theta}{\partial q} - \frac{\partial^2 \mathcal{L}_\theta}{\partial q \partial \dot{q}}\dot{q}\right),
\end{equation*}
where $\nicefrac{\partial^2 \mathcal{L}_\theta}{\partial^2\dot{q}}$ is a Hessian and so $(\nicefrac{\partial^2 \mathcal{L}_\theta}{\partial^2\dot{q}})^{-1}$ denotes a matrix inverse. Once again this defines a dynamical system which may be fitted directly to data as described for universal differential equations. See \cite{cranmer2020lagrangian}.

\subsection{Continuous normalising flows}\label{section:ode:cnf}\index{Continuous normalising flows}
We now switch from supervised learning to unsupervised learning. Suppose we observe some distribution $\prob$ with a density $\pi$ over some state space $\reals^{d_1 \times \cdots \times d_k}$. We wish to learn an approximation to $\prob$.

For example we may have $\reals^{d_1 \times \cdots \times d_k} = \reals^{3 \times 32 \times 32}$, and $\prob$ may denote a probability distribution over `pictures of cats', from which we have empirical samples. By learning a generative model approximating $\pi$, we may produce synthetic pictures of cats. (An important task.)

Let $d = \prod_{m=1}^k d_m$ and for simplicity we replace $\reals^{d_1 \times \cdots \times d_k}$ with $\reals^d$.

Consider the random neural ODE defined by
\begin{equation}\label{eq:ode:cnf}
	y(0) \sim \normal{0}{\eye{d}},\qquad \frac{\dd y}{\dd t}(t) = f_\theta(t, y(t)) \text{ for $t \in [0, T]$.}
\end{equation}
We seek to train this model such that the distribution of $y(1)$ (induced by the pushforward of $y(0) \sim \normal{0}{\eye{d}}$ by $y(0) \mapsto y(1)$) is approximately $\prob$. This is called a continuous normalising flow (CNF) \cite{neural-odes, ffjord}. See Figure \ref{fig:cnf-abstract}.

\begin{figure}
	\centering
	\includegraphics[width=0.5\linewidth]{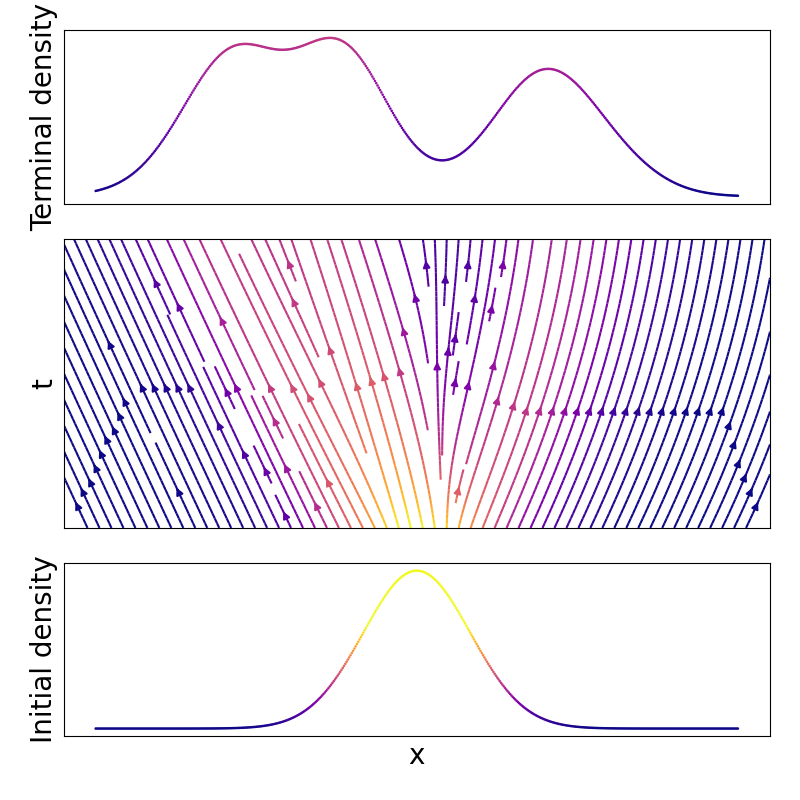}
	\caption{A continuous normalising flow continuously deforms one distribution into another distribution. The flow lines show how particles from the base distribution are perturbed until they approximate the target distribution.}\label{fig:cnf-abstract}
\end{figure}

\subsubsection{Sampling} Sampling from a trained model is straightforward: sample $y(0) \sim \normal{0}{\eye{d}}$ and then solve \eqref{eq:ode:cnf}.

\subsubsection{Instantaneous change of variables}\index{Instantaneous change of variables} We still need to train the model. We will proceed via maximum likelihood, which means that we need a tractable expression for the density of the distribution of $y(1)$.

\begin{theorem}[Instantaneous change of variables]
	Recall equation \eqref{eq:ode:cnf}. Assume $f_\theta = (f_{\theta, 1}, \ldots, f_{\theta, d})$ is Lipschitz continuous. Let
	\begin{equation*}
		p_\theta \colon [0, T] \times \reals^d \to \reals,
	\end{equation*}
	where $p_\theta(t,\,\cdot\,)$ is the density of $y(t)$ for each time $t \in [0, T]$. (In some works written informally as `$p(y(t))$'.) The subscript $\theta$ in $p_\theta$ denotes the dependence on $f_\theta$.
	
	Then $p_\theta$ evolves according to the differential equation\footnote{Actually, just an integral: $\log p_\theta$ does not appear on the right hand side.}
	\begin{equation}\label{eq:ode:instantaneous-change-of-variables}
		\frac{\dd}{\dd t}\big(t \mapsto \log p_\theta(t, y(t))\big)(t) = - \sum_{k=1}^d \frac{\partial f_{\theta, k}}{\partial y_k}(t, y(t)),
	\end{equation}
	where $y = (y_1,\ldots,y_d) \in \reals^d$.
	
\end{theorem}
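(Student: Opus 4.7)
The plan is to reduce the statement to the classical Jacobi formula for $\frac{\dd}{\dd t}\log\det$ applied to the Jacobian of the flow map induced by the ODE. By Picard's theorem (Theorem~\ref{theorem:picard-ode}) combined with standard smoothness-in-initial-conditions results, the ODE defines a $C^1$ flow $\Phi_s \colon \reals^d \to \reals^d$ sending $y_0 \mapsto y(s)$, and its Jacobian $J_s(y_0) = \nabla_{y_0} \Phi_s(y_0)$ solves the variational equation $\frac{\dd}{\dd s}J_s = (\nabla_y f_\theta)(s, \Phi_s(y_0))\, J_s$ with $J_0 = I$.

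Next, the change-of-variables formula for pushforward densities gives $p_\theta(t, y(t)) = p_\theta(0, y_0)\, |\det J_t(y_0)|^{-1}$. Taking logarithms and differentiating in $t$ along the trajectory $t \mapsto y(t) = \Phi_t(y_0)$, the $p_\theta(0, y_0)$ term drops out, so the claim reduces to showing that $\frac{\dd}{\dd t}\log|\det J_t(y_0)| = \sum_{k=1}^d \frac{\partial f_{\theta,k}}{\partial y_k}(t, y(t))$. Since $\det J_0 = 1$ and $t \mapsto \det J_t$ is continuous and nowhere vanishing (as $\Phi_t$ is a diffeomorphism), we have $\det J_t > 0$ throughout, and the absolute value may be dropped.

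Now apply Jacobi's formula: $\frac{\dd}{\dd t}\det J_t = (\det J_t)\,\trace(J_t^{-1}\dot{J_t})$. Substituting the variational equation yields $J_t^{-1}\dot{J_t} = J_t^{-1}(\nabla_y f_\theta)(t, y(t))\,J_t$, whose trace by the cyclic property equals $\trace((\nabla_y f_\theta)(t, y(t))) = \sum_{k=1}^d \frac{\partial f_{\theta,k}}{\partial y_k}(t, y(t))$. Dividing through by $\det J_t$ delivers the claimed identity.

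The main obstacle, or rather the place requiring the most care, is the regularity assumption on $f_\theta$. The derivation uses that $f_\theta$ is $C^1$ in $y$ so that $\Phi_t$ is $C^1$ and the partial derivatives on the right-hand side make classical sense; the stated hypothesis is only Lipschitz. In practice this gap is cosmetic, because neural networks built from smooth activations are automatically $C^1$; more generally one can invoke Rademacher's theorem together with the DiPerna--Lions theory of regular Lagrangian flows to extend the identity (for almost every initial condition) to the merely-Lipschitz case. An alternative route that avoids the flow altogether is to derive and solve along characteristics the continuity equation $\partial_t p_\theta + \nabla \cdot (p_\theta f_\theta) = 0$, which yields the same conclusion but requires essentially the same regularity machinery to be made rigorous.
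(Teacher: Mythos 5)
Your proof is correct. It differs in structure from the one the paper defers to, namely \cite[Appendix~A]{neural-odes}. That proof works infinitesimally: it introduces the one-step flow $T_\epsilon$ from $t$ to $t+\epsilon$, applies the discrete change-of-variables formula over that single step, then evaluates $\lim_{\epsilon\to 0^+}\epsilon^{-1}\log\lvert\det\partial T_\epsilon/\partial z\rvert$ via L'H\^opital's rule together with the adjugate form of Jacobi's formula, exploiting that $\partial T_\epsilon/\partial z \to I$ as $\epsilon\to 0$. You instead work globally: you integrate the flow $\Phi_t$ all the way from $0$ to $t$, track its Jacobian $J_t$ via the variational equation, apply Jacobi's formula to $\det J_t$ directly, and use cyclicity of the trace. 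Both arguments ultimately rest on Jacobi's formula, but yours avoids the L'H\^opital manipulation and the bookkeeping around $\epsilon\to 0$, and makes the positivity of $\det J_t$ (hence the harmlessness of the absolute value) a clean consequence of connectedness rather than something handled inside a limit. The price is that you need smoothness of the flow in the initial condition, which you correctly flag as requiring $C^1$ rather than merely Lipschitz regularity of $f_\theta$ --- a gap that the infinitesimal argument also has but hides slightly better. Your remark on DiPerna--Lions as the route to the genuinely Lipschitz case, and the alternative via the continuity equation $\partial_t p_\theta + \nabla\cdot(p_\theta f_\theta) = 0$, are both apt; in practice the paper's standing assumption that $f_\theta$ is a smooth neural network renders the issue moot, which is presumably why \cite{neural-odes} does not dwell on it.
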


The right hand side of \eqref{eq:ode:instantaneous-change-of-variables} is the divergence of $f$, or equivalently the trace of the Jacobian of $f$. The latter description draws the analogy to the change of variables formulas for normalising flows (Appendix \ref{appendix:deep:normalising-flows}).

See \cite[Appendix A]{neural-odes} for a straightforward proof.

\begin{remark}\index{Fokker--Planck equation}
	The SDE theorist will find this expression familiar. It is the Fokker--Planck equation for deterministic dynamics, subject to a random initial condition. It has been carefully written so that the right hand side is independent of the unknown $p_\theta$.
\end{remark}

\paragraph{Training}
By solving \eqref{eq:ode:instantaneous-change-of-variables} we can train a CNF via maximum likelihood. Given any terminal condition $x \in \reals^d$, let $y(t, x)$ denote the solution to the ODE
\begin{equation}\label{eq:ode:backward-cnf}
	y(T, x) = x,\qquad \frac{\dd y}{\dd t}(t, x) = f_\theta(t, y(t, x)) \text{ for $t \in [0, T]$}
\end{equation}
which will be solved backwards in time from $t=T$ to $t=0$.

Given a batch of empirical samples $y_1, \ldots, y_N \in \reals^d$, maximum likelihood states that with respect to $\theta$, we should minimise
\begin{equation*}
	-\frac{1}{N}\sum_{i=1}^N \log p_\theta(T, y_i).
\end{equation*}
Substituting in \eqref{eq:ode:instantaneous-change-of-variables}, we obtain
\begin{equation}\label{eq:ode:backward-cnf-density}
	-\frac{1}{N}\sum_{i=1}^N \log p_\theta(T, y_i) = -\frac{1}{N}\sum_{i=1}^N \left[\log p_\theta(0, y(0, y_i)) - \int_0^T \sum_{k=1}^d \frac{\partial f_{\theta, k}}{\partial y_k}(t, y(t, y_i))\,\dd t\right].
\end{equation}

This is now possible to evaluate.
\begin{enumerate}
\item Starting from some empirical sample $y_i \in \reals^d$, we may solve equation \eqref{eq:ode:backward-cnf} backwards-in-time from $t=T$ to $t=0$.
\item As the solution progresses we obtain $y(t, x)$ for $t=T$ to $t=0$. This is an input to the right hand side of \eqref{eq:ode:backward-cnf-density}. This integral may be solved as part of this backwards-in-time solve -- just concatenate the integral together with \eqref{eq:ode:backward-cnf} to form a system of differential equations.
\item Finally, evaluate $\log p_\theta(0, y(0, y_N))$ -- recalling that $p_\theta(0, \,\cdot\,)$ is taken to be a normal distribution -- and add it together with the value of the integral in order to obtain a value for \eqref{eq:ode:backward-cnf-density}.
\end{enumerate}

Having evaluated \eqref{eq:ode:backward-cnf-density}, it is backpropagated and the parameters $\theta$ updated via gradient descent.\footnote{And as the `forward pass' involved a derivative, then the backward pass will compute a second derivative; this is fine.} Note that backpropagation is a `reverse time' procedure. In summary, and as we have already performed one reversal:
\begin{itemize}
	\item Evaluating \eqref{eq:ode:backward-cnf-density} involves solving from $t=T$ to $t=0$;
	\item Backpropagating through \eqref{eq:ode:backward-cnf-density} is an operation progressing from $t=0$ to $t=T$;
	\item Additionally, note that sampling involves solving from $t=0$ to $t=T$, and is only performed at inference time.
\end{itemize}

% TODO: fill this in -- need to look up more about optimal transport
%\begin{remark}
%	CNFs also have connections to optimal transport. Given two probability densities $p, q \colon \reals^d \to [0, \infty)$, we may wish to find the optimal 
%\end{remark}

\subsubsection{Example}\label{section:ode:cnf-example}\index{Examples!Continuous normalising flows}
As a fun example, consider a greyscale image, which we may regard as a map $f \colon [0, 1]^2 \to [0, 255]$. We may fit a continuous normalising flow to $f$, treating $f$ as the unnormalised density for a probability distribution over $\reals^2 \supseteq [0, 1]^2$. A selection of images, and some CNFs that have learnt to approximate them, are shown in Figure \ref{fig:cnf-example}.

\begin{figure}
	\includegraphics[width=\linewidth]{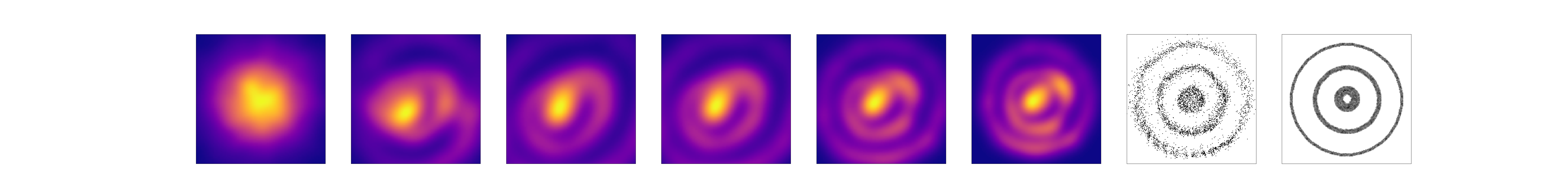}
	\includegraphics[width=\linewidth]{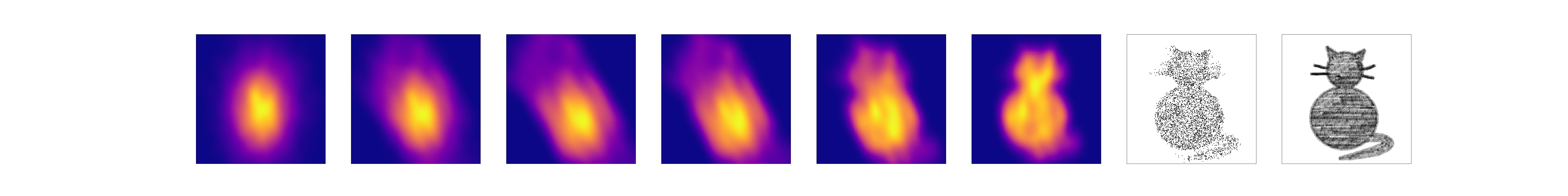}
	\includegraphics[width=\linewidth]{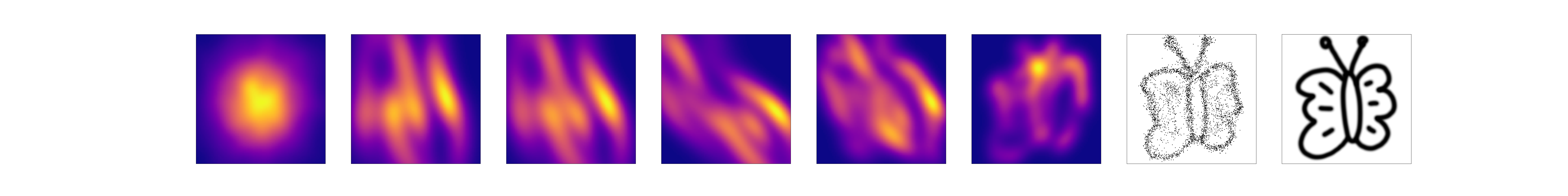}
	\caption{Continuous normalising flow example. \textbf{From top to bottom:} target, cat, butterfly. \textbf{From left to right:} the first six pictures show the evolution of the distribution of the CNF as it integrates from $t=0$ to $t=T$, transforming a normal distribution into the desired distribution. The second to last picture shows samples from the learnt CNF. The final picture shows the image used to specify the desired distribution.}\label{fig:cnf-example}
\end{figure}

We see that CNFs are capable of learning relatively complex two-dimensional distributions, including those with multiple modes (such as the different concentric rings of the target), and those with fine-scale `filaments' stretching away from the main part of the distribution (such as the whiskers or tail of the cat).

See Appendix \ref{appendix:experimental:cnf} for further details of this experiment. The code is available as an example in Diffrax \cite{diffrax}.

CNFs are a highly flexible approach to modelling probability distributions. \cite{ffjord, how-to-train-node} apply this approach to image generation. That is, the samples from the learnt distribution are images, rather than the whole distribution resembling an image as above. \cite{pointflow} represent 3D models as distributions (much like the above example representing a picture as a 2D distribution), and use this approach to generate point clouds of the model.

\subsubsection{Efficient estimation of the trace-Jacobian}\label{section:ode:hutchinson}\index{Hutchinson's trace estimator}
Note how \eqref{eq:ode:instantaneous-change-of-variables}, and thus \eqref{eq:ode:backward-cnf-density}, involve evaluating the expression $\sum_{k=1}^d \nicefrac{\partial f_{\theta, k}}{\partial y_k}(t, y)$. This is possible simply via autodifferentiation software: evaluate the neural network $f_\theta(t, y)$, and then backpropagate.

There is one foible. Autodifferentiation calculates a product of Jacobians (Appendix \ref{appendix:deep:autodiff}), which this expression is not. It may be calculated by performing $k=1,\ldots,d$ such operations, but this implies a relatively expensive $\bigO{d^2}$ cost. Each evaluation of $f_\theta \colon \reals \times \reals^d \to \reals^d$ requires at least $\bigO{d}$ work. Each subsequent autodifferentiation operation also requires $\bigO{d}$ work; so far for a total of still only $\bigO{d}$. That we must make $k=1,\ldots,d$ such calls is what raises this to $\bigO{d^2}$ cost.

We can do better.

\paragraph{Hutchinson's trace estimator}
Let $A \in \reals^{d \times d}$ be any matrix. Let $\varepsilon$ be a random variable over $\reals^d$ such that $\expect[\varepsilon] = 0 \in \reals^d$ and $\cov[\varepsilon] = \eye{d}$. (For example, a multivariate normal or Rademacher random variable.) Then
\begin{equation*}
	\trace(A) = \expect_\varepsilon[\varepsilon^\top A \varepsilon].
\end{equation*}
The Monte-Carlo approximation derived from this equation is known as Hutchinson's trace estimator \cite{hutchinson1989stochastic}.

\paragraph{The trace-Jacobian}
That the right hand side of \eqref{eq:ode:instantaneous-change-of-variables} is a trace-Jacobian now proves useful. We have that
\begin{equation*}
	\sum_{k=1}^d \frac{\partial f_{\theta, k}}{\partial y_k}(t, y) = \trace\left(\frac{\partial f_\theta}{\partial y}(t, y)\right) = \expect_\varepsilon\left[\varepsilon^\top \frac{\partial f_\theta}{\partial y}(t, y) \varepsilon\right].
\end{equation*}

Substituting into \eqref{eq:ode:backward-cnf-density} we obtain
\begin{equation}\label{eq:ode:_hutchinson-cnf}
	-\frac{1}{N}\sum_{i=1}^N \log p_\theta(T, y_i) = -\frac{1}{N}\sum_{i=1}^N \left[\log p_\theta(0, y(0, y_i)) - \expect_\varepsilon\left[\int_0^T \varepsilon^\top \frac{\partial f_\theta}{\partial y}(t, y(t, y_i))\varepsilon\,\dd t\right]\right].
\end{equation}\

In practice this expectation will often be approximated by a single Monte-Carlo sample, which as in \eqref{eq:ode:_hutchinson-cnf} is held constant for the duration of the integration. Training already involves averaging over the batch of data $i=1,\ldots,N$ and so further Monte-Carlo samples are often unnecessary.

And now for punchline: the integrand of \eqref{eq:ode:_hutchinson-cnf} may be computed in only $\bigO{d}$ work. First $\varepsilon^\top \nicefrac{\partial f_\theta}{\partial y}(t, y(t, y_i))$ may be computed as vector-Jacobian product (requiring only $\bigO{d}$ work), and then this is combined with the final $\varepsilon$ via a simple dot product (also only $\bigO{d}$ work). Overall this produces an unbiased estimate of the divergence.

\subsubsection{Comparison to normalising flows}
Recall the discussion on normalising flows from Appendix \ref{appendix:deep:normalising-flows}. In both cases, a change in log-probability densities is described in terms of the Jacobian of the transformation.

Note the difference in computational complexity. In the general normalising flow setting, the log-determinant-Jacobian costs $\bigO{d^3}$ work to evaluate (and backpropagate through). Here it has been reduced to just $\bigO{d^2}$ or $\bigO{d}$ work.

\subsection{Latent ODEs}\label{section:ode:latent-ode}\index{Latent ODEs}
The previous section considered a generative model for data from some distribution without a time-varying component. For example, a static picture of a cat, rather than samples from a dynamical system evolving in time. We now consider the case that the distribution has an intrinsic time-varying component -- for example, it may be a distribution over time series. Once again, we wish to model this distribution.

Consider the space of $d$-dimensional irregularly-sampled time series
\begin{equation*}
	\tsalt{\reals^d} = \set{((t_0, x_0), \ldots, (t_n, x_n))}{n \in \naturals, t_j \in \reals, x_j \in \reals^d, t_0 = 0, t_j < t_{j + 1}}.
\end{equation*}
For ease of presentation we suppose this is fully-observed (without missing data), but the following construction extends immediately to the partially-observed case too.

We proceed by constructing a VAE. Figure \ref{fig:latent-ode-abstract} provides a summary of the construction we about to present. This is termed a latent ODE \cite{neural-odes, latent-odes}.

\begin{figure}\centering
	\begin{tikzpicture}
% RIGHT
\begin{scope}[shift={(2.4,0)}]
\draw[thick, ->] (0.5, 0) -- (5.5, 0);
\node at (1, 0)[circle,fill, inner sep=1.5pt] {};
\node at (1.6, 0)[circle,fill, inner sep=1.5pt] {};
\node at (3, 0)[circle,fill, inner sep=1.5pt] {};
\node at (5, 0)[circle,fill, inner sep=1.5pt] {};
\node at (1, -0.3) {$t_0$};
\node at (1.6, -0.3) {$t_1$};
\node at (3, -0.3) {$t_2$};
\node at (4, -0.3) {$\cdots$};
\node at (5, -0.3) {$t_n$};

\node at (1, 1)[circle,draw=black, fill=cyan, inner sep=1.5pt] {};
\node at (1.6, 0.7)[circle,draw=black, fill=cyan, inner sep=1.5pt] {};
\node at (3, 0.6)[circle,draw=black, fill=cyan, inner sep=1.5pt] {};
\node at (5, 0.8)[circle,draw=black, fill=cyan, inner sep=1.5pt] {};
\node at (1, 0.7) {$x_0$};
\node at (1.6, 0.4) {$x_1$};
\node at (3, 0.3) {$x_2$};
\node at (5, 0.5) {$x_n$};

\node at (4, 1.2) {\footnotesize$x_j \sim p_{\theta, y(t_j)}$};

\draw[red!85!black!80!white, thick, cap=round] (1, 2) .. controls ++(0:0.2) and ++(150:0.4) .. (2, 1.7);
\draw[red!85!black!80!white, thick, cap=round] (2, 1.7) .. controls ++(150+180:0.4) and ++ (185:0.3) .. (3, 1.6);
\draw[red!85!black!80!white, thick, cap=round, ->] (3, 1.6) .. controls ++(185+180:0.3) and ++ (165:0.7) .. (5, 1.8);
\node at (3.5, 2.5) {\begin{minipage}{0.3\textwidth}\footnotesize Decoder: ODE solution\\$y(t) = \int_0^t f_\theta(y(s))\,\dd s$\end{minipage}};

\draw[dashed, ->] (1, 1.95) -- (1, 1.1);
\draw[dashed, ->] (1.6, 1.8) -- (1.6, 0.8);
\draw[dashed, ->] (3, 1.55) -- (3, 0.7);
\draw[dashed, ->] (5, 1.75) -- (5, 0.9);
\end{scope}

%% LEFT
\draw[thick, ->] (-0.5, 0) -- (-5.5, 0);
\node at (-1, 0)[circle,fill, inner sep=1.5pt] {};
\node at (-1.6, 0)[circle,fill, inner sep=1.5pt] {};
\node at (-3, 0)[circle,fill, inner sep=1.5pt] {};
\node at (-5, 0)[circle,fill, inner sep=1.5pt] {};
\node at (-1, -0.3) {$t_0$};
\node at (-1.6, -0.3) {$t_1$};
\node at (-3, -0.3) {$t_2$};
\node at (-4, -0.3) {$\cdots$};
\node at (-5, -0.3) {$t_n$};

\node at (-1, 1)[circle,draw=black, fill=cyan, inner sep=1.5pt] {};
\node at (-1.6, 0.7)[circle,draw=black, fill=cyan, inner sep=1.5pt] {};
\node at (-3, 0.6)[circle,draw=black, fill=cyan, inner sep=1.5pt] {};
\node at (-5, 0.8)[circle,draw=black, fill=cyan, inner sep=1.5pt] {};
\node at (-1, 0.7) {$x_0$};
\node at (-1.6, 0.4) {$x_1$};
\node at (-3, 0.3) {$x_2$};
\node at (-5, 0.5) {$x_n$};

\draw[dashed, ->] (-1, 1.1) -- (-1, 1.8);
\draw[dashed, ->] (-1.6, 0.8) -- (-1.6, 1.8);
\draw[dashed, ->] (-3, 0.7) -- (-3, 1.7);
\draw[dashed, ->] (-5, 0.9) -- (-5, 1.5);

\draw[green!80!black, thick, cap=round, ->] (-5, 1.5) -- (-5, 1.7) -- (-3, 1.7) -- (-3, 2) -- (-1.6, 2) -- (-1.6, 1.8) -- (-1, 1.8) -- (-1, 2) -- (-0.3, 2);
\node at (-3, 2.4) {\footnotesize Encoder $\nu_\theta$};

% MIDDLE
\draw[rounded corners, fill=blue!15!white] (-0.3, 1.55) rectangle (0.2, 2.45);
\node at (-0.05, 2.2) {\footnotesize$\mu$};
\node at (-0.05, 1.8) {\footnotesize$\sigma$};
\draw[thick, ->, cap=round] (0.2, 2) -- (3.4, 2);
\node at (1.8, 2.4) {\footnotesize $y_0 \sim g_\theta(\normal{\mu}{\sigma^2})$};
	\end{tikzpicture}
	\caption{Overview of latent ODE model.}\label{fig:latent-ode-abstract}
\end{figure}
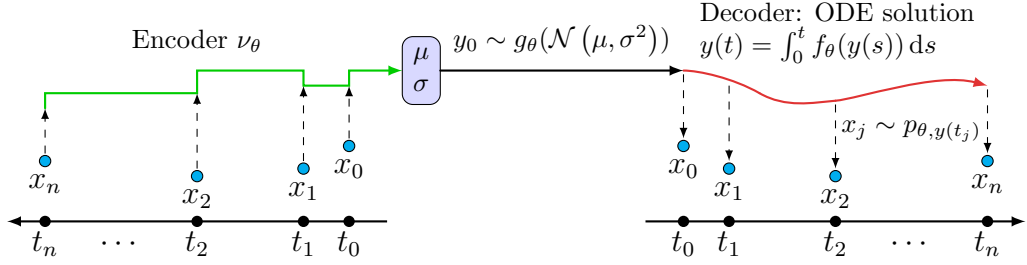

\begin{remark}
	The use of a VAE raises the question of whether other generative approaches (GANS, \ldots) may be employed. The answer is yes, and indeed Chapter \ref{chapter:neural-sde} (Neural Stochastic Differential Equations) will be almost entirely dedicated to the problem of generative time series models.
\end{remark}

\paragraph{Decoder}
Fix $d_l, d_m > 0$ as the dimensionality of two latent spaces. (Typically $d_l, d_m \gg d$.) Let 
\begin{align*}
	f_\theta &\colon \reals^{d_l} \to \reals^{d_l},\\
	g_\theta &\colon \reals^{d_m} \to \reals^{d_l}
\end{align*}
be neural networks parametrised by learnt parameters $\theta$. Let $p_{\theta, y} \colon \reals^{d_l} \to [0, \infty)$ be some probability density parameterised by $y \in \reals^{d_l}$ and learnt parameters $\theta$. (For simplicity of notation we stack all learnt parameters together into a single vector $\theta$.)

Given $z \in \reals^{d_m}$, let $y_0 = g_\theta(z) \in \reals^{d_l}$ and let $y(t, y_0)$ be the solution of the neural ODE
\begin{equation}\label{eq:ode:_latent_ode}
	y(0, y_0) = y_0,\qquad\frac{\dd y}{\dd t}(t, y_0) = f_\theta(y(t, y_0)).
\end{equation}
For each time $t$ we consider $p_{\theta, y(t, y_0)}$. The full collection of $t \mapsto p_{\theta, y(t, y_0)}$ is the output of the model.

That is, given some input $z \in \reals^{d_m}$, it is mapped into the latent space $\reals^{d_l}$, from which the ODE $y$ evolves. At each time the latent value parameterises a probability distribution.

This is the decoder of the VAE. 

\begin{example}\label{example:ode:_latent-ode}
Frequently $p_{\theta, y}$ may simply be taken to be a Gaussian with fixed variance: let $\ell_\theta \colon \reals^{d_l} \to \reals^d$ be affine and let $p_{\theta, y}$ be the density of $\normal{\ell_\theta(y)}{\eye{d_l}}$. We will discuss other choices of $p_{\theta, y}$ in a moment.
\end{example}

For any $\mathbf{x} = ((t_0, x_0), \ldots, (t_n, x_n)) \in \tsalt{\reals^d}$, let
\begin{equation*}
\rho_{\theta, y_0}(\mathbf{x}) = \prod_{j=0}^n p_{\theta, y(t_j, y_0)}(x_j),
\end{equation*}
which corresponds to the probability density of a full time series $\mathbf{x}$, rather than of just a single observation at a single point in time.

\paragraph{Encoder}
The encoder of the VAE is some $\nu_\theta \colon \tsalt{\reals^d} \to \reals^{d_m} \times (0, \infty)^{d_m}$; frequently an RNN or a neural CDE (Chapter \ref{chapter:neural-cde}).

The encoder output is the statistics of a multivariate normal distribution with diagonal covariance; we denote this by $q_{\theta, \mathbf{x}} = \normal{\mu_{\theta, \mathbf{x}}}{\diag(\sigma_{\theta, \mathbf{x}})^2}$ with $(\mu_{\theta, \mathbf{x}}, \sigma_{\theta, \mathbf{x}}) = \nu_\theta(\mathbf{x})$.

If the encoder is an RNN or neural CDE it will sometimes be run backwards-in-time over the input time series, so that the decoder starts where the encoder ends.

\paragraph{Training}

Given a batch or dataset of time series $\mathbf{x}_1, \ldots, \mathbf{x}_N \in \tsalt{\reals^d}$, then the end-to-end optimisation criterion is to minimise $\theta$ with respect to
\begin{equation*}
	\frac{1}{N}\sum_{i=1}^N \left[ \expect_{y_0 \sim q_{\theta, \mathbf{x}_i}} \left[-\log \rho_{\theta, y_0}(\mathbf{x}_i) \right] + \kl{q_{\theta, \mathbf{x}_i}}{\normal{0}{\eye{d_l}}} \right].
\end{equation*}
This is simply the standard VAE optimisation criterion, and provided $p_{\theta, y}$ is `reasonable' (for example, a Gaussian), then this expression may be evaluated and backpropagated through in the usual way. The first term ensures that the decoder learns to replicate its input samples; the second term ensures that the initial distribution in the latent space matches a known distribution, which may be sampled from at inference time.

\paragraph{Sampling} Sampling from the model is straightforward in the usual way for VAEs: sample some $z \sim \normal{0}{\eye{d_m}}$, evaluate $y_0 = g_\theta(z)$, and evaluate \eqref{eq:ode:_latent_ode} forward in time. $p_{\theta, y(t, y_0)}$ is the model output. If a point statistic is required (for example, just a sample from the model) then the mean of $p_{\theta, y(t, y_0)}$ may be returned.

\paragraph{Choice of distribution} The choice of $p_{\theta, y}$ is dependent on what behaviour is desired when sampling during inference time. If only a point statistic is required then the choice of $p_{\theta, y}$ is essentially just a choice of loss function, and simple choices like Gaussian distributions (Example \ref{example:ode:_latent-ode}) or Laplace distributions (whose log-likelihood is the $L^1$ distance) are sensible.

If the full model output $p_{\theta, y(t, y_0)}$ is of interest -- for example to perform uncertainty quantification -- then more expressive choices of $p_{\theta, y}$ may be of interest. For example, \cite{clpf} consider taking it to be a normalising flow (and additionally consider replacing the latent ODE with a latent SDE; see Chapter \ref{chapter:neural-sde}).

\subsubsection{Examples}\label{section:ode:latent-ode-example}\index{Examples!Latent ODEs}
As a simple example, consider a dataset of decaying oscillators. That is, a 2-dimensional time series consisting of (discrete observations of)
\begin{equation}\label{eq:ode:_latent-ode-example}
y(t) = \exp(At)\,y_0
\end{equation}
with $y_0, y(t) \in \reals^2$,  $A \in \reals^{2 \times 2}$, and such that the eigenvalues of $A$ are complex with negative real component. Samples look like decaying sine and cosine waves.

We take $y_0 \sim \normal{0}{\eye{2}}$, and generate sample data from \eqref{eq:ode:_latent-ode-example} at irregularly sampled timestamps over $[0, 3]$. The timestamps are not regularly spaced nor are they consistent between different batch elements.

We fit a latent ODE to this dataset. At test time, we solve the ODE over the larger interval $[0, 12]$. See Figure \ref{fig:latent-ode-example} for some samples generated from this model. We see that by the end of training, excellent samples are produced, even though they are over a time interval four times larger than the model was trained on.

\begin{figure}
	\includegraphics[width=\linewidth]{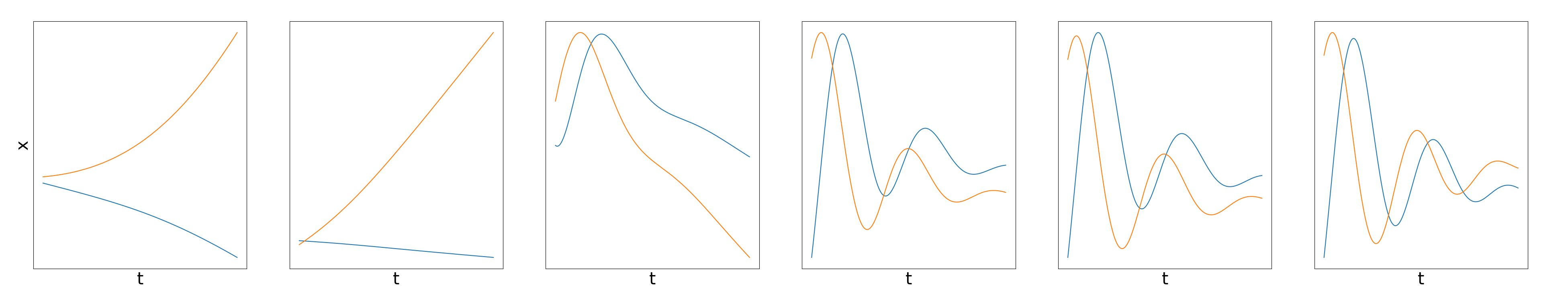}
	\caption{Plots of samples $y \colon [0, 12] \to \reals^2$ drawn from the latent ODE model. The leftmost picture is a sample from the untrained model. The rightmost picture is a sample from a fully-trained model. In between are samples from partially-trained models. Quality increases as training proceeds.}\label{fig:latent-ode-example}
\end{figure}

See Appendix \ref{appendix:experimental:latent-odes} for precise details. The code is available as an example in Diffrax \cite{diffrax}.

\paragraph{Irregular sampling}\index{Time series!Irregular}\index{Irregular sampling}
The continuous-time approach handles several irregular kinds of sampling without issue: the input data is not regularly spaced, nor are different batch elements sampled at the same times.

Meanwhile, the output is over the (continuous-time) interval $[0, 12]$, so that we are obtaining samples at all times. This is unlike the analogous RNN, which would be restricted to producing outputs only at prespecified discrete timestamps.

\paragraph{Extrapolation}
Figure \ref{fig:latent-ode-example} shows that the latent ODE has successfully reproduced this dataset. Moreover it exhibits good extrapolation qualities over an interval four times longer than the interval it was trained on.

\paragraph{Other examples}
Many other types of time series problem may be considered. For example \cite{latent-odes} apply a latent ODE to model the dynamics of a small (simulated) frog jumping into the air; \cite{model-based-semi-markov} consider applications to reinforcement learning; \cite{latent-segmenting} combine latent ODEs with changepoint detection algorithms to model switching dynamical systems.

We additionally direct the reader towards Chapter \ref{chapter:neural-sde}, in which neural SDEs will also be used to model (much more general) distributions over time series.

\subsubsection{Sequence-to-sequence models}\index{Sequence-to-sequence models}
Essentially the same construction may be used in the construction of sequence-to-sequence models, for example to perform time series forecasting. The encoder (an RNN or neural CDE, see Chapter \ref{chapter:neural-cde}) runs over the input time series; the decoder (a neural ODE or neural SDE, see Chapter \ref{chapter:neural-sde}) produces the forecasted sample.

\subsection{Residual networks}\label{section:ode:residual-networks}\index{Residual networks}
In Section \ref{section:motivation:resnet-is-node} we saw that residual networks are the explicit Euler discretisation of a neural ODE.

Correspondingly the theory of dynamical systems offers ways to derive variant residual networks with favourable properties.

\subsubsection{Rotational vector fields}\label{section:ode:rotational-vector-field}\index{Rotational vector fields}
\cite{haber-ruthotto} consider replacing the forward pass of a residual network
\begin{equation*}
	y_{j+1} = y_j + f_\theta(y_j)
\end{equation*}
with
\begin{align}
	y_{j+1} &= y_j + \sigma(K_j z_{j+1} + b_j),\nonumber\\
	z_{j+1} &= z_j - \sigma(K_j^\top y_j + b_j)\label{eq:ode:_rotational}
\end{align}
for some weights and biases $K_j, b_j$ and some choice of activation function $\sigma$. This corresponds to a semi-implicit Euler discretisation of the neural ODE\index{Semi-implicit Euler method}
\begin{equation*}
	\frac{\dd}{\dd t}\begin{pmatrix}y\\z\end{pmatrix}(t) = \sigma\left(W(t)\begin{pmatrix}y(t)\\z(t)\end{pmatrix} + b(t)\right)
\end{equation*}
where
\begin{equation*}
	W(t) = \begin{pmatrix}0 & K(t)\\-K(t)^\top & 0\end{pmatrix}.
\end{equation*}

Correspondingly, the Jacobian of the right hand side is
\begin{equation*}
	\diag\left(\sigma'\left(W(t)\begin{pmatrix}y(t)\\z(t)\end{pmatrix} + b(t)\right)\right) W(t)
\end{equation*}
Many activation functions are monotonic; if this is the case then the Jacobian is the product of a diagonal matrix with positive entries, and an antisymmetric matrix, and as such the Jacobian has pure-imaginary eigenvalues.\footnote{Proof: let $D$ be positive diagonal with square root $A$. Let $W$ be antisymmetric. Then $DW$ is similar to $AWA$, which is antisymmetric and as such has pure-imaginary eigenvalues.}

This means that the vector field is `purely rotational': eigenvalues with positive real part drive expansion; eigenvalues with negative real part cause contraction, but zero real part produces neither. Correspondingly, \eqref{eq:ode:_rotational} is largely immune to vanishing/exploding gradient issues.

\begin{remark}\label{remark:ode:rotational-vf}
The trade-off, however, is a potential reduction in expressivity. Purely rotational vector fields are volume preserving (divergence-free). Non-volume-preservation is often important for expressivity. (It is even part of the name of Real Non-Volume Preserving flows \cite{real-nvp}.)

This issue of volume preservation may be partially ameliorated by working in a higher dimensional space; see Section \ref{section:ode:augmentation-rotational} later.
\end{remark}

\subsubsection{Momentum residual networks}\label{section:ode:momentum-residual-network}\index{Momentum residual network}\index{Residual networks!Momentum}
\cite{momentum-residual-networks} consider replacing the forward pass through a residual network with
\begin{align}
	v_{j+1} &= \gamma v_j + (1 - \gamma) f_\theta(y_j)\nonumber\\
	y_{j+1} &= y_j + v_{j+1}\label{eq:ode:_momentum-residual-network}
\end{align}
for $\gamma \in (0, 1)$. ($\gamma = 0.9$ would be typical.)

\paragraph{Reversibility}
The key property of such networks is that they are \textit{reversible}: whilst \eqref{eq:ode:_momentum-residual-network} computes $(v_{j+1}, y_{j+1})$ from $(v_j, y_j)$, it also possible to reconstruct $(v_j, y_j)$ from $(v_{j+1}, y_{j+1})$ via
\begin{align}
	y_j &= y_{j+1} - v_{j+1}\nonumber\\
	v_j &= \frac{1}{\gamma}(v_{j+1} - (1 - \gamma) f_\theta(y_j)).\label{eq:ode:_momentum-residual-network-backward}
\end{align}

This dramatically improves the memory efficiency of the network, at the cost of some extra computation. When backpropagating through \eqref{eq:ode:_momentum-residual-network}, the intermediate values $y_n$ need not be stored (like they would be for the corresponding residual network). Instead, \eqref{eq:ode:_momentum-residual-network-backward} means they can be recomputed on-demand as backpropagation proceeds.

This represents a refinement of similar ideas in \cite{revnet, revnet2}, and more generally the topic of \textit{invertible neural networks} \cite{invertible-residual-networks}.

\paragraph{As a neural ODE}
Let $\varepsilon = 1/(1 - \gamma)$. Then \eqref{eq:ode:_momentum-residual-network} is given by the semi-implicit Euler method, with unit step size, applied to
\begin{equation}\label{eq:ode:_momentum-residual-network-continuous}
	\varepsilon \frac{\dd^2 y}{\dd t^2}(t) + \frac{\dd y}{\dd t}(t) = f_\theta(y(t)).
\end{equation}

\paragraph{Connection to reversible solvers}\index{Reversible solvers}
Momentum networks are reversible because the semi-implicit Euler method is reversible. Running the solver forwards in time, then backwards in time, will recover the same numerical solution. This is sometimes described as saying that there are matching truncation errors on the forward and backward solves.

Reversible solvers come strongly recommended for use with neural differential equations for the same reason as here: they allow for backpropagation that is both time and memory efficient (Section \ref{section:numerical:reversible-solvers}). As such they are of substantial interest, and moreover in general do not require the second-order structure that \eqref{eq:ode:_momentum-residual-network-continuous} exhibits.

\subsubsection{Alternative integration schemes}
Other off-the-shelf integration schemes may be substituted for the explicit Euler method.

For example \cite{yiping-lu} consider linear multistep methods and \cite{imex-net} consider IMEX methods. PolyNet \cite{polynet} considers operations of the form
\begin{equation*}
	y_{j+1} = y_j + f_\theta(y_j) + f_\theta(f_\theta(y_j)),
\end{equation*}
which if $f_\theta$ is a linear contraction, is an approximation to the implicit Euler method
\begin{align*}
	y_{j+1} &= y_n + f_\theta(y_{j+1})\\
	&= (I - f_\theta)^{-1}(y_{j})\\
	&= (I + f_\theta + f_\theta^2 + f_\theta^3 + \cdots)(y_{j+1}).
\end{align*}

We note that the advantages of switching to a different integration scheme are strongest when it exhibits particular additional properties, like symplecticity as in Section \ref{section:ode:rotational-vector-field} or reversibility as in Section \ref{section:ode:momentum-residual-network}.

\section{Choice of parameterisation}\label{section:ode:parameterisation}
So far we have touched only lightly on the parameterisation of the vector field $f_\theta$. (Although we have discussed some mathematically-inspired parameterisations, such as Hamiltonian-based parameterisations in Section \ref{section:ode:hamiltonian}.)

Should $f_\theta$ be a feedforward network, convolutional network, residual network, \ldots? Should it use batch normalisation? What kinds of activation functions are appropriate? And so on.

Good architectural choices and good choices of optimiser are often crucial for success. However (even with the following guidelines) it is not always clear what good choices are. Frequently this is still just a matter of hyperparameter optimisation -- or perhaps `try it and see what works'.

\subsection{Neural architectures}
Nearly every work uses either a feedforward or convolutional neural network for the vector field $f_\theta$. Feedforward networks are straightforward: simply concatenate $t$ and $y(t)$ together as inputs. These are what are typically used when the data is anything other than an image.

If the data $y_0$ has the (channel, height, width) structure of an image, then a suitable vector field may be obtained by using convolutional layers. Recall that the input and output of $f_\theta(t, \cdot)$ must be the same size. This typically means either using padding, or combining convolutional layers with transposed convolutional layers. Time $t$ is often appended to $y(t)$ as an additional channel.

\begin{remark}\index{Graph neural networks}
	Other parameterisations are occasionally used. For example \cite{graph-neural-ode, miles-symbolic, graph-neural-ode2, grand} consider graph neural networks, which can for example encode equivariance with respect to permutations of the input points. (Such as may be exhibited in many physical systems; for example the positions of $n$ equally-sized masses evolving under gravity.)
	
	Much of the following discussion carries through to this setting, although we will not discuss graph-structured networks and graph-structured data in detail here.
\end{remark}

\subsubsection{Activation functions}
The theory of backpropagating through ODEs does technically ask that the vector field (and thus the activation function) be continuously differentiable (Section \ref{section:numerical:adjoint-ode}), which ReLUs are not.

As such continuously differentiable activation functions like SiLU \cite{gelu, silu, swish}, softplus, or tanh are typically used.\footnote{\cite{squareplus} cooked up, and reports being fond of, the `squareplus' activation $x \mapsto \frac{1}{2} x + \frac{1}{2}\sqrt{x^2 + 4}$.}

Despite this theoretical point, ReLU activations are still often used successfully in practice.

\subsubsection{Normalisation}
Normalisation schemes, such as batch normalisation and layer normalisation \cite{batchnorm, layernorm}, are typically not used, at least within the vector field $f_\theta$. For batch normalisation, this is because the same neural network $f_\theta$ is evaluated at $y(t)$ for different $t$, and each might have different statistical properties. This is the same problem that occurs when using batch normalisation in recurrent neural networks \cite{layernorm, recurrent-batchnorm}. 

Meanwhile layer normalisation lacks a satisfying explanation for its lack of efficacy, but at least for CNFs it has been reported that this typically breaks training \cite{layer-norm-cnf}.

\subsubsection{Initialisation}\label{section:ode:initialisation}
Initialising the neural vector fields close to zero often improves training, it being easier to perturb a nearly constant $t \mapsto y(t)$ than random initial dynamics. For most neural architectures this may be accomplished by choosing the initial parameters $\theta$ close to zero.

\subsection{Non-autonomy} We have deliberately chosen to include $t$ as an input to the vector field $f_\theta$. A residual network has different layers at different depths. Analogously, neural ODE models usually exhibit higher modelling capacity by allowing $f_\theta$ to depend on the `continuous depth' parameter $t$. Such differential equations are referred to as being \textit{non-autonomous}.

This can be handled simply by concatenating $t$ and $y(t)$ together as inputs to $f_\theta$. A far more expressive choice is to additionally explicitly encode certain time dependencies.

\subsubsection{Depth discretisation: stacking}\label{section:ode:stacking}\index{Jump!In the vector field}\index{Stacking}
One straightforward and effective approach is to parameterise $f_\theta$ in piecewise fashion as several different networks, selected based on the value of $t$. For example,
\begin{equation*}
	f_\theta(t, y) = 
	\begin{cases}
		f_{\theta_1, 1}(t, y) & t \in [t_0, t_1]\\
		\qquad\vdots\\
		f_{\theta_n, n}(t, y) & t \in [t_{n - 1}, t_n]
	\end{cases}
\end{equation*}
where $\theta = (\theta_1, \ldots, \theta_n)$, and each $\theta_j$ is itself some vector of parameters.

In principle each $f_{\cdot, 1}, \ldots, f_{\cdot, n}$ could represent different architectures. Often $f_{\cdot, j}$ will all be the same neural architecture, and differ only in which parameter vector $\theta_i$ they depend upon.

Two options must be considered when using this architecture in practice, with a numerical differential equation solver: whether to use a single call to an ODE solver over $[t_0, t_n]$, or whether to solve over each $[t_i, t_{i + 1}]$ region separately, and call an ODE solver $n$ times. Both options are valid but both introduce details that one should be aware of; we defer this numerical discussion to Section \ref{section:numerical:stacked}.

\subsubsection{Spectral discretisation}\index{Spectral discretisation}
Let $\psi_j \colon [0, T] \to \reals$ be some family of (smooth) functions parameterised by $j \in \{1, \ldots, n\}$. Take the parameter vector $\theta$ to be such that $\theta = (\theta_1, \ldots, \theta_n)$ with $\theta_j \in \reals^{d_\theta}$ for some $d_\theta \in \naturals$. Now define
\begin{equation*}
	\alpha_\theta(t) = \sum_{j = 1}^n \theta_j \psi_j(t).
\end{equation*}

Then another choice of non-autonomy is given by
\begin{equation*}
	f_\theta(t, y(t)) = \widetilde{f}_{\alpha_\theta(t)}(t, y(t)),
\end{equation*}
where $\widetilde{f}$ is some fixed neural network architecture which at time $t$ uses parameters $\alpha_\theta(t) \in \reals^{d_\theta}$.

The choice of $\psi_j$ is up to us. Ideally they should be quite different to each other, for the greatest possible expressivity of the model. For example they could be chosen as Chebyshev polynomials, or as a truncated Fourier basis of sines and cosines (which is what motivates the terminology `spectral discretisation' \cite{dissecting}).

\subsubsection{Hypernetworks}\index{Hypernetworks}
Another choice is to let the parameters of the neural ODE be themselves parameterised as the solution of a neural ODE.

That is, let $\alpha \colon [0, T] \to \reals^{d_\alpha}$ be the solution of the neural ODE
\begin{equation*}
	\alpha(0) = \alpha_\theta \qquad \frac{\dd \alpha}{\dd t}(t) = g_\theta(t, \alpha(t)),
\end{equation*}
with learnt parameters $\theta$, vector field $g_\theta \colon \reals \times \reals^{d_\alpha} \to \reals^{d_\alpha}$, and learnt initial condition $\alpha_\theta$.

We then let the hidden state $y$ of our `original' neural ODE evolve according to
\begin{equation*}
	y(0) = y_0 \qquad \frac{\dd y}{\dd t}(t) = \widetilde{f}_{\alpha(t)}(t, y(t)),
\end{equation*}
where $\widetilde{f}$ is some fixed neural network architecture which at time $t$ uses parameters $\alpha(t) \in \reals^{d_\alpha}$.

In practice these two differential equations may be concatenated and solved simultaneously as a system. Overall this may be seen as just a neural ODE as originally formulated, with a particular beneficial structure to its vector field. See \cite{anode2, ode2ode}.

\subsubsection{Variant layers}\label{section:ode:variant-layers}
Other high-performing time-dependent layers may be dreamt up. For example (and inspired by \cite{ffjord}) the example CNF seen in Section \ref{section:ode:cnf-example} uses an MLP whose affine layers are replaced with layers of the form
\begin{equation*}
	(x, t) \mapsto (Ax + b) * \sigma(ct + d) + et
\end{equation*}
where $x \in \reals^{d_1}$, $t \in \reals$, $A \in \reals^{d_2 \times d_1}$, $b, c, d, e \in \reals^{d_2}$, $\sigma$ denotes the sigmoid function, and $*$ denotes elementwise multiplication.

The dependency on the time $t$ is coming in at each layer of the MLP, rather than being concatenated with $y(t)$ as just another input.

This is reminiscent of gating procedures in GRUs and LSTMs.

\subsubsection{Enforcing autonomy}
One exception to the above procedure sometimes occurs when using neural ODEs equations for time series problems, such as with a latent ODE (Section \ref{section:ode:latent-ode}). In this case, we may sometime suppose that the underlying dynamics are not time-dependent, and would instead prefer to remove $t$ as an input. (The same will often also be true of the upcoming neural CDEs and neural SDEs in Chapters \ref{chapter:neural-cde} and \ref{chapter:neural-sde}.)

\subsection{Augmentation}\label{section:ode:augmentation}\index{Augmentation}
For a moment let us focus on performing image classification with neural ODEs (Section \ref{section:ode:image-classification}); a problem chosen for its simplicity. In Section \ref{section:ode:image-classification}, the input to the model was the same size as the hidden state: both the input picture and hidden state were of shape $\reals^{3 \times 32 \times 32}$. In general however this is neither necessary nor desirable.

`Augmentation' refers to the practice of inserting an affine map between input and initial value, to increase the dimension of the hidden state. That is, given some input $x \in \reals^d$, the initial value of the ODE is taken to be $y(0) = g_\theta(x)$ for some learnt $g_\theta \colon \reals^d \to \reals^{d_l}$ with $d_l > d$, rather than simply $y(0) = x$. We have
\begin{equation*}
	y(0) = g_\theta(x), \qquad \frac{\dd y}{\dd t}(t) = f_\theta(t, y(t)).
\end{equation*}
Standard choices of $g_\theta$ are either zero augmentation: $g_\theta(x) = [x, 0]$, learnt augmentation: $g_\theta(x) = [x, \widetilde{g}_\theta(x)]$ for some learnt $\widetilde{g}_\theta$, or just an affine map: $g_\theta(x)$ is learnt and affine. The choice is usually unimportant; the increase in dimensionality is the main point. In each case, the output of the model is still obtained by applying some affine map $\ell_\theta \colon \reals^{d_l} \to \reals^{d_o}$ to $y(T)$, with $d_o \in \naturals$ the desired output dimensionality.

This improves model performance dramatically. The reason is that the continuous flow of an ODE is incapable of modifying the topology of its input -- so staying in the same space means that topological properties of the input manifold (in the sense of the manifold hypothesis; Appendix \ref{appendix:deep:manifold-hypothesis}) are necessarily preserved. This is a statement we will make precise in Section \ref{section:ode:universal-approximation}, by describing the universal approximation properties of neural ODEs.

Returning now to the general setting (beyond just image classification), we have already seen an example of augmentation: the latent ODE (Section \ref{section:ode:latent-ode}) evolved in some higher-dimensional space $\reals^{d_l}$, and used an affine map to $\reals^d$ to obtain the output.

(Conversely, note that CNFs cannot use augmentation: as with all normalising flows, it is a requirement of the construction that every operation be bijective.)

\begin{remark}\label{remark:ode:markov}\index{Markov assumption}
	Lifting into a higher-dimensional space may be regarded as a relaxation of the Markov property. For $s < t$ then the output $\ell_\theta(y(s))$ does not completely determine $\ell_\theta(y(t))$. In contrast $y(s)$ does determine $y(t)$. (Whether $y$ is the output of an unaugmented neural ODE or the latent value of an augmented neural ODE.)
	
	The Markov setting can be very beneficial if the problem is known to exhibit this structure, in particular when modelling physical systems. If the data is densely sampled then it can then be possible to avoid the ODE solve entirely: estimate $\nicefrac{\dd y}{\dd t}$ with finite differences and do direct supervised regression of $\nicefrac{\dd y}{\dd t}$ against $(t, y)$. See \cite{collocation, markov-karn} for variations on this idea. The Markov setting is also the one used for symbolic regression (Section \ref{section:misc:symbolic}).
	
	In general however the Markov setting is a restrictive assumption usually worth avoiding. The Markov/non-Markov distinction is an important one to watch out for in the NDE literature, as many works have implicitly restricted to the Markov setting without discussion.
\end{remark}

\subsubsection{Second-order-augmentation}
\cite{second-order} introduce an interesting variant on this: they take $d_l = 2d$ and structure the vector field so that the extra dimensions correspond to velocities. For example using a learnt augmentation,
\begin{equation*}
	y(0) = \begin{bmatrix}x\\g_\theta(x)\end{bmatrix}, \qquad y(t) = \begin{bmatrix}s(t)\\v(t)\end{bmatrix}, \qquad \frac{\dd y}{\dd t}(t) = \begin{bmatrix}v(t)\\f_\theta(t, s(t), v(t))\end{bmatrix}.
\end{equation*}
This may also be written as a second-order neural ODE $\nicefrac{\dd^2 s}{\dd t^2}(t) = f_\theta\left(t, s(t), \nicefrac{\dd s}{\dd t}(t)\right)$.

This is a choice that makes particular sense if using neural ODEs to model an oscillatory dynamical system.

\subsubsection{Augmenting rotational vector fields}\label{section:ode:augmentation-rotational}\index{Rotational vector fields}
Recall Remark \ref{remark:ode:rotational-vf}. Augmentation is one way to ameliorate the
lack of expressivity of rotational vector fields.% Consider a cloud of points -- such as data input into the system via the initial condition -- evolving according to the flow. Despite volume preservation, the presence of additional dimension means that the flow may `line up' its terminal values perpendicular to the final projection $\ell_\theta \colon \reals^{d_l} \to \reals^{d_o}$.

\begin{example}
Let $a = -1, b=0, c=\frac{1}{2} \in \reals$, and suppose we wish to classify $\{a, c\}$ versus $\{b\}$, by constructing a neural ODE followed by an affine layer. We will prove in Section \ref{section:ode:unaugmented-approximation} that this is actually impossible with an unaugmented neural ODE; there does not exist a flow whose terminal values linearly separate $\{a, c\}$ from $\{b\}$.

However, projecting these into $a = (-1, 0), b =(0, 0), c=(\frac{1}{2},0) \in \reals^2$, then the (volume-preserving) flow
\begin{align*}
	\frac{\dd x}{\dd t}(t) &= y\\
	\frac{\dd y}{\dd t}(t) &= -x
\end{align*}
will linearly separate $\{a, c\}$ from $\{b\}$ after any arbitrarily small amount of time.
\end{example}

\section{Approximation properties}\label{section:ode:universal-approximation}\index{Universal approximation!ODEs}
We now examine the universal approximation properties of neural ODEs, as maps from their initial value to their terminal value. See Appendix \ref{appendix:deep:universal-approximation} for an introduction to the topic of universal approximation.

\subsection{`Unaugmented' neural ODEs are not universal approximators}\label{section:ode:unaugmented-approximation}
Consider the map $y(0) \mapsto y(T)$, where $y \colon [0, T] \to \reals^d$ solves some neural ODE
\begin{equation*}
	\frac{\dd y}{\dd t}(t) = f_\theta(t, y(t)).
\end{equation*}
What functions can this approximate?

Unfortunately, the answer is `not many'. More precisely, the continuous evolution of the ODE ensures that any topological property of its input must be preserved.

Let $a = -1, b=0, c=\frac{1}{2} \in \reals$, and suppose we wish to classify $\{a, c\}$ versus $\{b\}$ by constructing a neural ODE followed by an affine layer. That is, the flow of the ODE should linearly separate $\{a, c\}$ from $\{b\}$.

This is impossible: we are asking that either $a < b$ and $c < b$, or that $a > b$ and $c > b$. Correspondingly, either the trajectories for $a$ and $b$ must cross, or the trajectories for $b$ and $c$ must cross. See Figure \ref{figure:unaugmented-crossing}.

\begin{figure}\centering
	\begin{tikzpicture}[scale=0.8]
		\draw[thick, ->] (0, 0) -- (5, 0);
		\draw[thick, ->] (0, 0) -- (0, 5);
		\node at (5.2, -0.2) {\footnotesize$x$};
		\node at (-0.2, 5.2) {\footnotesize$t$};
		
		\node at (1, 0)[circle,fill, inner sep=1.5pt] {};
		\node at (3, 0)[circle,fill, inner sep=1.5pt] {};
		\node at (4, 0)[circle,fill, inner sep=1.5pt] {};
		\node at (1, -0.35) {\footnotesize$a=-1$};
		\node at (2.5, -0.35) {\footnotesize$b=0$};
		\node at (4, -0.35) {\footnotesize$c=0.5$};
		
		\draw[black, thick, cap=round] (1, 0) .. controls ++(70:0.9) and ++(250:0.7) .. (2.8, 2.2);
		\draw[black, thick, cap=round, ->] (2.8, 2.2) .. controls ++(70:0.7) and ++(280:0.7) .. (2.8, 5);
		
		\draw[black, thick, cap=round] (3, 0) .. controls ++(70:0.6) and ++(320:0.6) .. (1.7, 2.8);
		\draw[black, thick, cap=round, ->] (1.7, 2.8) .. controls ++(140:0.6) and ++(260:0.7) .. (1, 5);
		
		\draw[black, thick, cap=round] (4, 0) .. controls ++(90:0.9) and ++(310:0.9) .. (4, 1.7);
		\draw[black, thick, cap=round] (4, 1.7) .. controls ++(310-180:0.9) and ++(260:0.9) .. (4, 3.6);
		\draw[black, thick, cap=round, ->] (4, 3.6) .. controls ++(260-180:0.9) and ++(260:0.9) .. (4, 5);
		
		\draw[dashed] (1.75, 4) -- (1.75, 6);
		\node at (3.7, 5.5) {\footnotesize Linear separation};
	\end{tikzpicture}
	\caption{ODE flows need to cross to linearly separate $\{a, c\}$ from $\{b\}$.}\label{figure:unaugmented-crossing}
\end{figure}
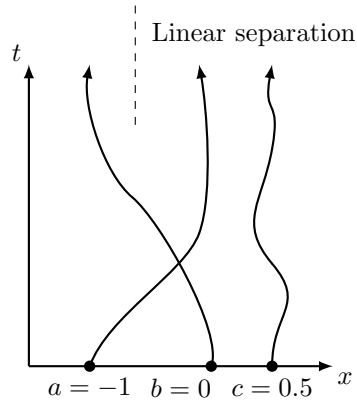

This is a contradiction, as ODE flows never cross.

\begin{remark}
This is not isolated to $d=1$. Higher dimensional counterexamples may be considered by considering analogous `nested shells', in which $\set{y \in \reals^d}{\norm{y} < r_1}$ and $\set{y \in \reals^d}{r_2 < \norm{y} < r_3}$ are classified from each other, with $0 < r_1 < r_2 < r_3$ \cite{augmented-node}.
\end{remark}

\subsection{`Augmented' Neural ODEs are universal approximators, even if their vector fields are not universal approximators}\label{section:ode:augmented-approximation}\index{Augmentation}
Fortunately, this is an issue easily remedied, through augmentation as introduced in Section \ref{section:ode:augmentation}.

\subsubsection{When the vector field is a universal approximator}
We first consider the case that the vector fields are universal approximators.

\begin{theorem}\label{theorem:ode:augmented-approximation}
Fix $d, d_l, d_o \in \naturals$ with $d_l \geq d + d_o$. For $f \in \Lip(\reals \times \reals^{d_l}; \reals^{d_l})$, $\ell_1 \in \affine(\reals^d; \reals^{d_l})$, $\ell_2 \in \affine(\reals^{d_l}; \reals^{d_o})$, let $\phi_{f, \ell_1, \ell_2} \colon \reals^d \to \reals^{d_o}$ denote the map $x \mapsto z$ with
\begin{equation*}
	y(0) = \ell_1(x),\qquad \frac{\dd y}{\dd t}(t) = f(t, y(t)) \quad\text{for $t \in [0, T]$},\qquad z = \ell_2(y(T)).
\end{equation*}

Then
\begin{equation*}
	\set{\phi_{f, \ell_1, \ell_2}}{f \in \Lip(\reals \times \reals^{d_l}; \reals^{d_l}), \ell_1 \in \affine(\reals^d; \reals^{d_l}), \ell_2 \in \affine(\reals^{d_l}; \reals^{d_o})}
\end{equation*}
is a universal approximator for $C(\reals^d; \reals^{d_o})$.
\end{theorem}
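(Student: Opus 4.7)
The plan is to exploit augmentation to reserve $d_o$ coordinates in the hidden state as dedicated ``output channels,'' while carrying the input $x$ along unchanged in a separate block of coordinates. Concretely, I would first reduce to the minimal case $d_l = d + d_o$; the general case $d_l \geq d + d_o$ is recovered by padding $\ell_1$ with zeros in the extra coordinates, letting $f$ act trivially there, and ignoring them in $\ell_2$. Set $\ell_1(x) = (x, 0) \in \reals^d \times \reals^{d_o}$, and for any Lipschitz $h \colon \reals^d \to \reals^{d_o}$ define
\begin{equation*}
f(t, (u, w)) = \bigl(0_d,\, h(u)\bigr) \in \reals^d \times \reals^{d_o},
\end{equation*}
which is Lipschitz in $(u,w)$ because $h$ is Lipschitz in $u$ and $f$ is constant in $w$. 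Solving the resulting ODE with initial condition $\ell_1(x)$ gives $y(t) = (x,\, t\, h(x))$, hence $y(T) = (x,\, T\, h(x))$. Taking $\ell_2(u,w) = w/T$ then yields $\phi_{f,\ell_1,\ell_2}(x) = h(x)$ identically.

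The second ingredient is to approximate an arbitrary continuous target $g \in C(\reals^d; \reals^{d_o})$ by such an $h$. Fix a compact $K \subset \reals^d$ and $\varepsilon > 0$; by standard density results (for instance, convolving $g$ with a smooth compactly supported mollifier, or piecewise-linear interpolation on a fine grid followed by a McShane/Kirszbraun-type Lipschitz extension) there exists a globally Lipschitz $h \colon \reals^d \to \reals^{d_o}$ with $\sup_{x \in K} \norm{g(x) - h(x)} < \varepsilon$. Feeding this $h$ into the construction above gives $\sup_{x \in K} \norm{g(x) - \phi_{f,\ell_1,\ell_2}(x)} < \varepsilon$, which is exactly the notion of universal approximation for $C(\reals^d; \reals^{d_o})$ under uniform convergence on compacts.

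There is no real obstacle here: the construction is algebraic, and the only analytic input is the classical density of Lipschitz maps in continuous maps. The conceptual point of the theorem — and where the bookkeeping actually matters — is the dimension budget $d_l \geq d + d_o$. It is precisely this extra room that lets the flow transport $x$ unchanged in one block of coordinates while writing $h(x)$ into a disjoint block, circumventing the topological obstruction from Section \ref{section:ode:unaugmented-approximation}. Notably, the vector field $f$ used in the construction does not need to come from any universal class of networks; only its Lipschitz regularity is used, which justifies the qualifier ``even if their vector fields are not universal approximators'' in the section title.
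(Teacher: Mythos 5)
Your proof is correct for the theorem as stated, and it is genuinely simpler than the route the paper takes, which is to cite \cite[Theorem~7]{zhang2020approximation}. The key difference is what you are allowed to assume about the vector field. The paper explicitly flags (in the parenthetical remark immediately following the theorem) that it has ``assumed that the vector field may be drawn from $\Lip(\reals \times \reals^{d_l}; \reals^{d_l})$, not just some dense subset of it,'' and your construction exploits exactly that simplification to the hilt: you bake the target $h$ directly into the vector field as $f(t,(u,w)) = (0_d, h(u))$, so the ODE degenerates to transporting $x$ unchanged and integrating a constant. The cited reference proves the harder version, where $f$ must come from a universal approximating \emph{subclass} (e.g.\ actual neural networks); there the flow genuinely has to do work, and one must control the accumulation of approximation error through the ODE solve (by Gr\"onwall-type estimates or by composing flows). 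Your argument trades that analysis for the freedom to choose $f$ exactly, which is legitimate under the hypotheses as printed. If you wanted to recover the full strength of Zhang et al., the natural extension of your idea would be: approximate your $f$ by $f_n$ from the restricted class, then show the ODE solutions converge uniformly on compacta, which requires a Gr\"onwall bound and uniform Lipschitz control on the $f_n$ --- but this is not in your write-up as it stands.

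One small misattribution in your closing remark: the observation that ``the vector field $f$ used in the construction does not need to come from any universal class'' does not ``justify the qualifier `even if their vector fields are not universal approximators'.'' That phrase in the subsection title refers to the companion Theorem~\ref{theorem:ode:univ-approx-wide}, which is a strictly different statement: there the vector field $f_{d_l}$ is a \emph{single fixed} function for each latent dimension $d_l$, and only $\ell_1,\ell_2$ vary over the class. Your construction takes a different $f$ for each target $h$, so it proves the present theorem but says nothing about Theorem~\ref{theorem:ode:univ-approx-wide} (whose proof, in Appendix~\ref{appendix:ode-universal-augmented}, builds a monomial/signature-type flow and pushes all the approximation power into $\ell_2$).
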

(For simplicity this theorem has assumed that the vector field may be drawn from $\Lip(\reals \times \reals^{d_l}; \reals^{d_l})$, not just some dense subset of it.)

See \cite[Theorem 7]{zhang2020approximation} for a short-and-sweet proof of Theorem \ref{theorem:ode:augmented-approximation}.

\subsubsection{When the vector field is not a universal approximator}
Perhaps surprisingly, the condition that the vector field must be a universal approximator is not a necessary condition.
\begin{restatable}{theorem}{myunivapprox}\label{theorem:ode:univ-approx-wide}
	Fix $d, d_o \in \naturals$. For $d_l \in \naturals$, $f \in C(\reals \times \reals^{d_l}; \reals^{d_l})$, $\ell_1 \in \affine(\reals^d; \reals^{d_l})$, $\ell_2 \in \affine(\reals^{d_l}; \reals^{d_o})$, let $\phi_{p, f, \ell_1, \ell_2} \colon \reals^d \to \reals^{d_o}$  be the map $x \mapsto z$ with
	\begin{equation*}
		y(0) = \ell_1(x),\qquad \frac{\dd y}{\dd t}(t) = f(t, y(t)) \quad\text{for $t \in [0, T]$},\qquad z = \ell_2(y(T))
	\end{equation*}
	for those $f$ for which the solution is unique.\footnote{The Peano existence theorem implies existence as $f$ is continuous; but as $f$ is not necessarily Lipschitz then the stronger Picard existence theorem, which gives uniqueness, does not apply.}
	
	For each $d_l \in \naturals$ there exists an $f_{d_l} \in C(\reals^{d_l}; \reals^{d_l})$, for which the above equation has a unique solution, such that
	\begin{equation*}
		\set{\phi_{d_l, f_{d_l}, \ell^1, \ell^2}}{d_l \in \naturals, \ell_1 \in \affine(\reals^d; \reals^{d_l}), \ell_2 \in \affine(\reals^{d_l}; \reals^{d_o})}
	\end{equation*}
	is a universal approximator for $C(\reals^d; \reals^{d_o})$.
\end{restatable}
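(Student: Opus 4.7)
The statement is surprising because the vector field $f_{d_l}$ must be fixed, yet we still want universal approximation by varying only the affine pre- and post-processing. The trick is that $\ell_1$, being affine, can carry an arbitrary constant bias; I will hide ``neural-network parameters'' inside that bias, and design a trivial autonomous vector field that treats these parameters as constants and produces the corresponding shallow-network output in one distinguished coordinate. After that, universality reduces immediately to the classical feedforward universal approximation theorem.

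\textbf{Construction.} Fix a non-polynomial continuous activation $\sigma$, so that shallow networks $x \mapsto C^\top \sigma(Ax + b)$ are universal approximators of $C(\reals^d; \reals^{d_o})$ on compacta. For each $d_l$, let $N = N(d_l)$ be the largest integer with $d + N(d + 1 + d_o) + d_o \leq d_l$; if no $N \geq 1$ fits, set $f_{d_l} \equiv 0$. Otherwise, partition the state as $y = (\xi, A, b, C, \alpha, \star)$ with $\xi \in \reals^d$, $A \in \reals^{N \times d}$, $b \in \reals^N$, $C \in \reals^{N \times d_o}$, $\alpha \in \reals^{d_o}$, and $\star$ for leftover slack coordinates, and define
\begin{equation*}
    f_{d_l}(\xi, A, b, C, \alpha, \star) = \bigl(0,\ 0,\ 0,\ 0,\ T^{-1} C^\top \sigma(A\xi + b),\ 0\bigr).
\end{equation*}
This is continuous on $\reals^{d_l}$, and since the $\xi, A, b, C, \star$ components have identically zero dynamics, the equation for $\alpha$ then has a constant right-hand side and therefore a unique (in fact explicit) solution.

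\textbf{Approximation.} Given any $g \in C(\reals^d; \reals^{d_o})$, $\varepsilon > 0$, and compact $K \subset \reals^d$, classical universal approximation yields a width $N$ and parameters $(A^\ast, b^\ast, C^\ast)$ with $\sup_{x \in K} \norm{g(x) - {C^\ast}^\top \sigma(A^\ast x + b^\ast)} < \varepsilon$. Choose any $d_l$ with $N(d_l) \geq N$, pad the parameters with zero units up to width $N(d_l)$ (which does not change the network output), and take
\begin{equation*}
    \ell_1(x) = (x,\ A^\ast,\ b^\ast,\ C^\ast,\ 0,\ 0), \qquad \ell_2(\xi, A, b, C, \alpha, \star) = \alpha.
\end{equation*}
Both maps are affine by inspection ($\ell_1$ has linear part the inclusion $\reals^d \hookrightarrow \reals^{d_l}$ and bias carrying the parameters; $\ell_2$ is a coordinate projection). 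Integrating the ODE gives $\alpha(T) = {C^\ast}^\top \sigma(A^\ast x + b^\ast)$, so $\phi_{d_l, f_{d_l}, \ell_1, \ell_2}$ equals the approximating shallow network on $K$ and hence lies within $\varepsilon$ of $g$ uniformly on $K$.

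\textbf{Main obstacle.} The only substantive conceptual step is recognising that autonomy of $f_{d_l}$ is not a real restriction: an affine $\ell_1$ with nonzero bias can smuggle arbitrary constants into the initial condition, and the chosen vector field converts those constants into a shallow-network evaluation in the $\alpha$-coordinate. Everything else is bookkeeping: packing parameters into coordinates, padding to handle the case that the target network is narrower than $N(d_l)$, and ensuring $f_{d_l}$ is defined for \emph{every} $d_l$ (including very small ones, handled by the trivial-field convention).
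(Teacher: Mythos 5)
Your proof is correct, and it takes a genuinely different route from the one in the paper. The paper's proof (Appendix \ref{appendix:ode-universal-augmented}) builds a cascade of tensor-product vector fields that, starting from $y_0 \equiv x$, iteratively integrates to produce the scaled monomials $x, x^{\otimes 2}, \tfrac{1}{2!}x^{\otimes 3}, \ldots, \tfrac{1}{M!}x^{\otimes(M+1)}$ at $t=1$; the readout $\ell_2$ then assembles an arbitrary polynomial, and density follows from Stone--Weierstrass. That construction is essentially a truncated signature, which fits the thesis's thematic emphasis and connects to the CDE universal approximation result (the paper explicitly notes this). Your proof instead exploits the affine bias of $\ell_1$ to smuggle feedforward-network parameters $(A^\ast, b^\ast, C^\ast)$ into the initial state, has the fixed autonomous vector field evaluate the shallow network $C^\top\sigma(A\xi + b)$ as a constant and integrate it into a dedicated output coordinate, and appeals to the classical feedforward universal approximation theorem. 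Both arguments share the crucial observation that autonomy of $f_{d_l}$ is not a real restriction because the affine pre-processing can carry arbitrary constants, and both constructions are ``degenerate'' in the same way (the dynamic coordinates have vector fields that do not depend on themselves, so the ODE is really just an integral) --- a degeneracy the paper acknowledges in its own comments and which seems inherent to the statement. What your version buys is a more self-contained deep-learning argument that reduces immediately to the classical UAT without needing Stone--Weierstrass or tensor algebra; what the paper's version buys is a basis-free, activation-function-free construction that illuminates the link to signatures and iterated integrals. Your uniqueness argument is sound: it does not require Lipschitzness of $f_{d_l}$ because the zero-dynamics coordinates are forced constant, after which the $\alpha$-equation has a constant right-hand side with an explicit unique solution.
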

See Appendix \ref{appendix:ode-universal-augmented} for the proof.

\subsubsection{Comparison}
If the vector field is a universal approximator, then the width of the latent space $d_l$ is fixed, and complexity is obtained through the vector field. In contrast, if the vector field is not a universal approximator, then the latent dimensionality $d_l$ is allowed to become arbitrarily large, and complexity is instead obtained through the affine maps.

\begin{remark}
These have direct analogues in the theory of universal approximation for neural networks.

The case for which the vector field is not a universal approximator is directly analogous to the classical universal approximation theorem, which states that sufficiently wide feedforward neural networks may be used to approximate arbitrary continuous functions \cite{pinkus1999}.

The case for which the vector field is a universal approximator is directly analogous to the `deep and narrow' universal approximation theorem, which states that sufficiently deep feedforward networks, of bounded width, may be used to approximate arbitrary continuous functions \cite{lu-width,hanin-sellke,kidger2020deep,park2021minimum}.
\end{remark}

\section{Comments}
Neural ODEs were originally considered (to the best of this author's knowledge) in early works from the 1990s, such as \cite{historical-2, historical-1, historical-4, historical-3}. A recent revival of the neural-network-as-dynamical-system was started with works such as \cite{E2017, haber-ruthotto}, and popularised (in particular in continuous time) by \cite{neural-odes}. 

Indeed \cite{neural-odes} introduced continuous-time neural ODEs for image classification (Section \ref{section:ode:image-classification}), continuous normalising flows (Section \ref{section:ode:cnf}), and latent ODEs (Section \ref{section:ode:latent-ode}). The latter two were expanded on in \cite{ffjord, latent-odes}.

Applications of neural ODEs to physical problems span multiple literatures; we can give at most a small selection of examples. Examples from machine learning include \cite{greydanus2019hamiltonian, rackauckas2020universal, bellot2021consistency}, whilst examples from engineering include \cite{ling_kurzawski_templeton_2016, Lusch2018, portwood2019turbulence, pyrolysis}. Other examples include physics \cite{physics1}, climate science \cite{ramadhan2020capturing, manifold-node1, hwang2021climate}, epidemiology \cite{pmlr-v144-wang21a}, neuroscience \cite{pmlr-v139-kim21h}, pharmacodynamics \cite{stiff-neural-ode} and so on.

Connections between neural ODEs and their discrete-time counterparts include \cite{haber-ruthotto, augmented-node, momentum-residual-networks, recurrent-depth} amongst others.

Extensions of neural ODEs to handle discontinuities, such as the velocity of a bouncing ball, include \cite{discont1, discont2, discont3}.\index{Jump!In the state}

\cite{manifold-node1, manifold-node2, manifold-node3} generalise CNFs to manifolds, and for example then use CNFs to perform density estimation over distributions on a sphere. \cite{moser-flow} offer a variation suitable for low-dimensional manifolds, that elides the ODE solve.

\cite{how-to-train-node, finlay2020learning, Onken_Wu_Fung_Li_Ruthotto_2021, moser-flow} amongst others discuss connections between CNFs and optimal transport, to select good parameterisations and regularisations for the vector field.

Good parameterisations for the vector field are often to be found by examining the code attached to any given neural ODE paper. Works discussing this topic explicitly include \cite{haber-ruthotto, augmented-node, anode2, ode2ode, dissecting, second-order}.

\cite{augmented-node} note the lack of universal approximation for `unaugmented' neural ODEs. \cite{zhang2020approximation} demonstrate universal approximation with `augmented' neural ODEs provided the vector field is a universal approximator. More subtle universal approximation results may also be found in the literature \cite{node-approx-1, node-approx-2}. The material on universal approximation when the vector is \textit{not} a universal approximator (Theorem \ref{theorem:ode:univ-approx-wide}) is new here.

A few other review articles combining ordinary dynamical systems and deep learning have recently been published, which the reader may find complements this chapter. For example \cite{pmp-proof} focus on interpreting deep learning via control theory, \cite{brunton2020ml-fluids} focus on applications to fluid mechanics, and \cite{thuerey2021pbdl} place great emphasis on performing experiments. Most such works place a strong focus on specifically hybrid neural/mechanistic modelling with neural ODEs, which is our Section \ref{section:ode:hybrid}.
\chapter{Neural Controlled Differential Equations}\label{chapter:neural-cde}
\section{Introduction}\label{section:cde:introduction}
Neural ODEs were the continuous-time limit of residual networks. We will now introduce \textit{neural controlled differential equations} as the continuous-time limit of \textit{recurrent} neural networks. The following chapter will be of particular interest for those studying RNNs or time series; also to those studying rough path theory, control theory, or reinforcement learning.

Controlled differential equations have until recently been relatively esoteric, so we do not assume familiarity with them on the part of the reader. The forthcoming section will form a `mini-chapter' offering a self-contained summary of the key ideas, applications, and \textit{raison d'{\^e}tre} for CDEs and neural CDEs.

%\begin{remark}
%	Neural CDEs will be a particularly natural choice when working with irregular data, or densely-observed data, for the key reason that a continuous-time approach \textit{decouples computation from data}. This is crucial to ensure that processing occurs on a `natural timescale', rather than simply whenever we happen to make an observation. We will make this hand-wavy remark precise in Section \ref{section:cde:rnn-cde}, once we have introduced the definition of a neural CDE.
%\end{remark}

Recall the equations for a neural ODE:
\begin{equation}\label{eq:node-in-cde-chapter}
	y(0) = y_0, \qquad y(t) = y(0) + \int_0^t f_\theta(s, y(s)) \,\dd s.
\end{equation}
An extra time-like dimension is introduced and then integrated over. The presence of this extra (artificial) dimension motivates us to consider whether this model can be extended to data already exhibiting sequential structure, such as time series.

Given some ordered data $(y_0, \ldots, y_n)$, the goal is to extend the $y(0) = y_0$ condition to one resembling `$y(0) = y_0,\, \ldots,\, y(n) = y_n$', to align the introduced time-like dimension with the natural ordering of the data. The key difficulty is that the solution of an ODE is determined by the initial condition at $y(0)$, so there is no direct mechanism for incorporating data that arrives later.

Fortunately, it turns out that the resolution of this issue -- how to incorporate incoming information into a differential equation -- is already a well-studied problem in mathematics, via \textit{controlled differential equations}.

Much of this chapter is due to \cite{kidger2020neuralcde}.

\subsection{Controlled differential equations}\label{section:cde:cde}\index{Controlled differential equations}
Let $T > 0$ and let $d_x, d_y \in \naturals$. Let $x \colon [0, T] \to \reals^{d_x}$ be a continuous function of bounded variation. Let $f \colon \reals^{d_y} \to \reals^{d_y \times d_x}$ be Lipschitz continuous. Let $y_0 \in \reals^{d_y}$.

A continuous path $y \colon [0, T] \to \reals^{d_y}$ is said to solve a \textit{controlled differential equation}, controlled or driven by $x$, if
\begin{equation}\label{eq:cde}
	y(0) = y_0,\qquad y(t) = y(0) + \int_0^t f(y(s)) \,\dd x(s)\quad\text{for $t \in (0, T]$}.
\end{equation}
Here `$\dd x(s)$' denotes a Riemann--Stieltjes integral, and `$f(y(s)) \,\dd x(s)$' refers to a matrix-vector multiplication.

\paragraph{Bounded variation and Riemann--Stieltjes integration}\index{Bounded variation}\index{Riemann--Stieltjes integration}
Beyond the ODE case of the last chapter, then CDEs depend on two new concepts: bounded variation paths, and Riemann--Stieltjes integration.

Suppose $x$ is differentiable and has bounded derivative -- a relatively weak assumption. Then $x$ will be of bounded variation, and the Riemann--Stieltjes integral may be reduced to an ordinary integral
\begin{equation}\label{eq:cde-to-ode}
	\int_0^t f(y(s)) \,\dd x(s) = \int_0^t f(y(s)) \frac{\dd x}{\dd s}(s)\,\dd s.
\end{equation}
As such whilst we will continue to treat the general case, the reader unfamiliar with these concepts should feel free to mentally substitute the above treatment throughout.

\begin{remark}
	Equation \eqref{eq:cde-to-ode} is essentially about reducing a CDE to an ODE. Correspondingly, the term `vector field' may be used to refer to either $f(y(s))$ or $f(y(s)) \nicefrac{\dd x}{\dd s}(s)$.
\end{remark}

\paragraph{CDEs are operators}
A controlled differential equation should be interpreted as a function from path-space to path-space. The input is a path $x$. The output is a path $y$. By choosing $f$ carefully, we may use a CDE to compute specific functions of its control.

\begin{example}[Value and integral of control]\label{example:cde:cde-value-integral}
	Let $f \colon \reals^2 \to \reals^{2 \times 2}$ be defined by
	\begin{equation*}
		f(y) = \begin{bmatrix} 0 & 1 \\ y_1 & 0 \end{bmatrix}
	\end{equation*}
	(where $y \in \reals^2$ is decomposed into $y = [y_1, y_2]$).

	Given any control $x \colon [0, T] \to \reals$, let $y \colon [0,T] \to \reals^2$ be the solution of the CDE driven by $t \mapsto (t, x(t))$, with vector field $f$, with initial condition $y(0) = [x(0), 0] \in \reals^2$. Then $y(t)$ will compute the value, and the first integral, of $x$.
	
	For example, consider the input signal $x(t) = \sin(t) \in \reals$. 
	
	Then
	\begin{align*}
		y(t) &= y(0) + \int_0^t f(y(s)) \, \dd \begin{bmatrix} s \\ x(s) \end{bmatrix}\\
		&= \int_0^t \begin{bmatrix} 0 & 1 \\ y_1(s) & 0 \end{bmatrix} \begin{bmatrix} 1 \\ \cos(s) \end{bmatrix} \,\dd s\\
		&= \int_0^t \begin{bmatrix} \cos(s) \\ y_1(s) \end{bmatrix} \,\dd s.
	\end{align*}
	Solving the first component, we see that
	\begin{equation*}
		y_1(t) = \int_0^t \cos(s) \,\dd s = \sin(t)
	\end{equation*}
	and so
	\begin{equation*}
		y_2(t) = \int_0^t y_1(s) \,\dd s = \int_0^t \sin(s) \,\dd s.
	\end{equation*}
	
	As advertised: $y(t)$ computes both the value $\sin(t)$ of the input signal, and its first integral $\int_0^t \sin(s) \,\dd s$.% To reiterate: the CDE computes a function of its input signal.
	
	Moreover there was nothing special about the choice of $\sin(t)$, and this CDE will compute the value and first integral of any input signal.
\end{example}

We will make the equivalence more precise later on, but the connection to RNNs should be intuitive: much like CDEs, they compute some function of their time-varying input.

\paragraph{Existence and uniqueness}
The Picard existence theorem (Theorem \ref{theorem:picard-ode}) may be adapted to this setting.

\index{Existence}\index{Uniqueness}\index{Picard's existence theorem}
\begin{theorem}[{Picard existence theorem, \cite[Theorem 1.3]{levy-lyons} or \cite[Theorem 3.8]{FrizVictoir10}}]\label{theorem:cde:picard}
	Let $f \colon \reals^{d_y} \to \reals^{d_y \times d_x}$ be Lipschitz. Let $x \colon [0, T] \to \reals^{d_x}$ be of bounded variation. Let $y_0 \in \reals^{d_y}$. Then there exists a unique continuous $y \colon [0, T] \to \reals^{d_y}$ satisfying
	\begin{equation*}
		y(0) = y_0, \qquad y(t) = y(0) + \int_0^t f(y(s)) \,\dd x(s)\quad\text{for $t \in (0, T]$}.
	\end{equation*}
\end{theorem}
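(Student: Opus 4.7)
The plan is to run the standard Picard iteration argument, adapted to Riemann--Stieltjes integrals against a BV control. Define the Picard operator
\begin{equation*}
\Phi(y)(t) = y_0 + \int_0^t f(y(s)) \,\dd x(s),
\end{equation*}
whose fixed points in $C([0,\tau]; \reals^{d_y})$ are exactly solutions to the CDE on $[0,\tau]$. First I would check that $\Phi$ genuinely sends continuous paths to continuous paths: since $f$ is (Lipschitz, hence) continuous and $y$ is continuous, the integrand $s \mapsto f(y(s))$ is continuous, and the Riemann--Stieltjes integral of a continuous function against a continuous BV integrator is continuous in its upper limit.

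The key estimate is the contraction bound. For $y_1, y_2 \in C([0,\tau]; \reals^{d_y})$, using the Lipschitz constant $L$ of $f$ and denoting by $V_{[a,b]}(x)$ the total variation of $x$ on $[a,b]$,
\begin{equation*}
\norm{\Phi(y_1)(t) - \Phi(y_2)(t)} \leq \int_0^t \norm{f(y_1(s)) - f(y_2(s))}\, \dd V_{[0,s]}(x) \leq L \, V_{[0,\tau]}(x) \, \norm{y_1 - y_2}_\infty,
\end{equation*}
so $\norm{\Phi(y_1) - \Phi(y_2)}_\infty \leq L \, V_{[0,\tau]}(x) \, \norm{y_1 - y_2}_\infty$. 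Because $x$ is assumed continuous and BV, the variation function $t \mapsto V_{[0,t]}(x)$ is continuous and vanishes at $0$, so I can choose $\tau_0 > 0$ small enough that $L\, V_{[0,\tau_0]}(x) < 1/2$. On the complete metric space $C([0,\tau_0]; \reals^{d_y})$ with the sup norm, $\Phi$ is then a strict contraction, and the Banach fixed point theorem delivers a unique solution on $[0,\tau_0]$.

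To extend to all of $[0, T]$, I would use continuity of $t \mapsto V_{[0,t]}(x)$ to choose a (finite) partition $0 = \tau_0 < \tau_1 < \cdots < \tau_N = T$ with $L\, V_{[\tau_i, \tau_{i+1}]}(x) < 1/2$ on every sub-interval. Finiteness of $N$ follows from compactness of $[0, T]$ together with the finite total variation of $x$. On each $[\tau_i, \tau_{i+1}]$ I repeat the fixed-point argument using $y(\tau_i)$ (produced by the previous step) as the initial condition, and concatenate to obtain a continuous solution on all of $[0, T]$. Global uniqueness follows from uniqueness on each piece by induction on $i$.

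The only real obstacle is technical book-keeping around the Riemann--Stieltjes integral: one needs the elementary bound $\big|\!\int g\,\dd x\big| \leq \|g\|_\infty \, V(x)$ and continuity of the integral in its upper limit for continuous integrators, both of which are standard. Everything else is routine: the proof is structurally identical to Picard for ODEs, with $\dd s$ replaced by $\dd x(s)$ and the interval length $\tau$ replaced by the control variation $V_{[0,\tau]}(x)$ as the small quantity that drives the contraction.
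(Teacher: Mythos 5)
Your proof is correct. The paper does not provide its own proof of this result; it cites \cite[Theorem 1.3]{levy-lyons} and \cite[Theorem 3.8]{FrizVictoir10}, whose arguments are essentially the contraction-mapping scheme you lay out here, with the total variation $t \mapsto V_{[0,t]}(x)$ playing the role that interval length plays in the ODE version. Two small remarks worth being explicit about: first, although the theorem statement in the paper only says ``of bounded variation,'' the surrounding section fixes $x$ to be \emph{continuous} and of bounded variation, and your argument correctly relies on continuity of $x$ (both so that $t \mapsto V_{[0,t]}(x)$ is continuous, allowing the choice of $\tau_0$, and so that $\Phi(y)$ lands back in $C([0,\tau]; \reals^{d_y})$); second, the iterated restart at each $\tau_i$ produces a path that is automatically continuous at the gluing points, since the value at the right endpoint of one step matches the initial condition of the next by construction, so the concatenation is globally well-defined and unique.
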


\begin{remark}
	The differential equation for a CDE is (by convention) autonomous, in the sense that $f$ is independent of the time $t$. If really desired then $t$ may be included by adding it to the state: replace $x$ with $[t, x]$ and $f$ with $\begin{bmatrix}1 & 0 \\ 0 & f\end{bmatrix}$. This implies we have replaced $y$ with $[t, y]$.
\end{remark}

\begin{remark}
	We might wonder about also using right hand sides of the form `$f_\theta(y(s), x(s))$'. Whilst there is nothing fundamentally wrong with this alternate approach, it is less theoretically neat. When using `$f_\theta(y(s), x(s))$' it is not possible to have $x \mapsto y$ be the identity function (see Section \ref{section:contrnn} later), whilst the `$f_\theta(y(s))\,\dd x(s)$' form has connections to integration against Brownian motion, as with stochastic differential equations.
\end{remark}

\subsection{Neural vector fields}
Suppose we observe some data in the form of a (continuous and bounded variation) path $x \colon [0, T] \to \reals^{d_x}$. This is often a little unrealistic as usually we observe discrete samples, for example in a time series. We shall fix this in a moment, when we consider applications.

Let $f_\theta \colon \reals^{d_y} \to \reals^{d_y \times d_x}$ be any (Lipschitz) neural network depending on parameters $\theta$. The value $d_y \in \naturals$ is a hyperparameter describing the size of the hidden state. Let $\zeta_\theta \colon \reals^{d_x} \to \reals^{d_y}$ be any neural network depending on the parameters $\theta$. Both $f_\theta$ and $\zeta_\theta$ will often just be parameterised as MLPs.

We define a \emph{neural controlled differential equation} \cite{kidger2020neuralcde} as the solution of the CDE
\begin{equation}\label{eq:ncde}
	y(0) = \zeta_\theta(x(0)),\qquad y(t) = y(0) + \int_{0}^t f_\theta(y(s)) \,\dd x(s)\quad\text{for $t \in (0, T]$}.
\end{equation}

The quantity $y$ is hidden state, modified in response to observations $x$. This is directly analogous to an RNN. This hidden state reflects an evolving belief about the system, updated continuously as observations $x$ are made.

Let $d_o \in \naturals$ be the desired output dimensionality of the model, and let $\ell_\theta \colon \reals^{d_y} \to \reals^{d_o}$ be a learnt affine map. Then the output of the model can be $\ell_\theta(y(t))$ if a time-evolving output is desired, or $\ell_\theta(y(T))$ if it is not, for example when performing whole-time-series classification. Once again, this parallels the construction of an RNN, for which a learnt affine readout is typically used to map from hidden state to output.

The resemblance between equations \eqref{eq:node-in-cde-chapter} and \eqref{eq:ncde} is clear. The essential difference is that equation \eqref{eq:ncde} is driven by the data process $x$, whilst equation \eqref{eq:node-in-cde-chapter} is driven only by the identity function $\reals \to \reals$. In this way, the neural CDE is naturally adapting to incoming data, as changes in $x$ change the local dynamics of the system.

\subsection{Solving CDEs}\label{section:cde:evaluate}
As with neural ODEs, we expect to numerically discretise the CDE so as to obtain an approximate solution.

A CDE may be discretised in two different ways. One option is to treat the `$\dd x(s)$` analogous to time inside a numerical differential equation solver, so that for example the explicit Euler method becomes
\begin{equation*}
	y_{j+1} = y_j + f_\theta(y_j) (x(t_{j+1}) - x(t_j)).
\end{equation*}
In practice however most software libraries do not support this (with the notable exception of Diffrax \cite{diffrax}).

Provided $x$ is differentiable -- in practice it often will be -- then the CDE may also be reduced to an ODE. Let
\begin{equation}\label{eq:g-theta-x}
	g_{\theta, x}(y, s) = f_\theta(y) \frac{\dd x}{\dd s}(s),
\end{equation}
so that for $t \in (0, T]$,
\begin{align}
	y(t) &= y(0) + \int_{0}^t f_\theta(y(s)) \,\dd x(s)\nonumber\\
	&= y(0) + \int_{0}^t f_\theta(y(s)) \frac{\dd x}{\dd s}(s) \,\dd s\nonumber\\
	&= y(0) + \int_{0}^t g_{\theta, x}(y(s), s) \,\dd s.\label{eq:ncde-to-node}
\end{align}
It is now possible to solve and train the neural CDE using the same techniques as for neural ODEs, and in particular using the same software. See Section \ref{section:numerical:software} for more discussion on software for neural differential equations.

\subsection{Application to regular time series}\label{section:cde:regular}\index{Time series}\index{Time series!Regular}
%Everything so far has been relatively abstract.
Let us now consider a concrete application to `regular' time series. That is to say, the observations are at regularly-spaced points, these points are the same for each batch element, and there is no missing data. (Extension to irregular time series will be considered in Section \ref{section:cde:irregular}.)

Let each time series be some sequence $\mathbf{x} = (x_0, \ldots, x_n)$ with each $x_j \in \reals^{d_x - 1}$. Let $x_{\mathbf{x}} \colon [0, n] \to \reals^{d_x}$ be some interpolation such that $x_{\mathbf{x}}(j) = (j, x_j)$. For example, $x_\mathbf{x}$ could be a cubic spline. Then $x_{\mathbf{x}}$ may be used to drive a neural CDE. See Figure \ref{fig:ncde-picture-regular}.

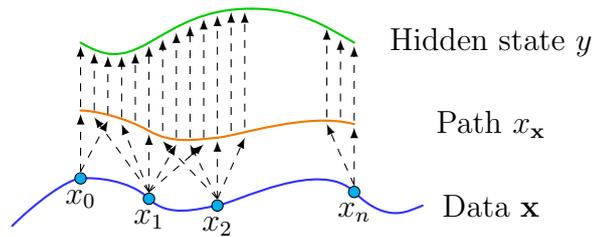
\begin{figure}[t]\centering
	\begin{tikzpicture}[scale=0.9]
		% data
		\draw[blue!80!white, thick, cap=round] (-1, 0.3) .. controls ++(45:0.5) and ++(185:0.2) .. (0, 1);
		\draw[blue!80!white, thick, cap=round] (0, 1) .. controls ++(185+180:0.2) and ++(135:0.4) .. (1, 0.7);
		\draw[blue!80!white, thick, cap=round] (1, 0.7) .. controls ++(135+180:0.4) and ++ (190:0.3) .. (2, 0.6);
		\draw[blue!80!white, thick, cap=round] (2, 0.6) .. controls ++(190+180:0.3) and ++ (135:0.7) .. (4, 0.8);
		\draw[blue!80!white, thick, cap=round] (4, 0.8) .. controls ++(135+180:0.7) and ++(200:0.2) .. (5, 0.6);
		\node at (0, 1)[circle,draw=black, fill=cyan, inner sep=1.5pt] {};
		\node at (1, 0.7)[circle,draw=black, fill=cyan, inner sep=1.5pt] {};
		\node at (2, 0.6)[circle,draw=black, fill=cyan, inner sep=1.5pt] {};
		\node at (4, 0.8)[circle,draw=black, fill=cyan, inner sep=1.5pt] {};
		\node at (0, 0.7) {$x_0$};
		\node at (1, 0.4) {$x_1$};
		\node at (2, 0.3) {$x_2$};
		\node at (4, 0.5) {$x_n$};
		\node at (6, 0.6) {Data $\mathbf{x}$};
		
		% interpolated data
		\draw[orange!95!black, thick, cap=round] (0, 2) .. controls ++(0:0.2) and ++(150:0.4) .. (1, 1.7);
		\draw[orange!95!black, thick, cap=round] (1, 1.7) .. controls ++(150+180:0.4) and ++ (185:0.3) .. (2, 1.6);
		\draw[orange!95!black, thick, cap=round] (2, 1.6) .. controls ++(185+180:0.3) and ++ (165:0.7) .. (4, 1.8);
		\node at (6, 1.8) {Path $x_\mathbf{x}$};
		
		% interpolating dashed lines
		\draw[dashed, ->] (0, 1.1) -- (0, 1.95);
		\draw[dashed, ->] (1, 0.8) -- (0.6, 1.8);
		\draw[dashed, ->] (2, 0.7) -- (2, 1.55);
		\draw[dashed, ->] (4, 0.9) -- (4, 1.75);
		
		\draw[dashed, ->] (0, 1.1) -- (0.4, 1.85);
		\draw[dashed, ->] (1, 0.8) -- (0.2, 1.9);
		\draw[dashed, ->] (1, 0.8) -- (1, 1.65);
		\draw[dashed, ->] (1, 0.8) -- (1.4, 1.5);
		\draw[dashed, ->] (1, 0.8) -- (1.8, 1.5);
		\draw[dashed, ->] (2, 0.7) -- (1.2, 1.55);
		\draw[dashed, ->] (2, 0.7) -- (1.6, 1.5);
		\draw[dashed, ->] (2, 0.7) -- (2.4, 1.6);
		\draw[dashed, ->] (4, 0.9) -- (3.6, 1.8);
		
		% hidden state
		\draw[green!80!black, thick, cap=round] (0, 3) .. controls ++(-30:0.4) and ++(210:0.5) .. (1, 3);
		\draw[green!80!black, thick, cap=round] (1, 3) .. controls ++(210+180:0.5) and ++(180:0.7) .. (2.5, 3.5);
		\draw[green!80!black, thick, cap=round] (2.5, 3.5) .. controls ++(0:0.7) and ++(150:0.5) .. (4, 3);
		
		% updates
		\draw[dashed, ->] (0, 2.05) -- (0, 2.95);
		\draw[dashed, ->] (0.2,2) -- (0.2,2.85);
		\draw[dashed, ->] (0.4,1.95) -- (0.4,2.8);
		\draw[dashed, ->] (0.6,1.9) -- (0.6,2.8);
		\draw[dashed, ->] (0.8,1.85) -- (0.8,2.85);
		\draw[dashed, ->] (1,1.75) -- (1,2.95);
		\draw[dashed, ->] (1.2, 1.7) -- (1.2, 3.05);
		\draw[dashed, ->] (1.4, 1.6) -- (1.4, 3.15);
		\draw[dashed, ->] (1.6, 1.6) -- (1.6, 3.25);
		\draw[dashed, ->] (1.8, 1.6) -- (1.8, 3.35);
		\draw[dashed, ->] (2, 1.65) -- (2, 3.4);
		\draw[dashed, ->] (2.2, 1.7) -- (2.2, 3.45);
		\draw[dashed, ->] (2.4, 1.75) -- (2.4, 3.45);
		\draw[dashed, ->] (3.6, 1.9) -- (3.6, 3.2);
		\draw[dashed, ->] (3.8, 1.9) -- (3.8, 3.05);
		\draw[dashed, ->] (4, 1.85) -- (4, 2.95);
		\node at (6,3) {Hidden state $y$};
	\end{tikzpicture}
	\caption{The hidden state of the neural CDE evolves continuously, driven by observational data.}\label{fig:ncde-picture-regular}
\end{figure}

\begin{remark}\label{remark:cde:approx-not-important}
	$x_\mathbf{x}$ is sometimes interpreted as an approximation to some underlying process that $\mathbf{x}$ has been sampled from. This is true, but not really relevant. Rather, $x_\mathbf{x}$ is just a continuous-time representation of the input data. If we had used $-x_\mathbf{x}$ instead then this would have represented the information contained in $\mathbf{x}$ just as well, despite being neither an interpolation nor an approximation.
\end{remark}

We discuss choices of interpolation scheme in more detail in Section \ref{section:cde:interpolation}.

\subsubsection{Spiral classification}\label{section:cde:spiral-experiment}\index{Examples!Neural CDEs}
As a toy example, we construct a two-dimensional dataset consisting of time series of the $(x, y)$-position of spirals, and train a neural CDE to perform binary classification of clockwise against anticlockwise. We consider data both with and without corruption by additive Gaussian noise.

The hidden state $y$ of the CDE evolves in $\reals^{d_l}$ with $d_l = 8$, and the prediction of the model at time $t$ is given by $\sigma(\ell_\theta(y(t))) \in (0, 1)$, where $\ell_\theta \colon \reals^{d_l} \to \reals$ is a learnt affine readout and $\sigma$ is the sigmoid function.  The model is trained with binary cross entropy on $\sigma(\ell_\theta(y(T)))$.

The final output of the model is given by $\sigma(\ell_\theta(y(T)))$, but we may examine the evolving $t \mapsto \sigma(\ell_\theta(y(t)))$ for interest. See Figure \ref{fig:cde-spiral-experiment}. The prediction updates as the input sequence is fed into the model, converging towards a steady state of the correct classification. (On this simple problem the model achieves perfect accuracy.)

Precise experimental details may be found in Appendix \ref{appendix:cde:spiral-experiment}. The code is available as an example in Diffrax \cite{diffrax}.

\begin{remark}
	The presence of noise does not necessitate any changes to this approach. If really desired the data could for example be smoothed with a filter, but in principle this is not necessary. The interpolation is just a continuous-time representation of the noisy data, which the model consumes as input.
\end{remark}

\begin{figure}[t]\centering
	\includegraphics[width=0.5\linewidth]{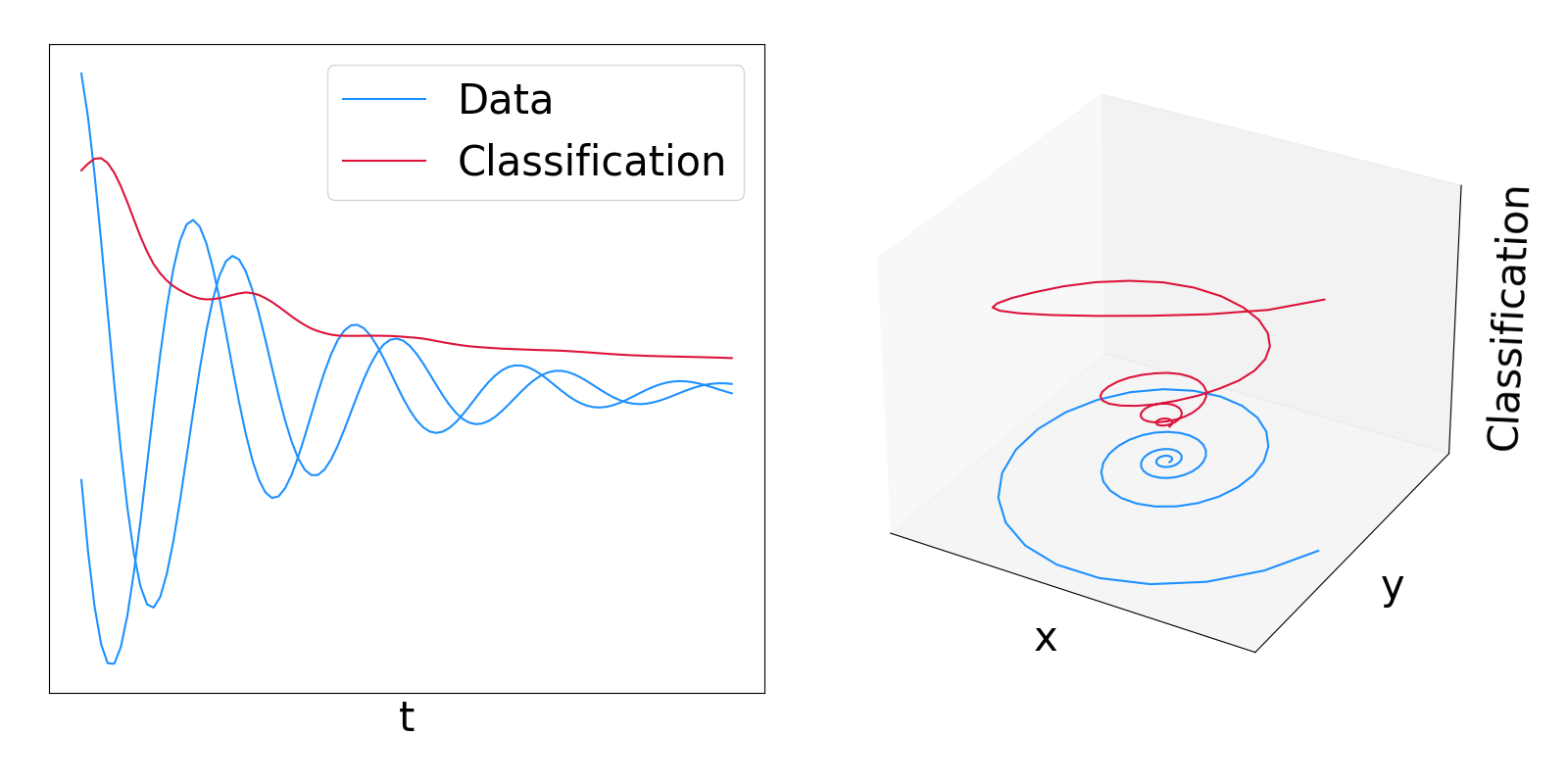}
	\includegraphics[width=0.5\linewidth]{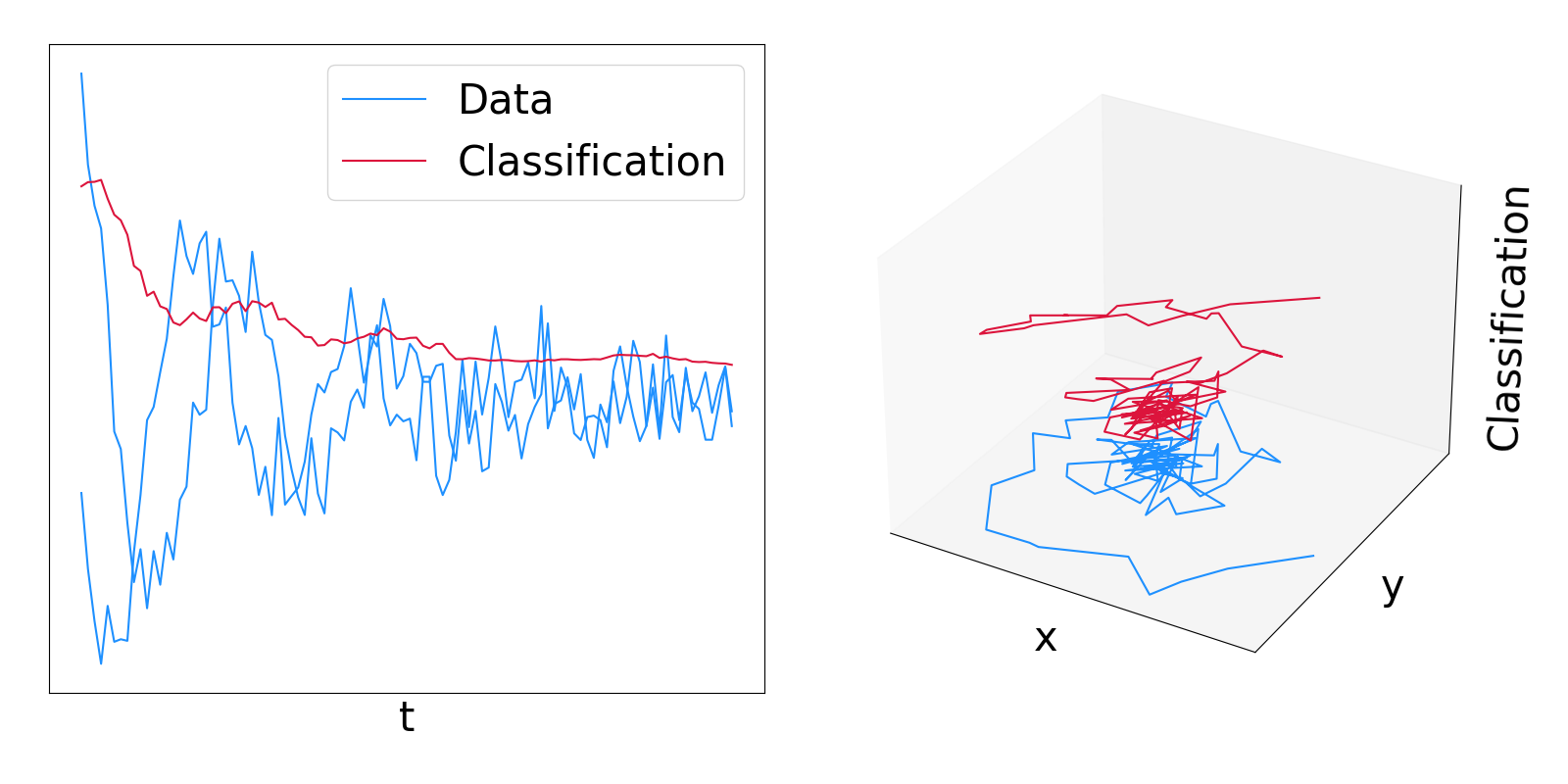}
	\caption{Data, and evolving prediction, of the neural CDE. \textbf{Top:} Data without noise. \textbf{Bottom:} Data is corrupted with Gaussian noise prior to training. \textbf{Left:} The $(x,y)$ data, and the prediction of the neural CDE, are shown evolving over time. The prediction converges towards a steady state (of zero; a classification of a clockwise spiral) as sufficient input data becomes available. \textbf{Right:} The $(x,y)$ position is shown at the bottom of the figure. Above it is shown the current prediction of the neural CDE.}\label{fig:cde-spiral-experiment}
\end{figure}

\subsubsection{The inclusion of time}\label{section:cde:inclusion-of-time}
There is only one foible with this construction, which is that $x_{\mathbf{x}}(j) = (j, x_j)$ and not simply $x_{\mathbf{x}}(j) = x_j$. This one detail is important for expressivity of the model.

\begin{example}
Suppose the function computed by the neural CDE should be the length of the input time series. If $x_{\mathbf{x}}(j) = (j, x_j)$ then this is straightforward. Take $d_y = 1$ so that $y(t) \in \reals$, and let $f_\theta(y) = [1, 0, \ldots, 0] \in \reals^{d_x}$ be constant. Let the initial value network $\zeta_\theta(x) = 0$ for all inputs. Then
\begin{align*}
	y(n) &= y(0) + \int_0^{n} f_\theta(y(s))\,\dd x(s)\\
	&= \int_0^{n} \begin{bmatrix}1 & 0 & \cdots & 0 \end{bmatrix} \begin{bmatrix} 1 \\ * \\ \vdots \\ *\end{bmatrix} \,\dd s\\
	&= \int_0^{n} \dd s\\
	&= n,
\end{align*}
where the `$* \cdots *$' refers to whatever the derivative of the interpolation of $\mathbf{x}$ is.

If this extra `time' variable is missed out, and simply $x_{\mathbf{x}}(j) = x_j$, then computing the length is impossible. For example suppose $x_j = 0$ for all $j$, and correspondingly any reasonable interpolation scheme will have $x_{\mathbf{x}}(t) = 0$ for all $t$. Then $\nicefrac{\dd x}{\dd t}(t) = 0$ as well, and $y(t) = y(0)$ regardless of the choice of $f_\theta$. And so $y(t)$ cannot calculate the length of $\mathbf{x}$ for this particular choice of $\mathbf{x}$.
\end{example}

(Note how the Example \ref{example:cde:cde-value-integral}, earlier, also included time as an additional channel.)

\subsection{Discussion}
Neural CDEs offer several advantages, both conceptually and practically.

\subsubsection{Universal approximation}\index{Universal approximation!CDEs}
Provided this formulation is followed carefully -- and this extra time-like variable is included -- then the neural CDE will be a universal approximator.

\begin{restatable}{theorem-informal}{informalunivapprox}\label{theorem-informal:informalunivapprox}
	An affine map on the terminal value of a neural CDE is a universal approximator from $\{\text{\emph{sequences in }} \reals^{d_x}\}$ to $\reals$.
\end{restatable}

We will discuss this further in Section \ref{section:cde:universal-approximation}, and provide a formal statement and proof in Appendix \ref{appendix:cde:universal-approximation}.

\subsubsection{Continuous-time updates}\label{section:cde:continuous-time-computation}
Neural CDEs update their hidden state in continuous time. In many contexts this is far more natural than the discrete-time updates typical of an RNN.

\paragraph{Irregular data} Suppose the data arrives at irregular times. (We'll discuss this use case in much more detail in the next section.) If the times are close together then the hidden state of an RNN or neural CDE will often need only a small update. If the times are far apart then the belief about the system may need a very large update.

An RNN, however, devotes equal processing power to both of these use cases. (A single update step.) This may be inefficient if the times were close together, and insufficient if they were far apart.

In contrast a neural CDE updates continuously. The amount of computational work scales with the gap between observations, and this is likely close to the `natural timescale' at which we should update our belief about the system.\footnote{Could we instead repeatedly give the last piece of input data to an RNN, whilst waiting a long time for another observation? Yes, and in doing so have just reinvented a particular discretisation of a neural CDE.}

\paragraph{Decoupled data and computation}
Put precisely, continuous-time updates decouple data and computation; the latter is no longer tied to the former. This is particularly true if solving a neural CDE with an adaptive step size numerical ODE/CDE solver. Such a solver automatically detects the complexity of the dynamics and takes appropriately-sized numerical steps.

\paragraph{Special cases of neural CDEs}
In light of this, there have now been several proposals in which the hidden state of an RNN is updated in continuous time between observations; popular examples are GRU-D or ODE-RNNs \cite{Che2018, latent-odes, gru-ode-bayes}. These are special cases or discretisations of neural CDEs. (Exercise for the reader!)

\subsubsection{Memory efficient backpropagation}\label{section:cde:training}\index{Optimise-then-discretise!CDEs}
Given an RNN, for which evaluating and backpropagating a single step consumes $H$ memory, then backpropagating an RNN evaluated on a time series of length $T$ will consume $\bigO{HT}$ memory.

In contrast, neural CDEs can reduce this to only $\bigO{H + T}$ memory. This consists of $\bigO{H}$ to backpropagate through each step individually, and $\bigO{T}$ to hold the underlying data $x$ in memory. That each step can be backpropagated through individually is due to the use of `optimise-then-discretise' backpropagation. We will discuss this style of backpropagation alongside our other numerical discussions, in Section \ref{section:numerical:adjoint-cde-sde}.

\subsection{Summary}
The goal of this section, Section \ref{section:cde:introduction}, has been to summarise the main ideas behind CDEs and neural CDEs. Now that these are in place, we will be ready to move on to some more serious applications of neural CDEs.

We conclude this section with some thoughts on the connections of CDEs to other fields of study.

\subsubsection{Rough path theory}\label{section:cde:rough-path-theory}\index{Rough!Path theory}
The theory of CDEs may be extended to highly irregular driving paths $x$, which are not even of bounded variation. This is known as \textit{rough path theory}, and correspondingly such CDEs become rebranded as \textit{rough differential equations} (RDEs).

There is broadly speaking a hierarchy from ODEs to CDEs to RDEs to SDEs: CDEs introduce the notion of control; RDEs additionally consider when the control is rough; SDEs additionally consider when the control is stochastic (usually Brownian motion).\footnote{For example this hierarchy is demonstrated by numerical SDE solvers, which typically operate by drawing a sample of the Brownian motion and then solving the SDE pathwise.%Although that is often not the way this is presented, essentially because SDE theory predates the finer conceptual structure that rough path theory provides. More broadly this hierarchy of ODE-CDE-RDE-SDE does omit a couple of edge cases and technical details.
}

We will use rough path theory in a few contexts: when applying neural CDEs to long time series (Section \ref{section:cde:long-time-series}), in the proof of universal approximation for neural CDEs (Section \ref{section:cde:universal-approximation}), and in a later chapter to construct the `optimise-then-discretise' equation for neural SDEs (Section \ref{section:adjoint-nsde}).

These all tend towards the theoretical end of things, and we emphasise that a familiarity is neither expected nor required to read this thesis, or to work with the techniques discussed.

\begin{remark}
For those with the right background (graduate-level analysis), then rough path theory gives an excellent framework for understanding neural differential equations. It offers a pathwise theory, and a general framework through which ODEs/CDEs/SDEs may all be unified; for example Diffrax \cite{diffrax} uses its principles to construct a unified system of numerical differential equation solvers. The first few pages of \cite{roughstochasticNF} give a brief introduction to the essential ideas of rough path theory, \cite{levy-lyons} is a typical introductory text, and \cite{FrizVictoir10} is the canonical textbook.
\end{remark}

\subsubsection{Control theory}\index{Control theory}\label{section:cde:control-theory}
Despite their similar names, and treatment of similar problems, controlled differential equations and control theory are typically treated as separate fields.

The difference is to some extent philosophical. In control theory, the system $f$ is typically specified\footnote{Perhaps incompletely via observations, necessitating the additional step of performing system identification.}, and the task is to find a control $x$ producing the desired response $y$. Meanwhile with (neural) CDEs, this is flipped around: the control $x$ is typically specified, and we shall attempt to find a system $f$ that produces a desired response $y$.

This is not a distinction we particularly wish to enforce, though -- there is still substantial overlap.

\section{Applications}
Neural CDEs have a number of applications, usually to time series. We will see applications to difficult time series (such as irregular or long time series), and will later briefly touch on connections to reinforcement learning. In addition we have previously remarked that RNNs and neural CDEs are linked, and we will also make this connection explicit.

\subsection{Irregular time series}\label{section:cde:irregular}\index{Time series!Irregular}\index{Irregular sampling}
Suppose we observe some irregular time series of the form $\mathbf{x} = ((t_0, x_0), \ldots, (t_n, x_n))$, with each $t_j \in \reals$ the timestamp of the observation
\begin{equation*}
x_j = (x_{j, 1}, \ldots, x_{j, d_x - 1}) \in (\reals \cup \{*\})^{d_x - 1}.
\end{equation*}
Here $*$ denotes the possibility of missing data, and $t_0 < \cdots < t_n$. The length $n$ is not assumed to be consistent between different time series.

Let $T > 0$ and $0 = s_0 < s_1 < \cdots < s_n = T$. Let $x_{\mathbf{x}} \colon [0, T] \to \reals^{d_x}$ be some interpolation such that $x_{\mathbf{x}}(s_j) = (t_j, x_j)$ (with the equality being defined up to those elements of $x_j$ which are not missing). For example, we could take $s_j = t_j$ (or $s_j = j$ as in the previous section) and $x_\mathbf{x}$ to be a cubic spline with knots at $s_0, \ldots, s_n$.

\newcommand{\mainpicturedata}{
	% time axis
	\draw[thick, ->] (-1, 0) -- (5, 0);
	\node at (0, 0)[circle,fill, inner sep=1.5pt] {};
	\node at (0.6, 0)[circle,fill, inner sep=1.5pt] {};
	\node at (2, 0)[circle,fill, inner sep=1.5pt] {};
	\node at (4, 0)[circle,fill, inner sep=1.5pt] {};
	\node at (0, -0.3) {$t_0$};
	\node at (0.6, -0.3) {$t_1$};
	\node at (2, -0.3) {$t_2$};
	\node at (3, -0.3) {$\cdots$};
	\node at (4, -0.3) {$t_n$};
	\node at (5.7, 0) {Time};
	
	% data
	\draw[blue!80!white, thick, cap=round] (-1, 0.3) .. controls ++(45:0.5) and ++(185:0.2) .. (0, 1);
	\draw[blue!80!white, thick, cap=round] (0, 1) .. controls ++(185+180:0.2) and ++(135:0.4) .. (0.6, 0.7);
	\draw[blue!80!white, thick, cap=round] (0.6, 0.7) .. controls ++(135+180:0.4) and ++ (190:0.3) .. (2, 0.6);
	\draw[blue!80!white, thick, cap=round] (2, 0.6) .. controls ++(190+180:0.3) and ++ (135:0.7) .. (4, 0.8);
	\draw[blue!80!white, thick, cap=round] (4, 0.8) .. controls ++(135+180:0.7) and ++(200:0.2) .. (5, 0.6);
	\node at (0, 1)[circle,draw=black, fill=cyan, inner sep=1.5pt] {};
	\node at (0.6, 0.7)[circle,draw=black, fill=cyan, inner sep=1.5pt] {};
	\node at (2, 0.6)[circle,draw=black, fill=cyan, inner sep=1.5pt] {};
	\node at (4, 0.8)[circle,draw=black, fill=cyan, inner sep=1.5pt] {};
	\node at (0, 0.7) {$x_0$};
	\node at (0.6, 0.4) {$x_1$};
	\node at (2, 0.3) {$x_2$};
	\node at (4, 0.5) {$x_n$};
	\node at (5.7, 0.6) {Data $\mathbf{x}$};
}

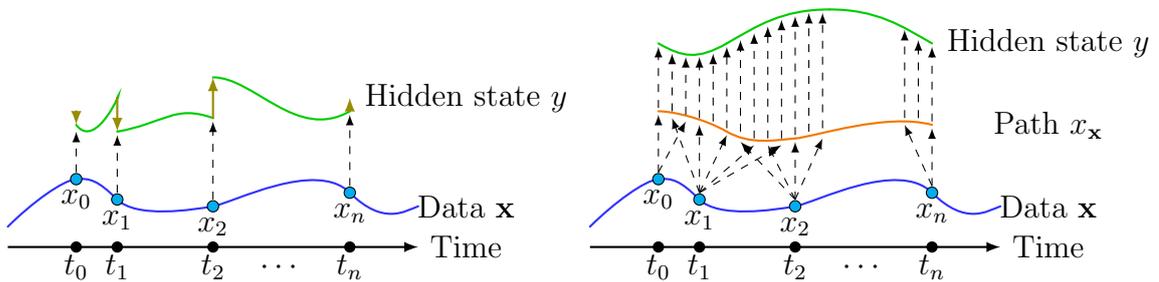
\begin{figure}[b]
	\begin{subfigure}[b]{0.49\textwidth}\centering
		\begin{tikzpicture}[scale=0.9]
			\mainpicturedata
			
			% hidden state
			\draw[olive, thick, cap=round, ->] (0, 2) -- (0, 1.8);
			\draw[green!80!black, thick, cap=round] (0, 1.8) .. controls (0.3, 1.4) and (0.7, 2.4) .. (0.6,2.2);
			\draw[olive, thick, cap=round, ->] (0.6,2.2) -- (0.6,1.7);
			\draw[green!80!black, thick, cap=round] (0.6,1.7) .. controls ++(10:0.7) and ++(160:0.6) .. (2,1.9);
			\draw[olive, thick, cap=round, ->] (2,1.9) -- (2,2.5);
			\draw[green!80!black, thick, cap=round] (2,2.5) .. controls ++(0:0.6) and ++(210:0.9) .. (4,2);
			\draw[olive, thick, cap=round, ->] (4,2) -- (4, 2.2);
			
			% updates
			\draw[dashed, ->] (0, 1.1) -- (0, 1.7);
			\draw[dashed, ->] (0.6, 0.8) -- (0.6, 1.65);
			\draw[dashed, ->] (2, 0.7) -- (2, 1.85);
			\draw[dashed, ->] (4, 0.9) -- (4, 1.95);
			
			\node at (5.7, 2.2) {Hidden state $y$};
		\end{tikzpicture}
		%\caption{Previous work: observations modify hidden state.\\}
	\end{subfigure}
	\hspace{-0.2em}
	\begin{subfigure}[b]{0.49\textwidth}\centering
		\begin{tikzpicture}[scale=0.9]
			\mainpicturedata
			
			% interpolated data
			\draw[orange!95!black, thick, cap=round] (0, 2) .. controls ++(0:0.2) and ++(150:0.4) .. (1, 1.7);
			\draw[orange!95!black, thick, cap=round] (1, 1.7) .. controls ++(150+180:0.4) and ++ (185:0.3) .. (2, 1.6);
			\draw[orange!95!black, thick, cap=round] (2, 1.6) .. controls ++(185+180:0.3) and ++ (165:0.7) .. (4, 1.8);
			\node at (5.7, 1.8) {Path $x_\mathbf{x}$};
			
			% interpolating dashed lines
			\draw[dashed, ->] (0, 1.1) -- (0, 1.95);
			\draw[dashed, ->] (0.6, 0.8) -- (0.6, 1.8);
			\draw[dashed, ->] (2, 0.7) -- (2, 1.55);
			\draw[dashed, ->] (4, 0.9) -- (4, 1.75);
			
			\draw[dashed, ->] (0, 1.1) -- (0.4, 1.85);
			\draw[dashed, ->] (0.6, 0.8) -- (0.2, 1.9);
			\draw[dashed, ->] (0.6, 0.8) -- (1, 1.65);
			\draw[dashed, ->] (0.6, 0.8) -- (1.4, 1.5);
			\draw[dashed, ->] (0.6, 0.8) -- (1.8, 1.5);
			\draw[dashed, ->] (2, 0.7) -- (1.2, 1.55);
			\draw[dashed, ->] (2, 0.7) -- (1.6, 1.5);
			\draw[dashed, ->] (2, 0.7) -- (2.4, 1.6);
			\draw[dashed, ->] (4, 0.9) -- (3.6, 1.8);
			
			% hidden state
			\draw[green!80!black, thick, cap=round] (0, 3) .. controls ++(-30:0.4) and ++(210:0.5) .. (1, 3);
			\draw[green!80!black, thick, cap=round] (1, 3) .. controls ++(210+180:0.5) and ++(180:0.7) .. (2.5, 3.5);
			\draw[green!80!black, thick, cap=round] (2.5, 3.5) .. controls ++(0:0.7) and ++(150:0.5) .. (4, 3);
			
			% updates
			\draw[dashed, ->] (0, 2.05) -- (0, 2.95);
			\draw[dashed, ->] (0.2,2) -- (0.2,2.85);
			\draw[dashed, ->] (0.4,1.95) -- (0.4,2.8);
			\draw[dashed, ->] (0.6,1.9) -- (0.6,2.8);
			\draw[dashed, ->] (0.8,1.85) -- (0.8,2.85);
			\draw[dashed, ->] (1,1.75) -- (1,2.95);
			\draw[dashed, ->] (1.2, 1.7) -- (1.2, 3.05);
			\draw[dashed, ->] (1.4, 1.6) -- (1.4, 3.15);
			\draw[dashed, ->] (1.6, 1.6) -- (1.6, 3.25);
			\draw[dashed, ->] (1.8, 1.6) -- (1.8, 3.35);
			\draw[dashed, ->] (2, 1.65) -- (2, 3.4);
			\draw[dashed, ->] (2.2, 1.7) -- (2.2, 3.45);
			\draw[dashed, ->] (2.4, 1.75) -- (2.4, 3.45);
			\draw[dashed, ->] (3.6, 1.9) -- (3.6, 3.2);
			\draw[dashed, ->] (3.8, 1.9) -- (3.8, 3.05);
			\draw[dashed, ->] (4, 1.85) -- (4, 2.95);
			\node at (5.7,3) {Hidden state $y$};
		\end{tikzpicture}
		%\caption{Neural CDE: hidden state has continuous dependence on data.}
	\end{subfigure}
	\caption{Some data process is observed at times $t_0, \ldots, t_n$ to give observations $x_0, \ldots, x_n$. It is otherwise unobserved. \textbf{Left:} RNNs modify the hidden state at each observation; common variants \cite{Che2018, latent-odes, gru-ode-bayes} continuously evolve the hidden state between observations. \textbf{Right: } In contrast, the hidden state of the neural CDE model has continuous dependence on the observed data.}\label{figure:ncde-picture}
\end{figure}

Then $x_{\mathbf{x}}$ may be used to drive a neural CDE \cite{kidger2020neuralcde}; see Figure \ref{figure:ncde-picture}.

\begin{remark}\label{remark:cde:no-impute}
	We stress that this interpolation is \textit{not} imputing missing data. It is simply constructing a continuous-time representation of the input data. We will discuss how to appropriately handle missing data in a moment.
\end{remark}

The choice of interpolation scheme, including the choice of $s_j$, is one we will defer until Section \ref{section:cde:interpolation}. A few different choices may be made, depending on the type of problem.

\subsubsection{Missingness as a channel}\label{section:cde:missingness1}
It has been observed that the frequency of observations may carry information \cite{Che2018}. For example, doctors may take more frequent measurements of patients they believe to be at greater risk. Some previous work has for example sought to incorporate this information by learning an intensity function \cite{shukla2018interpolationprediction, latent-odes, neural-odes}.

A simple (non-learnt) procedure is just to concatenate the index $j$ as an additional channel. That is, construct the path $x_\mathbf{x} \colon [0, T] \to \reals^{d_x + 1}$ such that $x_\mathbf{x}(s_j) = (t_j, x_j, j)$ instead of just $(t_j, x_j)$. The extra channel of $x_\mathbf{x}$ then gives the cumulative number of observations over time.

As the derivative of $x_\mathbf{x}$ is what is then used when evaluating the neural CDE model, as in equation \eqref{eq:ncde-to-node}, then it is the current observational rate that then determines the vector field.

\subsubsection{Partially observed data}\label{section:cde:missingness2}
When some data is missing, then the frequency of observations in each individual channel may carry information. The previous procedure may now be straightforwardly extended, by having a separate observational channel for each original channel.

Explicitly, take $x_\mathbf{x}(s_j) = (t_j, x_j, c_j(\mathbf{x}))$, where $c_j(\mathbf{x}) = (c_{j,1}, \ldots, c_{j, d_x}) \in \reals^{d_x}$, where $c_{j, k} = \sum_{m=0}^j \indicator{x_{m, k} \neq *}$ counts the number of observations in the $k$th channel by time $t_j$. This means that $x_\mathbf{x} \colon [0, T] \to \reals^{2 d_x - 1}$.

Adding observational masks is standard practice when working with informatively missing data \cite{Che2018}; this is the appropriate continuous-time analogy.

\subsubsection{Batching irregular data and choice of $s_j$}\label{section:cde:batching}
In the context of CDEs, the data consists of multiple $x_\mathbf{x} \colon [s_0, s_n] \to \reals^{d_x}$ that need to be batched together. In principle each interval $[s_0, s_n]$ may be different for each batch element, for example if we chose $s_j = t_j$ and the data is irregularly sampled.

%How this is done depends on the capabilities of the differential equation software being used; see also Section \ref{section:numerical:software} for more discussion on the choice of software. The choice of $s_j$, and thus the region of integration $[s_0, s_n]$, may be different for each batch element.

\paragraph{Batchable differential equation solvers}\index{Batchable differential equation solvers}

Some (very few) differential equation software libraries allow batching over different regions of integration. In this case the problem is straightforward: simply use the capabilities of the library. For example this is the case with Diffrax \cite{diffrax}.

\paragraph{Other differential equation solvers}

Most differential equation software libraries do not intrinsically support batching over different regions of integration. For example this is the case with \texttt{torchdiffeq} and \texttt{torchcde} \cite{torchdiffeq, torchcde}.

Fortunately, the structure of neural CDEs mean this is not a serious hurdle, as we may choose specifically $s_j = j$. This ensures that each region of integration begins at the same value, namely zero, and we need merely integrate forwards in time for sufficiently long that every batch element has been integrated. (See also Section \ref{section:cde:interpolation-points} for more discussion on the choice of $s_j$.)

As an additional benefit, the fact that all $s_j$ are the same for each batch element can be used to simplify the storage of batches of multiple control paths $x_\mathbf{x} \colon [0, T] \to \reals^{d_x}$. (Instead of juggling different collections of intervals $[s_j, s{j+1}]$ for each batch element.)

\begin{remark}
	This was actually a mistake we made in \cite{kidger2020neuralcde} -- the above procedure was not done, in favour of an alternate (storage-inefficient) scheme that involved taking the union over the times $t_j$ needed to handle each batch of data.
\end{remark}

\begin{remark}
	One minor quirk in this case arises when using an adaptive step size solver. A subtle dependency between batch elements is introduced, as the step size will be determined by the behaviour across the whole (batched) system. This is usually not a major issue, and this is simply tolerated. (This quirk is not unique to neural CDEs, and is true whenever a batch of neural differential equations are solved with an unbatched adaptive differential equation solver.)
\end{remark}

\subsubsection{Example}
We now refer back to the regularly-spaced example of Section \ref{section:cde:regular}. Had the observations been irregularly spaced, they could have been handled in the manner just described. Morally speaking neural CDEs make little difference between regular and irregular time series; once a continuous path $x_\mathbf{x}$ is obtained then both are treated in exactly the same way.

\subsection{RNNs are discretised neural CDEs}\label{section:cde:rnn-cde}
We will now make the connection between RNNs and CDEs explicit; see also \cite{kidger2020neuralcde}.

\subsubsection{CDEs as RNNs}
Consider the CDE
\begin{equation*}
	y(t) = y(0) + \int_0^t f(y(s)) \,\dd x(s)\quad\text{for $t \in (0, T]$}.
\end{equation*}
Discretising this with Euler's method produces either
\begin{equation*}
	y(t_{j+1}) = y(t_j) + f(y(t_j))(x(t_{j+1}) - x(t_j))
\end{equation*}
or
\begin{equation*}
	y(t_{j+1}) = y(t_j) + f(y(t_j)) \frac{\dd x}{\dd t}(t_j) (t_{j+1} - t_j)
\end{equation*}
depending on whether the CDE is converted into an ODE first.

In either case, this is an RNN-like structure: suppose $f = f_\theta$ is some neural network and $x$ is some input data.

\subsubsection{RNNs as CDEs}
Conversely consider an RNN of the form
\begin{equation*}
	y_{j + 1} = h_\theta(y_j, x_j).
\end{equation*}
This is an explicit Euler discretisation with unit timestep of
\begin{equation}\label{eq:cde:_discrete-rnn-as-cde}
	y(t) = y(0) + \int_{0}^t h_\theta(y(s), x(s)) - y(s)\,\dd s.
\end{equation}
Equations of this form, in which the integrand is some function of $y(s)$ and $x(s)$, are special cases of neural CDEs. (We'll discuss this in Section \ref{section:contrnn}.)

\subsubsection{RNN variants}
The `$\mathop - y(s)$' term in \eqref{eq:cde:_discrete-rnn-as-cde} feels a little out of place. It would not appear for RNNs of the form
\begin{equation*}
	y_{j + 1} = y_j + h_\theta(y_j, x_j).
\end{equation*}
RNNs of this form resemble residual networks, and indeed this parameterisation clearly provides a better differential-equation-like structure.

\begin{example}\label{example:cde:gru-ode}
	A GRU is of the above form. Recall that a GRU is defined by
	\begin{align*}
		i_j &= \sigmoid(W_{1} x_j + W_{2} h_j + b_1),\\
		r_j &= \sigmoid(W_{3} x_j + W_{4} h_j + b_2),\\
		n_j &= \tanh(W_5 x_j + b_3 + r_j * (W_6 h_j + b_4)),\\
		h_{j+1} &= n_j + i_j * (h_j - n_j),
	\end{align*}
	for input time series $x_j$ evolving hidden state $h_j$, and suitably shaped weight matrices $W_1, W_2, W_3, W_4, W_5, W_6$ and bias vectors $b_1, b_2, b_3, b_4$. Here $*$ denotes elementwise multiplication.
	
	This is an explicit Euler discretisation of
	\begin{align*}
		i(t) &= \sigmoid(W_{1} x(t) + W_{2} h(t) + b_1)\\
		r(t) &= \sigmoid(W_{3} x(t) + W_{4} h(t) + b_2),\\
		n(t) &= \tanh(W_5 x(t) + b_3 + r(t) * (W_6 h(t) + b_4)),\\
		\frac{\dd h}{\dd t}(t) &= (1 - i(t)) * (n(t) - h(t)).
	\end{align*}
\end{example}

\begin{remark}\label{remark:cde:gru-ode}
	Note the $-h(t)$ that appears on the right hand side of the continuous-time GRU. This corresponds to exponential decay of the hidden state of the GRU, just as with the differential equation $\nicefrac{\dd y}{\dd t}(t) = -y(t)$ for exponential decay \cite{decaying-gru}. This explains the classic fact that GRUs/LSTMs struggle to learn long-term time dependencies.
\end{remark}

% TODO: subsection on dense time series

\subsection{Long time series and rough differential equations}\label{section:cde:long-time-series}\index{Time series!Long}\index{Rough!Differential equations}
Neural CDEs, as with RNNs, begin to break down for very long time series. Loss/accuracy worsens, and training time becomes prohibitive due to the sheer number of operations required to evaluate a single pass of the model.

We will now see how this may be remedied. The key idea is to take very large integration steps -- much larger than the sampling rate of the data -- whilst incorporating sub-step information through additional terms in the numerical solver, through what is known as the \textit{log-ODE method}.

A CDE treated in this way is termed a \textit{rough differential equation}, in the sense of rough path theory. Correspondingly we refer to this approach as \textit{neural rough differential equations}. (Or less snappily, `the log-ODE method applied to neural CDEs'.) This was introduced in \cite{morrill2021neuralrough}.

\begin{remark}
	For the reader familiar with numerical SDEs, this inclusion of `sub-step information' is directly analogous to the difference between the Euler--Maruyama method and Milstein's method \cite{kloedenplaten1992}. For the reader familiar with the Magnus expansion \cite{magnus2008expansion}, then the log-ODE method is a generalisation to nonlinear differential equations.
\end{remark}

More so than the rest of this chapter, this will rely on advanced theoretical tools from rough path theory. As such this material is deferred to Appendix \ref{appendix:neural-rde} to avoid breaking the flow.

\subsection{Training neural SDEs}\label{section:cde:training-nsde}
We will see in Chapter \ref{chapter:neural-sde} that SDEs, as generative models, may be trained as GANs. However samples from SDEs are continuous-time paths, which necessitate a discriminator that admits a continuous-time path as an input -- such as a neural CDE.

The main ideas for this were introduced in \cite{kidger2021sde1, kidger2021sde2}, and this will be discussed alongside neural SDEs in Chapter \ref{chapter:neural-sde}.

\section{Theoretical properties}

\subsection{Universal approximation}\label{section:cde:universal-approximation}
In the CDE literature, it is a well-known theorem that they represent general functions on streams. We think \cite[Theorem 4.2]{perezarribas2018}, see also \cite[Proposition A.6]{bonnier2019deep}, give the clearest statement of this result. This may be applied to show that neural CDEs are universal approximators, which we summarise in the following informal statement.

\informalunivapprox*

The essential idea is that a suitably large CDE can compute a truncated basis for the space of continuous functions of its input. The final affine map may then take some affine combination of these, and in doing so approximate any continuous function.

Theorem \ref{theorem:datawise-universal-approximation} in Appendix \ref{appendix:cde:universal-approximation} gives a formal statement and a proof, which generalises the original presentation in \cite[Appendix B]{kidger2020neuralcde}.

This property is not necessary for good empirical performance (GRUs frequently achieve good performance without being universal approximators \cite{unbounded-counting}), but it is reassuring to know that it may be accomplished in principle.

\subsection{Comparison to alternative ODE models}\label{section:contrnn}
If unfamiliar with CDEs, then it may seem natural to replace $f_\theta(y(s))\frac{\dd x}{\dd s}(s)$ with some $h_\theta(y(s), x(s))$ that is directly applied to, and potentially nonlinear in, $x(s)$. Indeed, special cases of this have been suggested before, in particular to derive the `GRU-ODE' analogous to a GRU \cite{gru, gru-ode-bayes, gruode} (Example \ref{example:cde:gru-ode}).

However, it turns out that something is lost by doing so, which we summarise in the following statement.

\begin{theorem-informal}
	Any equation of the form $y(t) = y(0) + \int_{0}^t h_\theta(y(s), x(s)) \,\dd s$ may be represented exactly by a neural CDE of the form $y(t) = y(0) + \int_{0}^t f_\theta(y(s)) \,\dd x(s)$. However the converse statement is not true.
\end{theorem-informal}

The essential idea is that a neural CDE can easily represent the identity function between paths, whilst the alternative is incapable of doing so. Theorem \ref{theorem:continuous-rnn} in Appendix \ref{appendix:cde:nonlinear-g} provides the formal statement and proof, which originally appeared in \cite[Appendix C]{kidger2020neuralcde}.

(This does not preclude using the $h_\theta(y(s), x(s))$ form if this happens to work on any given problem, of course.)

\subsection{Invariances}\label{section:invariances}
CDEs exhibit two possible invariances. In general these invariances are often undesirable and are removed as follows.

\subsubsection{Translation invariance and initial value networks}\index{Invariances!Translation}
The integral defining the evolution of a neural CDE depends upon the control $x$ only through its derivative $\nicefrac{\dd x}{\dd t}$. If this were the only way that $x$ was input to the model, then $y$ would be invariant to translations of $x$.

It is for this reason that the initial hidden state $y(0)$ depends on $x(0)$ through the initial value network $\zeta_\theta$, so as to ensure sensitivity to translations. Other alternatives may also be admitted: a channel whose first derivative includes translation-sensitive information could be appended, for example by replacing $x$ with $\widetilde{x}$ where $\widetilde{x}(t) = (x(t), tx(0))$. 
%Another alternative would be to replace $x$ with $\widetilde{x}$ defined by concatenating $x$ with an initial straight-line segment from zero
%\begin{equation*}
%	\widetilde{x}(t) = \begin{cases}
%		(t + 1) \, x(0) & t \in [-1, 0],\\
%		x(t) & t \in (0, T],\\
%	\end{cases}
%\end{equation*}
%and solving the CDE over $[-1, T]$ instead of $[0, T]$.

\subsubsection{Reparameterisation invariance}\label{section:reparam-invariance}\index{Invariances!Reparameterisation}
CDEs exhibit a \textit{reparameterisation invariance} property.\footnote{In fact they also exhibit a \emph{tree-like invariance} property \cite{hambly2010uniqueness}, which is a slight generalisation.}
\begin{restatable}{proposition}{reparaminvariance}
	Let $\psi \colon [0, S] \to [0, T]$ be differentiable, increasing, and such that $\psi(0) = 0$ and $\psi(S) = T$. Let $y$ solve a CDE driven by a path $x$. Then $y \circ \psi$ solves the same CDE driven by $x \circ \psi$, and in particular their terminal values are the same: $(y \circ \psi)(S) = y(T)$.
\end{restatable}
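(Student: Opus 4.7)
The plan is to verify directly, via a change of variables in the Riemann--Stieltjes integral, that $y \circ \psi$ satisfies the defining integral equation of the CDE driven by $x \circ \psi$, and then to conclude via uniqueness (Theorem \ref{theorem:cde:picard}).

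First I would set $\widetilde{y} = y \circ \psi \colon [0, S] \to \reals^{d_y}$ and $\widetilde{x} = x \circ \psi \colon [0, S] \to \reals^{d_x}$. Since $\psi$ is continuous and monotonic, $\widetilde{x}$ is continuous and of bounded variation (the total variation of $\widetilde{x}$ on $[0, S]$ equals that of $x$ on $[0, T]$), so the Picard hypotheses apply to the CDE driven by $\widetilde{x}$ with the same vector field $f$ and initial condition $y(0) = \widetilde{y}(0)$. Next, applying the change-of-variables formula $s = \psi(r)$ for the Riemann--Stieltjes integral, which is valid for $\psi$ continuous increasing and $x$ of bounded variation, gives
\begin{equation*}
\int_0^{\psi(u)} f(y(s)) \,\dd x(s) \;=\; \int_0^u f(y(\psi(r))) \,\dd(x \circ \psi)(r) \;=\; \int_0^u f(\widetilde{y}(r))\,\dd \widetilde{x}(r).
\end{equation*}
Substituting this into the CDE satisfied by $y$, evaluated at time $\psi(u)$, yields
\begin{equation*}
\widetilde{y}(u) \;=\; y(\psi(u)) \;=\; y(0) + \int_0^{\psi(u)} f(y(s))\,\dd x(s) \;=\; \widetilde{y}(0) + \int_0^u f(\widetilde{y}(r))\,\dd \widetilde{x}(r),
\end{equation*}
which is precisely the CDE driven by $\widetilde{x}$. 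By Theorem \ref{theorem:cde:picard} this solution is unique, so $y \circ \psi$ is the solution driven by $x \circ \psi$. The terminal-value assertion is then immediate from $\psi(S) = T$: $(y \circ \psi)(S) = y(\psi(S)) = y(T)$.

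The main obstacle is really just the change-of-variables step for Riemann--Stieltjes integrals; everything else is bookkeeping. In the smooth setting (which is all that is needed in practice, as per equation \eqref{eq:cde-to-ode}), this reduces to a one-line chain rule $\frac{\dd \widetilde{x}}{\dd r}(r) = \frac{\dd x}{\dd s}(\psi(r))\,\psi'(r)$ applied inside an ordinary Lebesgue integral, and requires no further machinery. For general bounded-variation $x$ one invokes the standard Riemann--Stieltjes substitution theorem, which requires only that $\psi$ be continuous and monotonic --- exactly the hypotheses given. No regularity beyond differentiability of $\psi$ is needed for the statement as written.
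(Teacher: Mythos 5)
Your proof is correct and follows essentially the same route as the paper's: a change of variables $s = \psi(r)$ applied inside the defining integral of the CDE. The paper's proof restricts exposition to the case where $x$ and $\psi$ are differentiable (reducing to the chain rule inside an ordinary integral) and simply remarks that equivalent change-of-variable formulae handle the bounded-variation case; you invoke the Riemann--Stieltjes substitution theorem directly, which is a slightly more general rendering of the same argument, and you additionally close the loop with Picard uniqueness (Theorem~\ref{theorem:cde:picard}) --- a harmless but not strictly necessary step, since exhibiting that $y\circ\psi$ satisfies the integral equation already establishes that it ``solves'' the CDE.
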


This means that a CDE is blind to the speed at which $x$ is traversed. Typically, the speed at which input data arrives is important (for example consider data indicating that a patient's health is declining over years -- or over minutes), so this means that the speed at which events occur must be explicitly encoded as a channel in $x$. Indeed, this is precisely what is done in Section \ref{section:cde:inclusion-of-time} by including time as a channel.

See Appendix \ref{appendix:cde:reparam-invariance} for a proof, which is straightforward change of variables.

\section{Choice of parameterisation}\label{section:cde:parameterisation}
So far we have discussed `the mathematics'. Now we must discuss `the engineering'. We must still choose optimisers, learning rates, model architectures and so on. As is often the case with deep learning, these choices can make or break the efficacy of the model.

\subsection{Neural architectures and gating procedures}\label{section:cde:gating}
%This is essentially the same story as in Section \ref{section:ode:parameterisation}: to date there has been little investigation into better architectures.

The initial value network $\zeta_\theta$ is typically parameterised as an MLP.

The vector field $f_\theta$ is typically parameterised as $f_\theta = \tanh\mathop\circ\mathrm{MLP}_\theta$, where $\tanh$ is applied elementwise, and $\mathrm{MLP}_\theta$ denotes an MLP with weights and biases $\theta$ and as in Section \ref{section:ode:parameterisation} uses a continuously differentiable activation function.

If the $\tanh$ were removed then the MLP could produce arbitrarily large and unconstrained outputs. In contrast the inclusion of a squashing function such as $\tanh$ \textit{constrains the rate of change of the hidden state}. As $f_\theta$ is iteratively evaluated multiple times over the differential equation, then large outputs from $f_\theta$ can easily result in the model exploding, with large and untrainable losses.

This is precisely analogous to RNNs, where one of the key features of GRUs and LSTMs are gating procedures which control the rate of change of the hidden state. (Their other key feature is a differential equation like structure.)

Slightly more complex variations on the same theme may be considered. For example let $\phi_\theta, \psi_\theta \colon \reals^{d_y} \to \reals^{d_y \times d_x}$ be neural networks; for efficiency often the same MLP just with different final affine layers. Then we may define $f_\theta(y) = \sigma(\phi_\theta(y))) * \tanh(\psi_\theta(y))$, where $*$ denotes an elementwise product and $\sigma$ is the sigmoid function applied elementwise. 

\subsection{State-control-vector field interactions}
If the vector field $f_\theta \colon \reals^{d_y} \to \reals^{d_y \times d_x}$ is a feedforward neural network, with final hidden layer of size $d_h \in \naturals$, then the number of scalars for the final affine transformation is of size $\bigO{d_h d_y d_x}$, which can easily be very large.

As such this layer is often the greatest computational bottleneck of a neural CDE. But it is possible that not every three-way interaction between the hidden state $y$ (of size $d_y$), control $x$ (of size $d_x$), and penultimate layer in the vector field (the layer of size $d_h$) actually needs to be modelled.% As such we speculate that this interaction may be simplified, so as to improve speed without sacrificing model performance.

Anecdotally, making the layer sparse (down to a density of about 1\%) often still produces good results, whilst rank-one representations of the final matrix $f_\theta(y)$, as an outer product of transformations $\reals^{d_y} \to \reals^{d_y}$ and $\reals^{d_y} \to \reals^{d_x}$ seem to produce bad results. One can easily imagine many other kinds of reduced-parameter parameterisations, and this is a topic that merits further investigation.

\subsection{Multi-layer neural CDEs}
Let $y_1 \colon [0, T] \to \reals^{d_{y_1}}$ solve a neural CDE driven by $x \colon [0, T] \to \reals^{d_x}$, with system $f_{\theta, 1} \colon \reals^{d_{y_1}} \to \reals^{d_{y_1} \times d_x}$:
\begin{equation*}
	y_1(t) = y_1(0) + \int_0^t f_{\theta, 1}(y_1(s)) \,\dd x(s)\quad\text{for $t \in (0, T]$}.
\end{equation*}
We may now repeat this procedure: let $y_2 \colon [0, T] \to \reals^{d_{y_2}}$ solve a neural CDE driven by $y_1$, with system $f_{\theta, 2} \colon \reals^{d_{y_2}} \to \reals^{d_{y_2} \times d_{y_1}}$:
\begin{equation*}
	y_2(t) = y_2(0) + \int_0^t f_{\theta, 2}(y_2(s)) \,\dd y_1(s)\quad\text{for $t \in (0, T]$}.
\end{equation*}
This idea of stacking may of course be repeated arbitrarily many times.

The joint system may be solved together as a single CDE driven by $x$. For example in the two-layer case, we obtain
\begin{equation*}
	\begin{bmatrix}y_1(t) \\ y_2(t)\end{bmatrix} = \begin{bmatrix}y_1(0) \\ y_2(0)\end{bmatrix} + \int_0^t \begin{bmatrix}f_{\theta, 1}(y_1(s)) \\ f_{\theta, 2}(y_2(s))f_{\theta, 1}(y_1(s))\end{bmatrix} \,\dd x(s)\quad\text{for $t \in (0, T]$}.
\end{equation*}

The main disadvantage of this approach is that the output dimension of $f_{\theta, 2}$ is large ($d_{y_1} \times d_{y_2}$), and therefore computationally expensive.

This offers a sensible way to increase the model capacity of a neural CDE; see for example \cite{multi-layer-cde}. This is precisely analogous to multi-layer RNNs, in which the hidden state of one RNN is used as the input to another.

% Comes at the end of this section because all of the other bits above are about NCDEs in general, not just the irregular case.
\section{Interpolation schemes}\label{section:cde:interpolation}\index{Interpolation}
Suppose we observe some (potentially irregular) time series, each of the form $\mathbf{x} = ((t_0, x_0), \ldots, (t_n, x_n))$, as in Section \ref{section:cde:regular} or Section \ref{section:cde:irregular}. Each $t_j \in \reals$ is the timestamp of observation
\begin{equation*}
x_j = (x_{j, 1}, \ldots, x_{j, d_x - 1}) \in (\reals \cup \{*\})^{d_x - 1},
\end{equation*}
where $*$ denotes the possibility of missing data, $t_0 < \cdots < t_n$. The length $n$ is not assumed to be consistent between different time series.

We are interested in picking $T > 0$, $0 = s_0 < \cdots < s_n = T$, and constructing interpolations $x_\mathbf{x} \colon [0, T] \to \reals^{d_x}$ such that $x_\mathbf{x}(s_j) = (t_j, x_j)$.

Note that we could also include $c_j(\mathbf{x})$ channels, which in general are needed when handling missing data; see Sections \ref{section:cde:missingness1} and \ref{section:cde:missingness2}. These are handled in precisely the same way as the $x_j$ channels (just construct interpolations $x_\mathbf{x} \colon [0, T] \to \reals^{2 d_x - 1}$ such that $x_\mathbf{x}(s_j) = (t_j, x_j, c_j(\mathbf{x}))$) so for simplicity of notation we leave them out here.

\begin{remark}
	That neural CDEs need interpolation schemes sometimes attracts skepticism. Are we imputing missing data? Are we constructing a continuous-time approximation to some underlying data process? Why, morally speaking, should we need to construct an interpolation scheme when the actual data we have observed is discrete?
	
	The answers are: `no', `yes (but it's not important)', and `to process the data at its natural timescale', respectively. Each of these points has been discussed earlier in the text, in Remark \ref{remark:cde:no-impute}, Remark \ref{remark:cde:approx-not-important}, and Section \ref{section:cde:continuous-time-computation} respectively.
\end{remark}

We begin with some theoretical conditions that we would like ideal interpolation schemes to satisfy, and then present some sensible choices. The best choice of interpolation scheme will depend on the problem at hand.

Much of the following section is drawn from \cite{morrill2021cdeonline}.

\subsection{Theoretical conditions}
There are two main theoretical conditions, namely \textit{measurability} and \textit{smoothness}. %Measurability determines the extent to which an interpolation scheme may be used when data is arriving continuously over time. Smoothness determines the ease with which the corresponding dynamics may be integrated by a numerical solver.

\subsubsection{Measurability}\index{Interpolation!Measurable}

Any given time series problem needs outputs at particular times. For example we may wish to only have an output after having observed an entire time series. Alternatively we may wish to produce a continuously-evolving output, updated as more data arrives over time.

We formalise this distinction in terms of \textit{measurability},\footnote{In \cite{morrill2021cdeonline} the terminology of `online' is used instead.} and will require an interpolation scheme that supports the desired behaviour.

We describe problems, and interpolation schemes, as exhibiting one of three different kinds of measurability. Figure \ref{fig:measurable} provides a visual summary.

\begin{figure}
	\includegraphics[width=\linewidth]{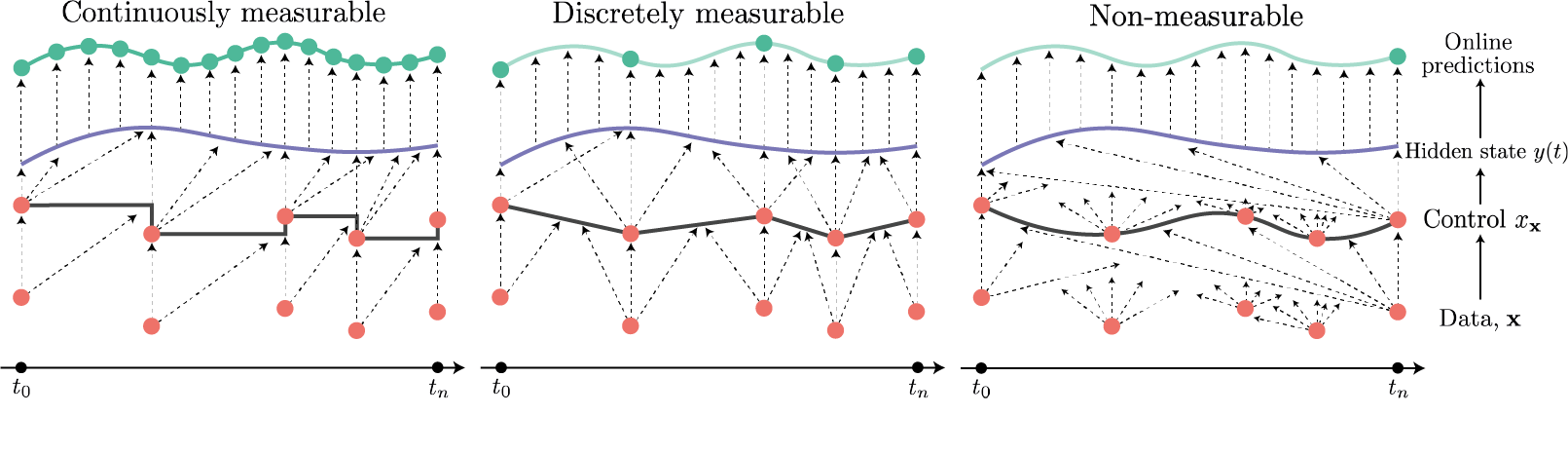}
	\caption{Summary of measurability definitions. Arrows indicate what data can influence. A green dot indicates that a measurable prediction can be made at that point. \textbf{Left:} a continuously measurable model in which no information is passed backward in time, resulting in a measurable solution at all points in time. \textbf{Middle:} a discretely measurable model in which information can be passed backwards-in-time, but no further than the preceding observation. \textbf{Right:} a non-measurable scheme in which information is passed backward in time further than the preceding observation.}\label{fig:measurable}
\end{figure}

\paragraph{Continuously measurable} We say that an interpolation scheme is \textit{continuously measurable} if $x_\mathbf{x}(s)$ depends only on those $(t_j, x_j)$ with $s_j \leq s$. (Recalling that $x_\mathbf{x}(s_j) = (t_j, x_j)$.) That is to say, only observations in the past or present may be used to define the interpolation scheme.

This is probably the most intuitively natural definition of measurability, but it is relatively difficult to construct interpolation schemes satisfying it. (We will present only a single such scheme.) Practically speaking many use cases will find a weaker notion of measurability acceptable.

This kind of measurability is needed if data is arriving over time at inference time (sometimes described as the problem being \textit{online}), and either:
\begin{itemize}
	\item model predictions are needed between observations;
	\item model predictions are needed prior to the final observation in a time series, and there is missing data.
\end{itemize}

\paragraph{Discretely measurable} We say that an interpolation scheme is \textit{discretely measurable} if $x_\mathbf{x}(s)$ depends only on those $(t_j, x_j)$ with $s_{j-1} \leq s$. That is to say, we may look up to one observation into the future when defining the interpolation scheme.

For example, this is the case with linear interpolation. $s \mapsto (1-s)a + sb$ with $s \in [0, 1]$ depends on $b$ for all $s$, even though $b$ will only be attained at $s=1$.

This kind of measurability is needed if data is arriving over time at inference time (sometimes described as the problem being \textit{online}), and model predictions are only needed at observations, and there is no missing data. Simply integrate up to some $s_j$, wait until $(t_{j+1}, x_{j+1})$ are observed, and then interpolate and integrate over $[s_j, s_{j+1}]$.

\paragraph{Non-measurable} Finally we say that a scheme is not measurable if $x_\mathbf{x}(s)$ may depend only on any and all $(t_j, x_j)$. For example, this is the case with natural cubic splines.

Such schemes are appropriate if, at inference time, the whole time series will be available prior to evaluating the model.

\subsubsection{Smoothness}\index{Interpolation!Smooth}
The second desirable theoretical property for an interpolation scheme is smoothness.

We will not define smoothness in a mathematically rigorous way. Rather, we point out that `smoother' interpolation schemes will result in easier-to-integrate dynamics, which will improve the computational efficiency of the model.

The main dichotomy here is whether the dynamics are globally smooth (for example a cubic spline) or piecewise smooth (for example linear interpolation). The former is preferable but the latter can be tolerated.

\begin{remark}
	See Section \ref{section:numerical:stacked} for how to correctly integrate piecewise smooth dynamics when using adaptive step size solvers.
	
	See also Section \ref{section:numerical:regularising-derivatives}, which (mainly in the neural ODE setting) seeks to regularise higher-order derivatives in order to promote easy-to-integrate dynamics.
\end{remark}

%\subsubsection{Other conditions}
%In addition to measurability and smoothness, there are actually two further conditions that are of interest for a good interpolation scheme: \textit{boundedness} and \textit{signature-uniqueness}. These are technical conditions ensuring that the scheme is a `good' representation of the input data (and for example does not introduce any spurious oscillations). We defer further discussion on these to Appendix \ref{appendix:cde:interpolation}, and all the schemes we propose here satisfy them.

\subsection{Choice of interpolation points}\label{section:cde:interpolation-points}
The choice of $T>0$ and interpolation points $s_j$ (such that $x_\mathbf{x}(s_j) = (t_j, x_j)$) is really about determining the desired behaviour of the numerical solver.

In continuous time, this choice is arbitrary, by the reparameterisation property of Section \ref{section:reparam-invariance}. The value of the integral is invariant to the choices of $T$ and $s_j$.

Practically speaking, this does not completely carry through to the numerical discretisation. For example consider using $s_j = j$ and a fixed-step numerical solver with unit step size. Then a single numerical step is made between each observation, a fixed number of vector field evaluations would be made, and the neural CDE reduces to an RNN.

Overall, the spacing between $s_j$ corresponds roughly to the amount of computational work that should be done between those observations. (Either precisely, when using a fixed solver, or approximately, when using an adaptive solver.) As per Section \ref{section:cde:continuous-time-computation}, a desirable choice is for the amount of computational work to scale with the `natural timescale' at which the data varies. For many datasets this means a reasonable choice is $s_j  = t_j$.

\subsection{Particular interpolation schemes}

There are a few main interpolation schemes of interest. In every case, each channel is interpolated separately. If there is missing data then it is interpolated over; for example if $x_{j, 1}$ and $x_{j+2, 1}$ are observed but $x_{j+1, 1}$ is missing then we apply the following procedures over $[s_j, s_{j+2}]$ rather than $[s_j, s_{j+1}]$.

\subsubsection{Hermite cubic splines with backward differences}\index{Hermite cubic splines with backward differences}
The first choice of interest are \textit{Hermite cubic splines with backward differences}. Each interval $[s_j, s_{j+1})$ is treated independently, and the interpolation over this interval is chosen to satisfy
\begin{align*}
	x_\mathbf{x}(s_j) &= (t_j, x_j),\\
	x_\mathbf{x}(s_{j+1}) &= (t_{j+1}, x_{j+1}),\\
	\frac{\dd x_\mathbf{x}}{\dd t}(s_j) &= \left(\frac{t_j - t_{j-1}}{s_j - s_{j-1}}, \frac{x_j - x_{j-1}}{s_j - s_{j-1}}\right),\\
	\frac{\dd x_\mathbf{x}}{\dd t}(s_{j+1}) &= \left(\frac{t_{j+1} - t_j}{s_{j+1} - s_j}, \frac{x_{j+1} - x_j}{s_{j+1} - s_j}\right).
\end{align*}

\paragraph{Measurability} This scheme is discretely measurable.

\paragraph{Smoothness} Such splines are by construction continuously differentiable, so adaptive step size solvers will find the resulting dynamics easy to integrate.

\paragraph{Choice of $s_j$} Typically $s_j = t_j$ is a reasonable choice.

\subsubsection{Linear interpolation}\index{Interpolation!Linear}
One simple option is just linear interpolation:
\begin{equation*}
	x_\mathbf{x}(s) = (t_j, x_j) + (t_{j+1} - t_j, x_{j+1} - x_j)\frac{s - s_j}{s_{j+1} - s_j}\quad\text{ for $s \in [s_j, s_{j+1})$}.
\end{equation*}
(Mapping over each entry of the $(t, x)$ tuple.)

\paragraph{Measurability} This scheme is discretely measurable.

\paragraph{Smoothness} This scheme is only piecewise continuously differentiable. If reducing the neural CDE to an ODE by taking $\int f_\theta(y(s)) \,\dd x(s) = \int f_\theta(y(s)) \nicefrac{\dd x}{\dd s}(s) \,\dd s$ as in equation \eqref{eq:ncde-to-node}, then the vector field $f_\theta(y(s)) \nicefrac{\dd x}{\dd s}(s)$ will be piecewise constant.

This makes linear interpolation a poor choice if using an adaptive step size numerical solver, which will struggle with these jumps. However if using a fixed step size solver it can be just as good as Hermite cubic splines with backward differences, whilst being slightly cheaper to compute.

\paragraph{Choice of $s_j$} Typically $s_j = t_j$ is a reasonable choice.

\subsubsection{Rectilinear interpolation}\index{Interpolation!Rectilinear}

%Provided there is no missing data, then we have so far -- using Hermite cubic splines with backward differences, and linear interpolation -- been able to get away with $x_\mathbf{x}(s)$ depending on $(t_j, x_j)$ for $s_{j-1} \leq s$. That is, at each $s$ then $x_\mathbf{x}(s)$ has been able to depend on the observation that is about to be made.
%
%There is still one limitation of . What if there is missing data? If any component of $x_{j+1}$ is missing then interpolation over $[s_j, s_{j+1}]$ is defined with respect to the observation at $t_{j+2}$, and this is no longer causal/measurable.
%
%To resolve the above issue, we now ask that $x_\mathbf{x}(s)$ may only depend on $(t_j, x_j)$ for $s_j \leq s$.

Define $\bar{x}_j = (\bar{x}_{j, 1}, \ldots, \bar{x}_{j, d_x - 1}) \in \reals^{d_x - 1}$ as being the fill-forward\footnote{Explicitly: let $\bar{x}_{j, k} = x_{\bar{j}(j, k), k}$, where $\bar{j}(j, k) = \max\set{m \leq j}{x_{m, k} \neq *}$, recalling that $*$ denotes missing data. If this set is empty then define $\bar{x}_{j, k} = 0$.} of $x_j$.

Now additionally select points $r_j \in [0, T]$, for $j \in \{1, \ldots, n\}$, such that
\begin{equation*}
s_0 < r_1 < s_1 < r_2 < \cdots < s_{n-1} < r_{n} < s_n.
\end{equation*}
Rectilinear interpolation is defined by taking $x_\mathbf{x}(s_j) = (t_j, \bar{x}_j)$, $x_\mathbf{x}(r_j) = (t_{j + 1}, \bar{x}_j)$, and linearly interpolating between $s_0, r_1, s_1, r_2, \ldots, r_n, s_n$.

\paragraph{Measurability} This now satisfies the measurability condition we require. At inference time we can wait at each $s_j$ for the next data point to arrive (regardless of whether it is only partially observed / has missing data), interpolate over $[s_j, s_{j + 1}]$ in the manner above, and then solve the CDE over $[s_j, s_{j + 1}]$.

\paragraph{Smoothness} Rectilinear interpolation is only piecewise differentiable. As with linear interpolation, when reduced to an ODE then the vector field has jumps. Fixed step size solvers are one resolution to this problem. Alternatively an adaptive step size numerical solver can be made aware of these jumps so as to treat them in the appropriate way, see Section \ref{section:numerical:stacked}.

\paragraph{Choice of $s_j$ and $r_j$} Typically $s_j = t_j + j$ and $r_j = t_j + j - 1$ is a reasonable choice.

\subsubsection{Natural cubic splines}\index{Natural cubic splines}
One simple approach is to use natural cubic splines. (Indeed this was used in the original neural CDE paper \cite{kidger2020neuralcde}.)

\paragraph{Measurability} Unfortunately, natural cubic splines are not measurable. This limits their applicability.

\paragraph{Smoothness} However, natural cubic splines are at least very smooth.

\paragraph{Choice of $s_j$} Typically $s_j = t_j$ is a reasonable choice.

\paragraph{Model performance} For reasons unknown, neural CDEs that use natural cubic splines produce slightly worse results than those using other interpolation schemes \cite{morrill2021cdeonline}.

\subsubsection{Overall}
Most problems will find either Hermite cubic splines with backward differences or rectilinear interpolation to be of most interest. Use Hermite cubic splines with backward differences if possible, due to their smoothness and relatively good measurability. If their measurability properties are insufficient, then use rectilinear interpolation.

\section{Comments}
CDEs are a classic piece of mathematics, emerging essentially as a meaningful special case of the more general rough differential equations introduced by \cite{lyons1998}. The first few pages of \cite{roughstochasticNF} give an excellent brief introduction to the essential ideas. \cite{levy-lyons} is our recommended introductory text. \cite{FrizVictoir10} is the canonical reference text.

(As a fascinating historical note, the essential ideas behind CDEs may actually be traced back to Newton \cite[Prob. 1, Prob. 2]{newton-fluxion}. Newton considers evolving systems in multiple variables, with position (fluent) and derivative (fluxion). Given a relation for either fluent or fluxion, then the other is then solved for. Obtaining the relation of fluents from a relation of fluxions is precisely what we would term `solving a CDE'. We thank Terry Lyons for this observation.)

Various texts use different pieces of terminology to refer to the concept of a CDE. Much of the rough path literature uses the terms CDE and RDE interchangeably. Meanwhile \cite{FrizVictoir10} prefer to use CDE and ODE interchangeably. Some texts use the term `controlled ordinary differential equations'.

Specifically \textit{neural} CDEs were first introduced in \cite{kidger2020neuralcde}, where they were applied to both regular and irregular time series, and most of the relevant theoretical properties discussed. The follow-up work \cite{morrill2021cdeonline} investigated the choice of interpolation scheme, and promoted the use of the alternative ones just presented (\cite{kidger2020neuralcde} used only natural cubic splines). The discussion here is an extension of the one introduced there, and is partly new here. Rectilinear interpolation is due to \cite{levin2013}; the pieces of the interpolation for which the time channel is constant are sometimes referred to as `virtual time'.

Many of the computational concerns discussed (batching, smoothness and so on) arose from experimentation using both \cite{torchcde} and \cite{diffrax}. The discussion on batching (Section \ref{section:cde:batching}) is new here, and is relevant beyond just CDEs -- much of the literature has assumed that a differential equation solver must operate over the same time interval for each batch element, and had to work around this limitation (see for example the `time-varying CNF' of \cite{spatiotemporal}).

We recall the link between CDEs and control theory (Section \ref{section:cde:control-theory}). Meanwhile control theory has well-known links to reinforcement learning (RL). RL applications either explicitly including or of essentially similar character to neural CDEs therefore include \cite{alvarez2020dynode, pmlr-v136-killian20a, pmlr-v139-lutter21a} amongst others.

The neural RDE formulation applying neural CDEs to long time series was introduced in \cite{morrill2021neuralrough}. The same rough path theoretic ideas also appear in \cite{rnn-kernel}, to frame RNNs as a kernel method.

Applications to training neural SDEs (really the focus of our next chapter) were introduced in \cite{kidger2021sde1, kidger2021sde2}.

Much of the discussion on good architectural choices, gating, sparsity and so on, is new here.
\chapter{Neural Stochastic Differential Equations}\label{chapter:neural-sde}
\section{Introduction}
\subsection{Stochastic differential equations} Stochastic differential equations have seen widespread use for modelling real-world random phenomena, such as particle systems \cite{langevinbook, langevinbook2, sdesformoldynamics}, financial markets \cite{blackscholes, cir, ratebook}, population dynamics \cite{stocLotkaVolterra, populationgrowth} and genetics \cite{wrightfisher}. They are a natural extension of ordinary differential equations (ODEs) for modelling systems that evolve in continuous time subject to uncertainty.

The dynamics of an SDE consist of a deterministic term and a stochastic term:
\begin{equation}\label{eq:sde}
	\dd y(t) = \mu(t, y(t)) \,\dd t + \sigma(t, y(t)) \circ \dd w(t),
\end{equation}
where
\begin{align*}
	\mu &\colon [0, T] \times \reals^{d_y}\to \reals^{d_y},\\
	\sigma &\colon [0, T]\times \reals^{d_y} \to \reals^{d_y \times d_w}
\end{align*}
are suitably regular functions, $w \colon [0, T] \to \reals^{d_w}$ is a $d_w$-dimensional Brownian motion, and $y \colon [0, T] \to \reals^{d_y}$ is the resulting $d_y$-dimensional continuous stochastic process.

The strong solution $y$ is guaranteed to exist and be unique given mild conditions: that $\mu$, $\sigma$ are Lipschitz, and that $\expect\left[y(0)^2\right] < \infty$.\index{Existence}\index{Uniqueness}

We refer the reader to \cite{revuz-yor} for a rigorous account of stochastic integration.

\paragraph{It{\^o} versus Stratonovich}\index{It{\^o}}\index{Stratonovich}\label{section:sde:introduction}
The notation ``$\circ$'' in the noise refers to the SDE being understood in the sense of Stratonovich integration. This is as an alternative to the standard notion of It{\^o} integration.

The reader unfamiliar with Stratonovich integration should generally feel free to ignore this subtlety. Stratonovich SDEs will sometimes be slightly more efficient to backpropagate through (Remark \ref{remark:numerical:stratonovich-over-ito}, later). However, any It{\^o} SDE may be converted to a Stratonovich SDE, and vice versa, so as we will shortly introduce learnt (neural) vector fields then modelling-wise the choice is arbitrary.

\paragraph{Theoretical construction of SDEs}
SDEs have typically been constructed theoretically, and are usually relatively simple.

One frequent and straightforward technique is to fix a constant matrix $\sigma$, and add ``$\sigma \circ \dd w(t)$'' to a pre-existing ODE model.\footnote{In passing we remark that It{\^o} and Stratonovich are identical in this case as the noise is additive so the corresponding It{\^o}--Stratonovich correction term is zero. We could equally well have written ``$\sigma \,\dd w(t)$''.}

As another example, the Black--Scholes equation, widely used to model asset prices in financial markets, has only two scalar parameters: a fixed drift and a fixed diffusion \cite{blackscholes}.

\paragraph{Calibrating SDEs}
Once an SDE model has been chosen, then model parameters must be calibrated\footnote{Fit, trained.} from real-world data.

Since SDEs produce random sample paths, the parameters are typically chosen so that the average behaviour of the SDE matches some statistic(s). A classical approach to calibrating SDEs to observed data $y_\text{true}$ is to pick some prespecified functions of interest $F_1, \ldots, F_N$, and then ask that $\expect_{y}\left[F_i(y)\right] \approx \expect_{y_\text{true}}\left[F_i(y_\text{true})\right]$ for all $i$. For example this may be done by optimising
\begin{equation}\label{eq:sde:calibration}
	\min_\theta \max_{i=1,\ldots,N} \big|\expect_{y}\left[F_i(y)\right] - \expect_{y_\text{true}}\left[F_i(y_\text{true})\right]\big|
\end{equation}
where the model $y$ depends implicitly on parameters $\theta$.

This ensures that the model and the data behave the same with respect to the functions $F_i$. The functions $F_i$ are known as either `witness functions' or `payoff functions' depending on the field \cite{mmd-gan, josef-sde}. If the SDE is simple enough -- for example the analytically tractable Black--Scholes model -- then equation \eqref{eq:sde:calibration} can often be computed explicitly \cite{blackscholes}.

\subsection{Generative and recurrent structure}
SDEs feature inherent randomness. In modern machine learning parlance SDEs are generative models.

\paragraph{Comparison to random RNNs}
As usual, a numerically discretised neural (stochastic) differential equation has a correspondence in the deep learning literature. As with neural CDEs, the appropriate analogy is an RNN. In this case its input is random noise -- Brownian motion -- and its output is a generated sample.

Consider the autonomous one-dimensional It{\^o} SDE
\begin{equation*}
	\dd y(t) = \mu(y(t))\,\dd t + \sigma(y(t)) \,\dd w(t),
\end{equation*}
with $y(t), \mu(y(t)), \sigma(y(t)), w(t) \in \reals$. Then its numerical Euler--Maruyama discretisation is
\begin{align*}
	y_{j + 1} = y_j + \mu(y_j) \Delta t + \sigma(y_j) \Delta w_j,
\end{align*}
where $\Delta t$ is some fixed time step and $\Delta w_j \sim \normal{0}{\Delta t}$. This numerical discretisation is clearly just an RNN of a particular form.

% Subject to a suitable loss function between distributions -- such as the KL divergence or Wasserstein distance, more on those later -- this may simply be backpropagated through in the usual way.

\paragraph{Generative time series models}
Each sample $y$ from an SDE
\begin{equation*}
	\dd y(t) = \mu(t, y(t)) \,\dd t + \sigma(t, y(t)) \circ \dd w(t),
\end{equation*}
is a continuous-time path $y \colon [0, T] \to \reals^{d_y}$. As such, we may treat neural SDEs as generative time series models.

(Generative) time series models are of classical interest, with forecasting models such as Holt--Winters \cite{holt-winters1, holt-winters2}, ARMA \cite{arma}, ARCH \cite{arch}, GARCH \cite{garch} and so on.

It has also attracted much recent interest with, besides neural SDEs, the development of ODE-based models like latent ODEs (Section \ref{section:ode:latent-ode})\footnote{And related ideas such as ODE$^2$VAE \cite{ode2vae} or Neural ODE Processes \cite{norcliffe2021neural-ode-process}}; discrete-time models like Time Series GAN \cite{ts-gan}; non-ODE continuous-time models like CTFPs \cite{ctfp, clpf} and Copula Processes \cite{copula-processes}.

\newcommand{\Brownian}[9]{
% num_points, step, diffusion, options, start_x, start_y, label, drift, seed
\pgfmathsetseed{#9}
\draw[#4] (#5, #6)
\foreach \x in {1,...,#1}
{   -- ++(#2,#8 + rand*#3)
}
node[right] {#7};
}

\newcommand{\SDE}[9]{
% num_points, step, diffusion, options, start_x, start_y, label, mean_scale, seed
\pgfmathsetseed{#9}
\draw[#4] (#5, #6)
\foreach \x in {1,...,#1}
{   -- ++(#2,#8*\x - #8*0.5*#1 + rand*#3)
}
node[right] {#7};
}

\begin{figure}\centering
\begin{tikzpicture}[scale=0.9]\small
    \node at (-1.5,0) {};
    
    \draw[->, thick] (0, 0) -- (4, 0);
    \draw[->, thick] (10, 0) -- (14, 0);
    
    \Brownian{100}{0.04}{0.1}{color=blue!80!black, text=black, line width=1pt}{0}{0.7}{}{0.005}{56791}
    \Brownian{100}{0.04}{0.1}{color=blue!35!white, line width=1pt}{0}{0.6}{}{0.005}{56789}
    \Brownian{100}{0.04}{0.1}{color=blue!35!white, line width=1pt}{0}{0.5}{}{0.005}{56790}
    \Brownian{100}{0.04}{0.1}{color=blue!35!white, line width=1pt}{0}{1.1}{}{0.005}{45678}
    
    \SDE{100}{0.04}{0.1}{color=red!80!black, text=black, line width=1pt}{0}{3}{}{0.0008}{7890}
    \SDE{100}{0.04}{0.1}{color=red!50!white, line width=1pt}{0}{3.4}{}{0.0008}{7891}
    \SDE{100}{0.04}{0.1}{color=red!50!white, line width=1pt}{0}{3.6}{}{0.0008}{7892}
    \SDE{100}{0.04}{0.1}{color=red!50!white, line width=1pt}{0}{2.8}{}{0.0008}{7893}
    
    \node[anchor=west] at (3.8, 3.3) {\begin{tabular}{l}\textbf{SDE}\end{tabular}};
    \node[anchor=west] at (3.8, 2) {\begin{tabular}{l} Continuously \\ inject noise\end{tabular}};
    \node[anchor=west] at (3.8, 0.6) {\begin{tabular}{l}\textbf{Brownian}\\\textbf{Motion}\end{tabular}};
    
    \draw[decorate, decoration={brace, amplitude=13pt}] (0, 4) -- (4, 4);
    
    \draw (0.3,4.5) rectangle (3.7,5.3) node[midway] {Fixed statistics};
    
    \node at (2, 5.7) {\textit{Classical approach}};
    
    \begin{scope}[shift={(2,0)}]
    \SDE{100}{0.04}{0.07}{color=yellow!70!red!60!black, text=black, line width=1pt}{8}{1.2}{}{0.0008}{7894}
    \SDE{100}{0.04}{0.07}{color=yellow!80!red!90!black, line width=1pt}{8}{1.6}{}{0.0008}{7895}
    \SDE{100}{0.04}{0.07}{color=yellow!80!red!90!black, line width=1pt}{8}{1.8}{}{0.0008}{7896}
    \SDE{100}{0.04}{0.07}{color=yellow!80!red!90!black, line width=1pt}{8}{0.7}{}{0.0006}{7897}
    
    \begin{scope}[shift={(10, 3)}] 
    \draw[color=green!60!black, line width=1pt] plot [domain=-2:2, samples=100, smooth] (\x, {0.2 + 0.8 * tanh(\x*2) + 0.05*rand});
    
    \draw[color=green!90!black, line width=1pt] plot [domain=-2:2, samples=100, smooth] (\x, {0.2 + 0.5 * tanh(\x*1.5 + 0.5) + 0.05*rand});
    
    \draw[color=green!90!black, line width=1pt] plot [domain=-2:2, samples=100, smooth] (\x, {0.1 + max(0,0.8-3*\x*\x-\x) + 0.8 * tanh(\x*1.5 - 0.5) + 0.05*rand});
    
    \draw[color=green!90!black, line width=1pt] plot [domain=-2:2, samples=100, smooth] (\x, {-0.1 - 0.5 * tanh(\x*2) + 0.05*rand});
    
    \node[anchor=west] at (1.8, 1) {\begin{tabular}{l}Final value\end{tabular}};
    \node[anchor=west] at (1.8, 0.2) {\begin{tabular}{l}\textbf{CDE}\end{tabular}};
    \node[anchor=west] at (1.8, -1) {\begin{tabular}{l} Continuously \\ perform control\end{tabular}};
    \node[anchor=west] at (1.8, -2.3) {\begin{tabular}{l}\textbf{Data or SDE}\end{tabular}};
    
    \draw[dashed, line width=0.8pt, ->] (2, -0.8) -- (2, 1.5);
    
    \draw (-1.3,1.5) rectangle (2.1,2.3) node[midway] {Learnt statistic};
    
    \node at (0, 2.7) {\textit{Generalised (GAN) approach}};
    \end{scope}
    \draw[dashed, line width=0.8pt, ->] (8,2) -- (8,2.2);
    \draw[dashed, line width=0.8pt, ->] (8.4,1.6) -- (8.4,2.2);
    \draw[dashed, line width=0.8pt, ->] (8.8,1.24) -- (8.8,2.2);
    \draw[dashed, line width=0.8pt, ->] (9.2,1.03) -- (9.2,2.2);
    \draw[dashed, line width=0.8pt, ->] (9.6,0.9) -- (9.6,2.45);
    \draw[dashed, line width=0.8pt, ->] (10,0.89) -- (10,2.66);
    \draw[dashed, line width=0.8pt, ->] (10.4,0.96) -- (10.4,2.4);
    \draw[dashed, line width=0.8pt, ->] (10.8,1.02) -- (10.8,2.23);
    \draw[dashed, line width=0.8pt, ->] (11.2,1.15) -- (11.2,2.22);
    \draw[dashed, line width=0.8pt, ->] (11.6,1.45) -- (11.6,2.22);
    \draw[dashed, line width=0.8pt, ->] (12,1.59) -- (12,2.28);
    \end{scope}
    
    \draw[dashed, line width=0.8pt, ->] (4.1, 3.7) -- (7, 3.7) -- (7, 1) -- (9.9, 1);
    
    \draw[dashed, line width=0.8pt, ->] (0,1.22) -- (0,2.4);
    \draw[dashed, line width=0.8pt, ->] (0.4,1.1) -- (0.4,2.2);
    \draw[dashed, line width=0.8pt, ->] (0.8,1.09) -- (0.8,2.05);
    \draw[dashed, line width=0.8pt, ->] (1.2,1.12) -- (1.2,2);
    \draw[dashed, line width=0.8pt, ->] (1.6,1.03) -- (1.6,1.9);
    \draw[dashed, line width=0.8pt, ->] (2,0.94) -- (2,1.85);
    \draw[dashed, line width=0.8pt, ->] (2.4,1.19) -- (2.4,1.95);
    \draw[dashed, line width=0.8pt, ->] (2.8,1.05) -- (2.8,2.1);
    \draw[dashed, line width=0.8pt, ->] (3.2,1.13) -- (3.2,2.42);
    \draw[dashed, line width=0.8pt, ->] (3.6,1.15) -- (3.6,2.65);
    \draw[dashed, line width=0.8pt, ->] (4,1.18) -- (4,2.85);
\end{tikzpicture}
\caption{Brownian motion is continuously injected as noise into an SDE to generate time series. The classical approach fits the SDE to prespecified statistics. One (important) way of handling neural SDEs is to generalise from prespecified statistics to a learnt statistic, namely the discriminator of a (Wasserstein) GANs.}\label{fig:sde-gan-abstract}
\end{figure}
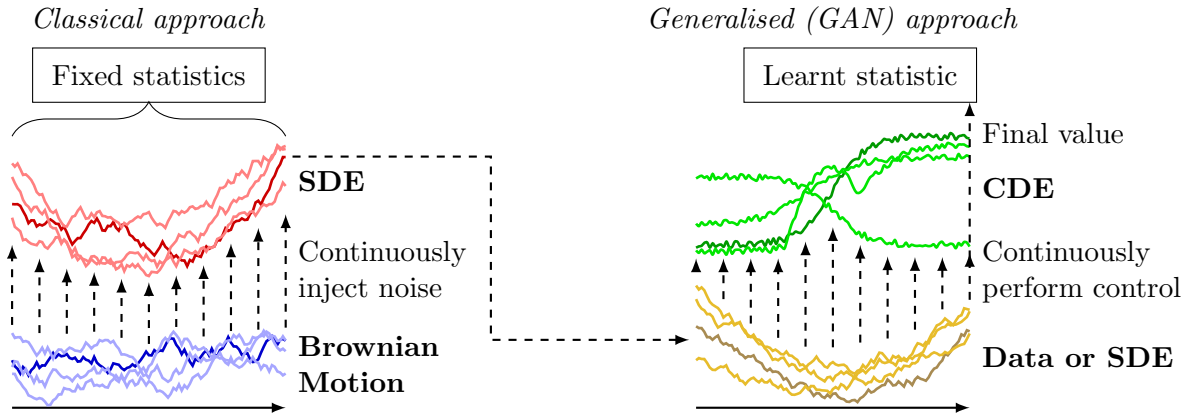

See Figure \ref{fig:sde-gan-abstract} for an abstract summary of many of the essential ideas: that SDEs are generative time series models, how SDEs may classically be calibrated, and one of the ways in which we will later generalise this approach to neural networks, via \textit{SDE-GANs} (Section \ref{section:sde:sde-gan}).

\paragraph{`Static' generative models}
We may also consider just the terminal value $y(T)$ of an SDE
\begin{equation*}
	\dd y(t) = \mu(t, y(t)) \,\dd t + \sigma(t, y(t)) \circ \dd w(t).
\end{equation*}
This is a sample drawn from some distribution over $\reals^{d_y}$. As such we may also treat neural SDEs as `static' generative models -- that is to say, not over a time series.

This immediately draws natural connections to a variety of topics. This is the same basic set-up as a continuous normalising flow (Section \ref{section:ode:cnf}), except that the randomness is injected via a Brownian motion $w$ rather than a random initial condition $y(0)$.

It is also the same starting point used in score-based generative modelling\index{Score-based generative modelling}, in which a neural drift and fixed additive diffusion is used, with the initial-to-terminal map calculating a transition between two distributions \cite{song2021scorebased, diffusion-bridge}.

We shall focus mainly on the time-series case discussed in the previous heading. At time of writing, the connections between neural SDEs as presented here, and CNFs and score-based modelling, are largely unexplored.

\paragraph{Comparison to neural CDEs}
We have now described both neural CDEs and neural SDEs as `continuous time RNNs'. It is worth being precise about the distinction.

(Neural) CDEs model \textit{functions of time series}, or equivalently \textit{functions of paths}. The path is an input and the output is, for example, a classification result determining whether the input path is a clockwise or anticlockwise spiral.

(Neural) SDEs model \textit{distributions on time series}, or equivalently \textit{distributions on paths}. Rather than modelling some function of the path, it is is the paths themselves that are being modelled.

In this respect the terminology of differential equations is slightly more precise than the terminology of neural networks, which uses `RNN' to describe both concepts.

\section{Construction}\label{section:sde:construction}
The following constructions are primarily from \cite{kidger2021sde1}.

Let $T>0$ be a fixed time horizon and consider a path-valued random variable $x_\text{true} \colon [0, T] \to \reals^{d_x}$, with $d_x \in \naturals$ the dimensionality of the data. $x_{\text{true}}$ is what we wish to model, and is the random variable we assume we have observed samples from. For example, this may correspond to the evolution of stock prices over time.

(Typically we actually observe $x_\text{true}$ only at some discretised time stamps; not over a full continuous-time path. For ease of presentation we neglect this detail for now and will return to it later.)

Let $w \colon [0, T] \to \reals^{d_w}$ be a $d_w$-dimensional Brownian motion, and let $v \sim \normal{0}{\eye{d_v}}$ be drawn from a $d_v$-dimensional standard multivariate normal. The values $d_w, d_v \in \naturals$ are hyperparameters describing the size of the noise. Let
\begin{align}
	\zeta_\theta &\colon \reals^{d_v} \to \reals^{d_y},\nonumber\\
	\mu_\theta &\colon [0, T] \times \reals^{d_y} \to \reals^{d_y},\nonumber\\
	\sigma_\theta &\colon [0, T] \times \reals^{d_y} \to \reals^{d_y \times d_w},\nonumber\\
	&\alpha_\theta \in \reals^{d_x \times d_y},\nonumber\\
	&\beta_\theta \in \reals^{d_x},\label{eq:sde:nsde-setup}
\end{align}
where $\zeta_\theta$, $\mu_\theta$ and $\sigma_\theta$ are neural networks. Collectively $\zeta_\theta$, $\mu_\theta$, $\sigma_\theta$, $\alpha_\theta$ and $\beta_\theta$ are parameterised by $\theta$. The dimension $d_y$ is a hyperparameter describing the size of the hidden state.

Then a \textit{neural stochastic differential equation} is a model of the form
\begin{align}
	y(0) &= \zeta_\theta(v),\nonumber\\
	\dd y(t) &= \mu_\theta(t, y(t)) \,\dd t + \sigma_\theta(t, y(t)) \circ \dd w(t),\nonumber\\
	x(t) &= \alpha_\theta y(t) + \beta_\theta,\label{eq:sde:nsde}
\end{align}
for $t \in [0, T]$, with $y \colon [0, T] \to \reals^{d_y}$ the (strong) solution to the SDE.

The objective will be to train $\theta$ so that the distribution of the model $x$ is approximately equal to the distribution of the data $x_\text{true}$. (For some notion of `approximate'.)

\paragraph{Architecture}\index{Markov assumption}
Equation \eqref{eq:sde:nsde} has a certain minimum amount of structure. First, the solution $y$ represents hidden state. If it were the output, then future evolution would satisfy a Markov property which need not be true in general. This is the reason for the additional readout operation to $x$. %Practically speaking $x$ may be concatenated alongside $y$ during an SDE solve.

Second, there must be an additional source of noise for the initial condition, passed through a nonlinear $\zeta_\theta$, as $x(0) = \alpha_\theta \zeta_\theta(v) + \beta_\theta$ does not depend on the Brownian noise $w$. This will be a learnt approximation to the initial condition of the SDE.

$\zeta_\theta, \mu_\theta$, and $\sigma_\theta$ may be taken to be any standard network architectures, such as feedforward networks.

\paragraph{RNNs as discretised SDEs}
This minimal amount of structure parallels that of RNNs. The solution $y$ corresponds to the hidden state of an RNN.

\paragraph{Sampling} Given a trained model, we sample from it by sampling some initial noise $v$ and some Brownian motion $w$, and then solving equation \eqref{eq:sde:nsde} with a numerical SDE solver.

\paragraph{Comparison to the Fokker--Planck equation}\index{Fokker--Planck equation} The distribution of an SDE, as learnt by a neural SDE, contains more information than the distribution obtained by learning a corresponding Fokker--Planck equation. The solution to a Fokker--Planck equation gives (the time evolution of) the probability density of a solution \emph{at fixed times}. It does not encode information about the time evolution of individual sample paths. This is exemplified by stationary processes, whose sample paths may be nonconstant but whose distribution does not change over time.

\section{Training criteria}
Equation \eqref{eq:sde:nsde} produces a random variable $x \colon [0, T] \to \reals^{d_x}$ implicitly depending on parameters $\theta$. This model must still be fit to data. This may be done by optimising a distance between the probability distributions (laws) for $x$ and $x_\text{true}$.

%As neural SDEs are generative models, they may be optimised by minimising a distance between probability distributions.

There are two main options: fitting a Wasserstein distance, or fitting a KL divergence. These correspond to \textit{SDE-GANs} and \textit{latent SDEs} respectively.% The former will be seen to be a generalisation of the traditional method for calibrating SDEs.

\subsection{SDE-GANs}\label{section:sde:sde-gan}\index{SDE-GANs}

Let $\prob_x$ denote the law of the model $x$. Likewise let $\prob_{x_\text{true}}$ denote the (empirical) law of the data $x_\text{true}$. Let $W(\prob_x, \prob_{x_\text{true}})$ denote the 1-Wasserstein distance between them. We may train the model by optimising
\begin{equation*}
	\min_\theta W(\prob_x, \prob_{x_\text{true}}),
\end{equation*}
where $\prob_x$ depends implicitly on the learnt parameters $\theta$.

We will do so in the usual way for Wasserstein GANs, by constructing a discriminator and training adversarially \cite{wgan}.

Each sample from the generator is a continuous path $x \colon [0, T] \to \reals^{d_x}$; these are infinite dimensional and the discriminator must accept such paths as inputs. Fortunately there is a natural choice: parameterise the discriminator as a neural CDE, as in Chapter \ref{chapter:neural-cde}.

This approach is due to \cite{kidger2021sde1}.

\paragraph{Architecture}
Let
\begin{align*}
\xi_\phi &\colon \reals^{d_x} \to \reals^{d_h},\\
f_\phi &\colon [0, T] \times \reals^{d_h} \to \reals^{d_h},\\
g_\phi &\colon [0, T] \times \reals^{d_h} \to \reals^{d_h \times d_x},\\
&m_\phi \in \reals^{d_h},
\end{align*}
where $\xi_\phi$, $f_\phi$ and $g_\phi$ are (Lipschitz) neural networks. Collectively they are parameterised by $\phi$. The value $d_h \in \naturals$ is a hyperparameter describing the size of the hidden state.

Recalling that $x$ is the generated sample, we take the discriminator to be a CDE
\begin{align}
	h(0) &= \xi_\phi(x(0)),\nonumber\\
	\dd h(t) &= f_\phi(t, h(t)) \,\dd t + g_\phi(t, h(t)) \circ \dd x(t),\nonumber\\
	D &= m_\phi \cdot h(T),\label{eq:sde:discriminator}
\end{align}
for $t \in [0, T]$, with $h \colon [0, T] \to \reals^{d_h}$ the (strong) solution to this CDE, and where $\cdot$ denotes the dot product.

The solution to the CDE exists given mild conditions, namely Lipschitz $f_\phi$ and $g_\phi$; simply concatenate \eqref{eq:sde:nsde} and \eqref{eq:sde:discriminator} together and treat the joint system as an SDE.

The value $D \in \reals$, which is a function of the terminal hidden state $h(T)$, is the discriminator's score for real versus fake; correspondingly we define the overall action of the discriminator via $F_\phi(x) = D$. This is a deterministic function of the generated sample $x$.

\paragraph{Summary of equations}
See Figure \ref{fig:sde-gan-equation-summary} for a summary of equations, combining together both generator and discriminator.

\newcommand{\fillcolour}{yellow!40!white}
\newcommand{\backfillcolour}{green!10!white}

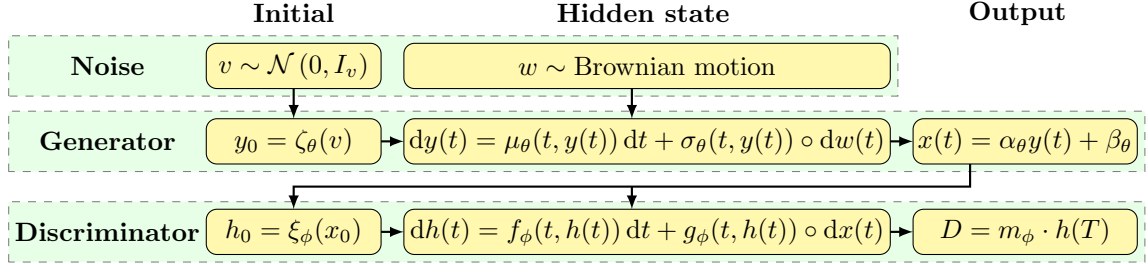
\begin{figure}\centering
\begin{tikzpicture}\footnotesize
    \draw[gray, dashed, fill=\backfillcolour] (-2.6, -0.1) rectangle ++(14.95, 0.8);
    \draw[gray, dashed, fill=\backfillcolour] (-2.6, 1.1) rectangle ++(14.95, 0.8);
    \draw[gray, dashed, fill=\backfillcolour] (-2.6, 2.1) rectangle ++(11.7, 0.8);

    \draw[rounded corners, fill=\fillcolour] (0, 0) rectangle node {$h_0 = \xi_\phi(x_0)$} ++(2.3, 0.6);
    
    \draw[rounded corners, fill=\fillcolour] (0, 1.2) rectangle node {$y_0 = \zeta_\theta(v)$} ++(2.3, 0.6);
    \draw[rounded corners, fill=\fillcolour] (0, 2.2) rectangle node {$v \sim \normal{0}{I_v}$} ++(2.3, 0.6);
    
    \draw[rounded corners, fill=\fillcolour] (2.6, 2.2) rectangle node {$w \sim\null$Brownian motion} ++(6.4, 0.6);
    \draw[rounded corners, fill=\fillcolour] (2.6, 1.2) rectangle node {$\dd y(t) = \mu_\theta(t, y(t)) \,\dd t + \sigma_\theta(t, y(t)) \circ \dd w(t)$} ++(6.4, 0.6);
    \draw[rounded corners, fill=\fillcolour] (2.6, 0) rectangle node {$\dd h(t) = f_\phi(t, h(t)) \,\dd t + g_\phi(t, h(t)) \circ \dd x(t)$} ++(6.4, 0.6);
    
    \draw[rounded corners, fill=\fillcolour] (9.3, 0) rectangle node {$D = m_\phi \cdot h(T)$} ++(2.95, 0.6);
    
    \draw[rounded corners, fill=\fillcolour] (9.3, 1.2) rectangle node {$x(t) = \alpha_\theta y(t) + \beta_\theta$} ++(2.95, 0.6);
    
    \draw[->, thick] (2.3, 1.5) -- (2.6, 1.5);
    \draw[->, thick] (2.3, 0.3) -- (2.6, 0.3);
    
    \draw[->, thick] (9, 1.5) -- (9.3, 1.5);
    \draw[->, thick] (9, 0.3) -- (9.3, 0.3);
    
    \draw[->, thick] (10.05, 1.2) -- (10.05, 0.9) -- (1.15, 0.9) -- (1.15, 0.6);

    \draw[->, thick] (5.6, 2.2) -- (5.6, 1.8);
    
    \draw[->, thick] (5.6, 0.9) -- (5.6, 0.6);
        
    \draw[->, thick] (1.15, 2.2) -- (1.15, 1.8);
        
    \draw (-1.3, 2.5) node[label=center:\textbf{Noise}] {};
    \draw (-1.3, 1.5) node[label=center:\textbf{Generator}] {};
    \draw (-1.3, 0.3) node[label=center:\textbf{Discriminator}] {};
    
    \draw (1.15, 3.2) node[label=center:\textbf{Initial}] {};
    \draw (5.75, 3.2) node[label=center:\textbf{Hidden state}] {};
    \draw (10.675, 3.2) node[label=center:\textbf{Output}] {};
\end{tikzpicture}
\caption{Summary of equations for an SDE-GAN.}\label{fig:sde-gan-equation-summary}
\end{figure}

\paragraph{Training loss}
The training loss is the usual one for Wasserstein GANs \cite{gan, wgan}, namely optimisation with respect to
\begin{equation*}
	\min_\theta \max_\phi \left( \expect_{x}\left[F_\phi(x)\right] - \expect_{x_\text{true}}\left[F_\phi(x_\text{true})\right]\right).
\end{equation*}
Training is performed via stochastic gradient descent techniques as usual.

This generalises the classical approach to calibration seen in equation \eqref{eq:sde:calibration}. Instead of optimising over some fixed collection of payoff functions $\{F_i\}_{i=1}^N$, we optimise over some infinite collection of discriminators $\{F_\phi\}_\phi$.

\begin{remark}
	In Chapter \ref{chapter:neural-cde}, we emphasised the need to include time as a channel in the control of a neural CDE. This corresponds to the inclusion of a `drift' term in equation \eqref{eq:sde:discriminator}. Equivalently we could replace $x$ with $t \mapsto (t, x(t))$ and use the same notation as Chapter \ref{chapter:neural-cde}.
\end{remark}

\subsubsection{Lipschitz regularisation}\index{Lipschitz regularisation}
Wasserstein GANs need a Lipschitz discriminator. A variety of methods have been proposed in the GAN literature, such as weight clipping \cite{wgan}, gradient penalty \cite{improved-wgan}, or spectral normalisation \cite{miyato2018spectral}. The recurrent nature of the SDE setting means that a little care is needed to employ these successfully -- see Section \ref{section:sde:lipschitz}.

\subsubsection{Discretised observations}\label{section:sde:sparse}\index{Interpolation}
Observations of $x_\text{true}$ are typically a discrete time series, rather than a true continuous-time path. This is not a serious hurdle. Simply evaluate \eqref{eq:sde:discriminator} on an interpolation $x_\text{true}$ of the observed data. The effect of this is as follows.

%This applies equally well regardless of whether the data is densely observed or sparsely observed.

\paragraph{Dense data regime}
Suppose we observe samples from $x_\text{true}$ `densely' -- that is, with little gap between successive values in time. (Of approximately no more than the step size of the numerical solver.) Then interpolation produces a distribution in path space; the one desired to be modelled. Simple linear interpolation will be sufficient, but due to the dense sampling of the data this is a choice that is largely unimportant.

Technically speaking, as (linear) interpolation will produce a path of bounded variation, then \eqref{eq:sde:discriminator} will be defined as a Riemann--Stieltjes integral.

\paragraph{Sparse data regime}
Now suppose data is not observed densely, and may even have substantial time gaps between observations. In this case, we fall back to the neural CDE approach: sample the generated paths at some collection of time points, and interpolate both the generated sample \textit{and} the true data. (Before passing them to the discriminator defined as a Riemann--Stieltjes integral in both cases.)

This is the familiar setting for applying neural CDE to time series, as set up in Chapter \ref{chapter:neural-cde}. The interpolation scheme has simply become part of the discriminator, and no modelling or discriminatory power is lost.

\subsubsection{Single SDE solve}\index{Optimise-then-discretise!SDEs}
If working in the dense data regime, then \eqref{eq:sde:nsde} and \eqref{eq:sde:discriminator} may be concatenated together into a single SDE solve. This is of relevance if training using optimise-then-discretise, or with a reversible solver. Both of these are topics we will discuss in Chapter \ref{chapter:numerical}; the reader unfamiliar with these concepts should feel free to skip this heading for now.

The state is the combined $[y, h]$, the initial condition is the combined
\begin{equation*}
	[\zeta_\theta(v),\, \xi_\phi(\alpha_\theta \zeta_\theta(v) + \beta_\theta)],
\end{equation*}
the drift is the combined
\begin{equation*}
	[\mu_\theta(t, y(t)),\, f_\phi(t, h(t)) + g_\phi(t, h(t)) \alpha_\theta \mu_\theta(t, y(t))],
\end{equation*}
and the diffusion is the combined
\begin{equation*}
	[\sigma_\theta(t, y(t)),\, g_\phi(t, h(t)) \alpha_\theta \sigma_\theta(t, y(t))].
\end{equation*}
Then $h(T)$ is extracted from the final hidden state, and $m_\phi$ applied, to produce the discriminator's score for that sample.

Training in this way improves memory efficiency, as the SDE solution $y \colon [0, T] \to \reals^{d_y}$ and the output $x \colon [0, T] \to \reals^{d_x}$ are not recorded during training. The asymptotics improve from $\bigO{H + T}$, as in Section \ref{section:cde:training}, to just $\bigO{H}$, where $H$ is the memory cost of evaluating and backpropagating the vector fields once.

\subsection{Latent SDEs}\label{section:sde:latent-sde}\index{Latent SDEs}
We will now consider training not with respect to the Wasserstein distance, but with respect to the KL divergence. This approach is due to \cite{scalable-sde}.

Let
\begin{align}
	\xi_\phi &\colon \reals^{d_x} \to \reals^{d_y} \times (0, \infty)^{d_y},\nonumber\\
	\nu_\phi &\colon [0, T] \times \reals^{d_y} \times \{[0, T] \to \reals^{d_x}\} \to \reals^{d_y},\label{eq:latent-xi}
\end{align}
be Lipschitz neural networks parameterised by $\phi$. The notation $\{[0, T] \to \reals^{d_x}\}$ denotes the space of all functions $[0, T] \to \reals^{d_x}$.

\begin{remark}
We do not discuss the regularity of the functions in $\{[0, T] \to \reals^{d_x}\}$, as this input to $\nu_\theta$ will actually be a sample of $x_\text{true}$, and in practice we will have discrete observations.

$\nu_\phi$ is commonly parameterised as $\nu_\phi(t, y, x) = \nu_{\phi, 1}(t, y, \nu_{\phi, 2}(\restr{x}{[t, T]}))$, where $\nu_{\phi, 1}$ is an MLP and $\nu_{\phi, 2}$ is either a reverse-time RNN/NCDE or the evaluation function $\nu_{\phi, 2}(\restr{x}{[t, T]}) = x(t)$.
\end{remark}

Let $(m, s) = \xi_\phi(x_{\text{true}}(0))$, let $\widehat{v} \sim \normal{m}{\diag(s)^2}$, and let
\begin{equation*}
	\widehat{y}(0) = \zeta_\theta(\widehat{v}),\qquad\dd \widehat{y}(t) = \nu_\phi(t, \widehat{y}(t), x_\text{true}) \,\dd t + \sigma_\theta(t, \widehat{y}(t)) \circ \dd w(t),\qquad \widehat{x}(t) = \alpha_\theta \widehat{y}(t) + \beta_\theta.
\end{equation*}
Note that $w$ is the same Brownian motion as used in \eqref{eq:sde:nsde}. Similarly $\alpha_\theta \in \reals^{d_x \times d_y}$, $\beta_\theta \in \reals^{d_x}$ and $\sigma_\theta$ are the same objects defined in \eqref{eq:sde:nsde-setup}.

In doing so, we have constructed another SDE using the same diffusion as the main generative model, but with a different initial condition and drift. There is a standard formula for the KL divergence between two SDEs with the same diffusion, which in this case is given by
\begin{equation}\label{eq:sde:_kl-latent-sde}
	\kl{\widehat{y}}{y} = \expect_{w} \int_0^T \frac{1}{2} \norm{(\sigma_\theta(t, \widehat{y}(t)))^{-1}(\mu_\theta(t, \widehat{y}(t)) - \nu_\phi(t, \widehat{y}(t), x_{\text{true}}))}^2_2 \,\dd t,
\end{equation}
where $(\sigma_\theta(t, \widehat{y}(t)))^{-1}$ is the Moore--Penrose pseudoinverse of $\sigma_\theta(t, \widehat{y}(t))$. Note that although $y$ does not appear explicitly on the right hand side, $y$ defines (and is defined by) the $\mu_\theta$ and $\sigma_\theta$ which do appear.

\begin{remark}
	Equation \eqref{eq:sde:_kl-latent-sde} may be identified as an integral over the KL divergence between two Gaussians.
\end{remark}

This opens up a possible training procedure. This `auxiliary' SDE, which depends on samples of the observed data $x_\text{true}$, may be used to autoencode the data. Once the data is represented as an SDE, we may remove the dependence on $x_\text{true}$ by minimising a KL divergence between our original generative model and the auxiliary model.

Explicitly, this corresponds to training according to
\begin{equation*}
	\min_{\theta, \phi}\expect_{x_\text{true}}\left[(\widehat{x}(0) - x_{\text{true}}(0))^2 + \kl{\widehat{v}}{v} + \expect_{w} \int_0^T (\widehat{x}(t) - x_{\text{true}}(t))^2 \,\dd t + \kl{\widehat{y}}{y}\right].
\end{equation*}

\begin{remark}
	Note that this training procedure only involves solving the auxiliary SDE, never the original SDE. The main generative model is trained without ever being evaluated.
\end{remark}

\paragraph{As a variational autoencoder}
\cite{scalable-sde} interpret this procedure as a variational autoencoder, with a learnt prior, whose latent space is an entire stochastic process. (And indeed the above formula may be derived as an evidence lower-bound.) For this reason \cite{scalable-sde} refer to the auxiliary SDE as a posterior SDE.

Interpreted in this way, the first two terms are a VAE for generating $x(0)$, with latent $v$. Meanwhile the third term and fourth term are a VAE for generating $x$, by autoencoding $x_\text{true}$ to $\widehat{x}$, and then fitting $y$ to $\widehat{y}$.

\paragraph{Single SDE solve}
Equation \eqref{eq:sde:_kl-latent-sde} is an integral, and so may be estimated by concatenating it alongside the SDE solve.

\paragraph{Alternate probability densities}
The first and third terms of \eqref{eq:sde:_kl-latent-sde} are the $L^2$ loss, which corresponds to maximising the log-likelihood of $x_{\text{true}}(t)$ with respect to a fixed-variance Gaussian whose mean is $\widehat{x}(t)$:
\begin{equation*}
	\normal{\widehat{x}(t)}{\frac{1}{\sqrt{2}}\eye{d_y}}.
\end{equation*}

However other probability densities are also admissible. As such the above presentation is chosen for simplicity, and compatibility with the presentation of the generative model in Section \ref{section:sde:construction}. The affine map corresponding to $\alpha_\theta, \beta_\theta$ is being used to produce the mean of a fixed-variance Gaussian, but it may be replaced by any other procedure for producing the parameters of some probability distribution, and the log-likelihood optimised as normal.

\subsection{Comparisons and combinations}
The difference between SDE-GANs and latent SDEs is essentially the standard GAN/VAE split. SDE-GANs are more finicky to train, but exhibit substantially higher modelling capacity. Conversely, latent SDEs are easy to train, but often produce worse final models; in particular it is a common feature of latent SDEs that their diffusion will be too small.

It is possible to combine both latent SDEs and SDE-GANs together. (And indeed GAN/VAE hybrids have been proposed in the main deep learning literature too \cite{vaegan1, vaegan2, vaegan3}.) This is a way to offset the weakness of each approach with the strengths of the other. An example of this is given in Section \ref{section:sde:example}, applied to modelling a Lorenz system.

\section{Choice of parameterisation}
As usual with deep learning, the theoretical construction is only half of the work needed to produce a workable model, and the `engineering details' -- of finding good hyperparameters, optimisers, and so on -- still remain.

At time of writing, finding good choices is still largely an open problem for neural SDEs. Much inspiration can likely be drawn from the mainstream generative modelling literature, which has spent the past few years investigating this topic in depth: see for example negative momentum \cite{pmlr-v89-gidel19a}, complex momentum \cite{lorraine2021complex}, stochastic weight averaging (Ces{\`a}ro means) \cite{swa2, swa}, progressive growing \cite{karras2018progressive}, Lipschitz regularisation \cite{improved-wgan, miyato2018spectral}, architectural choices \cite{gans-equal, stylegan2} and so on.

%As with neural ODEs and neural CDEs (Sections \ref{section:ode:parameterisation} and \ref{section:cde:parameterisation}), then much of this has yet to be formalised, and correspondingly much of this section is still relatively anecdotal.

\subsection{Choice of optimiser}
\subsubsection{SDE-GANs}
SDE-GANs can be relatively unstable to train.

\paragraph{Adadelta} Empirically, Adadelta \cite{adadelta}, or the similar RMSprop, seems to outperform either SGD or Adam when training SDE-GANs. In part this is because Adadelta lacks momentum; a lack of momentum is beneficial as the optimisation criterion for a GAN is a moving target.

Adam with $\beta_1 = 0$, where $\beta_1$ is its momentum hyperparameter, also seems to be outperformed by Adadelta \cite{adadelta-better-than-adam}.

\paragraph{Learning rate} The initial networks $\zeta_\theta$ and $\xi_\phi$ often work best with a larger learning rate than is used for the rest of the model. (For example a factor of 10 would be typical.) This helps to offset the fact that the initial distribution (of $x(0)$) often gets relatively weak supervision compared to the time-varying component (of $t \mapsto x(t)$).

\paragraph{Stochastic weight averaging} Using the Ces{\`a}ro mean of both the generator and discriminator weights, averaged over training, can improve performance in the final model \cite{swa2, swa}. This averages out the oscillatory training behaviour for the min-max objective used in GAN training.

%\paragraph{Weight decay} Nonzero weight decay can help to damp the oscillatory behaviour resulting from the min-max objective used in GAN training.

\subsubsection{Latent SDEs}
Latent SDEs are relatively easy to train. Given their VAE-like structure, standard optimisers like Adam \cite{kingma2015} work without difficulty.

Once again it is still usually worth increasing the learning rate for $\zeta_\theta$ and $\xi_\phi$.

\subsection{Choice of architecture}
\subsubsection{Generator}
Recall that $\mu_\theta$ and $\sigma_\theta$ were the drift and diffusion of the SDE, defined in \eqref{eq:sde:nsde-setup}. 

$\mu_\theta$ and $\sigma_\theta$ are typically taken to be MLPs. Numerical SDE solvers will usually demand that the vector fields be sufficiently smooth (for example, bounded with continuous bounded first and second derivatives), so the activation function is often taken to be smooth, like softplus or SiLU.

\paragraph{Final nonlinearities}
It is common to add a final tanh nonlinearity to $\mu_\theta$ and $\sigma_\theta$. This is for the same reason as neural CDEs: to prevent an unconstrained rate of change in the hidden state and the model potentially exploding (especially at initialisation). If this constrains the rate of change too strongly, then this may be managed by parameterising $\mu_\theta$ and $\sigma_\theta$ as
\begin{equation*}
(t, y) \mapsto \gamma \tanh(\mathrm{MLP}_\theta(t, y)),
\end{equation*}
where $\gamma \in \reals$ is a learnt scalar (part of $\theta$).

\paragraph{Initialisation}
As with ODEs (Section \ref{section:ode:initialisation}), training dynamics may be improved by initialising $\mu_\theta$ and $\sigma_\theta$ close to zero.

%\paragraph{Initialisation scaling}
%The model will naturally have an easier time learning if it starts off relatively close to the solution. For many neural network architectures, such as an MLP, then the scale of the parameters approximately corresponds to the scale of the output, that is to say $\norm{\mu_\theta} / \norm{\theta}$ is often approximately constant over different values of $\theta$. This may be used to control the scale of the initial value, drift, and diffusion, by carefully choosing the scale of distribution $\theta$ is initially sampled from prior to training.
%
%One na{\"i}ve but effective way to exploit this is to plot samples from the data distribution, and from an untrained model, and scale $\theta$ such that model samples start off as close as possible to the data distribution. This is a heuristic `hack' that can nevertheless have dramatic effects on whether a model successfully trains, especially in a reasonable timeframe.

\paragraph{Choice of driving noise}\index{Jump!Process}
The construction of this chapter has taken the driving noise $w$ to be a Brownian motion. This choice is not necessary; for example fractional Brownian motion or L{\'e}vy processes could also be used, together with or instead of the Brownian motion $w$.

A choice of particular interest are counting processes $t \mapsto N(t)$ (for example the cumulative sum of a Poisson process) so that the resulting SDE is a jump process
\begin{equation*}
	\dd y(t) = \mu_\theta(t, y(t)) \,\dd t + \sigma_\theta(t, y(t)) \circ \dd w(t) + \lambda(t, y(t-)) \,\dd N(t),
\end{equation*}
where the notation `$t-$' is used to emphasise that the vector field depends upon the value immediately prior to the jump.

The optimisation criteria can get slightly more involved in these cases: whilst the SDE-GAN approach translates over without any changes, at time of writing the latent SDE approach has not yet been explored. See also \cite{njsde} who develop a direct likelihood-based approach to optimise diffusionless drift/jump processes of the form
\begin{equation*}
	\dd y(t) = \mu_\theta(t, y(t)) \,\dd t + \lambda(t, y(t-)) \,\dd N(t),
\end{equation*}

\paragraph{Diffusions for latent SDEs}\label{section:sde:diffusion-param}
When training a latent SDE, then the KL divergence of equation \eqref{eq:sde:_kl-latent-sde}, used in the latent SDE, multiplies by the (pseudo)inverse of $\sigma_\theta$. This is expensive to compute for general matrices.

One effective simplification is to take $d_w = d_y$ and parameterise the diffusion $\sigma_\theta$ as a diagonal matrix. This is cheap to compute the inverse of: take the reciprocal of each diagonal element.

For numerical stability it is additionally often desirable to then bound these diagonal elements away from zero: use $z \mapsto \sigmoid(z) + 10^{-4}$ as a final nonlinearity for $\sigma_\theta$, or alternatively clamp any values in the range $[-10^{-6}, 10^{-6}]$ to the edges of that range.

\paragraph{Approximation properties}\index{Universal approximation!SDEs}
Provided $\mu_\theta$ and $\sigma_\theta$ are drawn from suitable (universal approximating) classes of functions, then it is clear that \eqref{eq:sde:nsde} is more than capable of approximating any Markov SDE, by the universal approximation theorem for neural networks \cite{pinkus1999, kidger2020deep} and standard approximation results for SDEs.

What is less clear is its ability to model non-Markov SDEs. Certainly this is possible to some extent, due to the explicit use of hidden state. (Indeed this is the reason hidden state is introduced in the first place.) At time of writing a formal result has not been derived.

\subsubsection{Discriminator}\label{section:sde:discriminator-parameterisation}
When training an SDE-GAN, then additional networks $\xi_\phi$, $f_\phi$, $g_\phi$ are introduced. These should be parameterised in accordance with neural CDEs (Section \ref{section:cde:parameterisation}).

The initial distribution, learnt by $\zeta_\theta$, can often be improved by providing it additional supervision during training. Redefine $D = m_\phi \cdot h(0) + m_\phi \cdot h(T)$ or $D = \kappa_\phi(x(0)) + m_\phi \cdot h(T)$ instead of just $D = m_\phi \cdot h(T)$ in equation \eqref{eq:sde:discriminator}, where $\kappa_\phi$ is some neural network.

As with any Wasserstein GAN, the discriminator should be Lipschitz. This is the focus of our next section.

\subsection{Lipschitz regularisation}\label{section:sde:lipschitz}\index{Lipschitz regularisation}
This section is specific to SDE-GANs. SDE-GANs, as with any Wasserstein GAN, need a Lipschitz discriminator.

A variety of methods for enforcing Lipschitzness have been proposed in the general GAN literature, such as weight clipping \cite{wgan}, gradient penalty \cite{improved-wgan}, or spectral normalisation \cite{miyato2018spectral}. However a little care must be taken when applying these to the discriminator of an SDE-GAN.

Much of the following discussion originated in \cite{kidger2021sde2}.

\subsubsection{Exponential Lipschitz constant}\label{section:sde:hard-constraint}
Given vector fields with Lipschitz constant $\lambda$, then the recurrent structure of the discriminator means that the Lipschitz constant of the overall discriminator will be $\bigO{\lambda^T}$. This is a key consideration in performing Lipschtiz regularisation, and unfortunately, the aforementioned techniques cannot simply be applied `off the shelf'.

\paragraph{Lipschitz constant one}
The first option will be to somehow ensure that the vector fields $f_\phi$ and $g_\phi$ of the discriminator are not only Lipschitz, but have Lipschitz constant at most one. Ensuring $\lambda \approx 1$ with $\lambda \leq 1$ will enforce that the overall discriminator is Lipschitz, with a Lipschitz constant of approximately one, as well.

This will be the approach we take in Section \ref{section:sde:careful-clipping}.

\paragraph{Hard constraint}
The exponential size of $\bigO{\lambda^T}$ means that $\lambda$ only slightly greater than one is still insufficient for stable training. This is why we specify `$\lambda \approx 1$ with $\lambda \leq 1$' and not merely `$\lambda \approx 1$'. Moreover, it rules out enforcing $\lambda \approx 1$ via soft constraints like spectral normalisation.

\paragraph{Whole-discriminator regularisation}
The second option is to regularise the Lipschitz constant of whole discriminator, without regard for its recurrent structure. This will be the approach we take in Section \ref{section:sde:gp}.

\subsubsection{Careful clipping}\label{section:sde:careful-clipping}\index{Careful clipping}
Let us (within this subsection) now assume that our discriminator vector fields $f_\phi$, $g_\phi$ are MLPs. This is also a common choice made in practice.

%\paragraph{Lipschitz constant one}
%The key point is that the vector fields $f_\phi$, $g_\phi$ of the discriminator must not only be Lipschitz, but must have Lipschitz constant at most one.
%
%Given vector fields with Lipschitz constant $\lambda$, then the recurrent structure of the discriminator means that the Lipschitz constant of the overall discriminator will be $\bigO{\lambda^T}$. Ensuring $\lambda \approx 1$ with $\lambda \leq 1$ thus enforces that the overall discriminator is Lipschitz with a reasonable Lipschitz constant.

\paragraph{Careful clipping}
Consider each linear operation from $\reals^a \to \reals^b$ as a matrix in $A \in \reals^{a \times b}$. After each gradient update, clip its entries to the region $[-1/b, 1/b]$. Given $z \in \reals^a$ then this enforces $\norm{Az}_\infty \leq \norm{z}_\infty$.

\paragraph{LipSwish activation function}\index{LipSwish}
Next we must pick an activation function with Lipschitz constant at most one. It should additionally be at least twice continuously differentiable to ensure convergence of a numerical SDE solver. In particular this rules out the ReLU.

There remain several admissible choices. We tend to use the LipSwish activation function introduced by \cite{lipswish}, defined as $\rho(z) = 0.909\,z\sigma(z)$, where $\sigma$ denotes the sigmoid function. This has Lipschitz constant one (due to the carefully-chosen $0.909$ scaling factor), and is smooth. Moreover the SiLU activation function from which it is derived has been reported as an empirically strong choice \cite{gelu, silu, swish}.

\paragraph{Overall}
The overall vector fields $f_\phi$, $g_\phi$ of the discriminator consist of linear operations (which are constrained by clipping), adding biases (an operation with Lipschitz constant one), and activation functions (taken to be LipSwish). Thus the Lipschitz constant of the overall vector field is at most one, as desired.

\subsubsection{Gradient penalty}\label{section:sde:gp}\index{Gradient penalty}
Another option is to directly regularise the Lipschitz constant of the entire discriminator, via gradient penalty. Add
\begin{equation}\label{eq:sde:_gradient-penalty}
	\expect_{\widehat{x}} \left[ \left( \norm{\frac{\partial F_\phi}{\partial x}(\widehat{x})} - 1 \right)^2 \right]
\end{equation}
as a regularisation term to the training loss, where $\widehat{x}$ is sampled according to $\widehat{x} = \alpha x + (1 - \alpha) x_\text{true}$ with $x \sim \prob_x$ and $x_\text{true} \sim \prob_{x_\text{true}}$ and $\alpha \sim \uniform{0}{1}$.

This approach works, which is more than can be said for other na{\"i}ve approaches. However, compared to the careful clipping of Section \ref{section:sde:careful-clipping}, this approach mostly comes with disadvantages.

\paragraph{Disadvantages}
Because \eqref{eq:sde:_gradient-penalty} involves calculating a gradient, then optimising it involves calculating a second derivative -- a `double backward'.% This is something that can be particularly tricky for neural SDEs.

This is of relevance if training using optimise-then-discretise, which is a topic we will discuss in Chapter \ref{chapter:numerical}. (The reader unfamiliar with this concept should feel free to skip this heading for now.)\index{Optimise-then-discretise!SDEs}

If training proceeds using optimise-then-discretise, then as a single backward constructs an `adjoint SDE', a double backward constructs an `adjoint-of-adjoint SDE'. This starts to imply substantial errors in the numerical discretisation, and this can be sufficient to degrade or destroy training.

Another negative is the additional computational cost implied by computing, and autodifferentiating, \eqref{eq:sde:_gradient-penalty}. This can easily result in a training procedure that takes about 50\% longer than the careful clipping approach.

%\begin{remark}
%	The alternative strategy of applying gradient penalty across just the vector fields $f_\phi$, $g_\phi$, rather than the whole discriminator, seems to fail. We believe this is because this is not a hard constraint as discussed in Section \ref{section:sde:hard-constraint}. Anecdotally, simple hacks like replacing the `$1$' in equation \eqref{eq:sde:_gradient-penalty} with a `$0.9$' are still insufficient.
%\end{remark}

\begin{remark}
	We sidestep questions of how the derivative in \eqref{eq:sde:_gradient-penalty} is defined -- given that $\widehat{x}$ is path valued -- by defining it with respect to the numerically discretised solution of $\widehat{x}$. In practice gradient penalty is not the preferred option, due to the disadvantages already discussed, so this is not an issue we will seek to tackle formally.
\end{remark}

\section{Examples}\index{Examples!Neural SDEs}\label{section:sde:example}
\paragraph{Brownian motion}\index{Brownian!Motion}
As a simplest-possible first example, consider a dataset of samples of (univariate) Brownian motion, with initial condition $\uniform{-1}{1}$. Each element of the dataset is a time series of observations along a single Brownian sample path. We train a small SDE-GAN to match the distribution of the initial condition and the distribution of the time-evolving samples; see Figure \ref{fig:sde:brownian}.

This example may seem almost trivially simple, and yet it highlights a class of time series that would be almost impossible to learn with a latent ODE (Section \ref{section:ode:latent-ode}). A Brownian motion represents pure diffusion, whilst a latent ODE is pure drift.

\begin{figure}
	\begin{minipage}{0.49\linewidth}
		\includegraphics[width=\linewidth]{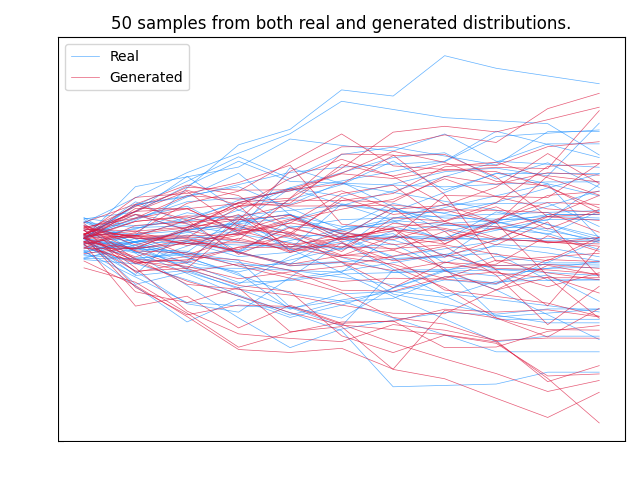}
		\caption{Coarsely-spaced $(t, y)$ samples of a Brownian motion, and an SDE-GAN trained to match its distribution.}\label{fig:sde:brownian}
	\end{minipage}\hfill\begin{minipage}{0.49\linewidth}
		\includegraphics[width=\linewidth]{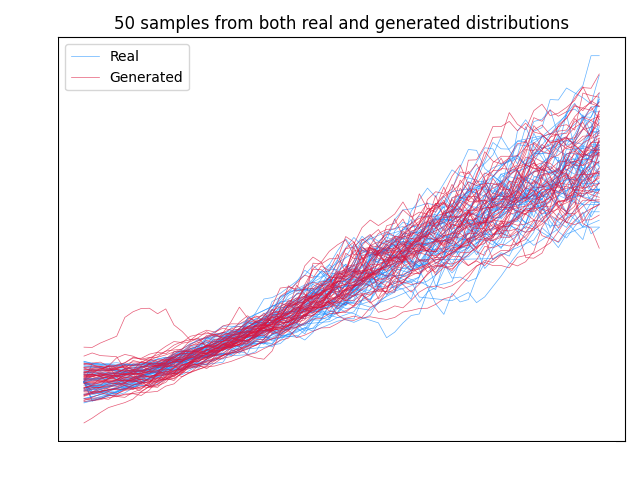}
		\caption{Finely-spaced $(t, y)$ samples of a time-dependent Ornstein--Uhlenbeck process, and an SDE-GAN trained to match its distribution.}\label{fig:sde:ou}
	\end{minipage}
\end{figure}

\paragraph{Time-dependent Ornstein--Uhlenbeck process}\index{Orenstein--Uhlenbeck process}
Next we consider training an SDE-GAN to recover the distribution of
\begin{equation*}
	y(0) \sim \uniform{-1}{1},\qquad \dd y(t) = (a t - b y(t)) \,\dd t + ct \,\dd w(t),\qquad\text{for $t \in [0, 63]$}.
\end{equation*}
We take in particular $a=0.02$, $b=0.1$, $c=0.013$.

This example introduces explicit time dependency; in particular a time-dependent diffusion.

That this has both nontrivial drift and diffusion makes it an example of a process that is easy to learn via SDEs, but would be difficult to learn with models such as a latent ODE (which is pure drift; Section \ref{section:ode:latent-ode}) or a CTFP (which is almost-pure diffusion; \cite{ctfp}).

See Figure \ref{fig:sde:ou}.

\paragraph{Damped harmonic oscillator}
Next we consider a dataset of samples from a two-dimensional damped harmonic oscillator.
\begin{equation*}
	y_1(0), y_2(0) \sim \uniform{-1}{1},\qquad \dd \begin{bmatrix} y_1(t) \\ y_2(t) \end{bmatrix} = \begin{bmatrix} -0.01 & 0.13 \\ -0.1 & -0.01 \end{bmatrix} \begin{bmatrix} y_1(t) \\ y_2(t) \end{bmatrix} \, \dd t,
\end{equation*}
for $t \in [0, 100]$.

This example is multidimensional, pure-drift, and solved over a long time interval.

In this case we train a latent SDE to recover the distribution. See Figure \ref{fig:sde:harmonic}, which shows a single sample, in the $(y_1, y_2)$-plane, from both the true and generated dataset. (So that time evolves as the trajectory spirals inwards.)

It is by coincidence that the generated sample begins so close to, and partway along, the true sample. (They are not necessarily meant to overlap.) The generated sample does an excellent job at matching the drift, even extrapolating past the end of the true sample. The only issue is that the diffusion is still too high -- indeed the true diffusion is zero -- demonstrating that some additional training may still be required. Nonetheless this demonstrates how neural SDEs subsume neural ODEs as a special case, practically as well as theoretically.
 
\paragraph{Lorenz attractor}\index{Lorenz attractor}
We consider a dataset of samples from the Lorenz attractor
\begin{align*}
	&y \sim \normal{0}{\eye{3}},\\
	\dd y_1(t) &= a_1 (y_2(t) - y_1(t)) \,\dd t + b_1 y_1(t) \dd w(t),\\
	\dd y_2(t) &= (a_2 y_1(t) - y_1(t)y_3(t) )\,\dd t + b_2 y_2(t) \dd w(t),\\
	\dd y_3(t) &= (y_1(t) y_2(t) - a_3 y_3(t)) \,\dd t + b_3 y_3(t) \dd w(t),
\end{align*}
for $t \in [0, 2]$. We take specifically $a_1 = 10$, $a_2 = 28$, $a_3 = \frac{8}{3}$, $b_1 = 0.1$, $b_2 = 0.28$, $b_3 = 0.3$.

This example is multidimensional, chaotic, and has state-dependent diffusion.\footnote{In passing, note that this is an It{\^o} SDE. As discussed in Section \ref{section:sde:introduction}, it is no issue that we are about to learn it with a Stratonovich neural SDE.}

We train a combined latent SDE / SDE-GAN on this dataset. They are combined simply by interchanging separate training steps: one as a latent SDE, followed by one as an SDE-GAN. See Figure \ref{fig:sde:lorenz}. The model has correctly learnt the distribution of this chaotic multidimensional time series.

\begin{figure}
\begin{minipage}{0.49\linewidth}
	\includegraphics[width=\linewidth]{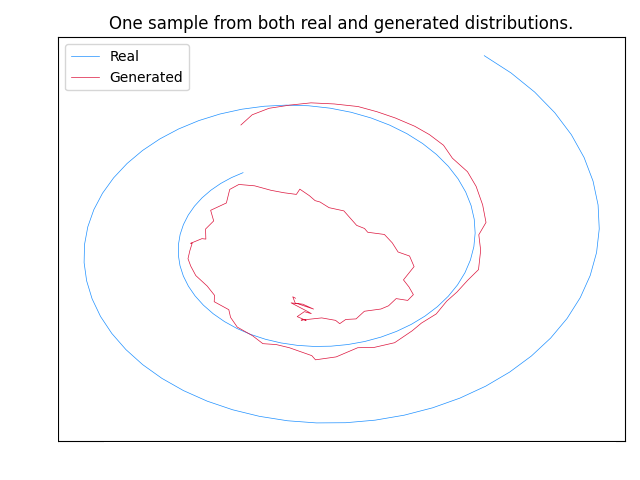}
	\caption{A single $(y_1, y_2)$-plane sample of a damped harmonic oscillator, and a latent SDE trained to match its distribution.}\label{fig:sde:harmonic}
\end{minipage}\hfill\begin{minipage}{0.49\linewidth}
	\includegraphics[width=\linewidth]{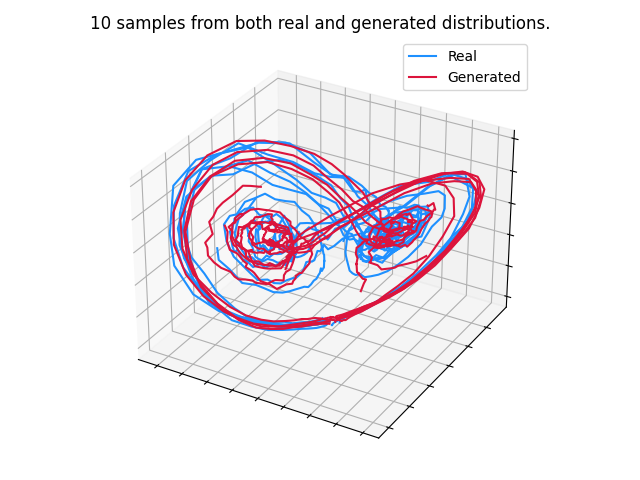}
	\caption{Evolving $(y_1, y_2, y_3)$ samples from a Lorenz attractor, and a `latent SDE-GAN' trained to match its distribution}\label{fig:sde:lorenz}
\end{minipage}
\end{figure}

\paragraph{Further details}
See Appendix \ref{appendix:experimental:sde} for precise details on the experiments considered here. The code is available as an example in Diffrax \cite{diffrax}.

\paragraph{Irregular sampling}\index{Irregular sampling}
Both the Brownian motion and the Ornstein--Uhlenbeck example were irregularly sampled with missing data. The process was observed at each integer (in the time domain) with only $70\%$ probability, and unobserved otherwise.

The continuous-time approach discussed in this chapter means that this irregularity requires no special treatment. Moreover the output of each model evolves in continuous time and may be observed at any location.

\paragraph{Other examples}
Other (real-world) time series problems may be considered.

\cite{scalable-sde} give an example training latent SDEs to perform short-term forecasting on a 50-dimensional motion capture dataset.

\cite{kidger2021sde1} consider a dataset of 14.6 million observations of Google/Alphabet stock prices, and train an SDE-GAN to replicate the evolution of the midpoint and spread as it evolves over a minute.

\cite{kidger2021sde2} train both latent SDEs and SDE-GANs, and give an example modelling the air quality over Beijing.

%It is interesting to note that most (non-neural) SDEs are constructed theoretically, via mechanistic concerns. However in all of the above examples, an SDE has instead been obtained purely through data, even for cases -- such as air quality -- when no mechanistic theory-driven model is necessarily known. This has been a recurring theme throughout neural differential equations, which is a step-change from traditional parameterised differential equation modelling.

\section{Comments}
Several authors have independently introduced notions of neural SDEs.

\cite{scalable-sde, kidger2021sde1} were the main works to derive the material presented here, whilst our presentation is derived from the follow-up \cite{kidger2021sde2}.

We have focused on using the Wasserstein distance or KL divergence to match model against data. In principle the classical calibration approach, using fixed statistics, may be employed in conjunction with neural vector fields, and this is now essentially the formulation of an MMD (Appendix \ref{appendix:deep:mmd}). Some care should be taken as to the choice of feature map. For example some authors have used only the mean and variance of the marginal distributions at each time $t$, and this fails to distinguish $t \mapsto w(t)$ from $t \mapsto \sqrt{t}w(1)$. A good choice of feature map is the signature transform \cite{signatory}; for example this is done in \cite[Section 4]{kidger2021sde1} and \cite[Section 4.1]{bonnier2019deep}. (There also exists a corresponding signature kernel \cite{sigkernel1, sigkernel2}.)

\cite{nsde-mmd, finance-nsde} consider variations on the formulation given here, but optimise a distance only between finite-dimensional marginal distributions, rather than optimising the continuous-time model. \cite{josef-sde} consider another variation on this formulation, by adding known structure to the discriminator, corresponding to prespecified payoff functions of interest. \cite{sam-sde} consider specifically Markov neural SDEs and optimise via maximum likelihood (more-or-less equivalent to optimising the KL criterion considered here) as part of a larger framework for market models; indeed many of the above references target financial applications.

Meanwhile \cite{nsde-basic, nsde-generative} obtain neural SDEs as a continuous limit of deep latent Gaussian models, and largely focus on the theoretical construction.

The connections between score-based generative modelling\index{Score-based generative modelling} and neural SDEs as presented here has not yet been explored in detail. We recommend the first few pages of \cite{diffusion-bridge} for an introduction to score-based generative modelling. \cite{maoutsa2020interacting, zhang2021diffusion} emphasise connections to continuous normalising flows. \cite{pmlr-v139-shi21b} give an application to molecular conformation, \cite{ho2021cascaded, dhariwal2021diffusion} give large-scale applications to image generation, and \cite{sdedit} give an application to image editing. \cite{var-diffusion1, var-diffusion2, var-diffusion3} give variational/likelihood-based perspectives.

The mainstream deep learning literature frequently uses stochasticity as a regulariser. A neural SDE may likewise be treated as a regularised neural ODE or CDE \cite{nsde-normalisation, stochasticnode, roughstochasticNF}. This will also be discussed as part of our numerical treatment of differential equations in Section \ref{section:numerical:additive-noise}.

\cite{kong-sde} give one application of neural SDEs not discussed here, by using the stochasticity as part of procedure to distinguish between epistemic and aleatoric uncertainty.
\chapter{Numerical Solutions of Neural Differential Equations}\label{chapter:numerical}
\section{Backpropagation through ODES}\label{section:numerical:adjoint-ode}
Training a neural differential equation usually means backpropagating through the differential equation solve. There are actually several ways to do this.

For clarity of exposition we begin by studying ODEs only, and will return to backpropagation through CDEs and SDEs in the next section.

We shall see three main ways of differentiating through an ODE.
\begin{itemize}
	\item Discretise-then-optimise -- memory inefficient, but accurate and fast;
	\item Optimise-then-discretise -- memory efficient, but approximate and a little slow;
	\item Reversible ODE solvers -- memory efficient and accurate, but a little slow.
\end{itemize}

Generally speaking discretise-then-optimise is the preferred approach. If this is not possible, typically due to memory constraints, then reversible ODE solvers are the next best option. Finally, if this is not suitable then optimise-then-discretise methods may be used, but these are typically the least-favoured approach.

All of these choices may typically be found in major differential equation software libraries (Section \ref{section:numerical:software}), so that the choice of backpropagation is usually an easy thing to change.

\subsection{Discretise-then-optimise}\label{section:discretise-then-optimise}\label{section:numerical:discretise-then-optimise}\index{Discretise-then-optimise} The first option is simply to backpropagate through the internal operations of the differential equation solver.

A differential equation solver internally performs the usual arithmetic operations of addition, multiplication, and so on, each of which is differentiable. Given that a solve operation is a composition of differentiable operations, it is also differentiable.

This is known as `discretise-then-optimise'. The derivatives are computed with respect to the discretised version of the differential equation that the solver computed, and not with respect to the idealised continuous-time equation.

\subsubsection{Advantages}

\paragraph{Accuracy of gradients} The computed gradients will be accurate for the discrete model that is actually being used. This is in contrast to some of the techniques we shall see later, which compute only approximate gradients.

\paragraph{Speed} This is often the quickest way to backpropagate. One reason for this is that the full computation graph is known prior to performing the backpropagation, and so the underlying autodifferentiation library may better exploit parallelism.

\paragraph{Ease of implementation} The implementation of discretise-then-optimise is generally straightforward: provided the differential equation solver is written in an autodifferentiable framework (such as PyTorch or JAX), then gradients may automatically be computed in the usual way for these frameworks.

\subsubsection{Disadvantages}

\paragraph{Memory inefficiency} This approach is memory-inefficient, as every internal operation of the solver must be recorded. If the memory cost of recording the operations of a single differential equation step is $H$, and recalling that $T$ is the time horizon, then this approach consumes $\bigO{HT}$ memory.

This is in contrast to the techniques we shall see later, which reduce this to only $\bigO{H}$.

\begin{remark}
	In some sense it's a little unfair to state that discretise-then-optimise is memory-inefficient. It's simply performing backpropagation as normal, as with any other neural network model, and we do not usually refer to those as memory-inefficient. It is simply that the other options we see later can reduce memory costs to essentially negligible amounts.
\end{remark}

\paragraph{Difficulty of implementation} In contrast to the `ease of implementation' just discussed -- if the differential equation solver is provided without having been written in an autodifferentiable framework, then this approach is essentially impossible to implement.

\subsubsection{Checkpointing}\label{section:numerical:dto-checkpointing}\index{Checkpointing}
It is possible to finesse the problem of memory inefficiency through checkpointing. That is, record the value of the forward pass at certain points during the solve, and use these to reconstruct values during the backward pass. This is a general technique in deep learning \cite{treeverse}. \cite{anode} discuss this in the specific context of neural ODEs.

\subsection{Optimise-then-discretise}\index{Optimise-then-discretise!ODEs}
We now move on to the optimise-then-discretise approach. This instead works by differentiating the idealised continuous-time model. Doing so produces a backwards-in-time differential equation, which is then solved numerically. (References include \cite{pontryagin, hager, sundials-cvodes, neural-odes} but this technique is widespread.)

\begin{restatable}{theorem}{odeadjoint}\label{theorem:ode-adjoint}
	Let $y_0 \in \reals^d$ and $\theta \in \reals^m$. Let $f_\theta \colon [0, T] \times \reals^d \to \reals^d$ be continuous in $t$, uniformly Lipschitz in $y$, and continuously differentiable in $y$. Let $y \colon [0, T] \to \reals^d$ be the unique solution to
	\begin{equation*}
		y(0) = y_0, \qquad \frac{\dd y}{\dd t}(t) = f_\theta(t, y(t)).
	\end{equation*}
	Let $L = L(y(T))$ be some (for simplicity scalar) function of the terminal value $y(T)$.
	
	Then $\nicefrac{\dd L}{\dd y(t)} = a_y(t)$ and $\nicefrac{\dd L}{\dd \theta} = a_\theta(0)$, where $a_y \colon [0, T] \to \reals^d$ and $a_\theta \colon [0, T] \to \reals^m$ solve the system of differential equations
	\begin{align}
		a_y(T) &= \frac{\dd L}{\dd y(T)},\hspace{1.6em} &\frac{\dd a_y}{\dd t}(t) = -a_y(t)^\top \frac{\partial f_\theta}{\partial y}(t, y(t)),\nonumber\\
		a_\theta(T) &= 0,\hspace{1.6em} &\frac{\dd a_\theta}{\dd t}(t) = -a_y(t)^\top \frac{\partial f_\theta}{\partial \theta}(t, y(t)).\label{eq:ode-adjoint}
	\end{align}
\end{restatable}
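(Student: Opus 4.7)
The plan is to derive the adjoint equations by the Lagrangian (variational) method, treating the ODE itself as a constraint. I would introduce
\begin{equation*}
	\mathcal{L}(\theta) = L(y(T)) + \int_0^T \lambda(t)^\top \left( f_\theta(t, y(t)) - \frac{\dd y}{\dd t}(t) \right) \dd t,
\end{equation*}
where $\lambda \colon [0, T] \to \reals^d$ is arbitrary. Since $y$ satisfies the ODE, the integrand vanishes identically, so $\mathcal{L}(\theta) = L(y(T))$ for every choice of $\lambda$; in particular $\nicefrac{\dd \mathcal{L}}{\dd \theta} = \nicefrac{\dd L}{\dd \theta}$. The whole scheme is to choose $\lambda$ so that the derivative of $\mathcal{L}$ in $\theta$ is trivial to read off.

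Next I would differentiate $\mathcal{L}$ in $\theta$, applying the chain rule to both $L(y(T))$ and $f_\theta(t, y(t))$, and then integrate by parts on the $-\lambda^\top \nicefrac{\dd y}{\dd t}$ term to move the time derivative onto $\lambda$. The resulting expression collects into three kinds of contributions: boundary terms (which vanish at $t=0$ because $\nicefrac{\partial y}{\partial \theta}(0) = 0$, and at $t=T$ read $(\nicefrac{\dd L}{\dd y(T)} - \lambda(T))^\top \nicefrac{\partial y}{\partial \theta}(T)$); an interior integrand multiplying the unknown sensitivity $\nicefrac{\partial y}{\partial \theta}(t)$, of the form $\nicefrac{\dd \lambda}{\dd t}(t)^\top + \lambda(t)^\top \nicefrac{\partial f_\theta}{\partial y}(t,y(t))$; and a remaining integrand $\lambda(t)^\top \nicefrac{\partial f_\theta}{\partial \theta}(t, y(t))$. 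Choosing $\lambda = a_y$ with the ODE and terminal condition prescribed in the theorem annihilates the first two, leaving $\nicefrac{\dd L}{\dd \theta} = \int_0^T a_y(t)^\top \nicefrac{\partial f_\theta}{\partial \theta}(t, y(t))\,\dd t$. Writing the right hand side as $a_\theta(0)$ with $a_\theta(t) = \int_t^T a_y(s)^\top \nicefrac{\partial f_\theta}{\partial \theta}(s, y(s))\,\dd s$ and differentiating in $t$ recovers precisely the ODE and terminal condition for $a_\theta$ stated in the theorem.

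For the identity $a_y(t) = \nicefrac{\dd L}{\dd y(t)}$ at a general intermediate time, I would rerun the same Lagrangian argument on the subinterval $[t, T]$, regarding $y(t)$ as the initial condition. Uniqueness from Picard (Theorem \ref{theorem:picard-ode}) ensures the solution on $[t, T]$ agrees with the original restricted solution, so the resulting adjoint ODE is the same one, integrated backwards from $a_y(T) = \nicefrac{\dd L}{\dd y(T)}$, and evaluating at time $t$ yields the claim.

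The main obstacle is analytic rather than computational: differentiating under the integral and applying the chain rule to the map $\theta \mapsto y(\,\cdot\,; \theta)$ requires that the sensitivity $\nicefrac{\partial y}{\partial \theta}(t)$ exists and is continuous in $t$. This is supplied by standard ODE perturbation theory -- the Lipschitz and $C^1$ hypotheses on $f_\theta$ imply that $\nicefrac{\partial y}{\partial \theta}$ exists on $[0, T]$ and itself satisfies a linear variational equation, to which Gronwall's inequality gives the needed bounds -- so this step is technical but routine; I would defer it to a lemma and reference, and push the derivation through formally in the main argument.
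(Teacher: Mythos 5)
Your proof is correct, but it follows a genuinely different route from the paper's own argument. You take the classical Lagrangian (Pontryagin) approach: introduce a multiplier $\lambda$ enforcing the ODE constraint, differentiate $\mathcal{L}$ with respect to $\theta$ (or $y(t)$), integrate by parts to move the time derivative onto $\lambda$, and then choose $\lambda$ so that the coefficient of the unknown sensitivity $\nicefrac{\partial y}{\partial \theta}$ vanishes. The paper instead works with the forward sensitivity $J_s(t) = \nicefrac{\dd y(t)}{\dd y(s)}$ directly: it derives the linear variational ODE for $J_s$, defines $a_y$ as the solution of the claimed adjoint ODE, and shows by a one-line computation that $\frac{\dd}{\dd t}\bigl(a_y(t)^\top J_s(t)\bigr) = 0$, so the pairing is constant and $a_y(t) = a_y(T)^\top J_t(T) = \nicefrac{\dd L}{\dd y(t)}$. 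The $a_\theta$ equation then follows by the augmentation trick (replacing $y$ with $[y,\theta]$ and $f_\theta$ with $[f_\theta, 0]$) rather than by a separate calculation. The two derivations are in some sense dual: the paper identifies a conserved quantity along the coupled forward/adjoint flow, whereas you identify the adjoint as the Lagrange multiplier that eliminates the sensitivity from a variational expansion. Your route is the more familiar one from optimal control and delivers $\nicefrac{\dd L}{\dd \theta}$ directly as an integral without the augmentation trick; the paper's route is shorter, does not require integration by parts on $\nicefrac{\dd y}{\dd t}$ (which matters for its later CDE/SDE generalisations, where the product rule must be replaced by a Riemann--Stieltjes or rough-path analogue), and makes the existence of the adjoint solution and the role of the forward sensitivity equation explicit. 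Both need the same technical input -- that $\nicefrac{\partial y}{\partial \theta}$ and $\nicefrac{\dd y(t)}{\dd y(s)}$ exist and satisfy the variational ODEs -- which you correctly flag and which the paper also invokes (and justifies via the boundedness of $\nicefrac{\partial f_\theta}{\partial y}$ and dominated convergence / Leibniz). One small point worth making explicit in a full write-up of your argument: the boundary term at the lower endpoint is what distinguishes the $\theta$-derivative case (where $\nicefrac{\partial y}{\partial \theta}(0)=0$ kills it) from the $y(t)$-derivative case on $[t,T]$ (where $\nicefrac{\partial y(t)}{\partial y(t)} = I$ and the boundary term \emph{is} the answer); you gesture at this but it deserves a line of its own.
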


\begin{remark}
	The vector-matrix products in equation \eqref{eq:ode-adjoint} are, more specifically, vector-Jacobian products. These can be computed efficiently via autodifferentiation; see Appendix \ref{appendix:deep:autodiff}.
\end{remark}

\begin{remark}\label{remark:numerical:not-an-ode}
	Note that the $a_y(t)$ in the second equation (rather than $a_\theta(t)$) is not a typographical error. The vector fields are independent of $a_\theta$. This may be exploited to speed up backpropagation through neural ODEs; this is a topic we shall return to in Section \ref{section:numerical:not-an-ode}.
\end{remark}

Equations \eqref{eq:ode-adjoint} are known as \textit{the continuous adjoint equations}.

These give a way to backpropagate through an ODE solve. Consider for example Figure \ref{figure:computation-graph}. The gradient $\nicefrac{\dd L}{\dd y(T)}$ is calculated by backpropagating through the softmax and affine layer in the usual way. Then the system of equation \eqref{eq:ode-adjoint} is solved backwards in time from $t=T$ to $t=0$, to compute the parameter gradients $\nicefrac{\dd L}{\dd \theta} = a_\theta(0)$.\footnote{And had there also been any preceding operations, then backpropagation could then continue as usual from the computed $\nicefrac{\dd L}{\dd \theta} = a_\theta(0)$, $\nicefrac{\dd L}{\dd y_0} = a_y(0)$.}

Note that equation \eqref{eq:ode-adjoint} requires knowing the solution $y$ as an input. The usual approach is to find this by augmenting equations \eqref{eq:ode-adjoint} with the original neural ODE solved backwards-in-time, starting from the numerical approximation to $y(T)$ computed on the forward pass. In integral notation, this is
\begin{equation}\label{eq:backward-ode}
	y(t) = y(T) + \int_T^t f_\theta(s, y(s)) \,\dd s.
\end{equation}

This is known as the `continuous adjoint method`, or as the `optimise-then-discretise' approach. The derivatives are calculated with respect to the idealised continuous-time model, and then the adjoint equations of \eqref{eq:ode-adjoint} must themselves then be discretised.

\begin{remark}
The continuous adjoint method is also commonly referred to as simply `the adjoint method', especially in the modern neural differential equation literature.

This is an unfortunate ambiguity of terminology. Across several prominent works, `the adjoint method' has been used to refer to both the the discretise-then-optimise approach \cite{smoking-adjoints}, and the optimise-then-discretise approach \cite{neural-odes}. Moreover it has sometimes been used to refer to something else entirely: \cite[Definition 8.2]{hairer} use it to refer to the inverse map of a numerical integration step, so that for example an implicit Euler step is the `adjoint' of an explicit Euler step.\footnote{This relationship between numerical integration steps is something we will actually need later. We refer to it as \textit{analytic reversibility}, see Section \ref{section:numerical:analytic-reversibility}.}

In this text we strive to be unambiguous by always making clear which adjoint method we are referring to, and would strongly discourage simply writing `the adjoint method' without further qualification.
\end{remark}

\subsubsection{Proof: `continuous-time backpropagation'}\label{section:numerical:ode-adjoint-sketchproof}
Proving Theorem \ref{theorem:ode-adjoint} is straightforward, and it is informative to compare this to backpropagation.

Consider two points $s, t \in [0, T]$ with $s < t$, and consider solving the ODE from $s$ to $t$, and then from $t$ to the terminal time $T$. Then by the chain rule,
\begin{equation*}
	\frac{\dd L}{\dd y(s)} = \frac{\dd L}{\dd y(t)}\frac{\dd y(t)}{\dd y(s)}.
\end{equation*}
For notation's sake let $a(t)^\top = \frac{\dd L}{\dd y(t)}$. Now the left hand side is independent of $t$, so differentiate with respect to $t$ and set $t = s$:
\begin{align*}
	0 &= \eval{\frac{\dd}{\dd t} \left(a(t)^\top \frac{\dd y(t)}{\dd y(s)}\right)}{t = s}\\
	&= \eval{\frac{\dd}{\dd t} \left(a(t)^\top\right)\frac{\dd y(t)}{\dd y(s)}}{t = s} + \eval{a(t)^\top \frac{\dd}{\dd t} \left(\frac{\dd y(t)}{\dd y(s)}\right)}{t = s}\\
	&= \frac{\dd a}{\dd s}(s) + \eval{a(t)^\top \frac{\dd}{\dd y(s)} \left(\frac{\dd y(t)}{\dd t}\right)}{t = s}\\
	&= \frac{\dd a}{\dd s}(s) + \eval{a(t)^\top \frac{\dd}{\dd y(s)} \left(f_\theta(t, y(t))\right)}{t = s}\\
	&= \frac{\dd a}{\dd s}(s) + a(s)^\top \frac{\partial f_\theta}{\partial y}(s, y(s)).
\end{align*}

This is now precisely the first adjoint equation of equation \eqref{eq:ode-adjoint}. The second adjoint equation can be derived by replacing $y$ with the $[y, \theta]$ and $f_\theta$ with $[f_\cdot, 0]$ (that is to say treating the parameters $\theta$ as additional state, subject to zero vector field), and applying the same argument as before.

In this way we see that the adjoint equations are essentially `continuous time backpropagation'.\footnote{The mathematically precise reader may still be skeptical: we have not justified the change of limits, nor that the solution to the adjoint differential equation actually exists. See Appendix \ref{appendix:ode-adjoint} for these technical points.}

\subsubsection{Advantages}

\paragraph{Memory efficiency} The continuous adjoint method has one clear advantage: memory efficiency. The forward computations for $y$ need not be stored, as $y$ is recomputed on the backward pass. So whilst differentiating through the internal operations has memory cost $\bigO{HT}$, the continuous adjoint method has a memory cost of only $\bigO{H}$, independent of the time horizon $T$.

\paragraph{Ease of implementation} Another advantage is a practical one: the differential equation solver need not be written using autodifferentiation software, as required for the discretise-then-optimise approach. The differential equation solver can instead be treated as a black box; for example the solver may have been written for some other purpose as part of some other software package.

\subsubsection{Disadvantages}

\paragraph{Computational cost} The continuous adjoint method incurs additional computations necessary to recalculate $y(t)$ on the backward pass; this implies a slightly slower and more computationally expensive procedure.

\paragraph{Truncation errors} The second disadvantage is numerical discretisation error: there will be a difference in the value computed for $y(t)$ on the forward pass (computed starting from the initial condition $y(0)$), and the value computed for $y(t)$ on the backward pass (computed starting from the numerical approximation to the terminal condition $y(T)$ obtained on the forward pass).

Furthermore and in addition to the recomputation of $y(t)$, the continuous adjoint equations for $a_y(t)$ and $a_\theta(t)$ must themselves be solved numerically, and in doing so incur some additional numerical error.

The result is that gradients calculated via the continuous adjoint method will not be as accurate as those computed by backpropagating through the solver. (Which are the gold standard, corresponding to the model actually used.) This means that training may be slower, final model performance may be impacted, and in the worst case training may fail altogether. \cite{anode} give a description of the possible failure modes of the continuous adjoint method, and \cite{onken2020discretize} perform a thorough empirical investigation comparing optimise-then-discretise against discretise-then-optimise, in favour the of the latter.

\begin{example}\label{example:numerical:forward-backward-errors}
Consider solving the system $\nicefrac{\dd y}{\dd t}(t) = \lambda y(t)$ with a numerical ODE solver, where $y(t) \in \reals$, $y(0) = y_0$, and suppose $\lambda < 0$. Most differential equation solvers -- those with a nontrivial region of stability \cite[Definition 2.1]{hairer-ii} -- will handle this without trouble, as errors will decay exponentially. However when this is instead solved backwards-in-time from $y(T)$, as in equation \eqref{eq:backward-ode}, then small errors are instead magnified exponentially. Moreover if $\lambda > 0$ then the same problem arises simply by interchanging the forward and backward passes in the above discussion.
\end{example}

\paragraph{However \ldots}

Despite these dire warnings, continuous adjoint methods often (but not always) still work in practice, without needing any special care. The continuous adjoint method's suitability for any given problem is typically determined empirically -- `Does training seem to be working?' -- and difficulties are frequently not a concern for practitioners.

\subsubsection{Interpolated adjoints}\index{Interpolated adjoints}
It is possible to finesse the problem of numerical errors in the continuous adjoint method.

Record the values of $y(t)$ at specific locations on the forward pass (but not the internal operations of the solver used to obtain these $y(t)$). Interpolate these recorded values to form an approximation to $y(t)$ for all $t$. Then solve the continuous adjoint equations on the backward pass \textit{without} additionally recomputing $y(t)$, by instead using the interpolated approximation at whatever values of $t$ are required during the backward solve.

Interpolated adjoints are actually the default backpropagation method used in \cite{sundials-cvodes} and \cite{rackauckas2020universal}, for example. \cite{stiff-neural-ode} report finding that optimise-then-discretise adjoints failed on a stiff differential equation, but that both interpolated adjoints and discretise-then-optimise succeeded.

\paragraph{Stability and stiffness} The use of interpolated adjoints implies that the differential equations solved -- for $y(t)$ forward in time, and $a_y(t)$ backward in time -- now exhibit very similar behaviour, in particular with respect to stability and stiffness.

The local behaviour of a differential equation is understood through the eigenvalues of the Jacobian of the vector field\footnote{An eigenvalue with positive real part describes a mode of a system that is `locally expansive': points diverge from each other exponentially fast. Likewise negative real part describes a mode of a system that is `locally contractive': points draw closer together exponentially fast. The imaginary part of an eigenvalue corresponds to the `local rotation' of a system -- see also Section \ref{section:ode:rotational-vector-field}.} \cite[Section 112]{butcher2016numerical}, \cite[Section I.13, Equation (13.2)]{hairer}.

For $y(t)$, this is $\nicefrac{\partial f_\theta}{\partial y}(t, y(t))$. Meanwhile for $a_y(t)$ and $a_\theta(t)$, this is
\begin{align*}
\frac{\partial}{\partial a_y}\left(-a_y(t) \cdot \frac{\partial f_\theta}{\partial y}(t, y(t))\right) &= -\frac{\partial f_\theta}{\partial y}(t, y(t)),\\
\frac{\partial}{\partial a_\theta}\left(-a_y(t) \cdot \frac{\partial f_\theta}{\partial \theta}(t, y(t))\right) &= 0.
\end{align*}

The (local) behaviour of $a_\theta(t)$ is trivial as the vector field has zero Jacobian. Meanwhile recalling that the equation for $y(t)$ is solved forward-in-time and the equation for $a_y(t)$ is solved backward in time, we see that their Jacobians are identical.

\begin{remark}
Morally speaking this is expected: differentiation obtains a local linear approximation -- that is to say a Jacobian -- to a function. In this case the function is the overall act of solving an ODE. Meanwhile, the present analysis on local behaviour of a system is about constructing a local linear approximation -- a Jacobian  -- to the vector field.
\end{remark}

Note that this discussion is not true of standard optimise-then-discretise, for which the equation for $y$ is solved in both the forward and backward directions, which exhibit opposite behaviour to each other.

\paragraph{Memory efficiency} Recording the values of $y(t)$ incurs a small memory cost, but not as much as recording every internal operation of the solver as in the discretise-then-optimise approach.

\paragraph{Choice of interpolation} There are several sensible ways to record and interpolate $y(t)$. \cite{sundials-cvodes} use cubic Hermite interpolation whilst \cite{interpolation-adjoint} use barycentric Lagrange interpolation, in both cases recording $y(t_1), \ldots, y(t_n)$ for some prespecified values $t_1, \ldots, t_n$.

%\begin{remark}
%If the time horizon $T$ is unknown -- or for any other reason -- then another approach to interpolating $y(t)$ is to apply HiPPO \cite{hippo}. During the forward pass of the ODE
%
%, an additional ODE  $y(t)$ can be optimally projected onto a basis of Legendre polynomials as it is computed during the forward pass by applying HiPPO \cite{hippo}. This involves augmenting the forward pass with an additional ODE whose state size is equal to the desired size of the Legendre basis.
%\end{remark}

\subsubsection{Checkpointing}\label{section:numerical:otd-checkpointing}\index{Checkpointing}
Another way to finesse the problem of numerical errors is to use checkpointing as in Section \ref{section:numerical:dto-checkpointing}. Each time we hit a checkpoint recorded on the forward pass, then we effectively reset any accumulated truncation error in $y$ acquired on the backward pass. The trade-off being that this increases the memory usage from $\bigO{H}$ to $\bigO{H + C}$, where $C$ is the number of checkpoints. (Although in practice this is often a relatively modest amount.)

\subsection{Reversible ODE solvers}\index{Reversible solvers}
Reversible ODE solvers offer a best-of-both-worlds approach compared to discretise-then-optimise and optimise-then-discretise. Reversible solvers offer both memory efficiency and accuracy of the computed gradients. Like optimise-then-discretise, they do require a small amount of extra computational work, to recompute the forward solution during backpropagation.

This is a topic still in its infancy. At present, two general reversible ODE solvers are known: the reversible Heun method, and the asynchronous leapfrog method (ALF). In addition, if the differential equation has the right structure, then symplectic solvers are typically also reversible.

The main drawback of reversible ODE solvers is that, at time of writing, all such solvers are low-order and exhibit poor stability properties. This is often not a problem if using pure-neural-network vector fields, but if the differential equation has known structure as in Section \ref{section:ode:hybrid} then in principle this may result in a poor-quality solution.

We defer a full discussion of reversible solvers to sit alongside our discussion of other numerical solvers, in Section \ref{section:numerical:reversible-solvers}.

\subsection{Forward sensitivity}\index{Forward sensitivity}
Whilst not technically \textit{back}propagation, for completeness we mention that it is also possible to compute forward sensitivities of an ODE. (Or indeed a CDE or SDE.)

Once again this can be done either in discretise-then-optimise fashion or in optimise-then-discretise fashion. Discretise-then-optimise is accomplished by computing the forward sensitivity of the solver's computation graph.

Optimise-then-discretise is accomplished by simply differentiating equation \eqref{eq:numerical:_forward-diff-pre} with respect to $y_0$ and $\theta$, through which we obtain the following theorem.
\begin{theorem}\label{theorem:numerical:ode-forward-sensitivity}
	Let $y_0 \in \reals^d$, $\theta \in \reals^m$, $f_\theta \colon [0, T] \times \reals^d \to \reals^d$ be  continuous in $t$, uniformly Lipschitz in $y$, and continuously differentiable in $y$. Let $y \colon [0, T] \to \reals^d$ be the unique solution to
	\begin{equation}\label{eq:numerical:_forward-diff-pre}
		y(0) = y_0, \qquad \frac{\dd y}{\dd t}(t) = f_\theta(t, y(t)).
	\end{equation}
	
	Then $\nicefrac{\dd y(t)}{\dd y_0} = J_y(t)$ and $\nicefrac{\dd y(t)}{\dd \theta} = J_\theta(t)$, where $J_y \colon [0, T] \to \reals^{d \times d}$ and $J_\theta \colon [0, T] \to \reals^{d \times m}$ solve the system of differential equations
	\begin{align*}
		J_y(0) &= \eye{d},\hspace{1.6em} &\frac{\dd J_y}{\dd t}(t) &= \frac{\partial f_\theta}{\partial y}(t, y(t)) J_y(t),\\
		J_\theta(0) &= 0,\hspace{1.6em} &\frac{\dd J_\theta}{\dd t}(t) &= \frac{\partial f_\theta}{\partial y}(t, y(t)) J_\theta(t) + \frac{\partial f_\theta}{\partial \theta}(t, y(t)).
	\end{align*}
	The right hand side consists of Jacobian-vector products, which can be computed efficiently via autodifferentiation.
\end{theorem}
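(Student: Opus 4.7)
The plan is to differentiate the defining integral/differential equation for $y$ with respect to $y_0$ and $\theta$, interchange partial derivatives, and read off the claimed ODEs for $J_y$ and $J_\theta$.

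First I would establish smooth dependence of the solution on the initial condition $y_0$ and parameters $\theta$. Under the stated hypotheses (Lipschitz in $y$ plus continuous differentiability in $y$; plus the implicit assumption of continuous differentiability in $\theta$ needed for the second claim), a standard result from ODE theory gives that the flow map $(t, y_0, \theta) \mapsto y(t; y_0, \theta)$ is $C^1$ jointly in its arguments. This justifies both the existence of $J_y(t)$ and $J_\theta(t)$ as continuous matrix-valued functions of $t$, and the interchange of $\partial/\partial t$ with $\partial/\partial y_0$ and $\partial/\partial \theta$ below. A clean reference for this is Hartman \emph{Ordinary Differential Equations}, Ch.~V.

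Next I would derive the equation for $J_y$. Differentiating the initial condition $y(0) = y_0$ with respect to $y_0$ gives $J_y(0) = \eye{d}$. Differentiating $\frac{\dd y}{\dd t}(t) = f_\theta(t, y(t))$ with respect to $y_0$ and swapping the order of partial derivatives on the left yields
\begin{equation*}
    \frac{\dd J_y}{\dd t}(t) = \frac{\partial}{\partial y_0} f_\theta(t, y(t)) = \frac{\partial f_\theta}{\partial y}(t, y(t))\, \frac{\partial y(t)}{\partial y_0} = \frac{\partial f_\theta}{\partial y}(t, y(t))\, J_y(t),
\end{equation*}
by the chain rule (noting that $f_\theta$ has no explicit dependence on $y_0$).

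The derivation for $J_\theta$ is analogous but contains an extra term. Differentiating $y(0) = y_0$ with respect to $\theta$ gives $J_\theta(0) = 0$. Differentiating the ODE with respect to $\theta$, and accounting for both the implicit dependence through $y(t)$ and the explicit dependence of $f_\theta$ on $\theta$, gives
\begin{equation*}
    \frac{\dd J_\theta}{\dd t}(t) = \frac{\partial f_\theta}{\partial y}(t, y(t))\, J_\theta(t) + \frac{\partial f_\theta}{\partial \theta}(t, y(t)).
\end{equation*}

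The only real obstacle is the smoothness / interchange-of-partials step; everything else is a mechanical application of the chain rule. Once the $C^1$ dependence on $(y_0, \theta)$ is in hand, the two ODEs above hold pointwise in $t$, matching the statement of the theorem.
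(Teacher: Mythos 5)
Your proposal is correct and takes essentially the same route as the paper: differentiate the ODE (and the initial condition) with respect to $y_0$ and $\theta$, apply the chain rule, and read off the linear ODEs satisfied by $J_y$ and $J_\theta$. The paper does not present a separate self-contained proof of this theorem; the text preceding it simply says the result is obtained "by simply differentiating" the ODE, and in the course of proving Theorem~\ref{theorem:ode-adjoint} (Appendix~\ref{appendix:ode-adjoint}) it derives the $J_y$ equation by differentiating the integral form $y(t) = y(s) + \int_s^t f_\theta(u, y(u))\,\dd u$ under the integral sign, justified via the Leibniz integral rule / dominated convergence theorem (using that $\nicefrac{\partial f_\theta}{\partial y}$ is continuous hence bounded on compacts), with existence of the Jacobian flow secured by Picard's theorem. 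Your version instead justifies the interchange of $\partial/\partial t$ with $\partial/\partial y_0$ and $\partial/\partial\theta$ by appealing up front to the classical $C^1$-dependence-on-initial-data-and-parameters theorem (Hartman, Ch.~V), which is equally standard; the trade-off is that the paper's argument is a bit more self-contained while yours delegates the hard part to a textbook reference and is therefore cleaner to state. Your remark that the $J_\theta$ claim implicitly requires $f_\theta$ to be continuously differentiable in $\theta$, a hypothesis not spelled out in the theorem statement, is a genuine and useful observation; the paper does not flag this.
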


As usual, forward sensitivity is typically less efficient than reverse-mode autodifferentiation when considering machine learning problems with many parameters, so this approach is infrequently used. \cite{rackauckas2018comparison} report finding it useful (only) on problems with very few ($<$100) parameters.

\section{Backpropagation through CDEs and SDEs}\label{section:numerical:adjoint-cde-sde}
We now study backpropagation through differential equations more generally.% This follows essentially the same pattern as before -- there is once again a choice between discretise-then-optimise, optimise-then-discretise, and reversible differential equation solvers. Once again discretise-then-optimise is preferred if possible, followed by reversible differential equation solvers, followed by optimise-then-discretise.

\subsection{Discretise-then-optimise}\index{Discretise-then-optimise}
This is exactly the same as in the ODE case (Section \ref{section:numerical:discretise-then-optimise}) -- simply differentiate through the internal operations of the controlled/stochastic differential equation solvers, typically by using solvers written in an autodifferentiable framework.

\subsection{Optimise-then-discretise for CDEs}\label{section:adjoint-ncde}\index{Optimise-then-discretise!CDEs}
There are two approaches to constructing continuous adjoint methods for CDEs. One is to reduce the CDE to an ODE as in Chapter \ref{chapter:neural-cde}, and then applying the continuous adjoint method for ODEs. For example this is what is done in the \texttt{torchcde} library \cite{torchcde}.

Alternatively a backwards-in-time CDE may be constructed, and then numerically solved in whatever manner is desired, by reduction to an ODE or otherwise. The corresponding theorem is as follows.

\begin{restatable}{theorem}{cdeadjoint}\label{theorem:numerical:cde-adjoint}
	Let $f \colon \reals^{d_y} \to \reals^{d_y \times d_x}$ be both Lipschitz and continuously differentiable. Let $x \colon [0, T] \to \reals^{d_x}$ be continuous and of bounded variation. Let $L \colon \reals^{d_y} \to \reals$ be differentiable (and scalar just for simplicity). Let $y_0 \in \reals^{d_y}$ and let $y \colon [0, T] \to \reals^{d_y}$ solve
	\begin{equation}\label{eq:numerical:_cde-adjoint-pre}
		y(0) = y_0,\qquad y(t) = y(0) + \int_0^t f(y(s))\,\dd x(s).
	\end{equation}
	Then the adjoint process $a(t) = \nicefrac{\dd L(y(T))}{\dd y(t)}$ satisfies the backwards-in-time linear CDE
	\begin{equation}\label{eq:numerical:_cde-adjoint}
		a(t) = a(T) + \int_T^t -a(s)^\top\frac{\partial f}{\partial y}(y(s)) \,\dd x(s),
	\end{equation}
	starting from the terminal condition $a(T) = \nicefrac{\dd L(y(T))}{\dd y(T)}$, and	where the right hand side denotes a vector-Jacobian product.
\end{restatable}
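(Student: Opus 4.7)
The proof will closely mirror the “continuous-time backpropagation” argument given in Section \ref{section:numerical:ode-adjoint-sketchproof} for the ODE case, so my plan is to reuse that strategy rather than invent new machinery.

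First, I would establish the flow Jacobian. For $0 \le s \le t \le T$, denote by $J(s,t) = \partial y(t)/\partial y(s) \in \reals^{d_y \times d_y}$ the derivative of the time-$t$ state with respect to the time-$s$ state. Because $f$ is continuously differentiable (and Lipschitz, so Theorem \ref{theorem:cde:picard} applies), the standard sensitivity theory for Riemann--Stieltjes CDEs gives that $J(s,\,\cdot\,)$ solves the linear CDE
\begin{equation*}
	J(s,s) = \eye{d_y}, \qquad J(s,t) = \eye{d_y} + \int_s^t \frac{\partial f}{\partial y}(y(r))\, J(s,r)\,\dd x(r),
\end{equation*}
obtained by formally differentiating \eqref{eq:numerical:_cde-adjoint-pre} in the initial condition. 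I would briefly justify this either by invoking the classical flow-differentiability result in \cite{FrizVictoir10}, or more elementarily by reducing to an ODE when $x$ is absolutely continuous (via \eqref{eq:cde-to-ode}) and then applying the ODE sensitivity result (Theorem \ref{theorem:numerical:ode-forward-sensitivity}) channel-wise; for general bounded variation $x$ one approximates by smooth paths and passes to the limit, using continuity of the solution map in the bounded variation topology.

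Next comes the main calculation. By the chain rule,
\begin{equation*}
	a(s)^\top \;=\; \frac{\dd L(y(T))}{\dd y(s)} \;=\; \frac{\dd L(y(T))}{\dd y(t)}\,\frac{\dd y(t)}{\dd y(s)} \;=\; a(t)^\top J(s,t) \qquad \text{for all } s \le t \le T.
\end{equation*}
Fix $s$ and note that the leftmost expression is independent of $t$. Applying the product rule for Riemann--Stieltjes integrals,
\begin{equation*}
	0 \;=\; \dd_t\!\left(a(t)^\top J(s,t)\right) \;=\; \bigl(\dd a(t)\bigr)^{\!\top} J(s,t) \;+\; a(t)^\top \frac{\partial f}{\partial y}(y(t))\,J(s,t)\,\dd x(t),
\end{equation*}
where I have used the linear CDE for $J(s,\,\cdot\,)$. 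Setting $t=s$ (so $J(s,s) = \eye{d_y}$) yields
\begin{equation*}
	\dd a(s)^\top \;=\; -a(s)^\top \frac{\partial f}{\partial y}(y(s))\,\dd x(s),
\end{equation*}
which is exactly the integrand of \eqref{eq:numerical:_cde-adjoint}. Integrating from $T$ backwards to $t$ against the terminal datum $a(T) = \dd L(y(T))/\dd y(T)$ gives the stated backwards-in-time CDE.

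The main obstacle is the first step: making the differentiation of the flow with respect to its initial condition rigorous when $x$ is merely of bounded variation, and justifying the product rule / interchange that appears in the second step. In the bounded variation setting both are standard (product of BV paths is BV, and Riemann--Stieltjes integration by parts applies cleanly), so I would either cite \cite{FrizVictoir10} directly or, for self-containedness, present the absolutely continuous case (reducing to Theorem \ref{theorem:ode-adjoint}) and then extend by a density argument. Everything else — the chain rule step, setting $t=s$, and reading off the adjoint equation — is a direct transcription of the ODE proof and should require no further effort.
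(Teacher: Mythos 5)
Your proposal is correct and takes essentially the same route as the paper's proof: both hinge on the forward-sensitivity Jacobian flow $J_s(t)$ satisfying a linear CDE (citing \cite[Theorem 4.4]{FrizVictoir10}), followed by Riemann--Stieltjes integration by parts to show that $t \mapsto a(t)^\top J_s(t)$ is constant. The only organizational difference is the direction of the argument, and there is a small technical gap you should close.

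The paper first establishes existence and uniqueness of the adjoint CDE \eqref{eq:numerical:_cde-adjoint} as a standalone object. Since the vector field $-a^\top \frac{\partial f}{\partial y}(y(s))$ depends on $s$ as well as $a$, which falls outside the autonomous CDE framework used elsewhere, the paper absorbs the time-dependence into the control: it sets $M(t) = M(T) + \int_T^t \frac{\partial f}{\partial y}(y(s))\,\dd x(s)$ (a bounded-variation path by continuity of $\frac{\partial f}{\partial y}$), and regards the adjoint equation as a CDE driven by $M$ with the Lipschitz vector field $a \mapsto -a^\top$, whereupon Picard (Theorem \ref{theorem:cde:picard}) applies. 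It then proves $a(\tau) J_s(\tau) - a(t) J_s(t) = 0$ for the CDE-defined $a$, and concludes $a(t) = a(T)^\top J_t(T) = \nicefrac{\dd L(y(T))}{\dd y(t)}$. You instead define $a$ directly as $\nicefrac{\dd L(y(T))}{\dd y(t)}$, invoke the chain rule to get $a(s)^\top = a(t)^\top J(s,t)$, and differentiate in $t$. This is a legitimate mirror image of the same argument, and in fact proves the statement as worded more directly (the theorem asserts only that the derivative process satisfies the CDE, not that the CDE has a unique solution).

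The gap: when you apply the Stieltjes product rule to $a(t)^\top J(s,t)$, you need $t \mapsto a(t)$ to be of bounded variation for the $\dd a(t)$ term to be meaningful; you note that ``product of BV paths is BV'' but never establish that $a$ itself is BV, and this is not automatic from the definition $a(t) = \nicefrac{\dd L(y(T))}{\dd y(t)}$. The fix is short and you essentially already have the ingredients: write $a(t)^\top = a(T)^\top J(t, T)$ by the chain rule, observe $t \mapsto J(t, T)$ is of bounded variation because it solves a linear CDE driven by the BV path $x$, and conclude $a$ is BV. The paper sidesteps this entirely by defining $a$ as the solution of a CDE driven by a BV path, which gives BV for free --- that is the one concrete payoff of its ``existence-first'' organization. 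With the gap filled, your proof is sound.
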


For simplicity we have avoided explicitly encoding the dependence on the parameterisation $\theta$. This case may be recovered by replacing $y$, $y_0$, $f_\theta$ with $[y, \theta]$, $[y_0, \theta]$, and $f(y, \theta) = [f_\theta(y), 0]$.

After having solved equation \eqref{eq:numerical:_cde-adjoint} backward-in-time, $a(0) = \nicefrac{\dd L(y(T))}{\dd y(0)}$ are the desired gradients. As with the ODE case, $y$ need not be recorded on the forward pass and may instead by recomputed on the backward pass, by stacking \eqref{eq:numerical:_cde-adjoint-pre} and \eqref{eq:numerical:_cde-adjoint} together and solving as a joint system backwards-in-time.

See Appendix \ref{appendix:cde-adjoint} for a proof.

\subsection{Optimise-then-discretise for SDEs}\label{section:adjoint-nsde}\index{Optimise-then-discretise!SDEs}
\begin{theorem-informal}\label{theorem-informal:numerical:sde-adjoint}
	Let $\mu \colon \reals \times \reals^{d_y} \to \reals^{d_y}$ and $\sigma \colon \reals \times \reals^{d_y} \to \reals^{d_y \times d_w}$ be sufficiently regular. Let $L \colon \reals^{d_y} \to \reals$ be differentiable (and scalar just for simplicity). Let $y_0 \in \reals^{d_y}$ and let $y \colon [0, T] \to \reals^{d_y}$ solve the Stratonovich SDE
	\begin{equation}\label{eq:numerical:_sde-adjoint-pre}
		y(0)=y_0,\qquad\dd y(t) = \mu(t, y(t))\,\dd t + \sigma(t, y(t)) \circ\dd w(t).
	\end{equation}
	Then the adjoint process $a(t) = \nicefrac{\dd L(y(T))}{\dd y(t)} \in \reals^{d_y}$ is a (strong) solution to the backwards-in-time linear Stratonovich SDE
	\begin{equation}\label{eq:numerical:_sde-adjoint}
		\dd a_{k_1}(t) = -a_{k_2}(t)\frac{\partial \mu_{k_2}}{\partial y_{k_1}}(t, y(t))\,\dd t - a_{k_2}(t) \frac{\partial \sigma_{k_2, k_3}}{\partial y_{k_1}}(t, y(t)) \,\circ\dd w_{k_3}(t),
	\end{equation}
	using Einstein notation over the indices $k_1, k_2, k_3$, starting from the terminal condition $a(T) = \nicefrac{\dd L(y(T))}{\dd y(T)}$.
	
	In particular $w$ is the same Brownian motion as used in the forward pass.
\end{theorem-informal}

As in the previous subsection, we have avoided explicitly encoding the dependence on the parameterisation $\theta$, the desired gradients are the computed value $a(0)$, and \eqref{eq:numerical:_sde-adjoint-pre}--\eqref{eq:numerical:_sde-adjoint} may be stacked together to recover $y(t)$ during the backpropagation. The right hand side of \eqref{eq:numerical:_sde-adjoint} consists of vector-Jacobian products, which may be calculated using autodifferentiation.

Note that the nondifferentiability of Brownian sample paths is unrelated to being able to compute derivatives $\nicefrac{\dd L(y(T))}{\dd y(0)}$. (It just means that derivatives like $\nicefrac{\dd y}{\dd t}$ do not exist.)

See Appendix \ref{appendix:sde-adjoint} for a precise statement and a proof of Theorem \ref{theorem-informal:numerical:sde-adjoint} via rough path theory.

\paragraph{Rough path theory}\index{Rough!Path theory}
Theorem \ref{theorem-informal:numerical:sde-adjoint} is stated informally, because putting a precise meaning on the solution of \eqref{eq:numerical:_sde-adjoint} is a little outside the usual framework for SDEs. In particular \eqref{eq:numerical:_sde-adjoint} fails to exhibit measurability with respect to the natural filtration of $w$.

Rough path theory provides an elegant (and intuitive) solution, by allowing solutions to \eqref{eq:numerical:_sde-adjoint-pre} and \eqref{eq:numerical:_sde-adjoint} to be defined \textit{pathwise}. We simply fix a single sample of $w$, and evaluate the forward pass via \eqref{eq:numerical:_sde-adjoint-pre} and the backward pass via \eqref{eq:numerical:_sde-adjoint} -- in every respect just like the ODE case.

\begin{remark}
When backpropagating through an SDE solve via discretise-then-optimise, then a Brownian motion is sampled on the forward pass of the numerical solver, its random samples are fixed as part of the computation graph, and then this computation graph is backpropagated. (Indeed just like any neural generative model; the noise sampled on the forward pass is the same noise on the backward pass.) As such `discretise-then-optimise' is somehow intrinsically also pathwise.
\end{remark}

\begin{remark}\label{remark:numerical:stratonovich-over-ito}
Note the use of Stratonovich integration. This is naturally `time reversible', unlike It{\^o} integration. If \eqref{eq:numerical:_sde-adjoint-pre} was an It{\^o} SDE then the equivalent of \eqref{eq:numerical:_sde-adjoint} is substantially more thorny to work with: it would be derived by applying the It{\^o}-Stratonovich correction term to convert \eqref{eq:numerical:_sde-adjoint-pre} into a Stratonovich integral, applying Theorem \ref{theorem-informal:numerical:sde-adjoint}, and then applying the Stratonovich-It{\^o} correction term to \eqref{eq:numerical:_sde-adjoint}.

In a practical implementation then this double-correction implies substantial computational overhead, so it is preferable to use Stratonovich SDEs instead of It{\^o} SDEs when training via optimise-then-discretise methods.
\end{remark}

\subsection{Reversible differential equation solvers}\index{Reversible solvers}
CDEs may be reduced to ODEs as discussed in Chapter \ref{chapter:neural-cde}, and correspondingly any reversible ODE solver may be applied. Meanwhile SDEs have a single known reversible solver, namely the reversible Heun method. See Section \ref{section:numerical:reversible-solvers}.

\section{Numerical solvers}
\newcommand{\aviciouslie}{Neural networks represent unstructured vector fields. This means that many of the more specialised differential equation solvers (developed for any particular equation) do not apply, and we must rely on `general' solvers.}
\subsection{Off-the-shelf numerical solvers}\label{section:numerical:off-the-shelf}
\aviciouslie

There is a rich literature of such numerical differential equation solvers. We will largely focus on explicit Runge--Kutta solvers, in particular for ODEs and CDEs, which are a popular family of numerical solvers. Other reasonable choices exist -- for example linear multistep methods -- but it is not our purpose to restate the numerical differential equation literature.

\subsubsection{General principles}
There are some principles, specific to neural differential equations over differential equations in general, that help guide the choice of numerical solver.

\paragraph{Implicit solvers} Implicit solvers, for example the implicit Euler method $y_{j+1} = y_j + \Delta t f(t_{j + 1}, y_{j+1})$, are rarely used.\index{Implicit!solvers}

They are computationally expensive: implicit solvers solve a linear or nonlinear system at every step, often through a fixed point iteration. Neural differential equations are a regime in which the vector field evaluations are expensive, and many vector field evaluations are already being made (over a batch, and over the course of training). Reducing computational cost is of substantial interest.

A major use-case for implicit solvers is solving stiff differential equations.\footnote{Somewhat tautologically, as stiff differential equations are broadly categorised as `equations for which explicit solvers fail'.} However stiffness is often not a problem for neural differential equations -- if stiffness and an explicit solver produce a poor solution, then the loss between model and data may be large. As this is the criteria we explicitly train to avoid (to achieve a small loss), then the issue is avoided.

\begin{remark}
The above description is typical for `machine learning neural differential equations' such as CNFs or neural CDEs -- Section \ref{section:ode:cnf} and Chapter \ref{chapter:neural-cde} respectively -- but it is not a universal rule. For example if the vector field incorporates known structure (Section \ref{section:ode:hybrid}), or the data has multiple different timescales, then stiffness may be unavoidable and an implicit solver may become a reasonable choice. See for example \cite{stiff-neural-ode}.
\end{remark}

\paragraph{Adaptive versus fixed step solvers} Both fixed step size and adaptive step size solvers are often reasonable choices for neural differential equations.\index{Adaptive solvers}\index{Fixed solvers}

Given a time horizon $T > 0$, then fixed step size solvers choose some step locations $0 = t_0 < t_1 < \cdots < t_n = T$ in advance, usually with $\Delta t = t_{j+1} - t_j$ independent of $j$.

Adaptive step solvers vary the size of the next step $t_{j+1} - t_j$, so that the (local) error made during the solve is approximately equal to some tolerance. For example embedded Runge--Kutta methods are of this type. This implies a variable computational cost, typically increasing over the course of training as model complexity increases \cite[Figure 3(d)]{neural-odes}, \cite[Figure 3(c)]{how-to-train-node}.\footnote{Loosely speaking, neural networks tend to increase in complexity over the course of training \cite{sgd-increasing-complexity}, \cite[Section 5]{ntk}. This manifests as the training and validation losses following the classic bias-variance curves during training.}

\begin{example}
Consider solving a neural CDE with densely-sampled and slowly-varying time series data as input. The slow variation of the input data means that processing every piece of it -- as we may do with an RNN -- is likely overkill. The adaptivity of a solver may automatically detect the slow timescale at which the differential equation is driven, and produce integration steps of the appropriate size, larger than the discretisation of the data.

Moreover, RNN training often breaks down as the length of a time series increases. If this length has been achieved by sampling the same signal more and more densely, then it is a bit perverse that this extra information should cause our model to fail to train. Philosophically speaking, it is reassuring to be able to overcome this issue using adaptive solvers.
\end{example}

\paragraph{Baked-in discretisations}% This name is referenced in the "motivation" part of the thesis, don't change it.
\label{section:numerical:baked-in-discretisation}\index{Baked-in discretisations}

If only a single solver (in particular a low-order solver or a solver with a fixed step size) is used during model training, then this choice of discretisation may become an intrinsic part of the model. The neural vector fields will have been trained to work best at this discretisation, and may fail with other discretisations \cite{ott2021resnet, continuous-net}.

For many applications this need not be a problem. In the introduction to this thesis (Section \ref{section:motivation:applications}), `inspiration for a discretised model' was described as a good use case for neural differential equations, and baking in the numerical discretisation is simply a subtle instance of this.

\paragraph{Step size and error tolerance}\index{Adaptive solvers}
Step size (for fixed step size solvers) or error tolerance (for adaptive step size solvers) will often be very large compared to that seen in the usual numerical differential equation literature. For example when using a neural SDE to generate a time series sampled at some points $t_0 < \cdots < t_n$, then we may elect to take a single numerical step over each interval $[t_j, t_{j+1}]$, even when $t_{j+1} - t_j$ is relatively large.

Once again this may be thought of as treating the continuous model as an ideal, and then deliberately fitting a discretised model. Large step sizes are often motivated by a desire to reduce computational cost, and thus training time.

\begin{example}\index{Large step size regime}
	In this `large step size regime', do note that taking smaller steps may produce slightly more expressive models. When looking to increase or decrease the modelling capacity of a neural differential equation, both step sizes and vector field complexity are options that may be adjusted.
	
	As an example, consider fitting a neural SDE as in Chapter \ref{chapter:neural-sde}. If taking a single numerical step using the Euler--Maruyama method, then the conditional distribution of $y(t_{j+1})|y(t_j)$ will only be Gaussian, which is relatively inexpressive.
\end{example}

\paragraph{Order of solver}
Low-order solvers are often reasonable choices, especially when explicitly baking in the discretisation in the large step size regime. For example Euler's method is first-order convergent; midpoint or Heun's method are second-order convergent.

If aiming to fit the idealised continuous model (for example when training via optimise-then-discretise), then higher-order solvers such as Dormand--Prince are often preferred. (At least when they are available, that is to say for ODEs and CDEs but not SDEs.)

\subsubsection{ODEs and CDEs}\label{section:numerical:typical-solvers}\index{Euler's method}\index{Midpoint method}\index{Heun's method}\index{RK4}\index{Dormand--Prince}\index{Tsitouras}
Bringing the above points together: when solving ODEs, or CDEs reduced to ODEs, then standard low-order solvers are the explicit Euler method (first order), the midpoint method (second order), or Heun's method (second order). Standard higher-order\footnote{Relatively speaking.} methods are RK4 (fourth order), Dormand--Prince (fifth order), or Tsit5 (fifth order).

\cite{rackauckas-diffeq-runge-kutta} offer an extensive comparison of explicit Runge--Kutta methods.

\subsubsection{SDEs}\index{Euler--Maruyama method}\index{Heun's method}\index{Milstein's method}
Standard choices of numerical solver for neural SDEs include the Euler--Maruyama method or Heun's method, for It{\^o} or Stratonovich SDEs respectively.\footnote{Broadly speaking SDEs solvers are distinguished by whether they converge to the It{\^o} or Stratonovich solution.} If the problem has commutative noise\footnote{This is a condition that is satisfied in several common special cases: if the Brownian motion is scalar valued; if the diffusion matrix is independent of the SDE solution; if the diffusion matrix is diagonal.} then Milstein's method may be applied for both It{\^o} and Stratonovich SDEs.

\subsection{Reversible solvers}\label{section:numerical:reversible-solvers}\index{Reversible solvers}
We indicated in Sections \ref{section:numerical:adjoint-ode} and \ref{section:numerical:adjoint-cde-sde} that besides discretise-then-optimise and optimise-then-discretise, there is a third option, namely reversible solvers.

Consider a differential equation solver iteratively computing $(y_j, \alpha_j) \mapsto (y_{j+1}, \alpha_{j+1})$, where $\{y_j\}_{j=0}^n$ is the numerical approximation to the solution of some differential equation -- whether it be an ODE, CDE, or SDE -- and $\alpha_j$ denotes any extra state that the differential equation solver wishes to keep around.

By definition, $(y_{j+1}, \alpha_{j+1})$ may be computed from $(y_j, \alpha_j)$. We say that a solver is \textit{reversible} if $(y_j, \alpha_j)$ may be computed from $(y_{j+1}, \alpha_{j+1})$. (Note that we have not yet made precise what is meant by `computed'.)

\paragraph{Backpropagation with reversible solvers}
\begin{algorithm}[t]
	\SetKwInput{kwInput}{Input}
	\SetKwInput{kwOutput}{Output}
	\kwInput{$t_{j+1}, y_{j+1}, \alpha_{j+1}, \Delta t, x,\frac{\partial L(y_n)}{\partial y_{j + 1}},\frac{\partial L(y_n)}{\partial \alpha_{j + 1}}$}
	\vspace{0.5em}
	
	$y_j, \alpha_j = \mathrm{Reverse}(t_{j+1}, y_{j+1}, \alpha_{j+1}, \Delta t, x)$\\[2pt]
	
	$y_{j+1}, \alpha_{j+1} = \mathrm{Forward}(t_j, y_j, \alpha_j, \Delta t, x)$\\[2pt]
	
	$\frac{\partial L(y_n)}{\partial (y_j, \alpha_j)} = \frac{\partial L(y_n)}{\partial (y_{j + 1}, \alpha_{j+1})} \cdot\frac{\partial (y_{j+1}, \alpha_{j+1})}{\partial (y_j, \alpha_j)}$\qquad\# vector-Jacobian product\\[2pt]
	
	\kwOutput{$t_j, y_j, \alpha_j,\frac{\partial L(y_n)}{\partial y_j},\frac{\partial L(y_n)}{\partial \alpha_j}$}
	\caption{Backward pass through a reversible solver.\\$t$ denotes time, $y$ denotes the numerical solution, $\alpha$ denotes any additional state the solver keeps around, and $\Delta t$ denotes a step size. $x$ denotes a possible control input, which may be unused if the equation is an ODE and may be a Brownian motion if the equation is an SDE.}\label{alg:numerical:reversible-backward}
\end{algorithm}

Backpropagation through a reversible solver is shown in Algorithm \ref{alg:numerical:reversible-backward}. The `local' forward is needed to construct a computational graph, through which the vector-Jacobian product is calculated.

\paragraph{Computational cost}
Assuming that a forward and reverse step cost the same, the total computational cost of evaluating and backpropagating through a step of a reversible solver is three forward operations and one backward operation: one forward operation on the forward pass, two forward operations on the backward pass, and a single backward operation on the backward pass. As a combined forward/backward operation costs at most four forward operations \cite[Equation (4.21)]{griewank2008evaluating}, then the overall computational cost is approximately that of six forward operations.

This cost should be contrasted with discretise-then-optimise and optimise-then-discretise. Discretise-then-optimise involves simply a forward operation on the forward pass, and a backward operation on the backward pass, for an overall computational cost of approximately four forward operations. Optimise-then-discretise involves a forward operation on the forward pass, a single forward operation on the backward pass, and a single backward operation on the backward pass, for an overall computational cost of approximately five forward operations.

\begin{remark}
It is sometimes possible to elide the local forward in Algorithm \ref{alg:numerical:reversible-backward}. Given suitable structure in the solver, it may be possible to reuse the computational graph of the reverse step, and in doing so save the cost of a single forward operation. See for example the asynchronous leapfrog method coming up in Section \ref{section:numerical:alf}.
\end{remark}

\paragraph{Precise gradients}
As the same numerical solution $\{y_j\}_{j=0}^n$ is recovered on both the forward and backward passes, the computed gradients are precisely the discretise-then-optimise gradients of the numerical discretisation of the forward pass.

As it is the discretised model that is fit to data, discretise-then-optimise represents the gold standard -- the `true' gradients of the model -- so this property is desirable.

\subsubsection{Analytic and algebraic reversibility}\label{section:numerical:analytic-reversibility}
We now make precise what it means to `compute' $(y_n, \alpha_n)$ from $(y_{n+1}, \alpha_{n+1})$.
\paragraph{Analytic reversibility}\index{Analytic reversibility}
In some sense, essentially every solver is reversible. For example, the explicit Euler method
\begin{equation*}
	y_{j+1} = y_j + f(y_j) \Delta t
\end{equation*}
may be reversed via the (backwards-in-time) implicit Euler method
\begin{equation*}
	y_j = y_{j+1} - f(y_j) \Delta t,
\end{equation*}
for those $\Delta t$ small enough that the contraction mapping theorem ensures this nonlinear equation has a unique solution.

Unfortunately this requires solving a fixed-point iteration, and computing $y_j$ from $y_{j+1}$ is both approximate and computationally expensive.  We refer to reversibility of this type as \textit{analytic reversibility}.

\begin{example}
	For example \cite{invertible-residual-networks} consider residual networks as the explicit Euler discretisation of a neural ODE, and exactly as above, use the implicit Euler method to invert the operation of each layer during backpropagation.
\end{example}

\paragraph{Algebraic reversibility}\index{Algebraic reversibility}
Substantially more preferable is what we shall refer to as \textit{algebraic reversibility}. These are solvers for which the solution of $(y_j, \alpha_j)$ may be written as a closed-form expression with respect to $(y_{j+1}, \alpha_{j+1})$. As such they are much more computationally reasonable.

We additionally refer to an algebraically reversible solver as \textit{symmetric} if the reverse computation is of the same form as the forward step, simply performed backward-in-time. (If `$(y_{j+1}, \alpha_{j+1}) = \mathrm{step}(y_j, \alpha_j, \Delta t)$' implies `$(y_{j}, \alpha_{j}) = \mathrm{step}(y_{j+1}, \alpha_{j+1}, -\Delta t)$'.)

%\paragraph{Terminology} The pieces of terminology `analytic reversibility' and `algebraic reversibility' are new here. Correspondingly some texts use just `reversibility' to refer to algebraic reversibility.

\subsubsection{Reversible Heun method}\label{section:numerical:reversible-heun}\index{Reversible Heun}
The reversible Heun method, introduced in \cite{kidger2021sde2}, is a symmetric algebraically reversible ODE, CDE, or SDE solver. We will consider solving the SDE
\begin{equation*}
	y(0) = y_0,\quad\dd y(t) = \mu(t, y(t))\,\dd t + \sigma(t, y(t))\circ\dd w(t),
\end{equation*}
over $[0, T]$, for which we will obtain the numerical solution $\{y_j\}_{j=0}^n$. If solving an ODE simply set $\sigma = 0$. If solving a CDE then either reduce it to an ODE, or \textit{mutatis mutandis} replace $w$ with some control $x$.

To the best of this authors' knowledge, and at time of writing, this is the first and only general (non-symplectic) algebraically reversible SDE solver.

\paragraph{Initialisation} The solver tracks several extra pieces of state, $\widehat{y}_j$, $\mu_j$, $\sigma_j$, which are initialised at $\widehat{y}_0 = y_0$, $\mu_0 = \mu(0, y_0)$, $\sigma_0 = \sigma(0, y_0)$, and which have solution-like, drift-like and diffusion-like interpretations respectively. We additionally take $w$ to be a single sample of Brownian motion, which must be the same on both the forward and backward passes.

\paragraph{Stepping} The forward iteration then proceeds by iterating Algorithm \ref{alg:numerical:reversible-heun}. Note the similarity to Heun's method.

\begin{figure}
\begin{minipage}{0.47\linewidth}
	\begin{algorithm}[H]
		\SetKwInput{kwInput}{Input}
		\SetKwInput{kwOutput}{Output}
		\kwInput{$t_j, y_j, \widehat{y}_j, \mu_j, \sigma_j, \Delta t, w$}
		
		\begin{align*}
			t_{j+1} &= t_j + \Delta t\\
			\Delta w_j &= w(t_{j+1}) - w(t_j)\\
			\widehat{y}_{j+1} &= 2 y_j - \widehat{y}_j + \mu_j \Delta t + \sigma_j \Delta w_j\\
			\mu_{j+1} &= \mu(t_{j+1}, \widehat{y}_{j+1})\\
			\sigma_{j+1} &= \sigma(t_{j+1}, \widehat{y}_{j+1})\\
			y_{j+1} &= y_j + \frac{1}{2}(\mu_j + \mu_{j+1})\Delta t\\
			&\quad \mathop+ \frac{1}{2}(\sigma_j + \sigma_{j+1}) \Delta w_j
		\end{align*}
		
		\kwOutput{$t_{j+1}, y_{j+1}, \widehat{y}_{j+1}, \mu_{j+1}, \sigma_{j+1}$}
		\caption{Forward pass for the reversible Heun method.}\label{alg:numerical:reversible-heun}
	\end{algorithm}
\end{minipage}\hfill
\begin{minipage}{0.52\linewidth}
	\begin{algorithm}[H]
		\SetKwInput{kwInput}{Input}
		\SetKwInput{kwOutput}{Output}
		\kwInput{$t_{j+1}, y_{j+1}, \widehat{y}_{j+1}, \mu_{j+1}, \sigma_{j+1}, \Delta t, w$}
		
		\begin{align*}
			t_{j} &= t_{j+1} - \Delta t\\
			\Delta w_j &= w(t_{j+1}) - w(t_j)\\
			\widehat{y}_{j} &= 2 y_{j+1} - \widehat{y}_{j+1} - \mu_{j+1} \Delta t - \sigma_{j+1} \Delta w_j\\
			\mu_{j} &= \mu(t_{j}, \widehat{y}_{j})\\
			\sigma_{j} &= \sigma(t_{j}, \widehat{y}_{j})\\
			y_{j} &= y_{j+1} - \frac{1}{2}(\mu_{j+1} + \mu_{j})\Delta t\\
			&\quad \mathop- \frac{1}{2}(\sigma_{j+1} + \sigma_{j}) \Delta w_j
		\end{align*}
		
		\kwOutput{$t_{j}, y_{j}, \widehat{y}_{j}, \mu_{j}, \sigma_{j}$}
		\caption{Reverse pass for the reversible Heun method.}\label{alg:numerical:reversed-reversible-heun}
	\end{algorithm}
\end{minipage}
\end{figure}

\paragraph{Convergence}
Convergence results are as follows.

\begin{restatable}{theorem}{reversibleheunconvergence}
	The reversible Heun method, when applied to ODEs, is a second-order method.
\end{restatable}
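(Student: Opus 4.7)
The plan is to reduce the reversible Heun scheme to a small perturbation of the classical Heun (improved Euler) method, for which second-order convergence is standard, and then to control the perturbation via an alternating-sign cancellation argument. Setting $\sigma = 0$ and $\Delta w_j = 0$ in Algorithm \ref{alg:numerical:reversible-heun} leaves the pure-ODE iteration
\begin{equation*}
\widehat{y}_{j+1} = 2 y_j - \widehat{y}_j + \mu_j \Delta t,\quad \mu_{j+1} = \mu(t_{j+1}, \widehat{y}_{j+1}),\quad y_{j+1} = y_j + \tfrac{\Delta t}{2}(\mu_j + \mu_{j+1}),
\end{equation*}
with initialisation $\widehat{y}_0 = y_0$, $\mu_0 = \mu(t_0, y_0)$. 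The pivotal observation is that whenever $\widehat{y}_j = y_j$, a single step coincides exactly with one step of classical Heun, because $\mu_j = \mu(t_j, y_j)$ and $\widehat{y}_{j+1} = y_j + \Delta t\,\mu(t_j, y_j)$ reproduces the explicit Euler predictor used inside Heun. The initialisation has this property, and the goal is to show that the discrepancy $e_j := y_j - \widehat{y}_j$ remains $O(\Delta t^2)$ uniformly in $j$.

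Substituting the updates into the definition of $e_{j+1}$ gives the clean one-line recurrence $e_{j+1} = -e_j + \tfrac{\Delta t}{2}(\mu_{j+1} - \mu_j)$. Using Lipschitz continuity of $\mu$ and the one-step bound $|\widehat{y}_{j+1} - \widehat{y}_j| = O(\Delta t)$, each increment $\mu_{j+1} - \mu_j$ is $O(\Delta t)$, so $e_{j+1} = -e_j + \beta_j$ with $\beta_j = O(\Delta t^2)$. The second ingredient I would establish is a smoothness-in-$t$ expansion $\beta_j = \Delta t^2 g(t_j) + O(\Delta t^3)$, where $g$ is built from $\mu_t$ and $\mu_y \cdot \mu$ evaluated along the numerical trajectory. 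Unrolling the recurrence gives $e_n = \sum_{k=0}^{n-1}(-1)^{n-1-k}\beta_k$, and an Abel-summation / pairwise cancellation argument then controls this alternating sum: grouping consecutive terms reduces each pair to an $O(\Delta t^3)$ contribution by smoothness of $g$, and summing over the $O(T/\Delta t)$ pairs yields $\sup_n |e_n| = O(\Delta t^2)$.

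With $e_j = O(\Delta t^2)$ in hand, a direct Taylor expansion shows that one reversible-Heun step applied to an exact value $y_j = y(t_j)$ differs from the corresponding classical-Heun step by $O(\Delta t^3)$, because $\mu_j = \mu(t_j, y_j) - \mu_y(t_j, y_j) e_j + O(\Delta t^4)$ and the Euler predictor built from $\mu_j$ differs from the standard one by $O(\Delta t^2)$. Thus the local truncation error of the reversible scheme inherits the $O(\Delta t^3)$ bound of classical Heun. The standard convergence theorem for one-step methods -- Lipschitz stability plus local truncation combined via discrete Gr\"onwall -- then upgrades this to the global bound $\sup_j |y_j - y(t_j)| = O(\Delta t^2)$, establishing second order.

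The main obstacle is the uniform bound on $e_j$. The naive triangle-inequality bound on $e_{j+1} = -e_j + O(\Delta t^2)$ gives only $|e_n| \leq n \cdot O(\Delta t^2) = O(T \Delta t)$, which would collapse the argument to apparent first-order accuracy. Extracting the extra factor of $\Delta t$ genuinely requires using both the alternating sign \emph{and} smoothness-in-$t$ of $\beta_j$, which is where the Abel summation is essential. A secondary subtlety is that the expansion defining $g$ is along the numerical rather than the exact trajectory, so the estimate on $|e_n|$ and the smoothness expansion must be set up as a joint bootstrap rather than proved in isolation; this is routine once the pairing argument is in place.
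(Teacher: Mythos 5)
Your route is different from the paper's, so let me first describe the paper's. The paper observes that eliminating $y_j$ from the iteration gives $\widehat{y}_{j+1} = \widehat{y}_{j-1} + 2\mu(t_j,\widehat{y}_j)\Delta t$, so the $\widehat{y}$-component evolves by the classical leapfrog/midpoint rule, a known second-order method; hence $\widehat{y}_j = y(t_j) + O(\Delta t^2)$ and $\mu_j = \mu(t_j, y(t_j)) + O(\Delta t^2)$. Then $y_n = y_0 + \sum_j \tfrac{1}{2}(\mu_j + \mu_{j+1})\Delta t$ is a trapezoidal approximation to $y_0 + \int_0^{t_n}\mu(t,y(t))\,\dd t = y(t_n)$ with cumulative error $O(\Delta t^2)$. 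Your alternating analysis of $e_j = y_j - \widehat{y}_j$ is in effect re-deriving leapfrog convergence by hand by controlling the parasitic $-1$ root, which is longer but more self-contained; unfortunately as written it has a gap.

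The gap is in the claimed expansion $\beta_j = \Delta t^2 g(t_j) + O(\Delta t^3)$. From $\widehat{y}_{j+1} - \widehat{y}_j = 2e_j + \mu_j\Delta t$, a Taylor expansion of $\mu_{j+1}-\mu_j$ about $(t_j,\widehat{y}_j)$ gives
\begin{equation*}
	\beta_j = \frac{\Delta t}{2}(\mu_{j+1} - \mu_j) = \Delta t\,\mu_y(t_j,\widehat{y}_j)\,e_j + \frac{\Delta t^2}{2}\big(\mu_t + \mu_y\,\mu_j\big)(t_j, \widehat{y}_j) + O(\Delta t^3),
\end{equation*}
valid once an a priori Gr\"onwall bound $|e_j| = O(\Delta t)$ has been obtained (which you implicitly use but never establish). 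The cross-term $\Delta t\,\mu_y e_j$ is $O(\Delta t^2)$, the same order as the term you retain, and the pairwise cancellation does not kill it: since $e_{j+1} = -e_j + O(\Delta t^2)$, the quantities $\mu_y(t_j,\widehat{y}_j)e_j$ and $\mu_y(t_{j+1},\widehat{y}_{j+1})e_{j+1}$ have approximately opposite signs, so $\beta_j - \beta_{j+1}$ contains $\approx 2\Delta t\,\mu_y(t_j,\widehat{y}_j)\,e_j$, and the pairing argument returns only $O(\Delta t)$. Nor can the `joint bootstrap' be run step by step, because $e_{j+1} = -e_j + O(\Delta t^2)$ does not propagate a bound $|e_j| \le C\Delta t^2$ with fixed $C$. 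The repair is structural rather than routine: absorb the cross-term into the homogeneous coefficient, writing $e_{j+1} = \big(-1 + \Delta t\,\mu_y(t_j,\widehat{y}_j)\big)\,e_j + \alpha_j$ with $\alpha_j$ independent of $e_j$ and $\alpha_j - \alpha_{j+1} = O(\Delta t^3)$, then sum by parts against the variable-coefficient transition products, which alternate in sign and are bounded in magnitude by $e^{LT}$. With that fix (and $|e_j| = O(\Delta t^2)$ established), your final step -- reversible Heun as an $O(\Delta t^2)$ perturbation of the classical Heun increment, then the standard one-step convergence theorem -- goes through.
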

\begin{theorem}
	The reversible Heun method, when applied to SDEs, exhibits strong convergence of order $\nicefrac{1}{2}$. If the noise is additive then this increases to order $1$.
\end{theorem}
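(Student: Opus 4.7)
The plan is to reduce the analysis to the standard convergence theory of Heun's method by bounding the discrepancy between $y_j$ and the auxiliary state $\widehat{y}_j$.

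First I would introduce the auxiliary error $e_j \mathrel{\vcenter{\hbox{$:$}}}{=} \widehat{y}_j - y_j$, which is initially zero. Subtracting the update for $y_{j+1}$ from that for $\widehat{y}_{j+1}$ gives the simple recursion
\begin{equation*}
e_{j+1} \;=\; -\,e_j \;+\; \tfrac{1}{2}(\mu_j-\mu_{j+1})\Delta t \;+\; \tfrac{1}{2}(\sigma_j-\sigma_{j+1})\Delta w_j,
\end{equation*}
which I abbreviate as $e_{j+1}=-e_j+\eta_j$. The linear part has unit norm but alternates sign, so that, starting from $e_0=0$, one has $e_n=\sum_{k<n}(-1)^{n-1-k}\eta_k$. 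Observing that $\widehat{y}_{j+1}=y_j-e_j+\mu_j\Delta t+\sigma_j\Delta w_j$, the reversible predictor differs from the usual Euler--Maruyama predictor used in standard Heun only by the shift $-e_j$. Lipschitz continuity of $\mu,\sigma$ therefore propagates this shift into the corrector with magnitude $O(\|e_j\|(\Delta t+|\Delta w_j|))$, so if one can show $\|e_j\|_{L^2}=O(\Delta t^{1/2})$, then at each step the reversible update differs from the standard Heun update by $O(\Delta t)$ strongly, which matches the local truncation error of standard Heun. Global order $1/2$ then follows from the usual discrete Gronwall argument for strong convergence of explicit Runge--Kutta-type schemes (cf.\ \cite{kloedenplaten1992}).

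The main obstacle is controlling $\|e_j\|_{L^2}$. A direct calculation using $\widehat{y}_{j+1}-\widehat{y}_j=-2e_j+\mu_j\Delta t+\sigma_j\Delta w_j$ and Lipschitz continuity gives $\|\mu_j-\mu_{j+1}\|=O(\|e_j\|+\Delta t^{1/2})$ strongly, and similarly for $\sigma_j-\sigma_{j+1}$, so $\eta_j$ is of strong size $O(\Delta t)$. A naive triangle-inequality bound on the alternating sum then only yields $\|e_n\|_{L^2}=O(1)$, which is too weak. I would close the gap by exploiting the martingale-like structure of the dominant stochastic contribution $\tfrac{1}{2}(\sigma_j-\sigma_{j+1})\Delta w_j$: pairing consecutive terms via $e_{j+1}+e_j=\eta_j$ and applying the Burkholder--Davis--Gundy inequality to the resulting sum of martingale-like increments gives $\mathbb{E}[\|e_j\|^2]=O(\Delta t)$. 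A standard bootstrap closes the system of inequalities (boundedness of $y_j$ and $\widehat y_j$ in $L^p$, and the estimate on $\|e_j\|_{L^2}$), justifying the above heuristic.

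Finally, for the additive-noise case, $\sigma$ depends only on $t$, so $\sigma_j-\sigma_{j+1}$ is deterministic of size $O(\Delta t)$, and the diffusion contribution to $\eta_j$ becomes $O(\Delta t)\cdot\Delta w_j=O(\Delta t^{3/2})$ strongly instead of $O(\Delta t)$. The same argument then improves the bound to $\|e_j\|_{L^2}=O(\Delta t)$, so that each reversible Heun step differs from the corresponding standard Heun step by only $O(\Delta t^{3/2})$. Combined with the well-known strong order $1$ of Heun's method for additive noise, this yields global strong order $1$ for the reversible variant.
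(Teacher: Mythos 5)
The paper defers its SDE proof to Appendix D of \cite{kidger2021sde2}, but the ODE proof given in Appendix \ref{appendix:numerical:ode-reversible-heun} makes the likely strategy clear: the two-step update satisfies $\widehat{y}_{j+1}=\widehat{y}_{j-1}+2\mu(t_j,\widehat{y}_j)\Delta t$, so $\{\widehat{y}_j\}$ is exactly a leapfrog/midpoint iterate decoupled from $\{y_j\}$, whose order of convergence is then transferred to $y_n=y_0+\sum_j\tfrac12(\mu_j+\mu_{j+1})\Delta t$. The identical algebra in the SDE case gives $\widehat{y}_{j+1}=\widehat{y}_{j-1}+2\mu_j\Delta t+\sigma_j\bigl(w(t_{j+1})-w(t_{j-1})\bigr)$, a stochastic leapfrog scheme of strong order $\tfrac12$ (order $1$ for additive noise). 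Your argument is genuinely different: rather than decoupling $\widehat{y}$, you control $e_j=\widehat{y}_j-y_j$ directly through the recursion $e_{j+1}=-e_j+\eta_j$, prove $\|e_j\|_{L^2}=O(\Delta t^{1/2})$ (resp.\ $O(\Delta t)$ when additive), and thereby exhibit the reversible corrector as a small strong perturbation of the ordinary stochastic Heun scheme, whose order is then inherited. Your route leans on the much better-documented stochastic Heun rather than the less standard stochastic leapfrog, and makes transparent why the two internal states cannot drift apart; the paper's route is shorter once the decoupling is spotted. One step you should make fully explicit: the paired increments $e_{2j+2}-e_{2j}=\eta_{2j+1}-\eta_{2j}$ to which you apply BDG are not exact martingale increments, because $\expect[(\sigma_j-\sigma_{j+1})\Delta w_j\mid\mathcal{F}_{t_j}]$ carries an $O(\Delta t)$ bias from the $\Delta w_j\otimes\Delta w_j$ contribution inside $\sigma_{j+1}$, and naively these biases sum to $O(1)$. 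The estimate is rescued because this bias is a Lipschitz function of $\widehat{y}_j$, which moves only $O(\Delta t^{1/2})$ per step, so consecutive biases differ by $O(\Delta t^{3/2})$ and the alternating-sign sum telescopes down to $O(\Delta t^{1/2})$; that cancellation, together with a discrete Gronwall step to close the $\|e_j\|$-dependence inside $\eta_j$, is the real content behind your ``standard bootstrap'' and must be written out.
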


A proof of the ODE case is given in Appendix \ref{appendix:numerical:ode-reversible-heun}. A proof of the SDE case is given in \cite[Appendix D]{kidger2021sde2}.

\paragraph{Stability} One drawback of the reversible Heun method is its unimpressive stability properties.

\begin{restatable}{theorem}{reversibleheunstability}
	The region of stability for the reversible Heun method (for ODEs) is the complex interval $[-i, i]$.
\end{restatable}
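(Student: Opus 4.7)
The plan is to apply the reversible Heun method to Dahlquist's linear test equation $\frac{\dd y}{\dd t} = \lambda y$ and extract the one-step propagation matrix on the augmented state $(y_j, \widehat{y}_j)$. Setting $z = \lambda \Delta t$ and noting that $\mu_j = \lambda \widehat{y}_j$ for this problem, Algorithm \ref{alg:numerical:reversible-heun} collapses to
\begin{equation*}
\widehat{y}_{j+1} = 2 y_j + (z-1)\widehat{y}_j, \qquad y_{j+1} = (1+z) y_j + \tfrac{z^2}{2}\widehat{y}_j,
\end{equation*}
so the iteration is $(y_{j+1}, \widehat{y}_{j+1})^\top = M(z)\,(y_j, \widehat{y}_j)^\top$ with $M(z) = \bigl(\begin{smallmatrix}1+z & z^2/2\\ 2 & z-1\end{smallmatrix}\bigr)$. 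The region of stability is exactly the set of $z \in \complexes$ for which the iterates remain bounded, i.e.\ for which every eigenvalue of $M(z)$ lies in the closed unit disc and any unit-modulus eigenvalue is non-defective.

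I would then compute the characteristic polynomial. A short calculation gives $\mu^2 - 2z\mu - 1 = 0$, with roots $\mu_{\pm}(z) = z \pm \sqrt{z^2 + 1}$ satisfying $\mu_+\mu_- = -1$. The crucial structural observation — and the heart of the argument — is that $\det M(z) = -1$ identically in $z$. This is a direct reflection of the algebraic reversibility of the method: the forward step must be invertible with an inverse of the same form, so the one-step map on the augmented state is volume-preserving up to sign. Consequently $|\mu_+(z)|\,|\mu_-(z)| = 1$, and therefore boundedness of the iteration forces both eigenvalues to lie on the unit circle simultaneously.

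The final step is to characterise those $z$ for which $\mu_+(z), \mu_-(z) \in \{w : |w|=1\}$. Writing $\mu_+ = e^{i\theta}$, the relation $\mu_- = -\mu_+^{-1} = -e^{-i\theta}$ together with $\mu_+ + \mu_- = 2z$ yields
\begin{equation*}
2z = e^{i\theta} - e^{-i\theta} = 2 i \sin\theta,
\end{equation*}
so that $z \in i[-1,1] = [-i, i]$. Conversely, any $z = is$ with $s \in [-1,1]$ produces $\sqrt{z^2+1} = \sqrt{1-s^2} \in \reals$, and a direct check shows both roots have modulus one and are distinct (hence $M(z)$ is diagonalisable) except at the two endpoints $z = \pm i$, where $\mu_\pm = \pm i$ is a double eigenvalue.

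The main obstacle is the endpoint case $z = \pm i$: here the eigenvalues coalesce, and one must verify that $M(\pm i)$ is still diagonalisable (or at least that iterates are bounded) so that these endpoints genuinely belong to the stability region. I would handle this by directly exhibiting two independent eigenvectors of $M(\pm i)$, or by appealing to the convention that the closed stability region includes its boundary in the scalar sense $|\mu| \leq 1$; either way this is a small case check rather than a substantive difficulty, and the conclusion is that the region of stability is exactly $[-i,i] \subset \complexes$.
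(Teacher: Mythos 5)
Your proposal is correct in its essentials and takes a genuinely different route from the paper. You work directly with the one-step propagator $M(z)$ on the augmented state $(y_j, \widehat{y}_j)^\top$ and observe that algebraic reversibility forces $\det M(z) = -1$ identically; the paper instead eliminates $y_j$ to obtain the two-step recurrence $\widehat{y}_{j+1} = \widehat{y}_{j-1} + 2z\,\widehat{y}_j$, recognises it as the leapfrog/midpoint method, solves it explicitly as $\alpha\eta^j + \beta\kappa^j$, and then evaluates the geometric series in the formula for $y_n$. Both routes pass through exactly the same characteristic polynomial $\mu^2 - 2z\mu - 1 = 0$ and the same key observation $\mu_+\mu_- = -1$; you just arrive at it through $\det M$, which is the more structural and arguably cleaner viewpoint, because it makes explicit that unit-modulus eigenvalues are forced by reversibility rather than being a coincidence of this particular method. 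The paper's route, by contrast, carries the explicit $\alpha, \beta, \eta, \kappa$ formulae which make the degenerate corners (the $\eta = 1$ case at $z = 0$, where the vanishing $\lambda$ prefactor saves the day) easier to inspect by hand.

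There is one concrete flaw in your endpoint treatment, and it is worth flagging even though the paper's own proof has the same gap. You propose to close the endpoint case by ``directly exhibiting two independent eigenvectors of $M(\pm i)$.'' If you actually try this you will find it fails: at $z = i$, $M(i) - iI = \bigl(\begin{smallmatrix}1 & -1/2\\ 2 & -1\end{smallmatrix}\bigr)$ has rank one, so the eigenvalue $i$ has geometric multiplicity one and $M(i)$ carries a nontrivial Jordan block. Consequently $\|M(i)^n\|$ grows linearly in $n$, and since the initial state $(y_0, y_0)^\top$ is not an eigenvector of $M(i)$, the iterates $|y_n|$ grow linearly as well — the endpoints $z = \pm i$ are in fact \emph{not} in the stability region as the paper defines it (uniform boundedness). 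Your fallback — appealing to a ``convention'' that the closed region includes the boundary — directly contradicts the paper's stated definition. The paper's proof is no better off here: the explicit formula for $\widehat{y}_j$ as $\alpha\eta^j + \beta\kappa^j$ requires distinct roots, and both $\alpha$ and $\beta$ diverge as $z \to \pm i$, with the correct limiting solution acquiring a $j\eta^j$ resonance term. So both you and the paper establish stability only on the open interval $(-i, i)$, and the theorem as stated (and as is traditional for the leapfrog family) really ought to quote the open interval. This is a minor imprecision inherited from the target statement rather than a failure unique to your argument, but you should not describe the endpoint check as ``a small case check rather than a substantive difficulty'' and then propose a resolution that would, if carried out, disprove what you were trying to show.
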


A proof is given in Appendix \ref{appendix:numerical:ode-reversible-heun}.

\paragraph{Adaptive step sizing} The step size $\Delta t$ may either be fixed in advance (to make this a fixed step size solver) or it may be adapted over the course of the integration.

When solving ODEs, then the reversible Heun method may be treated in the same way as embedded Runge--Kutta method by returning the error estimate $y_\text{error} = (\mu_{j+1} - \mu_j) \Delta t / 2$. This may now be used to adapt step sizing in the usual way (Section \ref{section:numerical:adaptive}).

\paragraph{Reversibility} For completeness, the reverse pass through the reversible Heun method is given in Algorithm \ref{alg:numerical:reversed-reversible-heun}. Note the similarity to the reversible Heun method, as the reversible Heun method is not just algebraically reversible but also symmetric.

\paragraph{Use cases} If training via discretise-then-optimise is not an option, then the reversible Heun is an excellent choice of solver for any of neural ODEs, CDEs, or SDEs. If using pure-neural-network vector fields then its low order and lack of stability need not always be a concern, especially if the discretisation is baked-in as in Section \ref{section:numerical:baked-in-discretisation}.

It is only not recommended if solving neural differential equations with built-in structure as in Section \ref{section:ode:ude}, for which the low order and lack of stability may be concerns.

\paragraph{Computational efficiency}
Unlike the traditional Heun method, the reversible Heun method makes only a single evaluation per step. This can mean that it is more computationally efficient.

\subsubsection{Asynchronous leapfrog method}\label{section:numerical:alf}\index{Asynchronous leapfrog method}
The asynchronous leapfrog method is a symmetric algebraically reversible ODE/CDE (but not SDE) solver, introduced in \cite{alf} and popularised by \cite{zhuang2021mali}. We will consider solving the ODE
\begin{equation*}
	y(0) = y_0,\qquad \frac{\dd y}{\dd t}(t) = f(t, y(t)),
\end{equation*}
over $[0, T]$, for which we will obtain the numerical solution $\{y_j\}_{j=0}^n$.

\paragraph{Initialisation} The solver tracks a single extra piece of extra state $v_j$ in addition to the numerical solution $y_j$. This extra state has a velocity-like interpretation and is initialised as $v_0 = f(y_0)$.

\paragraph{Stepping} The stepping procedure is then given by iterating Algorithm \ref{alg:numerical:alf}. Note the similarity to the midpoint method.

\begin{figure}\centering\begin{minipage}{0.6\linewidth}
\begin{algorithm}[H]
	\SetKwInput{kwInput}{Input}
	\SetKwInput{kwOutput}{Output}
	\kwInput{$t_j, y_j, v_j, \Delta t$}
	
	\begin{align*}
	\widehat{t}_j &= t_j + \Delta t / 2\\
	\widehat{y}_j &= y_j + v_j \Delta t / 2\\
	\widehat{v}_j &= f(\widehat{t}_j, \widehat{y}_j)\\
	\\
	t_{j+1} &= t_j + \Delta t\\
	y_{j+1} &= y_j + \widehat{v}_j \Delta t\\
	v_{j+1} &= 2\widehat{v}_j - v_j
	\end{align*}
	
	\kwOutput{$t_{j+1}, y_{j+1}, v_{j+1}$}
	\caption{Forward pass through the asynchronous leapfrog method.}\label{alg:numerical:alf}
\end{algorithm}
\end{minipage}\end{figure}

\paragraph{Convergence}
The following convergence result may be shown.
\begin{theorem}
	The asynchronous leapfrog method is a second-order method. Specifically, the local truncation error in $y$ is $\bigO{\Delta t^3}$, whilst the local truncation error in $v$ is $\bigO{\Delta t^2}$.
\end{theorem}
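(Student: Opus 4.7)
Both claims in the theorem are local statements -- they fix a step and assume exact inputs $y_j = y(t_j)$ and $v_j = y'(t_j) = f(t_j, y_j)$ -- so I would prove them by Taylor expansion of the step map in $h = \Delta t$. The ``second-order method'' part of the conclusion will then follow from the $y$-bound together with a standard stability argument on the combined $(y, v)$ map.

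For the $y$-bound, the key observation is an exact reduction: under $v_j = f(t_j, y_j)$, the inner stage becomes
\begin{equation*}
    \widehat{v}_j = f\!\left(t_j + \tfrac{h}{2},\, y_j + \tfrac{h}{2} f(t_j, y_j)\right),
\end{equation*}
which is precisely the slope used by the classical explicit midpoint method (RK2). Since $y_{j+1} = y_j + h\widehat{v}_j$, the update coincides step-for-step with RK2, whose local truncation error is the textbook $\bigO{h^3}$. So the first half of the theorem reduces to a classical fact rather than a fresh computation.

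For the $v$-bound, I would Taylor expand about $(t_j, y_j)$:
\begin{equation*}
    \widehat{v}_j = f + \tfrac{h}{2}(f_t + f_y\, v_j) + \bigO{h^2} = y'(t_j) + \tfrac{h}{2}\, y''(t_j) + \bigO{h^2},
\end{equation*}
using the identity $y'' = f_t + f_y f$ along the solution together with $v_j = y'(t_j)$. Substituting into $v_{j+1} = 2\widehat{v}_j - v_j$ gives
\begin{equation*}
    v_{j+1} = y'(t_j) + h\, y''(t_j) + \bigO{h^2} = y'(t_{j+1}) + \bigO{h^2},
\end{equation*}
matching the Taylor expansion of $y'(t_{j+1})$ up to the quadratic remainder. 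This is the claimed $\bigO{h^2}$.

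The only delicate point, and what I would treat as the main obstacle, is reconciling the terminology: $v_j$ is genuine auxiliary state that is propagated between steps, and its local truncation error is only $\bigO{h^2}$ (that is, first-order). So one must verify that a first-order tracked $v$ does not contaminate the second-order global error in $y$. This follows from the usual consistency+stability argument on the augmented one-step map $(y_j, v_j) \mapsto (y_{j+1}, v_{j+1})$: continuous differentiability of $f$ makes the step map Lipschitz in $(y, v)$ with constant $1 + \bigO{h}$, so local truncation errors of order $h^{p+1}$ in a given component produce global errors of order $h^p$ in that component. Thus the $\bigO{h^3}$ local error in $y$ yields $\bigO{h^2}$ global error in $y$ (second order), even though $v$ only converges at first order globally.
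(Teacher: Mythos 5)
The paper does not prove this theorem itself; it defers to \cite[Theorem 3.1]{zhuang2021mali}, so there is no internal argument to compare against.

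Your Taylor-expansion computations of the two local truncation errors are both correct and are exactly what the theorem's ``Specifically\ldots'' clause asserts: under exact inputs $y_j = y(t_j)$, $v_j = y'(t_j)$, the $y$-update coincides with the explicit midpoint Runge--Kutta step (giving $\bigO{\Delta t^3}$), and the expansion of $v_{j+1} = 2\widehat{v}_j - v_j$ against $y'(t_{j+1})$ gives $\bigO{\Delta t^2}$. You are also right that the substantive issue is that $v$ is genuinely propagated state with a lower-order local error, and that this must be shown not to pollute $y$.

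However, the resolution you offer in the final paragraph does not work. The one-step map $(y_j, v_j) \mapsto (y_{j+1}, v_{j+1})$ is \emph{not} Lipschitz with constant $1 + \bigO{\Delta t}$: its Jacobian has $\partial v_{j+1}/\partial v_j = \Delta t\, f_y - 1 \approx -1$ and $\partial v_{j+1}/\partial y_j = 2 f_y = \bigO{1}$, so it is not close to the identity. (This is the spurious leapfrog mode, and is the same mechanism responsible for the method's restrictive stability region $[-i, i]$, stated two theorems later in the paper.) Moreover, even granting a $1 + \bigO{\Delta t}$ Lipschitz bound, one cannot read off component-wise global orders from component-wise local truncation errors in a coupled system; the generic conclusion would be a global error of order $\Delta t^{\min_i p_i}$ in the full vector norm. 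What actually saves the claim is the structure of the coupling: the $v$-error enters $y_{j+1}$ only through the argument $y_j + \tfrac{\Delta t}{2} v_j$ of $f$, and then $\widehat{v}_j$ is multiplied by a further $\Delta t$, so a globally $\bigO{\Delta t}$ error in $v$ contributes only $\bigO{\Delta t^3}$ per step to $y$, which accumulates to $\bigO{\Delta t^2}$. (Alternatively: the step-map Jacobian has eigenvalues $\Delta t\, f_y \pm \sqrt{1 + \Delta t^2 f_y^2}$, both of magnitude $1 + \bigO{\Delta t}$, with a uniformly well-conditioned eigenbasis, so $n$-step propagation stays bounded.) Either of these replaces your stability step; the one you wrote is incorrect.
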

See \cite[Theorem 3.1]{zhuang2021mali}.

\paragraph{Stability} As with the reversible Heun method, one drawback of the asynchronous leapfrog method are its unimpressive stability properties.

\begin{theorem}
	The region of stability for the asynchronous leapfrog method is the complex interval $[-i, i]$.
\end{theorem}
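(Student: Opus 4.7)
My plan is to specialise the asynchronous leapfrog update of Algorithm \ref{alg:numerical:alf} to Dahlquist's scalar linear test equation $\frac{\dd y}{\dd t}(t) = \lambda y(t)$ and compute the eigenvalues of the resulting linear one-step map on the augmented state $(y_j, v_j)$, then read off the region of absolute stability.

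First I would substitute $f(t, y) = \lambda y$ into the three lines of the update. Because $v_j$ naturally carries units of $y_j/\Delta t$, a clean matrix form is obtained by rescaling $u_j := v_j \Delta t$; writing $z := \lambda \Delta t$, a short calculation gives the iteration
\begin{equation*}
	\begin{pmatrix} y_{j+1} \\ u_{j+1} \end{pmatrix} = M(z) \begin{pmatrix} y_j \\ u_j \end{pmatrix}, \qquad M(z) = \begin{pmatrix} 1+z & z/2 \\ 2z & z-1 \end{pmatrix}.
\end{equation*}
The region of absolute stability is then exactly the set of $z \in \complexes$ for which iterates of $M(z)$ remain bounded.

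Next I would compute the characteristic polynomial, which collapses very pleasantly: the $z^2$ contributions from the diagonal product and the off-diagonal product cancel, leaving $\mu^2 - 2z\mu - 1 = 0$ with roots $\mu_\pm = z \pm \sqrt{z^2+1}$. The crucial structural observation is that $\det M(z) = (1+z)(z-1) - z \cdot z = -1$, so $\mu_+ \mu_- = -1$ and hence $\abs{\mu_+}\abs{\mu_-} = 1$. This is the heart of the argument: the product being one in modulus forces the stability condition $\max\{\abs{\mu_+}, \abs{\mu_-}\} \leq 1$ to collapse to $\abs{\mu_+} = \abs{\mu_-} = 1$, i.e.\ \emph{both} eigenvalues must lie on the unit circle.

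Finally, parameterising a unit-modulus root as $\mu = e^{i\theta}$ and solving $\mu^2 - 2z\mu - 1 = 0$ for $z$ gives $z = \frac{1}{2}(\mu - \mu^{-1}) = i\sin\theta$, which traces out precisely $[-i, i] \subset \complexes$ as $\theta$ varies, establishing the claim. There is no real obstacle here; the main subtleties are cosmetic, namely (i) rescaling $v$ so that $M$ is a matrix polynomial purely in $z$ rather than mixing $\lambda$ and $\Delta t$ separately, and (ii) at the two boundary points $z = \pm i$ the eigenvalues coalesce at $\mu = \pm i$ with a genuine Jordan block, so iterates of $M(z)$ grow linearly; whether these endpoints are nominally included in the stability region depends on the convention adopted, but in either case the set of $z$ for which all eigenvalues satisfy $\abs{\mu} \leq 1$ is exactly $[-i, i]$.
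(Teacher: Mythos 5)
Your proof is correct, and it is a genuinely cleaner argument than the one the paper itself uses for the analogous reversible Heun stability result. The paper does not prove the asynchronous leapfrog theorem directly (it defers to \cite[Appendix A.4]{zhuang2021mali}); its in-house proof for the reversible Heun method instead solves the second-order linear recurrence for $\widehat{y}$ explicitly in terms of $\eta^j$ and $\kappa^j$, then sums a geometric series to get $y_n$ and argues boundedness from $\abs{\eta} = \abs{\kappa} = 1$. Your route — writing the one-step propagation matrix $M(z)$ on $(y_j, u_j)$, reading off the characteristic polynomial $\mu^2 - 2z\mu - 1$, and invoking $\det M(z) = -1$ to force both eigenvalues onto the unit circle — reaches the same structural fact ($\mu_+\mu_- = -1$ playing the role of $\eta\kappa = -1$) without having to solve the recurrence or manipulate partial sums; it also makes the parameterisation $z = i\sin\theta$ fall out immediately. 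The one place to be careful, which you correctly flag, is the endpoints $z = \pm i$: the eigenvalues coalesce with a genuine Jordan block, iterates grow linearly, and so under the paper's own definition of stability (uniform boundedness over $j$) those two points are strictly excluded. The paper's statement $[-i, i]$ should therefore really be the open interval under its own definition — but this is an infelicity the paper shares with its reversible Heun proof (where $\alpha$ and $\beta$ also become singular at $\lambda\Delta t = \pm i$), and your handling of it is appropriate.
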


A proof is given in \cite[Appendix A.4]{zhuang2021mali}.

\paragraph{Adaptive step sizing} If adaptively setting the step size, then the error estimate $y_\text{error} = (v_{j+1} - v_j) / 2$ may be used (in the same way as an embedded Runge--Kutta method).

\paragraph{Efficient reverse pass}
The general reversibility algorithm given in Algorithm \ref{alg:numerical:reversible-backward} involves both $\mathrm{Reverse}$ and $\mathrm{Forward}$ operations. For the asynchronous leapfrog method, the extra $\mathrm{Forward}$ operation may be elided. This is accomplished by instead differentiating the $\mathrm{Reverse}$ pass, and appropriately adjusting the surrounding calculation.

This is because the quantity $f(\widehat{t}_j, \widehat{y}_j)$ is computed during both $\mathrm{Reverse}$ and $\mathrm{Forward}$. As $f$ is generally both complicated and user-supplied, this is the piece we are most interested in autodifferentiating. We may calculate by hand the appropriate derivatives for the surrounding structure of the algorithm.

In doing so, we obtain Algorithm \ref{alg:numerical:reverse-alf}.

\begin{figure}\centering\begin{minipage}{0.8\linewidth}
\begin{algorithm}[H]
	\SetKwInput{kwInput}{Input}
	\SetKwInput{kwOutput}{Output}
	\kwInput{$t_{j+1}, y_{j+1}, \mu_{j+1}, \Delta t, \frac{\partial L(y_n)}{\partial y_{j + 1}},\frac{\partial L(y_n)}{\partial \mu_{j + 1}}$}
	
	\begin{align*}
		\widehat{t}_j &= t_{j+1} -\Delta t/2\\
		\widehat{y}_j &= y_{j+1} - \mu_{j+1} \Delta t/2\\
		\widehat{\mu}_j &= \mu(\widehat{t}_j, \widehat{y}_j)\\
		\quad\\
		t_j &= t_{j+1} - \Delta t\\
		y_j &= y_{j+1} - \widehat{\mu}_j \Delta t\\
		\mu_j &= 2\widehat{\mu}_j - \mu_{j+1}\\
		\quad\\
		\frac{\partial L(y_n)}{\partial y_j} &= \left(2 \frac{\partial L(y_n)}{\partial \mu_{j + 1}} + \frac{\partial L(y_n)}{\partial z_{j + 1}} \Delta t\right)^\top \frac{\partial \widehat{\mu}_j}{\partial \widehat{z}_j} + \frac{\partial L(y_n)}{\partial z_{j+1}}\\
		\frac{\partial L(y_n)}{\partial \mu_j} &= \frac{1}{2}\frac{\partial L(y_n)}{\partial z_j}\Delta t - \frac{\partial L(y_n)}{\partial \mu_{j + 1}}
	\end{align*}
	
	\kwOutput{$t_j, y_j, \mu_j, \frac{\partial L(y_n)}{\partial y_j},\frac{\partial L(Z_T)}{\partial \mu_j}$}
	\caption{Efficient backward pass through the asynchronous leapfrog method.}\label{alg:numerical:reverse-alf}
\end{algorithm}
\end{minipage}\end{figure}

\paragraph{Use cases} The asynchronous leapfrog method is useful in essentially the same cases as the reversible Heun method, except that it applies only for ODEs.

\begin{remark}
The asynchronous leapfrog method does not seem to extend to SDEs. There is a clearly analogous procedure, tracking a diffusion-like quantity in addition to a drift-like quantity. However it does not seem obviously possible to demonstrate theoretical convergence of this solver, and neural SDEs trained using it perform relatively poorly empirically.
\end{remark}

\subsubsection{Symplectic solvers}\index{Symplectic methods}
Many pre-existing symplectic solvers are already algebraically reversible. There are a great many symplectic solvers; we highlight only a few interesting ones here.

\paragraph{Semi-implicit Euler method}\index{Semi-implicit Euler method}
Given a pair of differential equations
\begin{align*}
	y(0) &= y_0,\qquad \frac{\dd y}{\dd t}(t) = f(t, v(t)),\\
	v(0) &= v_0,\qquad \frac{\dd v}{\dd t}(t) = g(t, y(t)),
\end{align*}
the semi-implicit Euler method is defined by
\begin{align*}
	y_{j+1} &= y_j + f(t_j, v_{j+1}) \Delta t,\\
	v_{j+1} &= v_j + g(t_j, y_j) \Delta t.
\end{align*}

A common special case is $f(t, v) = v$ so that $v$ is the velocity of $y$, and $y$ is understood as the solution of a second-order system.

This is notable for its popularity in deep learning papers; it is the solver used with the rotational vector fields and momentum residual networks of Section \ref{section:ode:residual-networks}.

\paragraph{Leapfrog/midpoint}\label{section:numerical:leapfrog-midpoint}\index{Leapfrog/midpoint method}
Consider the integrator
\begin{align*}
	y_{j+1} &= y_{j-1} + f(t_j, y_j) \Delta t,\\
	y_{j+2} &= y_{j} + f(t_{j+1}, y_{j+1}) \Delta t
\end{align*}
for solving 
\begin{equation*}
	y(0) = y_0,\qquad \frac{\dd y}{\dd t}(t) = f(t, y(t)),
\end{equation*}
over $[0, T]$, where $\{y_j\}_{j=0}^n$ is the numerical solution.

We refer to this as the `leapfrog/midpoint integrator' in accordance with the title of \cite{shampine-leapfrog}, but other texts will call it simply `leapfrog' (in ambiguity with the integrator for second order systems of the same name), or the `explicit midpoint method' (in ambiguity with the Runge--Kutta method of the same name).

This is both algebraically reversible and symmetric, and is applicable to general first-order systems.

\begin{remark}
As a linear multi-step method it does not admit an immediate way to adapt step sizes during integration, nor does it have very good stability properties \cite{shampine-leapfrog}. Fixing the first problem produces the asynchronous leapfrog integrator of Section \ref{section:numerical:alf} \cite{alf}.
\end{remark}

\subsection{Solving vector fields with jumps}\label{section:numerical:stacked}\index{Jump!In the vector field}
One common scenario is that the vector field of a neural ODE has a piecewise structure with respect to time. That is we are solving
\begin{equation*}
	\frac{\dd y}{\dd t}(t) = f_\theta(t, y(t)),
\end{equation*}
where
\begin{equation*}
	f_\theta(t, y) = 
	\begin{cases}
		f_{\theta, j}(t, y) & t \in [t_0, t_1]\\
		\qquad\vdots\\
		f_{\theta, n}(t, y) & t \in [t_{n - 1}, t_n]
	\end{cases}.
\end{equation*}
For example this occurs when solving a stack of neural ODEs as in Section \ref{section:ode:stacking}, or when solving a neural CDE, reduced to an ODE, using linear or rectilinear interpolation (Section \ref{section:cde:interpolation}).

In this case we have two options: make $n$ separate calls to an ODE solver, over each $[t_j, t_{j + 1}]$, or to make a single call to an ODE solver, over the whole $[t_0, t_n]$.

Both options are fine, but in some cases each requires a small amount of caution.

\paragraph{Separate calls}
If making $n$ separate calls to an ODE solver, and training the neural ODE either via optimise-then-discretise or via reversible ODE solvers (Section \ref{section:numerical:reversible-solvers}), then practically speaking the memory cost will be $n$ times larger than if we had made a single ODE solve: each ODE solve will store $y(t_{j+1})$ at the end of its solve, for the sake of the later backpropagation through the ODE solve.

This may be desirable -- essentially implementing checkpointing as in Section \ref{section:numerical:otd-checkpointing}. Alternatively it may be undesirable due to the increased memory cost.

\paragraph{Single call}
If making a single call to the ODE solver, and using an adaptive step size ODE solver, then the solver should be informed about the location of the jumps. Otherwise, the error control in the step size controller will detect a large error every time a threshold is crossed, slow down to resolve it, and then speed back up again.

It is substantially more efficient to simply step directly to the discontinuity; failing to do so can result in an order-of-magnitude slow-down. The software libraries we recommend in Section \ref{section:numerical:software} support this as an option.

\subsection{Hypersolvers}\index{Hypersolvers}
\paragraph{Known versus unknown structure}
When motivating the use of standard off-the-shelf solvers like Euler or Dormand--Prince, we wrote:

\textit{\aviciouslie}

This was, in fact, a white lie.

Neural differential equations \textit{do} exhibit structure -- they exhibit whatever the structure of the problem being modelled is. The problem is simply that this structure is specified by a black-box neural vector field, and is not understood.

A running theme throughout machine learning, and thus also this work, has been that we may substitute theoretical understanding with data -- and that given sufficient data, we may close the gap between a theoretical model and the behaviour observed in practice. We may apply the same principle here.

\paragraph{Learnt error corrections}
For $q \in \naturals$, consider some $q$-th order ODE solver\footnote{We will treat only ODEs; the extensions to CDEs and SDEs are natural but so far unexplored.} with update rule $\psi$. That is to say, for some time points $\{t_j\}_{j=0}^n$ (for simplicity with constant step size $\Delta t = t_{j+1} - t_j$), the numerical solution $y_j \approx y(t_j)$ is obtained by iterating
\begin{equation*}
	y_{j+1} = y_j + \psi(t_j, y_j) \Delta t.
\end{equation*}
For example $\psi(t, y) = f_\theta(t, y)$ for Euler's method.

As a $q$-th order solver, the local truncation error is of order $q + 1$:
\begin{equation*}
	\norm{y(t_{j+1}) - y(t_j) - \psi(t_j, y(t_j)) \Delta t} = \bigO{\Delta t^{q+1}}.
\end{equation*}

A \textit{hypersolver} \cite{hypersolver} is now defined by a learnt correction
\begin{equation}\label{eq:numerical:hypersolver}
	y_{j+1} = y_j + \psi(t_j, y_j) \Delta t + g_\omega(t_j, y_j, \Delta t) \Delta t^{q + 1}
\end{equation}
with $g_\omega \colon \reals \times \reals^d \times \reals \to \reals^d$ some neural network depending on learnt parameters $\omega$.

\begin{example}
For example, recall Heun's method
\begin{align*}
	\widehat{y}_{j+1} &= y_j + f_\theta(t_j, y_j) \Delta t\\
	y_{j+1} &= y_j + \frac{1}{2} (f_\theta(t_j, y_j) + f_\theta(t_{j+1}, \widehat{y}_{j+1})) \Delta t.
\end{align*}
Then HyperHeun is defined by
\begin{align*}
	\widehat{y}_{j+1} &= y_j + f_\theta(t_j, y_j) \Delta t\\
	y_{j+1} &= y_j + \frac{1}{2} (f_\theta(t_j, y_j) + f_\theta(t_{j+1}, \widehat{y}_{j+1})) \Delta t + g_\omega(t_j, y_j, \Delta t) \Delta t^3.
\end{align*}
(This implicitly features a $\bigO{\Delta t^2}$ term due to the $\widehat{y}_{j+1}$.)
\end{example}

\paragraph{Training}
Training is performed by assuming access to the true solution of the neural ODE. In practice this may be approximately obtained by using a traditional numerical solver with high order, small step sizes, or tight error tolerances.

Let
\begin{equation*}
	R(t, y_\text{prev}, y_\text{next}) = \frac{1}{\Delta t^{q+1}} (y_\text{next} - y_\text{prev} - \psi(t, y_\text{prev}) \Delta t).
\end{equation*}
Then a hypersolver may then trained by minimising either
\begin{equation}\label{eq:numerical:hypersolver-training-one}
	\frac{1}{n}\sum_{j=0}^{n-1}\norm{R(t_j, y(t_j), y(t_{j+1})) - g_\omega(t_j, y(t_j), \Delta t)}
\end{equation}
or
\begin{equation}\label{eq:numerical:hypersolver-training-two}
	\frac{1}{n}\sum_{j=0}^{n-1}\norm{R(t_j, y_j, y(t_{j+1})) - g_\omega(t_j, y_j, \Delta t)},
\end{equation}
where $\{y_j\}_{j=1}^n$ is the numerical solution obtained by iterating \eqref{eq:numerical:hypersolver}. (The former is analogous to training an RNN using teacher forcing; the latter to training an RNN without it.)

\paragraph{Applications}
The primary interest in hypersolvers is to obtain solutions that are both fast and accurate. As speed is of interest, then typically the base update rule $\psi$ is very simple (Euler or Heun), whilst $g_\omega$ may only be a single-layer MLP. This is often sufficient to obtain excellent results. For example \cite{hypersolver} report striking results in which 80 Dormand--Prince steps may be replaced by only 2 HyperHeun steps, without degrading accuracy (in that example, on a continuous normalising flow).

However, hypersolvers are generally only useful for speeding up inference, not training. The neural ODE changes during training, and so \eqref{eq:numerical:hypersolver-training-one}--\eqref{eq:numerical:hypersolver-training-two} become a moving target.

There is related work on training neural networks as differential equation solvers \cite{QIN2019620}. This is a related but distinct notion to training a neural network as the solution of the differential equation itself \cite{lagaris1, lagaris2, Han8505, NEURIPS2018_d7a84628, JMLR:v19:18-046, PhysRevD.100.016002, RAISSI2019686, Fang2019}.

\section{Tips and tricks}
\subsection{Regularisation}
\subsubsection{Weight decay}
Adding weight decay to the parameters of a neural vector field may help to improve model performance, just as in traditional deep learning.

For many neural networks, the scale of its output is roughly proportional to the scale of its weights. That is to say, $\norm{f_\theta}/\norm{\theta}$ may be approximately constant over different values of $\theta$. As such an additional implication of weight decay is that the vector field may be closer to zero, and thus numerically easier to integrate.

\subsubsection{Temporal regularisation}
For applications of neural differential equations to `non time series' problems, one form of regularisation is to select the region of integration randomly. For example when training a continuous normalising flow, then instead of taking a fixed a region of integration $[\tau, T]$, the endpoints $\tau$, $T$ may be sampled from some distribution; perhaps $\tau = 0$ remains fixed whilst $T \sim \uniform{0.9}{1.1}$. This is computationally cheap whilst encouraging models which are robust to small perturbations \cite{steer}.

\subsubsection{Additive noise}\label{section:numerical:additive-noise}
Mainstream deep learning often uses stochasticity as a regulariser, such as dropout. Correspondingly, and for neural ODEs and neural CDEs specifically, then including some small additive noise after each step (so that the model becomes an SDE) is another computationally cheap option that encourages more robust models \cite{stochasticnode, additive-noise}.

In this case the added noise should be fixed. If it is learnt then the training process will shrink it to zero and no regularisation will be applied.

\subsubsection{Regularising higher-order derivatives}\label{section:numerical:regularising-derivatives}
Consider the usual set-up for a neural ODE in which $y$ solves $\nicefrac{\dd y}{\dd t}(t) = f_\theta(t, y(t))$.

Let $q \in \naturals$ and consider some $q$-th order numerical ODE solver. Over any given numerical step $[t_j, t_{j+1}]$, such solvers operate by locally approximating the solution $y$ by some $q$-th order polynomial. Correspondingly the error made over some step is determined by the $q+1$-th total derivative $\nicefrac{\dd^{q+1}}{\dd t^{q+1}}(y(t)) = \nicefrac{\dd^{q}}{\dd t^{q}}(f_\theta(t, y(t)))$.

We may seek to minimise numerical errors, and promote easy-to-integrate dynamics, by regularising
\begin{equation}\label{eq:numerical:_regularising-higher-order}
	\int_0^T \norm{\frac{\dd^{q}}{\dd t^{q}}\left(f_\theta(t, y(t))\right)}^2_2 \,\dd t.
\end{equation}
This was investigated in \cite{easy}.

(As this is only a regularisation term -- this will be made small, not precisely zero -- then we may also consider regularising lower-order derivatives instead.)

\paragraph{Taylor-mode autodifferentiation}
In principle evaluating \eqref{eq:numerical:_regularising-higher-order} may be done via autodifferentiation, although a little care is needed to get the correct derivatives: each derivative on the right hand side involves taking a derivative of $y$ with respect to $t$, so that $\nicefrac{\dd y}{\dd t} = f_\theta$ will start to appear multiple times \cite[Section 310]{butcher2016numerical}.

However doing so via na{\"i}ve autodifferentiation will be unnecessarily expensive. A Jacobian-vector product typically costs $2.5$ times the cost of the corresponding forward evaluation, so nesting $K$ such evaluations will result in a $2.5^K = \bigO{\exp(K)}$ cost. That this is a higher-order derivative implies certain structure that may be exploited to reduce the cost to only $\bigO{K^2}$; see \cite[Appendix A]{easy} or \cite[Chapter 13]{griewank2008evaluating}.

\paragraph{Continuous normalising flows}
A special case arises -- see \cite{how-to-train-node} -- when regularising low-order derivatives of continuous normalising flows (Section \ref{section:ode:cnf}). When evaluating \eqref{eq:numerical:_regularising-higher-order} with $q=1$, then
\begin{equation*}
	\norm{\frac{\dd}{\dd t}(f_\theta(t, y(t)))}_2^2 = \norm{\frac{\partial f_\theta}{\partial t}(t, y(t)) + \frac{\partial f_\theta}{\partial y}(t, y(t)) f_\theta(t, y(t))}_2^2
\end{equation*}
and so we may accomplish similar goals by regularising
\begin{equation*}
	\int_0^T \norm{\frac{\partial f_\theta}{\partial y}(t, y(t))}_2^2 \,\dd t\qquad\text{and}\qquad
	\int_0^T \norm{f_\theta(t, y(t))}_2^2 \,\dd t.
\end{equation*}
Because this is a CNF, we are already computing derivatives of $f$. This means that the Jacobian (left) expression may be computed very cheaply, without additional calls to autodifferentiation. (This is also the reason that the $\nicefrac{\partial f_\theta}{\partial t}$ term is neglected, as computing that would require an additional autodifferentiation operation.)

If using exact Jacobian computations then the Jacobian expression may be evaluated directly using the already-computed Jacobian. If using Hutchinson's trace estimator, then letting $A = \frac{\partial f_\theta}{\partial y}(t, y(t))$ the following expression applies:
\begin{equation*}
	\norm{A}_2^2 = \trace\left(A A^\top\right) = \expect_{\varepsilon \sim \normal{0}{\eye{d}}}\, \varepsilon A A^\top \varepsilon = \expect_{\varepsilon \sim \normal{0}{\eye{d}}} \norm{\varepsilon A}_2^2.
\end{equation*}

\subsection{Exploiting the structure of adaptive step size controllers}\label{section:numerical:adaptive}
For simplicity we will now focus on explicit embedded Runge-Kutta methods as a class of numerical ODE solvers. As per Section \ref{section:numerical:typical-solvers} these include many of the typical solvers used for neural differential equations, like Heun's method or Dormand--Prince.

\begin{remark}
	The discussions of this section will often actually apply to other solvers and differential equation types too. For example both the reversible Heun method and asynchronous leapfrog method (Section \ref{section:numerical:reversible-solvers}) are very similar to Runge--Kutta methods.
\end{remark}

Such solvers may be decomposed into two main components: an update rule (defined by a Butcher tableau \cite{hairer}), and a step size controller for updating the step size. (Which may simply be to use a constant step size.)

The update rule is typically the better-advertised component of a solver. Here we will instead focus on how the step size controller may be used or modified to our advantage.

We begin with a brief exposition of how step sizes are adjusted; see \cite{rackauckas-diffeq-stepping}, \cite[Section II.4]{hairer}, \cite[Section 271]{butcher2016numerical} for reference.

\paragraph{Set-up}
We begin with the usual setup. Let $y_0 \in \reals^d$, $\theta \in \reals^m$. Let $f_\theta \colon [0, T] \times \reals^d \to \reals^d$ be uniformly Lipschitz and continuously differentiable, and let $y \colon [0, T] \to \reals^d$ solve
\begin{equation}\label{eq:numerical:not-an-ode-forward}
	y(0) = y_0, \qquad \frac{\dd y}{\dd t}(t) = f_\theta(t, y(t)).
\end{equation}

Let $y_j \approx y(t_j)$ be some numerical approximation to the solution of \eqref{eq:numerical:not-an-ode-forward}. Over a step size $t_{j+1} - t_j$, then a numerical ODE solver may propose some $y_{j+1}^\text{candidate} \approx y(t_{j+1})$, along with a local error estimate $y_{j+1}^\text{error} \in \reals^d$ of the numerical error made in each channel during that step.

\paragraph{Scale and error ratios}
Given some prespecified absolute tolerance $\text{ATOL}$ (for example $10^{-6}$) and relative tolerance $\text{RTOL}$ (for example $10^{-3}$) and (semi)norm $\norm{\,\cdot\,} \colon \reals^d \to [0, \infty)$ (for example $\norm{y} = \sqrt{\tfrac{1}{d}\sum_{k=1}^d y_k^2}$ the RMS norm), then an estimate of the \textit{scale} of the equation is given by
\begin{equation*}
	\text{SCALE} = \text{ATOL} + \text{RTOL} \,\max(y_j, y_{j+1}^\text{candidate}) \in \reals^d
\end{equation*}
with an elementwise maximum. The \textit{error ratio} $r$ is then computed as
\begin{equation*}
	r = \norm{\frac{y_{j+1}^\text{error}}{\text{SCALE}}} \in \reals
\end{equation*}
with an elementwise division.

Note the dependence on the choice of norm $\norm{\,\cdot\,}$. In particular this determines the relative importance of each channel.

\paragraph{Accepting/rejecting steps}
If $r \leq 1$ then the error is deemed acceptable, the step is accepted and $y_{j+1} = y_{j+1}^\text{candidate}$ is taken. If $r > 1$ then the error is deemed too large, the step is rejected and the procedure is repeated with a smaller step size.

\paragraph{Step size changes}
Regardless of whether the step is accepted or rejected, then the next step size ($t_{j+2} - t_{j+1}$ or $t_{j+1} - t_{j}$ if the step was accepted or rejected respectively) is selected based on the size of $\text{SCALE}$.

For example the step size may be updated by a multiplicative factor
\begin{equation}\label{eq:numerical:step-size-factor}
	\max\left(\min\left(\frac{\text{SAFETY}}{\text{SCALE}^{1/\text{ORDER}}}, \text{IFACTOR}\right), \text{DFACTOR}\right)
\end{equation}
where $\text{ORDER}$ refers to the order of the solver (2 for Heun, 5 for Dormand--Prince and so on), and $\text{SAFETY}$, $\text{IFACTOR}$, $\text{DFACTOR}$ are hyperparameters. Typical values would be $\text{SAFETY} = 0.9$, $\text{IFACTOR} = 10$, $\text{DFACTOR} = 0.2$.

This is the `textbook' step size controller, which is memoryless (each multiplicative factor is dependent only on the previous step). Other step size controllers, often with memory, may also be considered \cite{rackauckas-diffeq-stepping}, \cite[Section 271]{butcher2016numerical}.

This will be all the necessary background material we need on step size controllers.

\subsubsection{Not-an-ODE and adjoint seminorms}\label{section:numerical:not-an-ode}\index{Adjoint seminorms}\index{Not-an-ODE}
Consider specifically when training neural ODEs via optimise-then-discretise, in which a backward-in-time adjoint ODE is constructed. The particular structure of the continuous adjoint equations actually means that the usual choice of norm for computing the error ratio, such as the RMS norm, is unnecessarily stringent: steps are unnecessarily rejected, and step sizes are too small \cite{kidger2020hey}.

By replacing it with a more appropriate (semi)norm, then on the backward pass:
\begin{enumerate}
	\item Fewer steps are rejected overall;
	\item Fewer steps are accepted overall;
	\item Fewer steps are rejected, as a proportion of the overall number of steps.
\end{enumerate}
That fewer steps are both accepted and rejected corresponds to generally larger step sizes being used. Moreover, this occurs without adversely impacting model performance. 

\paragraph{Continuous adjoint equations}\index{Optimise-then-discretise!ODEs}
For convenience we begin by recalling the set-up for backpropagating via optimise-then-discretise.

Let $L = L(y(T))$ be some (for simplicity scalar) function of the terminal value $y(T)$, so that the continuous adjoint equations (Theorem \ref{theorem:ode-adjoint}) correspond to $a_y \colon [0, T] \to \reals^d$ and $a_\theta \colon [0, T] \to \reals^m$ solving
\begin{align}
	a_y(T) &= \frac{\dd L}{\dd y(T)},\hspace{1.6em} &\frac{\dd a_y}{\dd t}(t) = -a_y(t)^\top \frac{\partial f_\theta}{\partial y}(t, y(t)),\nonumber\\
	a_\theta(T) &= 0,\hspace{1.6em} &\frac{\dd a_\theta}{\dd t}(t) = -a_y(t)^\top \frac{\partial f_\theta}{\partial \theta}(t, y(t)),\label{eq:numerical:not-an-ode-adjoint}
\end{align}
which are solved backward-in-time from a terminal condition.

\paragraph{Integral, not an ODE}
The continuous adjoint equations exhibit certain structure: their vector fields are independent of $a_\theta$, and correspondingly the second equation in \eqref{eq:numerical:not-an-ode-adjoint} is merely an integral: not an ODE. (See also Remark \ref{remark:numerical:not-an-ode}.)

As such, whilst it is convenient to evaluate the $a_\theta$ component of \eqref{eq:numerical:not-an-ode-adjoint} as part of the backward-in-time ODE solve, the ODE solver makes the false assumption that small errors in $a_\theta$ may propagate to create larger errors later.

\paragraph{Adjoint seminorms}
When numerically solving \eqref{eq:numerical:not-an-ode-adjoint} backward-in-time, the easy solution is to pick a choice of $\norm{\,\cdot\,}$ that scales down the influence of the $a_\theta$ channels. A simple such choice is to take $\norm{\,\cdot\,}$ as a seminorm, such as $\norm{(y, a_y, a_\theta)} = \sqrt{\frac{1}{2d}\sum_{k=1}^d (y_k^2 + a_{y, k}^2)}$ the RMS norm over the $y$ and $a_y$ components, and independent of the $a_\theta$ component. (Recall that the $y$ component is often solved backward-in-time alongside \eqref{eq:numerical:not-an-ode-adjoint}.)

\paragraph{Does this reduce the accuracy of the parameter gradients?}
One obvious concern is that we are ultimately interested in the parameter gradients $a_\theta(0)$, in order to train a model. In this respect, this approach seems counter-intuitive. Empirically this does not appear to negatively affecting training, however -- we explain this by noting that as the $y$ and $a_y$ channels truly are ODEs, they are likely to be the dominant source of error overall.

\paragraph{Results}
This can dramatically reduce the computational cost of training. \cite{kidger2020hey} reduce the cost of the backward pass through neural CDEs and Hamiltonian neural networks (Chapter \ref{chapter:neural-cde}, Section \ref{section:ode:hybrid}) by 40\%--62\%, and (less dramatically) through CNFs (Section \ref{section:ode:cnf}) by 5\%.

\paragraph{Quadrature}
Other methods for evaluating $a_\theta$ may also be admitted -- for example, whilst it is less convenient than simply using an already-existing ODE solver, $a_\theta$ could also be evaluated using a quadrature rule \cite[Section 2.5]{cvode}.

\subsubsection{Non-backpropagation through adaptive step size controllers}
Consider backpropagation via discretise-then-optimise. Technically speaking, we should expect to backpropagate through the entire computational graph, including through updates to step sizes, and through rejected steps.

Even rejected steps will in principle have a small effect on the backpropagated gradients. Every step (accepted or rejected) is used as an input to $\text{SCALE}$, which is used to compute the multiplicative factor by which a step size is updated (equation \eqref{eq:numerical:step-size-factor}), which determines the timestep values $\{t_j\}_{j=1}^n$, which in general may be used as an input to the neural vector field $f_\theta$.

In practice this is not always desirable. Backpropagating through rejected steps implies additional computational work \cite{aca}, and anecdotally we have observed that backpropagating through equation \eqref{eq:numerical:step-size-factor} will sometimes introduce gradient pathologies that hinder training.

For this reason it is very common not to backpropagate through step size selection -- when differentiating the computational graph we treat the result of equation \eqref{eq:numerical:step-size-factor} as a constant.\footnote{In PyTorch this means applying \texttt{detach} to the output of equation \eqref{eq:numerical:step-size-factor}; in JAX or TensorFlow this means applying \texttt{stop\_gradient}.} Neither the \texttt{torchdiffeq} nor Diffrax software libraries backpropagate through step size selection, for example \cite{torchdiffeq, diffrax}.

\subsubsection{Regularising error estimates}
\cite{pal2021opening} seek to encourage easy-to-integrate dynamics by adding
\begin{equation*}
	\sum_j y_j^\text{error} \abs{(t_{j+1} - t_j)}
\end{equation*}
as a regularisation term when solving a neural ODE. (\cite{pal2021opening} also consider a variant of this, by regularising a term used for detecting stiffness of the differential equation.)

This is computationally almost free, as all $y_j^\text{error}$ will already have been computed.

This technique relies on optimising the neural ODE via discretise-then-optimise (or with a bit of work, a reversible solver). If using optimise-then-discretise then the computational graph for computing $y_j^\text{error}$ is not saved for later backpropagation.

\section{Numerical simulation of Brownian motion}\label{section:brownian-interval}\index{Brownian!Motion}
Numerically solving an SDE requires sampling a Brownian motion $w \colon [0, T] \to \reals^{d_w}$.

\paragraph{Brownian bridges}\index{Brownian!Bridge}
Mathematically, sampling Brownian motion is straightforward. A fixed-step numerical solver may simply sample independent Gaussian random variables during its time stepping. An adaptive solver (which may reject steps) may use L{\'e}vy’s Brownian bridge formula \cite{revuz-yor} to generate the appropriate correlations: for any $s < t < u$,
\begin{equation}\label{eq:bbridge}
	w(t)|(w(s),w(u)) \sim \normal{w(s) + \frac{t - s}{u - s}(w(u) - w(s))}{\frac{(u - t)(t - s)}{u - s}\eye{d_w}}
\end{equation}
and this quantity is (conditionally) independent of $w(v)$ for $v < s$ or $v > u$.

\paragraph{Brownian reconstruction}\index{Brownian!Reconstruction}
However, there are computational difficulties. The main one is that during backpropagation, the same Brownian sample as the forward pass must be used, and if using optimise-then-discretise (Section \ref{section:adjoint-nsde}) may potentially be queried at locations other than were sampled on the forward pass.

In addition, we need to efficiently track the value of the Brownian motion at each end of any interval we will later need to condition on. (To apply the Brownian bridge formula, for example when rejecting steps.)

\paragraph{Brownian sampling}
We will now see three approaches to handling this: the Brownian Path, the Virtual Brownian Tree, and the Brownian Interval.\footnote{These choices of terminology are not completely standard; we adopt the names used in the \texttt{torchsde} library \cite{torchsde}.} The Brownian Path and Virtual Brownian Tree are included as `warm-ups' for pedagogical purposes; in practice the Brownian Interval will usually be the go-to choice.

\subsection{Brownian Path}\index{Brownian!Path}
One approach is simply to store every sample, and apply equation \eqref{eq:bbridge} when appropriate. There are some questions about the optimal data structure to store these values in, for efficient querying later -- in practice the tree-like structure we will later introduce for the Brownian Interval is often a good choice -- but otherwise there is little to discuss here.

This approach is simple and usually gets the job done. During an SDE solve, querying takes $\bigO{1}$ time (assuming a suitable data structure, see the Brownian Interval later). The main downside is the consumption of $\bigO{d_wT}$ memory.

\subsection{Virtual Brownian Tree}\index{Brownian!Tree}
The memory cost of the previous approach can sometimes be large enough to be a concern. This is especially true when taking many small steps to solve the SDE, or when using the continuous adjoint method or reversible SDE solvers (Sections \ref{section:adjoint-nsde} and \ref{section:numerical:reversible-solvers}) for which the Brownian motion samples represent a higher proportion of the overall memory usage.

As such, \cite{scalable-sde}, motivated by \cite{gaines}, introduce the `Virtual Brownian Tree'.

\paragraph{Splittable PRNGs}\label{section:numerical:splittable-prng}\index{Splittable PRNGs}
The first key ingredient is `splittable' pseudo-random number generator (PRNG) seeds \cite{prng1, prng2}.

Given an $m$-bit random seed $\rho \in \{0, 1\}^m$, splitting is an operation that produces some $n$ new $m$-bit random seeds $\rho_1, \ldots, \rho_n \in \{0, 1\}^m$, as a deterministic function of $\rho$, for which $\rho, \rho_1, \ldots, \rho_n$ produce statistically independent streams of random numbers when used as the seed for a PRNG.

Given any rooted tree, we can associate a random seed with every node in the tree in the following way.

Let $(V, E, *)$ be a rooted tree, where $V$ denotes some vertex set,
\begin{equation*}
E \subseteq \set{\{x, y\}}{x, y \in V, x \neq y}
\end{equation*}
denotes some edge set (connected and without cycles), and $* \in V$ denotes the root. For any $x \in V$, let $\adj(x) = \set{y \in V}{\{x, y\} \in E}$ denote the set of vertices adjacent to $x$.

Let $\rho \in \{0, 1\}^m$ be an $m$-bit seed, which we associate with the root $*$. Split $\rho$ into $\rho_1, \ldots, \rho_{|\adj(*)|}$ random seeds and pair each one with a corresponding element $v_i \in \adj(*)$. Recursively split each $\rho_i$ and pair the resulting seeds with the elements of $\adj(v_i)$, and so on, recursing this procedure throughout the tree.

By fixing a rooted tree $(V, E, *)$ and a root seed $\rho$, we may deterministically create a PRNG at every node in the tree. Provided we remember only the tree structure and the root seed $s$, we can later rematerialise every PRNG sequence, for every node of the tree, without holding the samples in memory.

\begin{example}
	A function call graph is an example of such a rooted tree. We begin by calling a function. This in turn may call other functions, which in turn may call other functions, and so on -- which we consider a tree, rather than a DAG, by treating multiple calls to the same function separately. Splittable PRNGs may be used to deterministically generate pseudorandomness at any point in this call graph, for example when writing pure functions. This is actually the procedure used throughout the JAX software library \cite{jax2018github} whenever generating random samples is required.
\end{example}

\paragraph{Generating Brownian samples}
Let $\varepsilon > 0$ be some fixed (small) tolerance. Consider the collection of dyadic points $V = \set{Tj2^{-k}}{j, k \in \naturals, T2^{-k + 1} > \varepsilon}$. These form a tree-like structure: each $Tj2^{-k}$ has $T\left\lceil j / 2 \right\rceil 2^{-k + 1}$ as its parent.

By recording only some root-level seed $\sigma \in \{0, 1\}^m$, and associating seeds with the elements of this tree as in the previous heading, then a Brownian sample $w(v)$ is completely determined for all $v \in V$: see Algorithm \ref{alg:numerical:btree}.

\begin{algorithm}[h]
	\SetKwInput{kwInput}{Input}
	\SetKwProg{Def}{def}{:}{}
	\SetAlgoVlined
	\kwInput{Time horizon $T > 0$, sample time $\tau \in [0, T]$, seed $\rho$, error tolerance $\varepsilon > 0$}
	\KwResult{Approximation to $w(\tau)$}
	\quad\\
	
	$\rho$, $\widehat{\rho}$ = \texttt{split\_seed}($\rho$)\\
	$w_T \sim \normal{0}{T\eye{d_w}}$ sampled with seed $\widehat{\rho}$\\
	$s = 0$\\
	$t = T / 2$\\
	$u = T$\\
	$w_s = 0$\\
	$w_t \sim \texttt{bridge}(0, T/2, T, 0, w_T)$ sampled with seed $\rho$\\
	$w_u = W_T$\\
	\quad\\
	
	\While{$\abs{\tau - t} > \varepsilon$}{
		$\rho_1, \rho_2$ = \texttt{split\_seed}($\rho$)\\
		\eIf{$\tau > t$}{
			$s = t$\\
			$w_s = w_t$\\
			$\rho = \rho_1$
		}{
			$u = t$\\
			$w_u = w_t$\\
			$\rho = \rho_2$
		}
		$t = (s + u) / 2$\\
		$w_t \sim \texttt{bridge}(s, t, u, w_s, w_u)$ sampled with seed $\rho$\\
	}
	\quad\\
	
	return $w_t$
	
	\caption{Sampling the Virtual Brownian Tree. \texttt{split\_seed} denotes splitting a seed into two, and \texttt{bridge} denotes the Brownian bridge of equation \eqref{eq:bbridge}.}\label{alg:numerical:btree}
\end{algorithm}

For $x \in [0, T]$ let $[x]_V$ denote the member of $V$ closest to $x$. To approximately sample a Brownian increment $w(t) - w(s)$ (when an SDE solver steps from $s$ to $t$), we first discretise $s$ and $t$ to $[s]_V$ and $[t]_V$, sample $w([s]_V)$ and $w([t]_V)$ as in Algorithm \ref{alg:numerical:btree}, and then return $w([t]_V) - w([s]_V)$. The main downsides are that this takes $\bigO{\log(1/\varepsilon)}$ time, and produces only approximate samples. However, it has the advantage that this requires only $\bigO{1}$ memory.

Whilst sampling Brownian motion is easy, the key point of this construction is how it additionally allows for \textit{reconstructing} the same Brownian motion sample, without holding the individual samples in memory.

\subsection{Brownian Interval}\index{Brownian!Interval}\label{section:numerical:brownian-interval}
We are now ready to present the Brownian Interval, which improves upon the Brownian Tree with exact sampling and $\bigO{1}$ query times.

\subsubsection{Overview}

\paragraph{Sampling intervals}
Let $w(s, t)$ denote $w(t) - w(s) \in \reals^w$.

We begin by shifting from a point-evaluation approach, in which each query to the Brownian object produces some $w(t)$, to an interval-evaluation approach, in which each query to the Brownian object generates some $w(s, t)$.

Rewriting the Brownian bridge equation \eqref{eq:bbridge} gives
\begin{equation}\label{eq:numerical:binterval-bbridge}
	w(s, t)|w(s, u) \sim \normal{\frac{t - s}{u - s}w(s, u)}{\frac{(u - t)(t - s)}{u - s}\eye{d_w}}.
\end{equation}
The complement $w(t, u)|w(s, u)$ is calculated as $w(t, u)|w(s, u) = w(s, u) - w(s, t)|w(s, u)$.

\paragraph{Binary tree of (interval, seed) pairs}
Similar to the binary tree of (point, seed) pairs used in the Brownian Tree, we will now have a binary tree of (interval, seed) pairs. Each parent interval will be the disjoint union of its child intervals.

The tree starts as a stump consisting of the global interval $[0, T]$ and an $m$-bit random seed $\rho$. New leaf nodes are created as queries over intervals are made. For example, making a first query at $[s, t] \subseteq [0, T]$ (an operation that will return $w(s, t)$) produces the binary tree shown in Figure \ref{fig:binterval1}; making a subsequent query at $[u, v]$ with $u < s < v < t$ produces Figure \ref{fig:binterval2}. Using a splittable PRNG as in Section \ref{section:numerical:splittable-prng}, each child node has a random seed deterministically produced from the seed of its parent. Unlike the Virtual Brownian Tree, which has a fixed (dyadic) tree construction, the tree used in the Brownian interval is query-dependent.

\begin{figure}\centering
	\begin{subfigure}[b]{0.49\linewidth}\centering
		\begin{tikzpicture}[scale=0.7, every node/.style={inner sep=0.5mm,outer sep=0.5mm}]
			\draw (0, 0) -- (-1.2, -1);
			\draw (0, 0) -- (1.35, -1);
			\draw (1.35, -1) -- (0.5, -2);
			\draw (1.35, -1) -- (2.2, -2);
			
			\draw (0, 0) node[fill=white] {$[0, T]$};
			\draw (-1.2, -1) node[fill=white] {$[0, s]$};
			\draw (1.2, -1) node[fill=white] {$[s, T]$};
			\draw (0.5, -2) node[fill=white] {$[s, t]$};
			\draw (2.2, -2) node[fill=white] {$[t, T]$};
		\end{tikzpicture}
		\caption{}\label{fig:binterval1}
	\end{subfigure}
	\begin{subfigure}[b]{0.49\linewidth}\centering
		\begin{tikzpicture}[scale=0.7, every node/.style={inner sep=0.5mm,outer sep=0.5mm}]
			\draw (0, 0) -- (-1.2, -1);
			\draw (0, 0) -- (1.35, -1);
			
			\draw (1.35, -1) -- (0.5, -2);
			\draw (1.35, -1) -- (2.2, -2);
			
			\draw (-1.2, -1) -- (-2.2, -2);
			\draw (-1.2, -1) -- (-0.5, -2);
			
			\draw (0.5, -2) -- (-0.3, -3);
			\draw (0.5, -2) -- (1.3, -3);
			
			\draw (0, 0) node[fill=white] {$[0, T]$};
			
			\draw (-1.2, -1) node[fill=white] {$[0, s]$};
			\draw (1.35, -1) node[fill=white] {$[s, T]$};
			
			\draw (0.7, -2) node[fill=white] {$[s, t]$};
			\draw (2.4, -2) node[fill=white] {$[t, T]$};
			
			\draw (-2.2, -2) node[fill=white] {$[0, u]$};
			\draw (-0.5, -2) node[fill=white] {$[u, s]$};
			
			\draw (-0.1, -3) node[fill=white] {$[s, v]$};
			\draw (1.5, -3) node[fill=white] {$[v, t]$};
		\end{tikzpicture}
		\caption{}\label{fig:binterval2}
	\end{subfigure}
	\caption{Binary tree of intervals. Only the intervals, without the corresponding seeds, are shown.}
\end{figure}
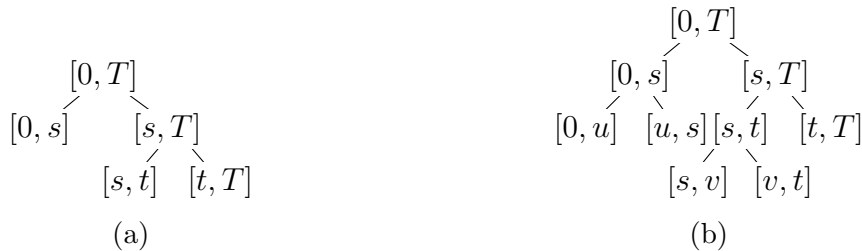

The tree thus completely encodes the conditional statistics of a Brownian motion, conditional on all previous queries: $w(s, t), w(t, u)$ are completely specified by $s$, $t$, $u$, $w(s, u)$, equation \eqref{eq:numerical:binterval-bbridge}, and the random seed associated with $[s, u]$.

\paragraph{Generating Brownian samples}

In principle we may now calculate $w(s,t)$ for any $s < t$. The query over $[s, t]$ adds extra nodes to the tree (if necessary; $[s, t]$ may have been queried before), so that the conditional statistics of the query, with respect to all previous queries, are captured. As in Figure \ref{fig:binterval2}, this may decompose $[s, t]$ into some disjoint union of subintervals. We then calculate $w(s,t)$ by applying equation \eqref{eq:numerical:binterval-bbridge} to each subinterval.

This calculation does require the Brownian increment $w(s, u)$ over the parent interval $[s, u]$. In principle this is calculated recursively in the same way, working our way up the tree. (As with the Virtual Brownian Tree.) However, this may be improved by adding a least recently used (LRU) cache to the computed increments $w(s, t)$.

Queries are exact because the tree aligns with the query points. Queries are fast because of the LRU cache: in SDE solvers, subsequent queries are likely to be close to (and thus conditional on) previous queries. The average-case (modal) time complexity is thus $\bigO{1}$. Even in the event of cache misses all the way up the tree, the worst-case time complexity will only be $\bigO{\log(1/h)}$ in the average step size $h$ of the SDE solver. The (GPU) memory cost is essentially the size of the LRU cache, which is constant and thus $\bigO{1}$.

The trade-off here is that we must store the tree structure itself, which grows each time a query is made. For an SDE solve on $[0, T]$ then this will consume $\bigO{T}$ CPU memory. In practice however this is unlikely to be a limitation: GPU memory is usually the limiting factor, in comparison to which CPU memory is essentially infinite.

\subsubsection{Algorithmic definitions and further discussion}
Precise algorithmic definitions and (substantial) further discussion on the Brownian Interval is deferred to Appendix \ref{appendix:proofs:binterval} to avoid breaking the flow of the presentation.

\begin{remark}
	To what extent may either the point-based approach of the Virtual Brownian Tree, or the interval-based approach of the Brownian Interval, be interchanged?
	
	An interval-based approach to the Virtual Brownian Tree is possible, but would likely be inefficient. For small approximation tolerance $\varepsilon$ and step sizes possibly much larger than $\varepsilon$, then a query for some interval $[s, t]$ would require logarithmically many dyadic intervals (each of length $Tj2^{-k}$ for some $j, k$), to construct $[[s]_V, [t]_V]$. This is as opposed to just making two point-based queries. (The Brownian Interval largely avoids this issue with its query-dependent trees.)
	
	A point-based approach to the Brownian Interval is possible (despite the name), but the interval-based approach has several upsides.
	\begin{itemize}
		\item Elegance. We directly query for the sample $w(s,t)$ actually used in the SDE solver.
		\item Efficiency. Making only a single query for $w(s,t)$ as opposed to making two queries for $w(s)$ and $w(t)$.
		\item L{\'e}vy area approximation. Some numerical SDE solvers sample additional randomness beyond just the point evaluations $w(t)$: typically these samples are of higher-order integrals $\int_s^t w_{k_1}(u) \,\dd w_{k_2}(u)$, computed over intervals $[s, t]$. For example stochastic Runge--Kutta methods may require space-time L{\'e}vy area samples, and the log-ODE method for SDEs\footnote{This is the same log-ODE method as seen in Appendix \ref{appendix:neural-rde}. The additional L{\'e}vy area terms used here correspond to the logsignature terms used there.} uses full L{\'e}vy area samples. As they are defined over intervals $[s, t]$, these quantities are intrinsically interval-based values, requiring an interval-based Brownian motion construction. The details of this are a topic beyond our scope here; James Foster's doctoral thesis \cite{fosterthesis} introduce the requisite formulas analogous to \eqref{eq:numerical:binterval-bbridge}, and the necessary extensions to the Brownian Interval are implemented in \normalfont{\texttt{torchsde}} \cite{torchsde}.
	\end{itemize}
\end{remark}

\section{Software}\label{section:numerical:software}\index{Software}
Software packages for the numerical solving and training of neural differential equations are now relatively standardised. They handle most of the details described over the course of this chapter, and correspondingly the user is free to focus on the modelling details that have been the focus on the other chapters in this thesis.

At time of writing, there are a selection of options.
\begin{itemize}
\item In the JAX ecosystem \cite{jax2018github} there is Diffrax.\index{Diffrax}
\begin{itemize}
	\item \url{https://github.com/patrick-kidger/diffrax}
\end{itemize}
\item In the PyTorch ecosystem \cite{pytorch} there is the \texttt{torchdiffeq}, \texttt{torchcde}, and \texttt{torchsde} family of libraries. (And additionally \texttt{torchdyn} as a higher-level wrapper providing some common models.)
\begin{itemize}
	\item \url{https://github.com/rtqichen/torchdiffeq}\index{torchdiffeq}
	\item \url{https://github.com/patrick-kidger/torchcde}\index{torchcde}
	\item \url{https://github.com/google-research/torchsde}\index{torchsde}
	\item \url{https://github.com/DiffEqML/torchdyn}\index{torchdyn}
\end{itemize}
\item In the Julia \cite{Julia-2017} ecosystem there is DifferentialEquations.jl.\index{DifferentialEquations.jl}
\begin{itemize}
	\item \url{https://github.com/SciML/DifferentialEquations.jl}
\end{itemize}
\end{itemize}

Every package we recommend is open source, offers a stable API, is relatively feature-complete, and comes with comprehensive documentation and examples -- including code examples for many of the techniques discussed in this thesis.

Whilst exact functionality differs slightly by package, one can expect most of:
\begin{enumerate}
	\item Explicit and implicit solvers;
	\item Fixed and adaptive step size solvers;
	\item Differentiation via both optimise-then-discretise and discretise-then-optimise;
	\item Reversible differential equation solvers;
	\item Event handling;
	\item Callbacks;
	\item Handling of jumps in the vector field;
	\item For neural CDEs: all interpolation schemes discussed here;
	\item For neural RDEs: logsignature pre-processing as in Appendix \ref{appendix:neural-rde};
	\item Solving of both It{\^o} and Stratonovich SDEs;
	\item Solving SDEs with varying noise types (scalar, additive, diagonal, general);
	\item Brownian Interval simulation as in Section \ref{section:brownian-interval};
	\item Levy area approximation;
	\item Gradient checkpointing;
	\item Distributed computing;
	\item CPU parallelism;
	\item GPU support.
\end{enumerate}

Any choice amongst these libraries is a reasonable one.

\begin{remark}
If the reader is free to choose, then we would recommend Diffrax. It is the newest of these libraries, and is quite exciting on a technical level, as it solves ODEs, CDEs, and SDEs in a unified way by internally lowering all of them to rough differential equations. In addition if working with irregular time series, then it is the only one amongst these libraries to offer the ability to batch over different regions of integration. We must admit to some bias -- Diffrax is the author's own project, created whilst writing this thesis.
\end{remark}

\section{Comments}
The choice of discretise-then-optimise versus optimise-then-discretise backpropagation is a classical one in the context of differential equations. \cite{anode} is the canonical reference on this topic in the context of neural ODEs. See also \cite{rackauckas2018comparison, onken2020discretize} for related comparisons.

Reversible differential equation solvers, as applied to backpropagation, are quite a new topic. \cite{alf, zhuang2021mali} introduce a reversible ODE solver, whilst \cite{kidger2021sde2} introduce the first reversible SDE solver.

The broader comparison of discretise-then-optimise against optimise-then-discretise against reversible differential equation solvers is new here. (And mistakes on this topic are frequent in the literature; we have come across several erroneous statements about favouring optimise-then-discretise over discretise-then-optimise, in contexts where the opposite is true.)

The proof of optimise-then-discretise for ODEs (relegated to Appendix \ref{appendix:ode-adjoint}), and its sketchproof (Section \ref{section:numerical:ode-adjoint-sketchproof}), are new here. To the best of our knowledge the existing literature has relied on only more complicated proofs.

The proofs of optimise-then-discretise for CDEs and SDEs (relegated to Appendices \ref{appendix:cde-adjoint} and \ref{appendix:sde-adjoint}) are new here. Once again to the best of our knowledge, the existing literature has relied on (substantially) more complicated proofs. A proof of optimise-then-discretise for SDEs appeared in \cite{scalable-sde}. A proof of optimise-then-discretise for CDEs (along with a rough path theory proof of optimise-then-discretise for SDEs) first appeared in \cite{kidger2020sdeunpublished}, although this was never published.

%An understanding of forward-mode and reverse-mode differentiation of differential equations is central to Malliavin calculus. This is a fact we have not seen discussed anywhere in the neural differential equation literature (and which we thank Josef Teichmann for communicating to us).

The discussion on the choice of numerical solver is part of the folklore of neural differential equations; our presentation here is based on our own anecdotal experience and our conversations with others.

Baked-in discretisations (both that they occur and that they are acceptable) are again part of the folklore, although they have been explicitly studied in \cite{ott2021resnet, continuous-net}.

The terminology of analytic and algebraic reversibility is new here. Some existing texts do refer to just `reversible solvers', usually in the context of symplectic solvers.

The more efficient backward step for the asynchronous leapfrog method (Algorithm \ref{alg:numerical:reverse-alf}) is new here. (\cite{zhuang2021mali} used the more general, less efficient, Algorithm \ref{alg:numerical:reversible-backward}). It is actually also possible to construct a more efficient backward step through the reversible Heun method as well, so as to elide the local forward operation. It is however more finicky to do so -- the local backward needs to occur on the reverse pass of the \textit{previous} step -- so for simplicity we omit this here.

On the stability of the asynchronous leapfrog method: \cite{zhuang2021mali} do additionally introduce a `damped asynchronous leapfrog method' with nontrivial region of stability. In practice the region of stability remains very small, and one of the main advantages of reversible solvers is the ability to use them with very large step sizes,\footnote{Whilst still getting both memory efficiency and accurate gradients; discretise-then-optimise giving only the latter and with large step sizes optimise-then-discretise giving only the former.} so the benefit of this is not clear. We note that \cite{zhuang2021mali} mangle terminology slightly by referring to a `region of A-stability' when merely `region of stability' or `region of absolute stability' would be correct. (`A-stability' is a property of the region of stability itself.)

The `not-an-ODE'/`adjoint seminorm' trick for improving backpropagation speed through neural ODEs may also be applied to forward sensitivities \cite[Section 5.5]{cvode}.

The comparison of different software libraries is new here. In fact, the Diffrax software library was written by the author for the express purpose of writing this thesis (or perhaps to procrastinate from writing this thesis). Realistically this has been a fast-moving space, and we would not be surprised if the section on software rapidly becomes outdated.
\chapter{Miscellanea}
\section{Symbolic regression}\label{section:misc:symbolic}\index{Symbolic regression}

\subsection{Introduction to symbolic regression}

Deep learning, including neural differential equations, typically produces `black-box' models. Once the model has been trained, it is a relatively opaque neural network whose mode of operation is essentially mysterious. It may be a good model, but a good model is not always the end goal. Scientific progress may be predicated upon understanding the model as well.

It is often desirable to obtain symbolic expressions -- an imprecise term which we use here to refer to some relatively shallow tree of primitive operations, for example $x \times ((y - 4.2) + z)$. These primitive operations typically include addition, multiplication, exponentiation and so on.

Symbolic regression is the process of deriving such expressions from data in an automated way. One difficulty is the lack of differentiability of the space of such expressions. Whilst any constant in the expression (such as the $4.2$ above) may be optimised differentiably, the space between expressions is usually traversed via genetic algorithms. Another difficulty is the size of this space: there are $\nicefrac{(2n)!}{(n+1)!n!}$ binary trees with $n$ vertices, and so as a rough approximation we may expect there to be a similar number of possible expressions to consider. This is a big number.

For these reasons, symbolic regression is a difficult task that often works best only on simple problems; past a certain point the complexity grows too large and the problem becomes intractable.

\subsection{Symbolic regression for dynamical systems}

\begin{example}\label{example:sindy}\index{SINDy}
	Suppose we observe paired samples of both $y(t)$ and $\nicefrac{\dd y}{\dd t}(t)$, assumed to satisfy an equation of the form
	\begin{equation*}
		\frac{\dd y}{\dd t}(t) = f(y(t)).
	\end{equation*}	
	Then SINDy \cite{sindy} seeks a symbolic expression for $f$ by selecting some features $f_i$ in advance, parameterising $f(y) = \sum_{i=1}^N \theta_i f_i(y)$, and directly regressing $\nicefrac{\dd y}{\dd t}(t)$ against $\{f_i(y(t))\}_{i=1}^N$. A sparsity penalty such as $L^1$-regularisation is applied to $\theta$ so that only a few terms are selected in the final expression.
	
	This procedure is simply standard LASSO, and the dynamical character of the problem is essentially irrelevant. SINDy is arguably the dominant technique for symbolic regression with dynamical systems; some example extensions and applications include \cite{sindy-pde, sindy-plasma, sindy-residual, sindy-stability, sindy-unified}.
	
	However, SINDy has made two strong assumptions: (a) that paired observations of both $y$ and $\nicefrac{\dd y}{\dd t}(t)$ are available, and (b) that $f$ is a shallow tree of expressions -- just a linear combination of preselected features.
\end{example}

We will now see how NDEs offer ways to remove both of the assumptions made in Example \ref{example:sindy}.

\paragraph{Removing assumption (a): no paired observations}

Suppose we observe samples $y(t)$ assumed to come from some dynamical system
\begin{equation}\label{eq:misc:snde}
	\frac{\dd y}{\dd t}(t) = f(y(t)),
\end{equation}
which for simplicity we assume is an autonomous ODE. (Although this is not necessary -- the same ideas apply equally well to non-autonomous dynamical systems, and to non-ODEs such as CDEs and SDEs.)

Given this data, we learn some $f = f_\theta$ as a neural network as described in the rest of this thesis. For example, minimising some empirical loss between the data and a numerical solution of the initial value problem, and optimising $f_\theta$ via backpropagation.

\begin{remark}
	Note that unlike SINDy, we have not assumed access to paired observations of both $y$ and $\nicefrac{\dd y}{\dd t}(t)$.
	
	SINDy sometimes works around this by approximating $\nicefrac{\dd y}{\dd t}(t)$ using finite differences. However this requires densely-packed observations, whilst the above procedure applies even when observations of $y(t)$ are sparse.
\end{remark}

\paragraph{Removing assumption (b): deep symbolic expressions}

Subsequently, we perform symbolic regression across the learnt $f_\theta$. That is, for each observed sample $y$ we evaluate $f_\theta(y)$, and symbolically regress $f_\theta(y)$ against $y$.

The symbolic regression itself may be performed in any number of ways. A reasonable choice for most tasks is regularised evolution \cite{regularised-evolution}, which is capable of learning complex trees of expressions, and traverses the space between them via genetic algorithms. Open source software libraries exist to perform this task -- at time of writing we recommend the PySR and SymbolicRegression.jl libraries \cite{pysr} for Python and Julia respectively.

More advanced techniques include \cite{deep-sym1, deep-sym2, deep-sym3, deep-sym4}. For example \textit{deep symbolic regression} introduces learnt neural network optimisers to tackle the task of searching through symbolic expressions.

(And if we really wanted, we could just take the simple approach of applying regularised linear regression against preselected features as in Example \ref{example:sindy} -- any symbolic regression technique will do.)

The result of this symbolic regression is our final result.

\begin{remark}\index{Markov assumption}
	Note the Markov assumption that is being made in equation \eqref{eq:misc:snde}: the vector field $f$ depends entirely on the observations $y$, so that there is no dependence on the past. In contrast observe that our typical set-up for NDEs has been to have the dynamical system operate in some latent space (that is, $y$ is hidden state), and then linearly project this space down to the data space. See for example Remark \ref{remark:ode:markov}, or the use of readout maps with neural CDEs and SDEs (Chapters \ref{chapter:neural-cde} and \ref{chapter:neural-sde}).
	
	Extending symbolic regression to the non-Markov setting is nontrivial. The essential difficulty is that symbolic regression in a latent space is not obviously meaningful. Supposing that the latent space is some $\reals^{d_y} \ni y(t)$, and letting $\mathcal{A} \subseteq \{\reals^{d_y} \to \reals^{d_y}\}$ be some collection of automorphisms of this space, then for any $\phi \in \mathcal{A}$, both $f$ and $\phi^{-1} \circ f \circ \phi$ define essentially the same dynamics. That is, we may only identify $f$ up to conjugacy by elements of $\mathcal{A}$. 
	
	We note that \cite{Champion22445} do consider symbolic regression in a latent space. The above problem is dealt with implicitly in an ad-hoc manner, by (a) using only simple symbolic regression techniques (LASSO as with SINDy) to constrain the complexity of the vector field; (b) relying on Lipschitz embeddings/decodings from the latent space, again to constrain complexity; (c) manually selecting the `best' element of the conjugacy class $\set{\phi^{-1} \circ f \circ \phi}{\phi \in \mathcal{A}}$ after training has completed. As such `latent symbolic regression' has seen some success, but is in many respects still an open problem. 
\end{remark}

\subsection{Example}\index{Examples!Symbolic regression}\label{section:misc:symbolic-example}
Let $T>0$ and consider the nonlinear oscillator
\begin{equation}\label{eq:misc:nonlinear-oscillator}
	\frac{\dd}{\dd t}\begin{bmatrix}x\\y\end{bmatrix}(t) = \begin{bmatrix}\frac{y(t)}{1 + y(t)}\\\frac{-x(t)}{1 + x(t)}\end{bmatrix} \quad\text{for $t \in [0, T]$},
\end{equation}
with $x(0), y(0) \sim \uniform{-0.6}{1}$. Samples from this equation resemble warped and deformed sines and cosines.

We aim to learn the symbolic form of this differential equation from data. Note the form of the vector fields, which would be very difficult to learn using SINDy.\footnote{Requiring for example the additional knowledge that the vector field is in fact a rational function \cite{rational-sindy}.}

\paragraph{Data} We fix some points $t_j \in [0, T]$, with $t_0 = 0$. For initial conditions $x(0), y(0) \sim \uniform{-0.6}{1}$ we assume access to observations of the corresponding $x(t_j), y(t_j)$. For simplicity we take the samples to be noiseless.

\paragraph{Neural regression} We train a neural ODE via the $L^2$ loss as described above or as in Chapter \ref{chapter:neural-ode}, to reconstruct $x(t_j), y(t_j)$ given $x(0), y(0)$. That is, we consider the model
\begin{equation*}
	\frac{\dd}{\dd t}\begin{bmatrix}x\\y\end{bmatrix} = f_\theta(x(t), y(t))
\end{equation*}
where $f_\theta \colon \reals^2 \to \reals^2$ is a neural network, and for each initial condition $(x(0), y(0))$ the above equation is solved as an initial value problem using a numerical ODE solver.

The result of training this neural ODE is shown in Figure \ref{fig:nonlinear-oscillator}. The model has perfectly learnt the structure of the problem.

\begin{figure}\centering
	\includegraphics[width=0.5\linewidth]{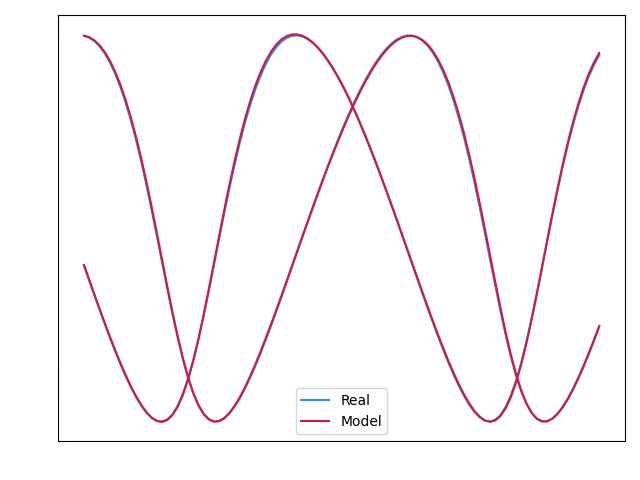}
	\caption{Sample, and trained reconstruction, of the nonlinear oscillator \eqref{eq:misc:nonlinear-oscillator}. Both dimensions $(x, y)$ of the oscillator are shown plotted against time $t$. Model and data align almost perfectly.}\label{fig:nonlinear-oscillator}.
\end{figure}

\paragraph{Symbolic regression} We now have access to a function $f_\theta \colon \reals^2 \to \reals^2$ which we may treat in isolation. The dynamic structure of the problem has been removed.

Symbolically regressing $f_\theta(x(t_j), y(t_j))$ against $(x(t_j), y(t_j))$ via regularised evolution produces the expression
\begin{equation}\label{eq:misc:symbolic-nonlinear-oscillator}
	\begin{bmatrix}x\\y\end{bmatrix} \mapsto \begin{bmatrix}\frac{y}{1.01 + y}\\\frac{-1.04 x}{1.03 + x}\end{bmatrix}.
\end{equation}

\paragraph{Neural-symbolic regression}
Finally, we treat \eqref{eq:misc:symbolic-nonlinear-oscillator} as the vector field of a `neural' differential equation and perform another round of gradient-based optimisation against the original dataset to optimise the constants in the symbolic expressions.

Rounding each constant to the nearest multiple of 0.01 then produces the desired vector field
\begin{equation}
	\begin{bmatrix}x\\y\end{bmatrix} \mapsto \begin{bmatrix}\frac{y}{1 + y}\\\frac{-x}{1 + x}\end{bmatrix}.
\end{equation}

\paragraph{Further details}
Further details may be found in Appendix \ref{appendix:experimental:symbolic}. The code is available as an example in Diffrax \cite{diffrax}.

% \section{Neural Partial Differential Equations}

\section{Limitations of neural differential equations}

We have spent most of this thesis discussing the numerous advantages and applications of neural differential equations. It is only fair we dedicate some space to their limitations.

\subsection{Data requirements}
Neural differential equations have one major difference to classical differential equations. Using a neural network as the vector field (or as a component of the vector field, as with UDEs), results in greatly increased model expressivity, and so correspondingly more data is needed to train the model.

As such data requirements are typically comparable to neural network based approaches. A few hundred samples represents an optimistic lower bound on the amount of data required. The toy example problems considered in this thesis use a few thousand samples. Some examples (\cite[Section 4.2]{kidger2021sde1}) use millions of samples.

This is a limitation compared to classical differential equations -- in return for which we receive more expressive models -- but compared to neural networks this is of course quite normal.

\subsection{Speed}
In particular when using higher-order differential equation solvers, which make multiple vector field evaluations, then neural differential equations can be somewhat slow to evaluate or train.

This can be mitigated by using cheaper, lower-order, solvers.\footnote{Which need not affect model efficacy. Model efficacy, and accuracy at solving the idealised differential equation, are two different things.} For example a low-order reversible solver (Section \ref{section:numerical:reversible-solvers}) can be used to obtain accurate gradients despite its low order.

Additionally, neural differential equations have a trick not available to standard neural networks: the choice of solver can be varied. For example a cheap low-order solver can be used for the bulk of training, and a more expensive higher-order solver used for fine-tuning and inference.

\subsection{Other discretised architectures}
There are successful neural network architectures not so easily explained by being discretised neural differential equations. For example neither U-Net \cite{unet} nor Transformers \cite{attention-is-all-you-need} admit obvious descriptions of this type, although Transformers do have a continuous theory of their own \cite{continuous-hopfield}.

Whilst neural differential equations are one (very successful) paradigm for constructing discrete architectures, it is apparent they are not the only one.

\section{Beyond neural differential equations: deep implicit layers}\index{Deep implicit layers}\index{Implicit!layers}
Neural differential equations are part of a larger family of models, known as \textit{deep implicit models} or \textit{deep implicit layers}.

Most layers (operations) used in machine learning models are `explicit': given an input $x$ they return an output $y$, as in
\begin{equation*}
	y = f_\theta(x),
\end{equation*}
where $f_\theta$ denotes some function depending on trainable parameters $\theta$.

In contrast, implicit layers take the form
\begin{equation}\label{eq:implicit}
	\text{Find }y\text{ such that }f_\theta(x, y) = 0.
\end{equation}
That is to say, the output is specified implicitly as one satisfying a certain condition.

This immediately opens up a host of questions -- such as uniqueness -- that we will not attempt to address in detail here. We aim only to give a high-level flavour of some of this broader family of models.

By its very nature, equation \eqref{eq:implicit} cannot usually be solved explicitly or in closed form. As such the common thread running through such models is the use of a numerical scheme to find an approximate solution to equation \eqref{eq:implicit}.

The word `implicit' thus takes on a dual meaning: not only is the solution $y$ specified implicitly, but the computational steps to locate it need not be explicitly specified either.
%\begin{enumerate}
%	\item For example in differential equation solvers (see Chapter \ref{chapter:numerical}), then the amount of computation to solve equation \eqref{eq:implicit} may be adapted to the complexity of the problem, so that the accuracy of the solution may be traded off against the computational resources provided.
%	\item The possibility of continuous time adjoint methods (see Section \ref{section:section:backpropagation-through-nde}) is another example of this. In this case the backward pass may be calculated with respect to the idealised continuous-time model, rather than with respect to the explicit procedure by which an approximation is obtained.
%\end{enumerate}

Backpropagation through such models is possible -- as in Section \ref{section:numerical:adjoint-ode}, there are both discretise-then-optimise and optimise-then-discretise approaches available. One option is to backpropagate through the operations of some numerical solver for \eqref{eq:implicit}. Another option is to apply the implicit function theorem, and then implicitly differentiate \eqref{eq:implicit} itself. Variations on this are discussed in \cite[Chapter 15]{griewank2008evaluating} and \cite{deq, blondel2021efficient, implicit-fixed}.

A recent line of work has begun to suggest that implicit models may consistently outperform explicit models \cite{implicit-behavioural-cloning, lu2021implicit, implicit-fixed}.

\subsection{Neural differential equations as implicit layers}

A neural ODE is an implicit model: it is specified as
\begin{equation}\label{eq:misc:node-as-implicit-one}
	\text{Find $y$ such that $y(0) = y_0$ and $\frac{\dd y}{\dd t}(t) = f_\theta(t, y(t)),$}.
\end{equation}
%or in integral form,
%\begin{equation}\label{eq:misc:node-as-implicit-two}
%	\text{Find $y(t)$ such that $y(0) = y_0$ and $y(t) = y(0) + \int_0^t f_\theta(s, y(s))) \,\dd s.$}
%\end{equation}

That the computational steps for computing it are left implicit is the very reason Chapter \ref{chapter:numerical} exists.

\subsection{Deep equilibrium models}\index{Deep equilibrium models}

`Deep Equilibrium Models' (DEQs) \cite{deq} are essentially another term for implicit modelling in general, although the term is often used to refer to the case in which $f_\theta$ takes the form of some `large' neural network architecture, such as a Transformer \cite{attention-is-all-you-need}, in which \eqref{eq:implicit} is solved via fixed-point iterations.

As before, given an input $x$ and a neural network $f_\theta$, the output $y$ is simply defined as 
\begin{equation}\label{eq:misc:deq-implicit}
	\text{Find }y\text{ such that }f_\theta(x, y) = 0.
\end{equation}
For example if $x$ and $y$ are sequences then taking $f_\theta$ to be a Transformer is a reasonable choice.

\cite{deq} consider applications to time series whilst \cite{multiscale-deq} consider applications to computer vision. Monotone Operator Equilibrium Models (monDEQ) impose additional structure to ensure that \ref{eq:misc:deq-implicit} has a unique solution \cite{mondeq}. Stability may be improved via regularisation \cite{regularised-deq}.

\subsection{Multiple shooting: DEQs meet NODEs}\index{Multiple shooting}\label{section:misc:multiple-shooting}

Let $d \in \naturals$ and let $f \in \Lip(\reals \times \reals^d; \reals^d)$. For $u < v$ and $x \in \reals^d$, let $y$ denote the solution to
\begin{equation*}
	y(0) = x,\qquad\frac{\dd y}{\dd t}(t) = f_\theta(t, y(t)) \text{ for $t \in [u, v]$}
\end{equation*}
and then let $\phi_\theta(x, u, v) = y(v)$ denote the map from initial condition to terminal condition.

Given an input $x \in \reals^d$, a time horizon $T > 0$, and time points $0 = t_0 < \cdots < t_n = T$, then \textit{multiple shooting} reframes the solution of an ODE as the solution to the implicit problem:
\begin{align*}
	\text{Find $b = (b_0, \ldots, b_n) \in \reals^d \times \cdots \times \reals^d$ such that $b_0 = x$ and $b_{j+1} = \phi_\theta(b_j, t_j, t_{j+1})$.}
\end{align*}

In many ways this is an implicit problem like any other -- for example, we may aim to solve it via a fixed-point iteration, perhaps via Newton methods. Each step of this fixed point iteration itself involves solving multiple ODEs, to evaluate each $\phi_\theta(b_j, t_j, t_{j+1})$.

The key advantage of this approach is that the solution to the multiple ODEs over each interval $[t_i, t_{i+1}]$ may be performed in parallel. Provided that only few steps of the fixed-point solver are required, then this can reduce the overall computation time. (Even though it might not necessarily reduce the overall computational work, due to parallelism.) This is generally the case provided a good initial guess for $b$ can be obtained.

\begin{example}
	For example this may be used during training. Fix a single batch of data, and train a neural ODE for several steps (of parameter optimisation, updating $\theta$) on the same batch of data.
	
	The first such step should obtain a solution via other numerical methods, for example as discussed in Chapter \ref{chapter:numerical}. This provides an initial value for each $b_j$. As the learnt parameters $\theta$ evolve only slightly over the course of each training step, these $b_j$ can be used to provide a good initial guess for subsequent parameter optimisation steps, for which the neural ODE is solved using multiple shooting.
	
	This may be used to train on the same batch of data for a few steps, before sampling a fresh batch.
\end{example}

\begin{example}
	Another example arises when using a neural ODE to model a fully-observed dynamical system. As it is a fully-observed dynamical system we may suppose it is Markov, and attempt to model the observations directly. (So that our neural ODE is `unaugmented' and does not evolve in a latent space as in Section \ref{section:ode:augmentation}.)
	
	Suppose further that each training sample consists of multiple observations $y(s_j)$ for $s_j \in [0, T]$. Then during training we may choose $t_j = s_{m_j}$ for some choice of $m_j$, and take $b_j = y(t_j) = y(s_{m_j})$.
\end{example}

\begin{example}
	A final example, this time during inference, is a classical use-case for multiple shooting: when using an ODE to provide forecasts into the future, continuously updated as new information arrives. Each forecast involves solving an ODE from the current time to some future time (for example, the time now plus ten minutes). As new data arrives we update our forecast, and the solution of the old forecast may be used to initialise each $b_j$.
\end{example}

See \cite{multiple-shooting-node} for more details.

\subsection{Differentiable optimisation}\index{Differentiable optimisation}

A final kind of implicit model is the solution to optimisation problems.

To be clear, whilst `optimisation' often refers to the training of parameters, we are here referring to a model or layer whose operation is defined as the solution to an optimisation problem. We might express this as an implicit layer as
\begin{align*}
	\min_y f_\theta(x, y)\\
	\mathrm{such~that}\quad y \in C_\theta(x)
\end{align*}
where $\theta$ denotes trainable parameters, $x$ denotes an input to the model, and $C_\theta(x)$ is some constraint set.

Equivalently, and once again setting aside concerns such as uniqueness,
\begin{align*}
	y = {\arg\min}_{\widetilde{y} \in C_\theta(x)} f_\theta(x, \widetilde{y}).
\end{align*}

Brandon Amos' doctoral thesis \cite{amos2019differentiable} gives more details on differentiable optimisation, building on \cite{pmlr-v70-amos17b, pmlr-v70-amos17a, cxvpy-diff}.

\begin{example}
For example, we might consider the optimisation problem
\begin{align}
	&\min_{y \in \reals^d} \norm{x - y}\nonumber\\
\mathrm{such~that}\quad& \theta_1 \cdot y \leq \phi_1,\ldots, \theta_n \cdot y \leq \phi_n\label{eq:misc:polytope}
\end{align}
where each $\theta_i \in \reals^d$ and each $\phi_i \in \reals$. This projects $x$ onto some convex polytope, defined as the intersection of $n$ half-spaces. In this case, $\theta_i$ and $\phi_i$ are trainable parameters. Given some paired observations $x, y$, denoting points $x$ and their projections $y$ onto some unknown polytope, then by optimising \eqref{eq:misc:polytope} we may learn an approximation to this unknown polytope.
\end{example}

\section{Comments}
The material on symbolic regression is joint work with Miles Cranmer, and is new here.

The discussion on limitations of neural differential equations is a standard part of the folklore.

The notion of deep implicit layers is a recent one in deep learning, largely popularised by \cite{implicit-layers-tutorial}.

\chapter{Conclusion}
\section{Future directions}
Having discussed the story so far -- what future directions do we anticipate?

\paragraph{Boutique versus `off the shelf'}
Most applications of neural differential equations are still `boutique', rather than `off the shelf'. The model is tailored -- with largely unstructured vector fields retrained from scratch -- for each individual use case.

This is unlike traditional differential equations, for which we have numerous well-studied models, and in each case may expect to find a wealth of literature discussing their long-term behaviour, bifurcation properties and so on. There already exists an analogous literature in modern deep learning, studying the behaviour of models such as GPT-3 \cite{gpt3}, CLIP \cite{clip} and so on.

In time the same development may take place for neural differential equations.

\paragraph{Neural ODEs}
Thousands of papers are written every year applying non-neural ODEs to topics across science, finance, economics, \ldots, and so on. Correspondingly, one significant opportunity is to apply neural ODEs to many of the tasks to which only non-neural ODEs have so far been applied.

\paragraph{Neural CDEs and SDEs}
Neural CDEs and neural SDEs are much newer. Work remains to be done on the practical machine learning details: finding expressive choices of vector field, and determining how to train these models efficiently. As with neural ODEs, another future direction is their application to practical topics, or how to hybridise them with their non-neural equivalents.

In addition, CDEs and neural CDEs have natural connections to control theory, and from that to reinforcement learning; these connections are largely unexplored.

Connections between neural SDEs, score matching diffusions, continuous normalising flows, optimal transport and Schr{\"o}dinger bridges are still in their infancy.

\paragraph{Numerical methods}
Numerical methods for neural differential equations are the single largest chapter in this thesis, and with good reason. Numerical differential equation solvers are an old topic, but recent developments such as reversible solvers and hypersolvers offer opportunities yet to be exploited. For example it would be desirable to have higher order reversible ODE solvers, or to be able to apply hypersolvers during training.

\paragraph{Symbolic regression}
Symbolic regression -- both the underlying techniques and their application to dynamical systems -- is still more alchemy than science. That is, it is more a matter of `seeing what sticks' than of applying guiding principles.

For dynamical systems, only SINDy and its variants are well-established. The development and application of more advanced techniques such as regularised evolution and deep symbolic regression remains almost entirely wide open.

\paragraph{Neural PDEs}
One topic, made conspicuous by its absence from this thesis, is the possibility of neural partial differential equations.

There have been a selection of ideas in this space. For example a convolutional network is roughly equivalent to the discretisation of a parabolic PDE. \cite{fno1, fno2, fno3} consider the `Fourier Neural Operator', which is probably the most well-developed current theory for something approaching neural PDEs. \cite{neural-spde} present some initial thoughts on neural stochastic partial differential equations. This list of references is far from exhaustive.

In practice many of the ideas in this space have yet to converge. (Perhaps unsurprisingly: there are a great many types of PDE to consider, after all.) This represents a major open direction for the field of neural differential equations.

\section{Thank you}
Finally, it remains to thank the reader for their attention. We hope we have adequately conveyed some amount of insight (and our own enthusiasm) for this new, rapidly developing, and in our opinion highly exciting field of neural differential equations.

Neural differential equations sit at the intersection of arguably the two most successful modelling paradigms ever invented. In doing so, they demonstrate that these `two' paradigms are perhaps much closer to \textit{one} paradigm than at first glance we might imagine.

\appendix
\chapter{Review of Deep Learning}\label{appendix:deep}
We expect that a reasonable proportion of our audience may come from a traditional mathematical modelling background, and may not be familiar with deep learning.

Whilst all sorts of concepts will be important at various points in the presentation, we will assume familiarity with the following elementary concepts throughout:
\begin{itemize}
	\item Common neural architectures (i.e. differentiable computation graphs):
	\begin{itemize}
		\item Feedforward networks;
		\item Convolutional networks;
		\item Recurrent networks, GRUs, LSTMs;
		\item Residual networks;
		\item Activation functions: ReLU, tanh, sigmoid, softplus;
		\item Batch normalisation;
	\end{itemize}
	\item Optimisation:
	\begin{itemize}
		\item Maximum likelihood;
		\item Stochastic gradient descent;
		\item Batching;
		\item Backpropagation;
		\item Backpropagation through time for RNNS;
		\item Weight regularisation;
		\item Dropout;
	\end{itemize}
	\item Supervised learning:
	\begin{itemize}
		\item $L^2$ loss;
		\item Softmax;
		\item Cross-entropy;
		\item Binary cross-entropy;
	\end{itemize}
	\item Unsupervised learning:
	\begin{itemize}
		\item (Wasserstein) generative adversarial networks;
		\item Variational autoencoders;
		\item Wasserstein distance;
		\item KL divergence;
	\end{itemize}
\end{itemize}
If the reader is indeed unfamiliar with deep learning, then we recommend either \cite{hands-on-book} or \cite{pytorch-book} for the necessary introductions to much of the above list. (At least at time of writing. Their discussion on the details of individual software frameworks may soon become out-of-date.)

The above list tends towards practical concerns. This thesis additionally assumes familiarity with a few other concepts, slightly more academic in nature. As these appear less frequently in introductory texts, then for readability's sake we provide an introduction to them now. The emphasis will be on brevity over completeness.

\section{Autodifferentiation}\index{Autodifferentiation}\label{appendix:deep:autodiff}
Let $f_1, \ldots, f_n$ be some collection of functions whose derivatives we know how to compute. (Referred to as `differentiable primitives'.)

Then for any composition of these functions $x \mapsto f(x) = f_{i_m}( \cdots (f_{i_1}(x)) \cdots )$, with $i_1, \ldots, i_m \in \{1, \ldots, n\}$, we also know how to compute the derivative of $f$ via the chain rule:
\begin{equation}\label{eq:deep:chain-rule}
	\frac{\dd f}{\dd x} = \frac{\dd f_{i_m}}{\dd f_{i_{m-1}}} \cdots \frac{\dd f_{i_2}}{\dd f_{i_1}}\frac{\dd f_{i_1}}{\dd x}.
\end{equation}
More generally one may consider any (topologically sorted) directed acyclic graph of compositions; we focus on the easy-to-present case.

Autodifferentiation frameworks offer an automated way to compute \eqref{eq:deep:chain-rule}, by providing differentiable primitives such as matrix multiplies, sines, cosines, ReLUs, and so on.

It remains to consider how best to evaluate \eqref{eq:deep:chain-rule}. There are two main approaches.

\paragraph{Forward-mode}
Forward-mode autodifferentiation, also known as forward sensitivity, proceeds by recursively computing
\begin{equation}\label{eq:deep:forward-mode}
	\frac{\dd f_{i_q}}{\dd x} = \frac{\dd f_{i_q}}{\dd f_{i_{q-1}}} \frac{\dd f_{i_{q-1}}}{\dd x}
\end{equation}
for $q = 2, \ldots, m$.

\paragraph{Reverse-mode}
Reverse-mode autodifferentiation, also known as backpropagation or reverse sensitivity, proceeds by recursively computing
\begin{equation}\label{eq:deep:reverse-mode}
	\frac{\dd f_{i_m}}{\dd f_{q-1}} = \frac{\dd f_{i_m}}{\dd f_{i_{q}}} \frac{\dd f_{i_{q}}}{\dd f_{q-1}}
\end{equation}
for $q = m-1, \ldots, 1$, and for convenience denoting $x = f_0$.

\paragraph{Efficiency}
The main difference is computational efficiency. Suppose $x$ is a vector and $f_{i_m}$ outputs a scalar. Suppose all intermediate layers are vectors. (This is the common case for neural networks, with a vector of parameters as input and a scalar loss as output.) Then each evaluation of \eqref{eq:deep:forward-mode} is a matrix-matrix product, whilst each evaluation of \eqref{eq:deep:reverse-mode} is only a vector-matrix product; this is substantially cheaper to compute. It is for this reason that backpropagation, not forward-mode autodifferentiation, is typically used to train neural networks.

We may more precisely characterise the above statement as follows. Let $d_{\text{input}}$ be the dimensionality of $x$ and let $d_{\text{output}}$ be the dimensionality of the output of $f_{i_m}$. Then under a reasonable model of computation, the cost of computing both $f$ and $\nicefrac{\dd f}{\dd x}$ via forward-mode autodifferentiation may be upper bounded by $2.5\, d_{\text{input}}$ times the cost of evaluating just $f$ \cite[Equation (4.17)]{griewank2008evaluating}. Computing both $f$ and $\nicefrac{\dd f}{\dd x}$ via reverse-mode autodifferentiation may be upper bounded by $4\, d_{\text{output}}$ times the cost of evaluating just the function \cite[Equation (4.21)]{griewank2008evaluating}. In each case we refer to `computing both $f$ and $\nicefrac{\dd f}{\dd x}$' as computing $\nicefrac{\dd f}{\dd x}$ typically relies on computing $f$ first.

\paragraph{Jacobian-vector and vector-Jacobian products}\index{Vector-Jacobian product}\index{Jacobian-vector product}
Consider again the case that $f_{i_m}$ outputs a scalar, so that reverse-mode autodifferentiation computes a sequence of vector-matrix products. As the matrix is a Jacobian this is referred as a a vector-Jacobian product (`vjp'). Likewise if $x$ is a scalar then forward-mode autodifferentiation computes a sequence of Jacobian-vector products (`jvp').

For these reasons, `jvp' and `vjp' are sometimes used as synonyms for forward- and reverse-mode autodifferentiation, even when $x$ or $f_{i_m}$ are not necessarily scalar.

\paragraph{Comparison}
Note that \eqref{eq:deep:forward-mode} may be evaluated \textit{during} the `forward' evaluation of $f(x) = f_{i_m}( \cdots (f_{i_1}(x))\cdots)$: just compute each $\frac{\dd f_{i_{q-1}}}{\dd f_1} \mapsto \frac{\dd f_{i_q}}{\dd f_1}$ alongside each $f_{i_{q-1}}( \cdots (f_{i_1}(x)) \cdots)\mapsto f_{i_q}( \cdots (f_{i_1}(x))\cdots)$.

In contrast\eqref{eq:deep:reverse-mode} must be evaluated \textit{after} the `forward' evaluation -- we cannot evaluate $\frac{\dd f_{i_m}}{\dd f_q}$, which is evaluated at $f_{i_{q-1}}( \cdots (f_{i_1}(x))\cdots)$, until $f_{i_{q-1}}( \cdots (f_{i_1}(x))\cdots)$ has been computed. As such all $f_{i_{q}}( \cdots (f_{i_1}(x))\cdots)$ must first be evaluated and then held in memory. The amount of space available in memory can become a concern.

The canonical reference text on autodifferentiation is \cite{griewank2008evaluating}.

\section{Normalising flows}\label{appendix:deep:normalising-flows}\index{Normalising flows}
Fix $d \in \naturals$ and let $f \colon \reals^d \to \reals^d$ be bijective and sufficiently smooth.

Let $X$ be some random variable taking values in $\reals^d$, with density $p_X \colon \reals^d \to [0, \infty)$. Let $Y = f(X)$. Then the change of variables formula gives that the density $p_Y \colon \reals^d \to [0, \infty)$ of $Y$ is
\begin{equation*}
	p_Y(y) = p_X(f^{-1}(y))\abs{\det \frac{\dd f}{\dd x}(f^{-1}(y))}^{-1}.
\end{equation*}

Now let $X$ be a multivariate normal, let $Y$ correspond to (the empirical samples of) the data, and let $f = f_\theta$ to be some flexible neural network model. Consider training $f_\theta$ by maximum likelihood, by directly optimising
\begin{equation*}
	\max_\theta \expect_{y\sim Y}\log p_Y(y) = \max_\theta \expect_{y\sim Y}\left[\log p_X(f^{-1}_\theta(y)) - \log\abs{\det \frac{\partial f}{\partial x}(f^{-1}(y))} \right].
\end{equation*}
After training we obtain a generative model capable of producing approximate samples of $Y$: simply sample $X$ then evaluate $f(X)$.

The snag is that training is computationally expensive: in general evaluating the (log-determinant-)Jacobian costs $\bigO{d^3}$. Correspondingly much of the literature has focused on finding neural architectures $f_\theta$ that (a) exhibit structure that may be exploited to cheapen the cost of the Jacobian computation, whilst (b) still being expressive enough to produce good models.

Normalising flows were introduced in \cite{normalizing-flow}. We recommend \cite{norm-flow-ref} for further introduction.

\section{Universal approximation}\index{Universal approximation}\label{appendix:deep:universal-approximation}
`Universal approximation' is what a mathematician would call `density'. That is, given some normed vector space\footnote{Or a topological space in general.} $V$ and some set $W \subseteq V$, then $W$ is said to exhibit universal approximation with respect to $V$ if for all $\varepsilon > 0$ and $v \in V$, there exists $w \in W$ such that $\norm{v - w} < \varepsilon$.

Let $K \subseteq \reals^{d_x}$ be compact. It is often desirable to demonstrate that some set of neural networks $\reals^{d_x} \to \reals^{d_y}$ exhibit universal approximation with respect to (typically) $C(K; \reals^{d_y})$. As long as we have enough data, take a large enough network, and train for long enough -- known as the `infinite data limit' -- then in principle we may hope to obtain an arbitrarily good approximation to the target function.

Most famously, the set of feedforward networks of arbitrary width is a universal approximator for the set of continuous functions. (`The' universal approximation theorem.)
\begin{definition}
	Let $\rho \colon \reals \to \reals$ be any continuous function. Then let $\mathcal{N}^\rho_d$ denote the set of feedforward neural networks with activation function $\rho$, with $d$ neurons in the input layer, one neuron in the output layer, and a single hidden layer with an arbitrary number of neurons.
\end{definition}
\begin{theorem}[{Universal Approximation Theorem \cite{pinkus1999}}]
	 Let $K \subseteq \reals^d$ be compact. Then $\mathcal{N}^\rho_d$ is dense in $C(K)$ if and only if $\rho$ is nonpolynomial.
\end{theorem}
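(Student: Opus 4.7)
The easy direction is the ``only if'' implication: suppose $\rho$ is a polynomial of degree $k$. Then every function in $\mathcal{N}^\rho_d$, being a finite linear combination of terms $\rho(w \cdot x + b)$ with $w \in \reals^d$, $b \in \reals$, is itself a polynomial in $x$ of degree at most $k$. Hence $\mathcal{N}^\rho_d$ is contained in the finite-dimensional (and therefore closed) subspace of $d$-variable polynomials of degree $\leq k$ inside $C(K)$. Since $C(K)$ is infinite-dimensional, $\mathcal{N}^\rho_d$ cannot be dense.

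For the ``if'' direction I would proceed in three stages. \textbf{Stage 1 (reduction to one variable).} I would rely on the classical density of ridge functions: the linear span of $\{x \mapsto g(w \cdot x) : w \in \reals^d,\, g \in C(\reals)\}$ is dense in $C(K)$ for any compact $K \subseteq \reals^d$ (provable via Stone--Weierstrass applied to trigonometric ridges, or via a Fourier/Radon argument). Given any ridge profile $g$ restricted to a compact interval containing $\{w \cdot x : x \in K\}$, an approximation of $g$ by an element $\sum_i c_i \rho(\lambda_i \cdot\, + b_i) \in \mathcal{N}^\rho_1$ pulls back to an element $\sum_i c_i \rho((\lambda_i w) \cdot x + b_i) \in \mathcal{N}^\rho_d$. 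Thus it suffices to prove the univariate statement: $\mathcal{N}^\rho_1$ is dense in $C([a,b])$ whenever $\rho$ is continuous and non-polynomial.

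\textbf{Stage 2 (smooth univariate case).} Temporarily assume $\rho \in C^\infty(\reals)$. For fixed $b_0 \in \reals$, the map $w \mapsto \rho(wx + b_0)$ is smooth, and finite difference quotients in $w$ are $\reals$-linear combinations of shifts $\rho(w'x + b_0)$, so they lie in $\mathcal{N}^\rho_1$. Taking the limit, $x \mapsto x^k \rho^{(k)}(wx + b_0) \in \overline{\mathcal{N}^\rho_1}$ for every $k,w$; setting $w = 0$ yields $\rho^{(k)}(b_0)\,x^k \in \overline{\mathcal{N}^\rho_1}$. The key algebraic fact is that a smooth non-polynomial $\rho$ cannot have $\rho^{(k)} \equiv 0$ for any $k$ (otherwise $\rho$ would be a polynomial of degree $<k$), so for each $k$ there exists some $b_0$ with $\rho^{(k)}(b_0) \neq 0$. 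Hence every monomial lies in $\overline{\mathcal{N}^\rho_1}$, and the Weierstrass approximation theorem finishes the argument.

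\textbf{Stage 3 (general $\rho$ via mollification).} For merely continuous $\rho$, I would convolve with a compactly supported smooth mollifier $\phi$ to form $\rho_\phi := \rho * \phi \in C^\infty$, and argue (i) $\rho_\phi$ is non-polynomial for a suitable choice of $\phi$ -- otherwise every $\rho * \phi_\varepsilon$ is a polynomial of uniformly bounded degree, and letting $\varepsilon \to 0$ would force $\rho$ itself to be polynomial -- and (ii) any function in $\mathcal{N}^{\rho_\phi}_1$, being (a combination of) an integral $\int \rho(wx + b - s)\phi(s)\,\dd s$, can be uniformly approximated on $[a,b]$ by Riemann sums that are themselves elements of $\mathcal{N}^\rho_1$. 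Combined with Stage 2 applied to $\rho_\phi$, this gives the desired density. The main obstacle is Stage 3: the mollification requires care to preserve non-polynomiality (the non-polynomiality of $\rho$ is an essentially global/qualitative property that one must transfer to a \emph{particular} choice of $\phi$), and the Riemann-sum approximation of the convolution demands a uniform-in-$x$ estimate that exploits compactness of $K$ and uniform continuity of $\rho$ on a slightly enlarged compact set. Stages 1 and 2 are clean once the appropriate classical tools are invoked; Stage 3 is where the analytic content really lives.
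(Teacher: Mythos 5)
The paper cites Pinkus's 1999 survey for this theorem and does not itself give a proof, so there is no in-text argument to compare your proposal against; what you have written is, however, recognizably the Leshno--Lin--Pinkus--Schocken argument that Pinkus's survey reproduces, and the only-if direction, the ridge reduction in Stage 1, the divided-difference mechanism in Stage 2, and the Riemann-sum approximation in Stage 3(ii) are all sound.

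The genuine gap is in Stage 3(i). You assert that if every mollification $\rho * \phi$ is a polynomial, then ``every $\rho * \phi_\varepsilon$ is a polynomial of uniformly bounded degree,'' and then let $\varepsilon \to 0$. The uniform degree bound is the entire difficulty, and it does not follow from anything you have written: a priori $\deg(\rho * \phi_\varepsilon)$ could grow without bound as $\varepsilon \to 0$, and a locally uniform limit of polynomials of unbounded degree need not be a polynomial -- that is precisely what the Weierstrass approximation theorem tells us. The standard repair, which Pinkus uses, is a Baire-category argument: fix a compact set and consider the Fr\'echet space of smooth test functions supported there; the sets
\begin{equation*}
V_m = \set{\phi}{\rho * \phi\ \text{restricted to}\ [a, b]\ \text{is a polynomial of degree at most}\ m}
\end{equation*}
are closed linear subspaces, and if their union covered the whole Fr\'echet space, Baire would force some $V_m$ to have nonempty interior and hence be the entire space. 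Only at that point is there a single uniform bound $m$, after which your mollifier-limit argument does correctly conclude that $\rho$ is a polynomial of degree at most $m$ on $[a, b]$ and hence on all of $\reals$. Without Baire or an equivalent device, the ``otherwise\ldots'' clause in Stage 3(i) restates what needs to be shown rather than proving it.
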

Many introductory texts repeat weaker versions of this theorem, apparently unaware that this simpler stronger version exists.\footnote{And in fact slightly stronger (but slightly more complex) versions than the one we have stated here also exist; see \cite{leshno}.}

Other variations on this theorem can also be found. Most notably, the set of feedforward networks of arbitrary \textit{depth} is also a universal approximator for the set of continuous functions.
\begin{definition}
	Let $\rho \colon \reals \to \reals$ and $d_x, d_y, d_w \in \naturals$. Then let $\mathcal{N}\!\mathcal{N}^\rho_{d_x, d_y, d_w}$ denote the set of feedforward neural networks with $d_x$ neurons in the input layer, $d_y$ neurons in the output layer, and an arbitrary number of hidden layers of width $d_w$ with activation function $\rho$.
\end{definition}
\begin{theorem}[{Deep-and-Narrow Universal Approximation \cite{kidger2020deep}}]
	Let $\rho \colon \reals \to \reals$ be any nonaffine continuous function, which is continuously differentiable at at least one point, with nonzero derivative at that point. Let $K \subseteq \reals^{d_x}$ be compact. Then $\mathcal{N}\!\mathcal{N}^\rho_{d_x,d_y,d_x+d_y+2}$ is dense in $C(K; \reals^{d_y})$.
\end{theorem}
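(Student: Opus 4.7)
The plan is to lift the classical shallow universal approximation theorem (which the excerpt cites from \cite{pinkus1999}) to the deep-narrow setting by constructing a network whose hidden state at every layer lives in $\reals^{d_x + d_y + 2}$, partitioned as (register $\in \reals^{d_x}$, accumulator $\in \reals^{d_y}$, scratch $\in \reals^{2}$). The register is designed to carry an approximate copy of the input $x \in K$ through the entire network; the accumulator builds up the target output one shallow-network neuron at a time; and the two scratch neurons evaluate each individual $\rho$-term. First I would reduce to the case $d_y = 1$ is morally unnecessary here: the $d_y$ accumulator channels act independently, so a single construction of width $d_x + d_y + 2$ handles all output coordinates in parallel. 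By the classical theorem, and since the hypotheses on $\rho$ (nonaffine, differentiable somewhere with nonzero derivative) force $\rho$ to be nonpolynomial, for any $f \in C(K; \reals^{d_y})$ and $\eta > 0$ there is a shallow network $\widehat{f}(x) = \sum_{i=1}^{N} v_i\, \rho(w_i \cdot x + b_i)$ with $v_i \in \reals^{d_y}$, $w_i \in \reals^{d_x}$, $b_i \in \reals$, satisfying $\|f - \widehat{f}\|_{\infty} < \eta$.

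The central technical device is an \emph{approximate identity} built from $\rho$. Using continuous differentiability at some $x_0$ with $\rho'(x_0) \neq 0$, Taylor's theorem gives, uniformly for $z$ in any fixed compact set,
\[
\rho(x_0 + \epsilon z) = \rho(x_0) + \epsilon \rho'(x_0)\, z + o(\epsilon).
\]
Hence the map $z \mapsto \tfrac{1}{\epsilon \rho'(x_0)}\bigl(\rho(x_0 + \epsilon z) - \rho(x_0)\bigr)$, which is one $\rho$-neuron followed by an affine combination, approximates $z \mapsto z$ uniformly on any prescribed compact to any prescribed precision by taking $\epsilon$ small. Applied coordinatewise (one scratch neuron at a time, cycling through the $d_x$ register channels across a constant number of layers), this yields a layer-block that approximately copies the register while leaving the accumulator unchanged.

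With this device in hand, I would assemble the deep network as a sequence of $N$ blocks, one per neuron of $\widehat{f}$. A single block performs the following composite update: (i) compute the scalar $\rho(w_i \cdot r + b_i)$ from the current register $r$ into a scratch neuron; (ii) add $v_i \rho(w_i \cdot r + b_i)$ to the accumulator using linear combinations baked into the next affine map (this is where the $d_y$-wide accumulator pays for itself); (iii) refresh the register by the approximate-identity block so that subsequent blocks still see (a perturbation of) $x$. A final affine read-out discards the register and scratch and returns the accumulator.

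The main obstacle — and really the only one — is error propagation. The register is only \emph{approximately} preserved, and the $N$ blocks compose, so perturbations can amplify. The remedy is to order the quantifiers carefully: fix $\varepsilon > 0$, invoke the shallow theorem to fix $N$, the weights $\{w_i, b_i, v_i\}$, and hence uniform Lipschitz bounds $L_i$ for $x \mapsto v_i \rho(w_i \cdot x + b_i)$ on a compact neighbourhood of $K$; then choose the identity-approximation precision $\epsilon$ small enough that the register at the input to block $i$ lies within $\varepsilon / (2 N L_i)$ of $x$ for every $i \leq N$. Summing these contributions and combining with the shallow approximation error $\eta = \varepsilon/2$ yields total error $< \varepsilon$, proving density.
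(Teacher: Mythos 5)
Your overall architecture --- a register of size $d_x$ carrying a copy of the input, a $d_y$-wide accumulator built up neuron-by-neuron, scratch neurons for local evaluations, an approximate identity from $C^1$-differentiability of $\rho$ at a point, and a careful ordering of quantifiers to control accumulated error --- is precisely the register-model construction used in \cite{kidger2020deep}, so in that sense you have rediscovered the right architecture. (A minor bookkeeping remark: the accumulator channels also pass through $\rho$ at every layer, so they too must be refreshed via the approximate identity and their drift should enter the error budget, which as you have written it is already exhausted by the register drift plus the shallow error.)

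The fatal gap, though, is at the very first step, before the unrolling even begins. You assert that the hypotheses on $\rho$ --- nonaffine, continuous, $C^1$ at a point with nonzero derivative there --- force $\rho$ to be nonpolynomial, and use this to invoke the classical shallow universal approximation theorem. That implication is false: $\rho(x) = x^2$ is nonaffine, continuous, $C^1$ everywhere, and $\rho'(1) = 2 \neq 0$, so it satisfies every hypothesis of the theorem, yet it is a polynomial. For such $\rho$ the shallow theorem is genuinely false --- a one-hidden-layer network $\sum_i v_i (w_i \cdot x + b_i)^2$ is a polynomial of degree at most two, and degree-bounded polynomials are not dense in $C(K)$; more generally \cite{pinkus1999} shows shallow density holds if and only if $\rho$ is nonpolynomial. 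So the surrogate $\widehat{f}$ you propose to unroll does not exist and the construction has nothing to simulate. This is not a corner case to be waved away: establishing universality for nonaffine polynomial activations, where depth provides expressivity that width alone cannot, is the main extra content of the theorem beyond the shallow result. The proof in \cite{kidger2020deep} avoids the shallow theorem; it builds the needed nonlinearity directly from the nonaffineness of $\rho$ (a property $x^2$ does possess), showing the register machinery can approximate squaring and hence multiplication of stored values, and then closes with Stone--Weierstrass. Your shallow-UAT reduction is a valid shortcut only in the special case that $\rho$ happens to be nonpolynomial.
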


\section{Irregular time series}\index{Time series!Irregular}
Time series are often `messy' or `irregular'. Consider the space of $d$-dimensional irregularly-sampled time series
\begin{equation*}
	\set{((t_0, x_0), \ldots, (t_n, x_n))}{n \in \naturals, t_j \in \reals, x_j \in (\reals \cup \{*\})^d, t_j < t_{j + 1}},
\end{equation*}
where $*$ denotes the possibility of missing data.

A few practical things must be considered. (See also \cite{Che2018} for more discussion.)

\paragraph{Irregular sampling}\index{Irregular sampling}
The choice of points $t_j$ may not be the same for different time series in the dataset. The values of $t_j$ may be informative, and generally we should concatenate $(t_j, x_j)$ together before passing them to a model. (Sometimes the increments $(t_j - t_{j-1}, x_j)$ are used instead.)

\paragraph{Variable length}
The length $n$ may not be the same for different time series in the dataset. This can affect how easy it is to batch different time series together.

\paragraph{Missing data}
Each observation $x_j \in (\reals \cup \{*\})^d$ may have missing data.

Some texts suggest `imputing' missing data: that is, filling in any missing data in some sensible way prior to applying a model. Despite its popularity this is frequently the wrong thing to do: that the data was missing is information that has been lost. In general, whether the data was missing may itself be informative.

It is better to fix a vector space $V$ and an injective map $(\reals \cup \{*\})^d \to V$, and then apply this map to every $x_j$ prior to applying the model. That this is injective means no information is lost. That it maps into a vector space means that the result is in a form the model can use.

A frequent choice is $V = \reals^{2d}$ and $(z_1, \ldots, z_d) \mapsto (\phi(z_1), \ldots, \phi(z_d), \psi(z_1), \ldots, \psi(z_d))$, where
\begin{align*}
	\phi &\colon \reals \cup \{*\} \to \reals,\\
	\phi &\colon z \mapsto \begin{cases} 0 & \text{if $z = *$},\\ z & \text{if $z \in \reals$},\end{cases}
\end{align*}
\begin{align*}
	\psi &\colon \reals \cup \{*\} \to \reals,\\
	\psi &\colon z \mapsto \begin{cases} 0 & \text{if $z = *$},\\ 1 & \text{if $z \in \reals$}.\end{cases}
\end{align*}
The map $\psi$ is sometimes referred to as a `mask', `observational intensity', or similar.

\section{Miscellanea}
\paragraph{Maximum mean discrepancy}\label{appendix:deep:mmd}\index{Maximum mean discrepancy}
The maximum mean discrepancy (MMD) is a (pseudo)distance between probability distributions. Let $\mathcal{X}$ be some set and let $\phi \colon \mathcal{X} \to \reals^d$ be fixed. Let $\mathbb{P}$, $\mathbb{Q}$ be two probability distributions over $\mathcal{X}$. Then the MMD between $\mathbb{P}$ and $\mathbb{Q}$ is defined to be
\begin{equation}\label{eq:deep:mmd}
	d(\mathbb{P}, \mathbb{Q}) = \norm{\expect_{x \sim \mathbb{P}}[\phi(x)] - \expect_{x \sim \mathbb{Q}}[\phi(x)]}
\end{equation}
for any fixed choice of norm $\norm{\,\cdot\,}$ on $\reals^d$.

Like the KL divergence or the Wasserstein distance, this is a popular optimisation criterion when fitting generative models.

This may be extended from a pseudodistance (in which $d(\mathbb{P}, \mathbb{Q}) = 0$ need not imply $\mathbb{P}=\mathbb{Q}$) to a true distance by replacing $\reals^d$ with some infinite-dimensional Hilbert space.

A biased estimate of equation \eqref{eq:deep:mmd} may be obtained by estimating each expectation with $N$ Monte-Carlo samples. This takes $\bigO{N}$ work to evaluate. An unbiased estimate may be obtained by taking $\norm{\,\cdot\,} = \norm{\,\cdot\,}_2$, and squaring and expanding \eqref{eq:deep:mmd}. This produces nested expectations that take $\bigO{N^2}$ work to evaluate via Monte-Carlo.

MMDs were introduced in \cite{mmd}. The KID criterion for evaluating quality of generative image models is an example of an MMD \cite{bińkowski2018demystifying}.

\paragraph{The manifold hypothesis}\label{appendix:deep:manifold-hypothesis}\index{Manifold hypothesis}
Consider the dataset of all possible pictures of cats. (The `underlying' dataset from which in practice we observe some finite collection of samples.) For example each image may be a point in $\reals^{3 \times 32 \times 32}$, corresponding to (red, green, blue) channels and $32 \times 32$ pixels.

This is a very high-dimensional space, and it is clear that the majority of this space is of all kinds of pictures, other than of cats. Indeed most points in this space will resemble random noise. Our dataset covers only some small region of the overall space.

`The manifold hypothesis' is the informal statement that most datasets tend to behave in this way: that if you were to zoom out and squint at them, they would look a bit like a low-dimensional manifold embedded in this higher-dimensional space.

We could not find a good reference introducing the manifold hypothesis; it appears to simply be part of the folklore.

\paragraph{SiLU activation function}\index{SiLU}\index{Swish}
The SiLU activation function (also known as `swish') is defined as $x \mapsto x \sigma(x)$, where $\sigma$ is the sigmoid activation function. It is a popular activation function sometimes held to produce slightly better results than traditional options such as the ReLU \cite{gelu, silu, swish}.
\chapter{Neural Rough Differential Equations}\label{appendix:neural-rde}

This appendix follows from the introduction given in Section \ref{section:cde:long-time-series}; the material is from \cite{morrill2021neuralrough}. Here, we will apply neural CDEs to long time series, which are a regime in which both neural CDEs and RNNs tend to break down.

The key idea will be to solve a CDE by taking very large integration steps -- much larger than the sampling rate of the data -- whilst incorporating sub-step information through additional terms in the numerical solver, through what is known as the \textit{log-ODE method}.

A CDE treated in this way is termed a \textit{rough differential equation}, in the sense of rough path theory. Correspondingly we refer to this approach as either a \textit{neural rough differential equation}, or simply `the log-ODE method applied to neural CDEs'.\index{Rough!Differential equations}

\section{Background}
We begin with some necessary background on rough path theory.

\subsection{Signatures and logsignatures}\label{section:cde:sig-logsig}
\subsubsection{Signatures}\index{Signatures}
Let $x = (x_1, \ldots, x_{d_x}) \colon [a, b] \to \reals^{d_x}$ be continuous and of bounded variation. Weaker conditions may also be admitted, but this will suffice for our purposes here.% Typically we will take $x$ to be a piecewise linear interpolation of a time series.

Define the iterated Riemann--Stieltjes integrals
\begin{equation}\label{eq:subsignature}
	S^{k_1, \ldots, k_m}_{a, b}(x) = \underset{a < t_1 < \cdots < t_m < b}{\int \cdots \int} \dd x_{k_1}(t_1) \cdots \dd x_{k_m}(t_m) \in \reals,
\end{equation}
and, up to some maximal index $M \in \naturals$, put all such integrals together into a single object:
\begin{equation}
	\sig^M_{a, b}(x) = \left(1,
	\{
	S^{k}_{a, b}(x)
	\}_{k = 1}^d,
	\{
	S^{k_1, k_2}_{a, b}(x)
	\}_{k_1, k_2 = 1}^d,
	\ldots,
	\{
	S^{k_1, \ldots, k_M}_{a, b}(x)
	\}_{k_1, \ldots, k_M = 1}^d
	\right).\label{eq:signature}
\end{equation}
By convention $1 \in \reals$ is also included at the start.

Then $\sig^M_{a, b}(x)$ is known as the \textit{depth-$M$ signature transform of $x$}. (Or similar variations on this theme, like `$M$-step signature of $x$'.)% The signature is a collection of iterated integrals.

\subsubsection{Signatures as Taylor expansions}
Signatures are interesting because they appear in the Taylor expansion of a controlled differential equation. Let $y$ solve a CDE with vector field $f$, driven by $x$. Then in Einstein notation over indices $k_1, k_2, k_3, k_4$,
\begin{align}
	y_{k_1}(t) &= y_{k_1}(a) + \int_a^t f_{{k_1}, {k_2}}(y(s))\,\dd x_{k_2}(s)\nonumber\\
	&= y_{k_1}(a) + \int_a^t \Big(f_{{k_1}, {k_2}}(y(a)) \nonumber\\&\hspace{5em} + \frac{\partial f_{{k_1}, {k_2}}}{\partial y_{k_3}}(y(a)) (y_{k_3}(s) - y_{k_3}(a)) + \bigO{(t - a)^2} \Big) \dd x_{k_2}(s)\nonumber\allowdisplaybreaks\\[6pt]
	&= y_{k_1}(a) + f_{{k_1}, {k_2}}(y(a)) \int_a^t \dd x_{k_2}(s) \nonumber\\&\hspace{5em} + \frac{\partial f_{{k_1}, {k_2}}}{\partial y_{k_3}}(y(a))  \int_a^t (y_{k_3}(s) - y_{k_3}(a))\,\dd x_{k_2}(s) + \bigO{(t - a)^3}\nonumber\allowdisplaybreaks\\[6pt]
	&= y_{k_1}(a) + f_{{k_1}, {k_2}}(y(a)) \int_a^t \dd x_{k_2}(s) \nonumber\\&\hspace{5em} + \frac{\partial f_{{k_1}, {k_2}}}{\partial y_{k_3}}(y(a))  \int_a^t \int_a^s f_{k_3, k_4}(y(s))\,\dd x_{k_4}(u) \,\dd x_{k_2}(s) + \bigO{(t - a)^3}\nonumber\allowdisplaybreaks\\[6pt]
	&= y_{k_1}(a) + f_{{k_1}, {k_2}}(y(a)) \int_a^t \dd x_{k_2}(s) \nonumber\\&\hspace{5em} + \frac{\partial f_{{k_1}, {k_2}}}{\partial y_{k_3}}(y(a))  f_{k_3, k_4}(y(a))\int_a^t \int_a^s \,\dd x_{k_4}(u) \,\dd x_{k_2}(s) + \bigO{(t - a)^3}\nonumber\\[6pt]
	&= y_{k_1}(a) + f_{{k_1}, {k_2}}(y(a)) S^{k_2}_{a, t}(x) + \frac{\partial f_{{k_1}, {k_2}}}{\partial y_{k_3}}(y(a))  f_{k_3, k_4}(y(a)) S^{k_4, k_2}_{a, t}(x) + \bigO{(t - a)^3}.\label{eq:signature-as-taylor-expansion}
\end{align}
The right hand side is an affine combination of terms in the signature. If higher order terms had been taken in the Taylor expansions of $f$, then higher orders in the in the signature would have appeared on the right hand side.

%Additionally, note that the terms in the signature decay factorially: $S^{k_1, \ldots, k_m}_{a, b}(x) = \bigO{1/m!}$, as the integrals of \eqref{eq:subsignature} are over $m$-dimensional simplices of volume $1/m!$.

This property means that the signature may be used to produce a good approximation to the solution of the CDE. Intuitively, over small time scales, the signature extracts the information `most important' to solving the CDE.

\begin{remark}
	Independent of neural CDEs, there has also been a line of work investigating the use of the signature transform as a feature extractor for time series \cite{primer2016, bonnier2019deep, generalised-signatures}.
\end{remark}

\subsubsection{Logsignatures}\index{Logsignatures}
The signature has some redundancy. For example a little algebra shows that $S^{1, 2}_{a, b}(x) + S^{2, 1}_{a, b}(x) = S^1_{a, b}(x)S^2_{a, b}(x)$, so that we already know any one of these quantities given the other three.

\begin{definition}[Lyndon word]
	Let $q \in \naturals$ and consider some set $\mathcal{A} = \{a_1, \ldots, a_q\}$, which we refer to as an alphabet. A \emph{word} in this alphabet is any finite-length sequence of elements of $\mathcal{A}$, for example $a_2a_3a_1a_4$. A \emph{Lyndon word} is any word which occurs lexicographically strictly earlier than any word obtained by cyclically rotating its elements. For example, $a_2 a_2 a_3 a_4$ is a Lyndon word as it occurs strictly earlier than $a_2 a_3 a_4 a_2$ or $a_3 a_4 a_2 a_2$ or $a_4 a_2 a_2 a_3$, whilst $a_1 a_1$ is not a Lyndon word as it does not occur strictly earlier than $a_1 a_1$ (which is a rotation).
\end{definition}

The logsignature transform is obtained by computing the signature transform, and throwing out redundant terms to produce some minimal collection. This choice of minimal collection is nonunique. One computationally efficient choice is to retain precisely those terms $S^{k_1, \ldots, k_m}_{a, b}(x)$ for which $k_1 \cdots k_m$ is a Lyndon word over the alphabet $\{1,\ldots, d_x\}$. This is introduced in \cite{signatory} (and is confusingly a distinct notion from the `Lyndon basis', which is one of the other nonunique choices).

By fixing such a procedure -- via Lyndon words or otherwise -- we obtain the \textit{depth-$M$ logsignature transform of $x$}, denoted $\logsig^M_{a, b}(x) \in \reals^{\beta(d_x, M)}$, with
\begin{equation*}
\beta(d, M) = \sum_{k=1}^M \frac{1}{k} \sum_{j|k} \mu\left(\frac{k}{j}\right) d^j
\end{equation*}
where $\mu$ is the M{\"o}bius function.

See \cite{jeremy-logsig-pdf, iisignature} for further introduction to the logsignature transform.

\paragraph{Geometric interpretation}
The first two $m=1,2$ levels of the logsignature have geometric interpretations. The depth 1 terms are simply the increments of the path. The depth 2 is the signed (L{\'e}vy) area between the path and the chord joining its endpoints; equivalently this corresponds to a notion of order. See Figure \ref{fig:geometric-logsignature}.

Higher terms in the logsignature correspond to `repeated areas' and are not so easily visualised.

% Colors
\colorlet{lightgray}{gray!10}
% Path colors
\definecolor{colorp1}{HTML}{4878d0}
\definecolor{colorp2}{HTML}{ee854a}
\definecolor{colorp3}{HTML}{6acc64}
% Dyadic colors
\definecolor{colord1}{HTML}{003f5c}
\definecolor{colord2}{HTML}{bc5090}%bc5090
\definecolor{colord3}{HTML}{ff6361}
% Logsig
\definecolor{colorls1}{HTML}{66C2A5}
\definecolor{colorls2}{HTML}{8DA0CB}
\definecolor{colorls3}{HTML}{FC8D62}

\global\def\xs{{0.        , 0.13157895, 0.26315789, 0.39473684, 0.52631579,
       0.65789474, 0.78947368, 0.92105263, 1.05263158, 1.18421053,
       1.31578947, 1.44736842, 1.57894737, 1.71052632, 1.84210526,
       1.97368421, 2.10526316, 2.23684211, 2.36842105, 2.5}}
\global\def\ys{{0.        , 0.48565753, 0.84647908, 1.09613282, 1.24828692,
       1.31660956, 1.31476892, 1.25643315, 1.15527045, 1.02494897,
       0.8791369 , 0.73150241, 0.59571366, 0.48543884, 0.41434611,
       0.39610366, 0.44437965, 0.57284225, 0.79515964, 1.125}}
       
\global\def\plotrange{0, 1, 2, 4, 5, 8, 11, 12, 13, 15, 18, 19}

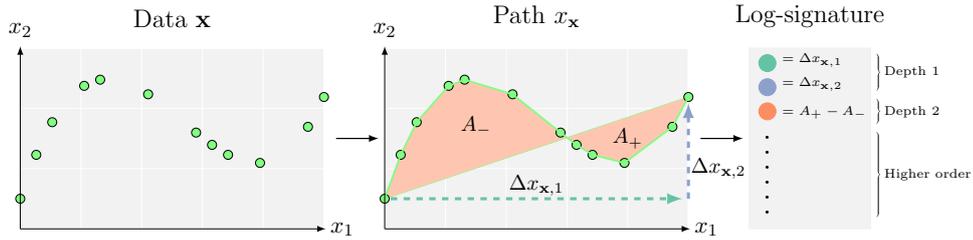
\begin{figure}
    \centering
    \resizebox{0.86\linewidth}{!}{
        \begin{tikzpicture}
            % Graph 1
            \draw[draw=white, fill=lightgray] (0, 0) grid (5, 3) rectangle (0, 0);

                \foreach \i in \plotrange {
                    \node at (\xs[\i] * 2, 0.5 + \ys[\i] * 1.5)[circle, draw=black, fill=green!50, inner sep=1.5pt] {};
                }
            % Axis and label
            \node at (2.5, 3) [above=2mm] {Data $\mathbf{x}$};
            \draw[black, ->] (0, 0) -- (0, 3.03);
            \draw[black, ->] (0, 0) -- (5.03, 0);
            \node at (5.3, 0) {\small $x_1$};
            \node at (0, 3.3) {\small $x_2$};
            
            % Graph arrow
            \draw[->, line width=0.2mm] (5.2, 1.5) -- (5.9, 1.5);
            
            % Graph 2
            \draw[draw=white, fill=lightgray] (6, 0) grid (11, 3) rectangle (6, 0);
            % \draw[thin, dashed, draw=gray!50!blue!30] (6, 0) grid (11, 3) rectangle (6, 0);
            \pgfmathsetmacro{\xprev}{6 + 0.05}
            \pgfmathsetmacro{\yprev}{0+0.5}
            % Geometric signature
            % Depth 1
            \draw[colorls1, line width=0.5mm, dashed, ->] (\xprev + 0.05, \yprev) -- (\xs[19] * 2 + 6 - 0.1, \yprev);
            \draw[colorls2, line width=0.5mm, dashed, ->] (\xs[19] * 2 + 6, \yprev) -- (\xs[19] * 2 + 6, 0.5 + \ys[19] * 1.5 - 0.1);
            \node at (6 + 2.5, \yprev - 0.1) [above] {\footnotesize $\Delta x_{\mathbf{x}, 1}$};
            \node at (\xs[19] * 2 + 6 - 0.1, 1)  [right] {\footnotesize $\Delta x_{\mathbf{x}, 2}$};
            % Depth 2
            % \draw[colord2, thick, dashed] (\xprev + 0.05, \yprev) -- (\xs[19] * 2 + 6, 0.5 + \ys[19] * 1.5);
            % Tempramental Curve filling
            % --------------------------------
            \draw[green!50, xshift=6cm, name path=one] plot coordinates {
                (0, 0.5 + 1.5 * 0) (2 * 0.13157894736842105, 0.5 + 1.5 * 0.48565753025222336) (2 * 0.2631578947368421, 0.5 + 1.5 * 0.846479078582884) (0.5263157894736842 * 2, 0.5 + 1.5 * 1.2482869222918795) (0.6578947368421052 * 2, 0.5 + 1.5 * 1.3166095640763957) (1.0526315789473684 * 2, 0.5 + 1.5 * 1.1552704475871118) (1.4473684210526314 * 2, 0.5 + 1.5 * 0.731502405598484) (1.5789473684210527 * 2, 0.5 + 1.5 * 0.5957136608835107) (1.7105263157894737 * 2, 0.5 + 1.5 * 0.4854388394809739) (1.9736842105263157 * 2, 0.5 + 1.5 * 0.39610365942557224) (2.3684210526315788 * 2, 0.5 + 1.5 * 0.7951596442630118) (2.5 * 2, 0.5 + 1.5 * 1.125)
            };
            \draw[green!50, xshift=6cm, name path=two] plot coordinates {
                (0, 0.5) (2.5 * 2, 0.5 + 1.5 * 1.125)
            };
            \tikzfillbetween[
                of=one and two, split
            ] {pattern=north west lines, colorls3!50};
            % --------------------------------
            \node at (7.1, 1.7) [right=0.1mm] {\footnotesize $A_{-}$};
            \node at (9.6, 1.5) [right=0.1mm] {\footnotesize $A_{+}$};
            
            % Data and path
            \foreach \i in \plotrange {
                \pgfmathsetmacro{\x}{\xs[\i] * 2 + 6}
                \pgfmathsetmacro{\y}{0.5 + \ys[\i] * 1.5}
                \node at (\x, \y)[circle, draw=black, fill=green!50, inner sep=1.5pt] {};
                % Path
                \draw[green!50, thick, cap=round] (\xprev, \yprev) -- (\x, \y);
                \global\let\yprev=\y
                \global\let\xprev=\x
            }
            % Axes and label
            \node at (6 + 2.5, 3) [above=2mm] {Path $x_\mathbf{x}$};
            \draw[black, ->] (6, 0) -- (6, 3.03);
            \draw[black, ->] (6, 0) -- (6 + 5.03, 0);
            \node at (6 + 5.3, 0) {\small $x_1$};
            \node at (6, 3.3) {\small $x_2$};
            
            % Graph arrow
            \draw[->, line width=0.2mm] (11.2, 1.5) -- (11.9, 1.5);
        
            % Signatures
            \pgfmathsetmacro{\shift}{0}
            \pgfmathsetmacro{\xend}{14}
            \draw[fill=lightgray, lightgray] (12 + \shift, 0) rectangle (\xend, 3);
            \filldraw[colorls1] (12.3 + \shift, 2.75) circle (4pt);
            \filldraw[colorls2] (12.3 + \shift, 2.35) circle (4pt);
            \filldraw[colorls3] (12.3 + \shift, 1.95) circle (4pt);
            \node at (12.4 + \shift, 2.8) [right] {\tiny $= \Delta x_{\mathbf{x}, 1}$};
            \node at (12.4 + \shift, 2.4) [right] {\tiny $= \Delta x_{\mathbf{x}, 2}$};
            \node at (12.4 + \shift, 1.93) [right] {\tiny $= A_{+} - A_{-}$};
            % Higher order
            \node[black, rotate=90] at (12.3 + \shift, 0.95) {\Large \ldots\ldots};
            % \node[black, rotate=90] at (12.3 + \shift, 0.7) {\Large \textbf{\ldots}};
            % Label
            \node at (12 + 1 + 0.5*\shift, 3) [above=2mm] {Log-signature};
            % Braces
            \draw [decorate, decoration={calligraphic brace, amplitude=1.5pt, mirror}] (\xend + 0.1, 2.3) -- node[midway, right, xshift=-0.5pt] {\tiny Depth 1} (\xend + 0.1, 2.9);
            \draw [decorate, decoration={calligraphic brace, amplitude=1.5pt, mirror}] (\xend + 0.1, 1.75) -- node[midway, right, xshift=-0.5pt] {\tiny Depth 2} (\xend + 0.1, 2.15);
            \draw [decorate, decoration={calligraphic brace, amplitude=1.5pt, mirror}] (\xend + 0.1, 0.2) -- node[midway, right, xshift=-0.5pt] {\tiny Higher order} (\xend + 0.1, 1.6);
        
        \end{tikzpicture}
    }
    \caption{Geometric intuition for the first two levels of the logsignature for a 2-dimensional path. The depth-1 terms correspond to the change in each of the coordinates over the interval. The depth-2 term corresponds to the \textit{L\'evy area} of the path, this being the signed area between the curve and the chord joining its start and endpoints.}
    \label{fig:geometric-logsignature}
\end{figure}

\paragraph{Order interpretation}
Consider just the region $A_-$ in Figure \ref{fig:geometric-logsignature}. Progressing from left to right, the green curve shown makes a large change in $x_2$, followed by a change in $x_1$, and correspondingly the initial part of this curve incurs a negative signed area $A_-$. If the order of these changes had been reversed then a positive signed area would have been accumulated instead. Likewise, applying the same procedure to clockwise and anticlockwise spirals would have produced areas of different signs.

As such (log)signatures -- including the higher order terms in (log)signatures -- encode information by capturing a notion of \textit{order of events}.

Now recall that the universal approximation theorem for CDEs (Theorem \ref{theorem-informal:informalunivapprox}) is proven by reducing CDEs to signatures, whilst Section \ref{section:cde:rnn-cde} reduces RNNs to CDEs. This brings us full circle, as RNNs are a model predicated upon the assumption that the order of inputs matter.

This appears to be the fundamental difference between RNNs and Transformers \cite{attention-is-all-you-need}. RNNs are predicated on assuming order is important to the data; Transformers are predicated on assuming that order is (mostly) unimportant. This is reflected in their typical use cases. RNNs are often preferred for time series, whilst Transformers are often preferred in natural language processing.

See also \cite{toth2021seqtens}, who make concrete some of these (relatively abstract) notions about order.

\subsection{The log-ODE method}\index{Log-ODE method}
Let $f \colon \reals^{d_y} \to \reals^{d_y \times d_x}$ be Lipschitz. Let $x \colon [0, T] \to \reals^{d_x}$ be of bounded variation. Let $y_0 \in \reals^{d_y}$. Consider $y \colon [0, T] \to \reals^{d_y}$ satisfying the CDE
\begin{equation*}
	y(0) = y_0, \qquad y(t) = y(0) + \int_0^t f(y(s)) \,\dd x(s)\quad\text{for $t \in (0, T]$}.
\end{equation*}

Then the log-ODE method states that for all $M \in \naturals$ there exists some $\widehat{f}_M \colon \reals^{d_y} \to \reals^{d_y \times \beta(d_x, M)}$ such that $\widehat{y}(T) \to y(T)$ as $M \to \infty$, where $\widehat{y}$ solves the ODE
\begin{equation}\label{eq:logode}
	\widehat{y}(0) = y_0,\qquad\widehat{y}(t) = \widehat{y}(0) + \int_0^t \widehat{f}_m(\widehat{y}(s))\frac{\logsig^M_{0, T}(x)}{T} \,\dd s.
\end{equation}
The right hand side denotes a matrix-vector product between $\widehat{f}_M(\widehat{y}(s)) \in \reals^{d_y \times \beta(d_x, M)}$ and $\logsig^M_{0, T}(x) \in \reals^{\beta(d_x, M)}$.

The exact form of $\widehat{f}_M$ is actually known, but is expensive to compute. For this reason we will soon circumvent the need for this computation.

\begin{example}\label{example:rde:depth-one}
	Consider when $M=1$. Here in fact $\widehat{f}_1 = f$. Suppose $T$ is small. Then equations \eqref{eq:subsignature} and \eqref{eq:signature} imply
	\begin{equation*}
		\frac{\logsig_{0, T}^1(x)}{T} = \frac{x(T) - x(0)}{T} \approx \frac{\dd x}{\dd t}(0).
	\end{equation*}
	As such \eqref{eq:logode} is an approximation to equation \eqref{eq:cde-to-ode}. If $x$ is piecewise linear then this approximation is exact, and the depth-1 log-ODE method is identical to equation \eqref{eq:cde-to-ode}.
\end{example}

\section{Neural vector fields}

We begin with the usual set-up for neural CDEs applied to potentially irregular time series, as in Sections \ref{section:cde:irregular} and \ref{section:cde:interpolation}.

We assume observations of a time series $\mathbf{x} = ((t_0, x_0), \ldots, (t_n, x_n))$ with $t_j \in \reals$ the timestamp for the observation $x_j \in (\reals \cup \{*\})^{d_x - 1}$, and $*$ denotes the possibility of missing data, and $t_0 < \cdots < t_n$.

Let $\mathbf{x} \mapsto x_\mathbf{x}$ be an interpolation scheme. We additionally require that each $x_\mathbf{x}$ be a continuous piecewise linear function. (So that either linear interpolation or rectilinear interpolation, see Section \ref{section:cde:interpolation}, would suffice.)

Let $f_\theta \colon \reals^{d_y} \to \reals^{d_y \times d_x}$ be any (Lipschitz) neural network depending on parameters $\theta$. The value $d_y \in \naturals$ is a hyperparameter describing the size of the hidden state. Let $\zeta_\theta \colon \reals^{d_x} \to \reals^{d_y}$ be any neural network depending on the parameters $\theta$.

A neural controlled differential equation was defined as the solution of the CDE
\begin{equation*}
	y(t) = y(0) + \int_{0}^t f_\theta(y(s)) \,\dd x(s)\quad\text{for $t \in (0, T]$},
\end{equation*}
where $y(0) = \zeta_\theta(x(0))$.

This was (typically) solved by reducing the CDE to an ODE, as in equations \eqref{eq:g-theta-x} and \eqref{eq:ncde-to-node}. We reproduce equations \eqref{eq:g-theta-x} and \eqref{eq:ncde-to-node} here: let
\begin{equation*}\tag{\ref{eq:g-theta-x} revisited}
	g_{\theta, x}(y, s) = f_\theta(y) \frac{\dd x}{\dd s}(s),
\end{equation*}
so that for $t \in (0, T]$,
\begin{align*}
	y(t) &= y(0) + \int_{0}^t f_\theta(y(s)) \,\dd x(s)\nonumber\\
	&= y(0) + \int_{0}^t f_\theta(y(s)) \frac{\dd x}{\dd s}(s) \,\dd s\nonumber\\
	&= y(0) + \int_{0}^t g_{\theta, x}(y(s), s) \,\dd s.\tag{\ref{eq:ncde-to-node} revisited}
\end{align*}

\subsection{Applying the log-ODE method}\index{Log-ODE method}

Pick points $r_j$ such that $t_0 = r_0 < r_1 < \cdots < r_m = t_n$. In principle these can be variably spaced but in practice we will typically space them equally far apart. The number of points $m$ should be chosen much smaller than $n$, that is to say $m \ll n$. The number and spacing of $r_j$ is a hyperparameter.

We also pick a logsignature depth hyperparameter $M \geq 1$.

We now replace \eqref{eq:g-theta-x} with the piecewise
\begin{equation}\label{eq:nrde}
	g_{\theta, x}(y, s) = f_\theta(y) \frac{\logsig^M_{r_j, r_{j + 1}}(x)}{r_{j + 1} - r_j}\quad\text{for $s\in [r_j, r_{j+1})$},
\end{equation}
where $f_\theta \colon \reals^{d_y} \to \reals^{d_y \times \beta(d_x, M)}$ is some neural network, $\logsig^M_{r_j, r_{j+1}}(x) \in \reals^{\beta(d_x, M)}$, and the right hand side denotes a matrix-vector product.

Equation \eqref{eq:ncde-to-node} remains unchanged:
\begin{equation*}
	y(t) = y(0) + \int_0^t g_{\theta, x}(y(s), s)\,\dd s.
\end{equation*}

This is an alternate method by which a CDE may be reduced to an ODE. This may now be solved as a (neural) ODE using standard ODE solvers.

An overview of this process is shown in Figure \ref{fig:nrde}.

\input{images/nrde.tex}

\begin{remark}
	These two approaches are intuitively similar. The quotient
	\begin{equation*}
		\frac{\logsig^M_{r_j, r_{j + 1}}(x)}{r_{j + 1} - r_j}
	\end{equation*}
	is roughly equivalent to the difference quotient
	\begin{equation*}
		\frac{x_\mathbf{x}(r_{j+1}) - x_\mathbf{x}(r_j)}{r_{j+1} - r_j} \approx \frac{\dd x_\mathbf{x}}{\dd t}(r_j).
	\end{equation*}
\end{remark}

\begin{remark}
	A continuous piecewise linear interpolation $x_\mathbf{x}$ is chosen as these are the only paths for which efficient algorithms for computing the (log)signature are known \cite{signatory}.
\end{remark}

\subsection{Discussion}
\paragraph{Length/channel trade-off}
The sequence of logsignatures is now of length $m \ll n$. As such, it is much more slowly varying over the interval $[t_0, t_n]$ than the original data, which was of length n. Correspondingly the differential equation \eqref{eq:nrde} is better-behaved than the original \eqref{eq:ncde-to-node}, and so larger integration steps may be used in the numerical solver. This is the source of the speed-ups of this method; we observe typical speed-ups by a factor of about ten.

\paragraph{Ease of implementation}
Note that \eqref{eq:nrde} is of precisely the same form as \eqref{eq:ncde-to-node}, with the driving path taken to be piecewise linear in logsignature space.

Correspondingly the log-ODE method may be implemented by preprocessing the data into logsignatures, calculated over each window $[r_j, r_{j + 1})$, interpolating the sequence of logsignatures into a piecewise linear path, and then solving a neural CDE as normal.

Every step in this procedure already exists as a software library, see \cite{torchcde}.

\paragraph{The log-ODE method as a binning procedure}
The interpretation of the previous heading draws an important connection to machine learning: the logsignature may be treated as a carefully-selected binning method, to reduce the amount of data considered whilst retaining the information most important for solving a CDE.

\paragraph{Modelling the vector field}
We avoided modelling some $f_\theta \colon \reals^{d_y} \to \reals^{d_y \times d_x}$ and then computing some `$\widehat{f_\theta}_M$', and instead modelled an $f_\theta \colon \reals^{d_y} \to \reals^{d_y \times \beta(d_x, N)}$ directly. Doing so avoids the computational expensive of computing some `$\widehat{f_\theta}_M$'.

\paragraph{Depth and step hyperparameters}
To solve a neural RDE accurately via the log-ODE method, we should be prepared to take the depth $M$ suitably large, or the intervals $r_{j+1} - r_j$ suitably small. In practice accomplishing this would require that these are taken infeasibly large or small, respectively. Instead, we treat these as hyperparameters. This makes the use of the log-ODE method a modelling choice rather than an implementation detail. This is a baked-in discretisation as in Section \ref{section:numerical:baked-in-discretisation}.

Increasing step size will lead to faster (but less informative) training by reducing the number of operations. Increasing depth will lead to slower (but more informative) training, as more information about each local interval is used in each update.

\subsection{Efficacy on long time series}
In principle the log-ODE method may be applied when applying neural CDEs in any context. However it is particularly helpful when applied to long time series.

\paragraph{Improved gradients}
It is a classical fact that it is relatively difficult to train RNNs (and, as they are of essentially the same character, neural CDEs) directly on long time series. During training RNNs may suffer from vanishing/exploding gradients, reducing overall model performance. See also Remark \ref{remark:cde:gru-ode}, which discusses the exponential decay of hidden state over time.

Reducing the length of the time series, as with the log-ODE method, is a simple and effective way to combat this.

\paragraph{Computational efficiency}
The sheer number of operations required to process a long time series implies a long computation time.

This is aggravated by the fact that the inherently serial nature of a neural CDE or RNN, working its way along the time series, is almost impossible to parallelise. (Section \ref{section:misc:multiple-shooting} notwithstanding.) In contrast a large number of channels in the time series is much less of an issue, due to the availability of parallelism.

Once again, reducing the length of the time series helps combat this. That this is performed via preprocessing is particularly beneficial for training: the computation of logsignatures need only be done once prior to training.% In contrast learnt length reduction techniques such as \cite{TODO, TODO} may result in effective final models, but still imply processing the full length of the data.

\paragraph{Memory efficiency}
Long time series consume substantial memory in order to perform backpropagation-through-time. As discussed in Section \ref{section:cde:training}, CDEs offer an attractive way to handle this through the availability of continuous adjoint methods, which consume only $\bigO{H + n}$ memory in the network size $H$ and the input length $n$. The log-ODE method further improves upon this by reducing the memory cost to $\bigO{H + m}$ with $m \ll n$.

\subsection{Limitations}
\paragraph{Number of hyperparameters} Two new hyperparameters -- truncation depth and step size -- with substantial effects on training time and memory usage must now also be tuned.

\paragraph{Number of input channels} The log-ODE method is most feasible with few input channels, as the number of log-signature channels $\beta(d, M)$ grows exponentially in $d$. For larger $d$ then the available parallelism may become saturated.

\section{Examples}\index{Examples!Neural RDEs}\label{appendix:rde:example}

\paragraph{Datasets}
We apply neural RDEs to three real-world datasets from the TSR archive \cite{MonashTSRegressionArchive}, coming originally from the Beth Israel Deaconess Medical Centre (BIDMC).

The goal is to predict a person's respiratory rate (RR), heart rate (HR), or oxygen saturation (SpO2) at the end of the sample, having observed photoplethysmography (PPG) and electrocardiogram (ECG) data over the length of the sample. The data is regularly sampled at 125Hz and each series has length 4\,000. There are 3 channels (including time). Performance is evaluated using the $L^2$ loss.

Every problem was chosen for its long length. The lengths are sufficiently long that optimise-then-discretise backpropagation (Section \ref{section:numerical:adjoint-cde-sde}) was needed simply to avoid running out of memory at any reasonable batch size.\footnote{Reversible solvers should/could also have been employed.}

\paragraph{Models}
The logsignature depth $M$ is varied over $\{2, 3\}$. ($M=1$ is identical to the standard neural CDE as per Example \ref{example:rde:depth-one}.) Likewise the number of observations within each interval $[r_j, r_{j+1}]$ were varied over $\{8, 128, 512\}$, which we refer to as the step size. In practice both depth and step size should be chosen as hyperparameters.

Two baseline models are also included. The first is a neural CDE; as the model we are extending then comparisons to this are our primary concern. A baseline against the ODE-RNN of \cite{latent-odes} is also included. For the neural CDE model, increased step sizes correspond to na{\"i}ve subsampling of the data. For the ODE-RNN model, the time dimension is instead folded into the feature dimension, so that at each step the ODE-RNN model sees several adjacent time points; this is an alternate technique for dealing with long time series.

\paragraph{Results}

\begin{table}
	\tiny
	\begin{center}
		\begin{tabular}{cccccccccc}
			\toprule
			\multirow{2}{*}{\textbf{Model}} &
			&
			\multirow{2}{*}{\begin{tabular}{c}\textbf{Step}\\\textbf{size}\end{tabular}} & \multicolumn{3}{c}{$\mathbf{L^2}$} & \multicolumn{3}{c}{\textbf{Time (Hrs)}} & \multirow{2}{*}{\textbf{Memory (Mb)}} \\
			\cmidrule(lr){4-6} \cmidrule(lr){7-9}
			& & & RR & HR & SpO$_2$ & RR & HR & SpO$_2$ & \\
			
			\midrule
			
			& & 1   &  -- & 13.06  $\pm$  0.0 & -- & -- & 10.5 & -- & 3653.0 \\
			ODE-RNN (folded) & \multirow{2}{*}{} & 8   & 2.47 $\pm$ 0.35 & 13.06 $\pm$ 0.00 & 3.3 $\pm$ 0.00 & 1.5 & 1.2 & 0.9 & 917.2 \\
			& & 128  &  1.62 $\pm$ 0.07 &  13.06 $\pm$ 0.00 &    3.3 $\pm$ 0.00 &           0.2 &           0.1 &           0.1 &               81.9 \\
			& & 512  &  1.66 $\pm$ 0.06 &   6.75 $\pm$ 0.9 &  1.98 $\pm$ 0.31 &           0.0 &           0.1 &           0.1 &               40.4 \\
			
			\hdashline\noalign{\vskip 0.5ex}
			
			& & 1   &  2.79 $\pm$ 0.04 &   9.82 $\pm$ 0.34 &  2.83 $\pm$ 0.27 &          23.8 &          22.1 &          28.1 &               56.5 \\
			\multirow{2}{*}{NCDE} & \multirow{2}{*}{} & 8   &   2.80 $\pm$ 0.06 &  10.72 $\pm$ 0.24 &  3.43 $\pm$ 0.17 &           3.0 &           2.6 &           4.8 &               14.3 \\
			&  & 128 &  2.64 $\pm$ 0.18 &  11.98 $\pm$ 0.37 &  2.86 $\pm$ 0.04 &           0.2 &           0.2 &           0.3 &                8.7 \\
			&  & 512 &  2.53 $\pm$ 0.03 &  12.22 $\pm$ 0.11 &  2.98 $\pm$ 0.04 &           0.1 &           0.0 &           0.1 &                8.4 \\
			
			\midrule
			& & 8   &  2.63 $\pm$ 0.12 &   8.63 $\pm$ 0.24 &  2.88 $\pm$ 0.15 &           2.1 &           3.4 &           3.3 &               21.8 \\
			NRDE (depth 2) &  & 128  &  1.86 $\pm$ 0.03 &   6.77 $\pm$ 0.42 &  1.95 $\pm$ 0.18 &           0.3 &           0.4 &           0.7 &               10.9 \\
			&  & 512 &  1.81 $\pm$ 0.02 &   5.05 $\pm$ 0.23 &  2.17 $\pm$ 0.18 &           0.1 &           0.2 &           0.4 &               10.3 \\
			\hdashline\noalign{\vskip 0.5ex}
			& & 8   &  \textbf{2.42 $\pm$ 0.19} &    \textbf{7.67 $\pm$ 0.40} &  \textbf{2.55 $\pm$ 0.13} &           2.9 &           3.2 &           3.1 &               43.3 \\
			NRDE (depth 3) &   & 128  &  \textbf{1.51 $\pm$ 0.08} &   \textbf{$\;$2.97 $\pm$ 0.45}$^*$ &  \textbf{1.37 $\pm$ 0.22} &           0.5 &           1.7 &           1.7 &               17.3 \\
			&  & 512  &  \textbf{$\;$1.49 $\pm$ 0.08}$^*$ &   \textbf{3.46 $\pm$ 0.13} &  \textbf{$\;$1.29 $\pm$ 0.15}$^*$ &           0.3 &           0.4 &           0.4 &               15.4 \\
			\bottomrule
		\end{tabular}
	\end{center}
	\caption{Experiments on the three BIDMC datasets. Mean $\pm$ standard deviation of test set $L^2$ loss, measured over three repeats, over each of three different vital signs prediction tasks (RR, HR, SpO$_2$). Also reported are memory usage and training time. Only mean times are shown for space. `--' denotes that the model could not be run within GPU memory. Bold denotes the best model score for a given step size, and $^*$ denotes that the score was the best achieved over all models and step sizes.}
	\label{table:rde:bidmc}
\end{table}

The results are shown in table \ref{table:rde:bidmc}.

We find that the depth-$3$ neural RDE is the top performer for every task at every step size, reducing test loss by $30$--$59\%$ compared to the corresponding neural CDE. Moreover, it does so with roughly an order of magnitude less training time. The ODE-RNN baseline produces poor results whilst requiring significantly more memory.

We attribute the improved test loss to the neural RDE model being better able to learn long-term dependencies due to the reduced sequence length: the performance of the rough models actually improves as the step size is increased.

Details of hyperparameter selection, optimisers, normalisation, and so on can be found in Appendix \ref{appendix:rde:experiment-details}. Additional results can be found in \cite[Appendix D]{morrill2021neuralrough}.

\section{Comments}
The log-ODE method is a classic approach in CDEs and rough path theory; see for example \cite[Section 7]{lyons2004}. A great many standard numerical SDE solvers -- such as the Euler--Maruyama method or Heun's method -- are obtained as ODE solvers applied to the depth-1 log-ODE method. Higher-order numerical SDE solvers -- such as Milstein's method -- are almost (but not exactly) equivalent to applying an ODE solver to the depth-2 log-ODE method.

An excellent brief introduction to signatures and logsignatures are provided by \cite{jeremy-logsig-pdf, iisignature}. An efficient computational implementation of signatures and logsignatures is provided by \cite{signatory}.

Neural rough differential equations were introduced in \cite{morrill2021neuralrough}, which is also where their application to long time series was considered. In particular see the appendices of \cite{morrill2021neuralrough} (and references therein) for further mathematical details beyond those presented here, such as the convergence of the log-ODE method.
\chapter{Proofs and Algorithms}

\section{Augmented neural ODEs are universal approximators even when their vector fields are not universal approximators}\label{appendix:ode-universal-augmented}\index{Universal approximation!ODEs}

Recall Theorem \ref{theorem:ode:univ-approx-wide}.

\myunivapprox*

\begin{proof}	
	Given some $x \in \reals^d$, consider the system of ODEs $y_0, \ldots, y_M$ solving
	\begingroup
	\allowdisplaybreaks
	\begin{align}
		y_0(0) = x \in \reals^d \quad& \frac{\dd y_0}{\dd t}(t) = 0,\nonumber\\
		y_1(0) = 0 \in \reals^{d \times d} \quad& \frac{\dd y_1}{\dd t}(t) = y_0(t) \otimes y_0(t),\nonumber\\
		y_2(0) = 0 \in \reals^{d \times d \times d} \quad& \frac{\dd y_2}{\dd t}(t) = y_1(t) \otimes y_0(t),\nonumber\\
		y_3(0) = 0 \in \reals^{d \times d \times d \times d} \quad& \frac{\dd y_3}{\dd t}(t) = y_2(t) \otimes y_0(t),\nonumber\\
		\cdots&\nonumber\\
		y_M(0) = 0 \in \reals^{d \times \cdots \times d} \quad& \frac{\dd y_M}{\dd t}(t) = y_{M-1}(t) \otimes y_0(t).\label{eq:proofs:_univ-approx-wide}
	\end{align}
	\endgroup

	The solution may be written down immediately:
	\begin{align*}
		y_0(t) &= x,\\
		y_1(t) &= \int_0^t x \otimes x \,\dd s = t x \otimes x,\\
		y_2(t) &= \int_0^t s x \otimes x \,\dd s = \frac{1}{2} t^2 x^{\otimes 3},\\
		y_3(t) &= \int_0^t \frac{1}{2} s^2 x^{\otimes 3} \otimes x \,\dd s = \frac{1}{6} t^3 x^{\otimes 4},\\
		\cdots&\\
		y_M(t) &= \frac{1}{M!} t^M x^{\otimes (M + 1)}.
	\end{align*}

	Evaluating at $t=1$ we obtain the collection of all (scaled) monomials in $x \in \reals^d$ up to degree $M + 1$, namely
	\begin{equation*}
		\left(x, x^{\otimes x}, \frac{1}{2!} x^{\otimes 3}, \frac{1}{3!} x^{\otimes 4}, \ldots, \frac{1}{M!} x^{\otimes (M+1)}\right).
	\end{equation*}

	The Stone--Weierstrass theorem states that polynomials are dense in the space of continuous functions $C(K; \reals^{d_o})$. Thus for any target $F \in C(K; \reals^{d_o})$ and $\varepsilon > 0$, there exists some $M \in \naturals$ large enough, and some affine map combining these monomials to form a polynomial $P$, such that
	\begin{equation*}
		\norm{F - P}_{\infty, K} < \infty.
	\end{equation*}

	The result is now proved. For each $d_l = \sum_{k=1}^{M+1} d^k$ let $f_{d_l}$ be the vector field specified in equation \eqref{eq:proofs:_univ-approx-wide}. Let each $\ell_1$ be the affine map augmenting $x$ with sufficient zeros for the initial condition. Let $\ell_2$ be the affine map transforming the monomials to form any given polynomial.
\end{proof}

\subsection{Comments}
It is perhaps a little questionable whether the construction shown here is truly a `neural ODE'. The only learnt parameters are in the final affine $\ell_2$. More subtly, the equation of \eqref{eq:proofs:_univ-approx-wide} are questionably ODEs: the vector field for each $y_k$ does not depend on $y_k$ (only $y_{k-1}$ and $y_0$), and is thus `only' an integral.
	
On the other hand, this is still essentially the same argument for universal approximation as for wide neural networks (\cite{pinkus1999}) or a Fourier series -- that is, a linear combination of enough terms -- so perhaps we should not complain.

This result is actually a special case of the universal approximation theorem for CDEs (Appendix \ref{appendix:cde:universal-approximation}). Given the input $x \in \reals^d$, define the continuous path $z \colon [0, T] \to \reals^d$ by $z(t) = tx$. Then the proof here is just a simplification and particular application of that result.

\section{Theoretical properties of neural CDEs}
\subsection{Neural CDEs are universal approximators}\label{appendix:cde:universal-approximation}\index{Universal approximation!CDEs}

We begin with universal approximation of CDEs with respect to continuous paths $X$. We then show how to extend this to universal approximation with respect to time series, in a generic way independent of the choice of interpolation, by requiring that the interpolation satisfy certain conditions.

\subsubsection{Universal approximation with respect to paths}
\begin{definition}
	Let $T > 0$ and let $d \in \naturals$. Let	
		\begin{equation*}
		\mathcal{V}^1([0, T]; \reals^d) = C([0, T]; \reals^d) \cap \BV([0, T]; \reals^d)
		\end{equation*}
	represent the space of continuous functions of bounded variation. Equip this space with the norm
	\begin{equation*}
		x \mapsto \norm{x}_{\mathcal{V}} = \norm{x}_{\infty} + \abs{X}_{\BV}.
	\end{equation*}
\end{definition}

\begin{remark}
This is a somewhat unusual norm to use, as bounded variation seminorms are more closely aligned with $L^1$ norms than $L^\infty$ norms.
\end{remark}

\begin{definition}
	Let $\mathcal{V}^1_0([0, T]; \reals^{d}) = \set{x \in \mathcal{V}^1([0, T]; \reals^{d})}{x(0) = 0}$.
\end{definition}

\begin{definition}\label{definition:cde-solution}
	For $f \in \Lip(\reals^{d_y}; \reals^{d_y \times d_x})$, $\zeta \in C(\reals^{d_x}; \reals^{d_y})$, $x \in \mathcal{V}^1([0, T]; \reals^{d_x})$, let $y_{f, \zeta, x} \colon [0, T] \to \reals^{d_y}$ denote the unique solution to the CDE
	\begin{equation*}
		y_{f, \zeta, x}(t) = y_{f, \zeta, x}(0) + \int_0^t f(y_{f, \zeta, x}(s)) \,\dd x(s)\quad\text{for $t \in (0, T]$,}
	\end{equation*}
	with $y_{f, \zeta, x}(0) = \zeta(x(0))$.
\end{definition}

\begin{definition}
	For any $d, M \in \naturals$, let $\kappa(d, M) = \sum_{i = 0}^M d^i$.
\end{definition}

\begin{definition}[Signature transform]\label{definition:signature}\index{Signatures}
	Let $x \in \mathcal{V}^1([0, T]; \reals^{d})$. Define the iterated Riemann--Stieltjes integrals
	\begin{equation*}
		S^{k_1, \ldots, k_m}_{a, b}(x) = \underset{a < t_1 < \cdots < t_m < b}{\int \cdots \int} \dd x_{k_1}(t_1) \cdots \dd x_{k_m}(t_m) \in \reals.
	\end{equation*}

	Let $M \in \naturals$. Put all such integrals, up to maximal index $M$, together into a single object:
	\begin{equation}
		\sig^M_{a, b}(x) = \left(1,
		\{
		S^{k}_{a, b}(x)
		\}_{k = 1}^d,
		\{
		S^{k_1, k_2}_{a, b}(x)
		\}_{k_1, k_2 = 1}^d,
		\ldots,
		\{
		S^{k_1, \ldots, k_M}_{a, b}(x)
		\}_{k_1, \ldots, k_M = 1}^d
		\right).\label{eq:proofs:_signature}
	\end{equation}
	By convention $1 \in \reals$ is also included at the start. Then $\sig^M_{a, b}(x)$ is known as the depth-$M$ signature transform of $x$.
	
	It is immediate from the definition that each term in the signature satisfies
	\begin{equation*}
		S^{k_1, \ldots, k_m}_{a, t}(x) = \int_0^t S^{k_1, \ldots, k_{m-1}}_{a, s}(x) \,\dd x_{k_m}(s).
	\end{equation*}
	By stacking all such equations together it is clear that there exists some
	\begin{equation*}
		f \in L(\reals^{\kappa(d, M)}; \reals^{\kappa(d, M) \times d_x})
	\end{equation*}
	such that $\sig^M_{a, \cdot}(x)$ satisfies the CDE
	\begin{equation*}
		\sig^M_{a, t}(x) = (1, 0, \ldots, 0) + \int_a^t f(\sig^M_{a, t}(x)) \,\dd x(t).
	\end{equation*}
\end{definition}

\begin{definition}\label{definition:uniqueness-of-signatures}
	Let $K \subseteq \mathcal{V}^1([0, T]; \reals^{d})$. We say that $K$ has \textit{uniqueness of signatures} if for all $x, \widehat{x} \in K$ with $x(0) = \widehat{x}(0)$, there exists $M \in \naturals$ such that $\sig^M_{0, T}(x) \neq \sig^M_{0, T}(\widehat{x})$.
\end{definition}

Practically speaking uniqueness of signatures is most easily obtained through the following lemma.

\begin{lemma}\label{lemma:cde:monotone-sig}
	For any $K \subseteq \mathcal{V}^1([0, T]; \reals^{d - 1})$, then
	\begin{equation*}
		K' = \set{(t \mapsto (t, x(t))) \in \mathcal{V}^1([0, T]; \reals^{d})}{x \in K}
	\end{equation*}
	has uniqueness of signatures.
\end{lemma}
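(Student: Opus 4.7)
The plan is to prove the contrapositive. I will show that if $z(t) = (t, x(t))$ and $\widehat{z}(t) = (t, \widehat{x}(t))$ both lie in $K'$, satisfy $z(0) = \widehat{z}(0)$ (equivalently $x(0) = \widehat{x}(0)$), and obey $\sig^M_{0, T}(z) = \sig^M_{0, T}(\widehat{z})$ for every $M \in \naturals$, then $x = \widehat{x}$; consequently distinct paths must be separated by some level of the signature.

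First I would isolate, for each spatial coordinate $k \in \{2, \ldots, d\}$, the subfamily of signature terms of the form $S^{1, 1, \ldots, 1, k}_{0, T}(z)$, with the time index $1$ repeated $m-1$ times followed by the single spatial index $k$. Since $dz_1(t) = dt$ and $dz_k(t) = dx_{k-1}(t)$, the nested simplex integrals in the time variables collapse in closed form, yielding
\begin{equation*}
S^{1, \ldots, 1, k}_{0, T}(z) = \int_0^T \frac{t^{m-1}}{(m-1)!} \, dx_{k-1}(t).
\end{equation*}
This is the moment of the finite signed Borel measure $dx_{k-1}$ against the monomial $t^{m-1}$. Matching signatures therefore forces $\int_0^T t^{m-1}\, dx_{k-1} = \int_0^T t^{m-1}\, d\widehat{x}_{k-1}$ for every $m \in \naturals$, and analogously for $\widehat{z}$.

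Next I would promote these equalities to full measure equality. By the Weierstrass approximation theorem, polynomials are dense in $C([0, T])$, so the integrals of $dx_{k-1}$ and $d\widehat{x}_{k-1}$ against every continuous test function agree; the Riesz--Markov representation theorem then yields $dx_{k-1} = d\widehat{x}_{k-1}$ as signed Borel measures on $[0, T]$. Combining this with the shared initial condition $x_{k-1}(0) = \widehat{x}_{k-1}(0)$ and continuity of the BV paths recovers $x_{k-1}(t) = x_{k-1}(0) + dx_{k-1}([0, t]) = \widehat{x}_{k-1}(t)$ on $[0, T]$. As $k$ ranges over all spatial coordinates, $x = \widehat{x}$, which is the desired contradiction.

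The only computational subtlety is the closed-form evaluation of the iterated simplex integral in the time variable, which is a standard fact; the remainder is elementary real analysis on $C([0, T])$. The deeper Hambly--Lyons tree-like uniqueness theorem could also be invoked as an alternative route---observing that a tree-like bounded variation path cannot have a strictly monotone first coordinate unless it is constant, so the concatenation $z \star \widehat{z}^{-1}$ forces $z = \widehat{z}$---but the direct moment argument sketched above avoids that machinery entirely and self-contained.
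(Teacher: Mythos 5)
Your proof is correct and follows essentially the same route as the paper's: both isolate the signature terms consisting of a block of repeated time indices followed by a single spatial index, evaluate these iterated integrals in closed form to obtain the monomial moments $\int_0^T t^{m-1}\,\dd x_{k-1}(t)/(m-1)!$, and then pass through Weierstrass approximation to deduce that the signature determines $\int_0^T g(t)\,\dd x_{k-1}(t)$ for every $g \in C([0,T])$. The only deviation is the final step: the paper recovers the increments $x_{k-1}(s) - x_{k-1}(0)$ directly by testing against an explicit sequence of piecewise-linear approximations to indicator functions, whereas you appeal to the Riesz--Markov representation theorem to conclude $\dd x_{k-1} = \dd\widehat{x}_{k-1}$ as signed Borel measures; these are two phrasings of the same fact, yours a little more abstract and the paper's a little more hands-on. (One small caution on your parenthetical aside: the concatenation $z \star \widehat{z}^{-1}$ does \emph{not} have a strictly monotone first coordinate, so that sketch of the Hambly--Lyons alternative would need to be reformulated in terms of both $z$ and $\widehat{z}$ being reduced paths with matching endpoints -- but since you correctly set that route aside, it does not affect your proof.)
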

\begin{proof}
	Without loss of generality assume $d=2$, as we will treat each channel of $K$ separately.
	
	Fix $x \in K$ with corresponding element $x' \in K'$. The (arbitrary depth) signature of $x'$ over $[0, T]$ contains all terms of the form
	\begin{equation}\label{eq:proofs:_sig-monomial}
		\int_0^T \underset{0 < t_1 < \cdots < t_m < t}{\int \cdots \int} \dd t_1 \cdots \dd t_m \dd x(t) = \int_0^T \frac{1}{m!}t^m \,\dd x(t).
	\end{equation}

	Fix $g \in C([0, T])$. Let $p_n$ be some sequence of polynomials for which $p_n \to g$ uniformly over $[0, T]$, which exist by the Weierstrass Approximation Theorem. Then $\int_0^T p_n(t) \,\dd x(t) \to \int_0^T g(t)\,\dd x(t)$ \cite[Proposition 2.8]{FrizVictoir10}. By \eqref{eq:proofs:_sig-monomial} all $\int_0^T p_n(t) \,\dd x(t)$ are determined by the signature of $x'$, and so for all $g \in C([0, T])$ we have that $\int_0^T g(t)\,\dd x(t)$ is determined by the signature of $x'$.
	
	Fix $\varepsilon > 0$ and $s \in [0, T - \varepsilon]$ and consider specifically $g_{s, \varepsilon} \in C([0, T])$ defined by
	\begin{equation*}
		g_{s, \varepsilon}(t) = \begin{cases}1 & t \in [0, s), \\ \frac{1}{\varepsilon}(s + \varepsilon - t) & t \in [s, s + \varepsilon), \\ 0 & t \in [s + \varepsilon, T]. \end{cases}
	\end{equation*}
	Then $\int_0^T g_{s, \varepsilon}(t)\,\dd x(t) = x(s) - x(0) + \bigO{\varepsilon}$. Letting $\varepsilon \to 0$ we have that every increment $x(s) - x(0)$ is determined by the signature of $x'$.
\end{proof}

\begin{remark}
	This fact is the fundamental reason that time is included as a channel in Section \ref{section:cde:inclusion-of-time}.
\end{remark}

\begin{remark}
\cite{hambly2010uniqueness} give a precise characterisation of this property, which is that all $x, \widehat{x}$ must lie in different equivalence classes with respect to `tree-like equivalence'.
\end{remark}

With these definitions out of the way, we are ready to state the famous universal nonlinearity property of the signature transform. We think \cite[Theorem 4.2]{perezarribas2018} gives the most straightforward proof of this result. This essentially states that the signature gives a basis for the space of functions on compact path space.
\begin{theorem}[Universal nonlinearity]\label{theorem:universal-nonlinearity}
	Let $T > 0$ and let $d_x, d_o \in \naturals$. Let $K \subseteq \mathcal{V}^1_0([0, T]; \reals^{d_x})$ be compact and have uniqueness of signatures.
	
	Then
	\begin{equation*}
		\bigcup_{M \in \naturals} \set{x \mapsto \ell(\sig^M_{0, T}(x))}{\ell \in L(\reals^{\kappa(d, M)}; \reals^{d_o})}
	\end{equation*}
	is dense in $C(K; \reals^{d_o})$.
\end{theorem}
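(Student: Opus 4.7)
The plan is to apply the Stone--Weierstrass theorem to the algebra of linear functionals on the signature. First I would reduce to the scalar case $d_o = 1$ by treating each output coordinate separately, since a product of dense subsets in $C(K;\reals)^{d_o}$ yields a dense subset of $C(K;\reals^{d_o})$. Define
\begin{equation*}
    \mathcal{A} = \bigcup_{M \in \naturals} \set{x \mapsto \ell(\sig^M_{0,T}(x))}{\ell \in L(\reals^{\kappa(d_x, M)}; \reals)},
\end{equation*}
viewed as a subset of $C(K;\reals)$ (continuity follows from the continuity of the signature map $\sig^M_{0,T}\colon \mathcal{V}^1([0,T];\reals^{d_x}) \to \reals^{\kappa(d_x,M)}$ with respect to the $\mathcal{V}$-norm, which is standard).

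Next I would verify each Stone--Weierstrass hypothesis in turn. Constants lie in $\mathcal{A}$ because $\sig^M_{0,T}(x)$ always has a leading $1$ entry, so the linear functional projecting onto that coordinate and scaling gives any constant. Point separation is precisely the assumption of uniqueness of signatures: given distinct $x,\widehat{x}\in K$, either $x(0)\neq\widehat{x}(0)$ (separated by a depth-1 term together with a constant correction, since increments $x(T)-x(0)$ are recorded but we need values; however we can sidestep this by working modulo the initial point, or by noting that $K$ compact plus uniqueness gives some depth at which the full signatures differ), or $x(0)=\widehat{x}(0)$ and then by assumption some $\sig^M_{0,T}(x) \neq \sig^M_{0,T}(\widehat{x})$, whence a coordinate projection separates them.

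The key nontrivial step, and where the main work lies, is showing that $\mathcal{A}$ is closed under pointwise multiplication --- that is, that $\mathcal{A}$ is a subalgebra. This is the \emph{shuffle product identity} for iterated integrals: for any multi-indices $I=(k_1,\ldots,k_m)$ and $J=(j_1,\ldots,j_n)$,
\begin{equation*}
    S^{I}_{0,T}(x)\, S^{J}_{0,T}(x) = \sum_{K \in I \shuffle J} S^{K}_{0,T}(x),
\end{equation*}
where $I \shuffle J$ denotes the set of all interleavings of $I$ and $J$. This identity (a standard consequence of Fubini applied to the product of two iterated integrals over the simplex) shows that the product of two signature coordinates is a finite linear combination of higher-depth signature coordinates, so the product of two elements of $\mathcal{A}$ (at depths $M_1,M_2$) lies in $\mathcal{A}$ (at depth $M_1+M_2$). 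I expect this is the main obstacle in the sense that it requires invoking non-trivial algebraic structure, though the identity itself is classical.

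Finally I would apply Stone--Weierstrass: $\mathcal{A}$ is a point-separating subalgebra of $C(K;\reals)$ containing the constants, hence dense. Taking products across the $d_o$ output coordinates closes the argument for general $d_o$.
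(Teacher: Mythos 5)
Your proposal is correct and is the standard Stone--Weierstrass plus shuffle-product argument that the paper defers to by citation (it refers the reader to \cite{perezarribas2018} rather than giving its own proof); the three Stone--Weierstrass hypotheses you verify --- constants via the leading $1$ entry, point separation via uniqueness of signatures, closure under multiplication via the shuffle identity --- are exactly the right ones, and the shuffle identity is indeed where the algebraic content lies. One place you hedge unnecessarily: the hypothesis $K \subseteq \mathcal{V}^1_0([0,T];\reals^{d_x})$ forces every path in $K$ to start at $0$, so the case $x(0) \neq \widehat{x}(0)$ you worry about simply never occurs; every distinct pair trivially satisfies $x(0) = \widehat{x}(0) = 0$, and the uniqueness-of-signatures hypothesis then supplies a separating signature coordinate directly, so the digression about increments versus values and ``working modulo the initial point'' can be deleted. (Also minor, but worth being precise about: to pass from $d_o = 1$ to general $d_o$ you approximate each component $f_i$ by some $\ell_i \circ \sig^{M_i}_{0,T}$, take $M = \max_i M_i$, use that the depth-$M_i$ signature is a coordinate projection of the depth-$M$ one, and stack the $\ell_i$ into a single $\ell \in L(\reals^{\kappa(d_x,M)};\reals^{d_o})$; ``product of dense subsets'' elides the padding step needed to land in the stated class.)
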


With the universal nonlinearity property, we can now prove universal approximation of CDEs with respect to controlling paths $x$.
\begin{theorem}[Universal approximation with CDEs]\label{theorem:pathwise-universal-approximation}
	Let $T > 0$ and let $d_x, d_o \in \naturals$. Let $K \subseteq \mathcal{V}^1([0, T]; \reals^{d_x})$ be compact and have uniqueness of signatures.
	
	Then
	\begin{equation*}
		\bigcup_{d_y \in \naturals} \set{x \mapsto \ell(z_{f, \zeta, x}(T))}{f \in \Lip(\reals^{d_y}; \reals^{d_y \times d_x}), \zeta \in C(\reals^{d_x}; \reals^{d_y}), \ell \in L(\reals^{d_y}; \reals^{d_o})}
	\end{equation*}
	is dense in $C(K; \reals^{d_o})$, where $z_{f, \zeta, x}$ is as defined in Definition \ref{definition:cde-solution}.
\end{theorem}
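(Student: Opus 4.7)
The plan is to leverage the universal nonlinearity of the signature (Theorem \ref{theorem:universal-nonlinearity}) combined with the observation that the signature itself satisfies a linear CDE (as noted at the end of Definition \ref{definition:signature}). Since Theorem \ref{theorem:universal-nonlinearity} demands paths starting at $0$, the first task is to lift paths in $K$ to a suitable compact set $\iota(K) \subseteq \mathcal{V}^1_0([0, 2T]; \reals^{d_x})$ that encodes the missing initial value $x(0)$. The second task is to recognise that the resulting linear-in-signature approximation may be realised as $\ell \circ y_{f, \zeta, x}(T)$ for an appropriate choice of $(f, \zeta, \ell)$.

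Concretely, for each $x \in K$ I would define the concatenated path
\begin{equation*}
    \iota(x)(t) = \begin{cases} \tfrac{t}{T}\, x(0) & t \in [0, T],\\ x(t-T) & t \in [T, 2T], \end{cases}
\end{equation*}
so that $\iota(x)(0) = 0$ and $\iota \colon K \to \mathcal{V}^1_0([0, 2T]; \reals^{d_x})$ is a continuous injection. Hence $\iota(K)$ is compact, and $F \in C(K; \reals^{d_o})$ pulls back to a continuous $\widetilde{F} \in C(\iota(K); \reals^{d_o})$. The main technical step is to verify that $\iota(K)$ has uniqueness of signatures in the sense of Definition \ref{definition:uniqueness-of-signatures}. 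I would argue this via Chen's identity, $\sig^M_{0, 2T}(\iota(x)) = \sig^M_{0, T}(\iota(x)|_{[0,T]}) \otimes \sig^M_{T, 2T}(\iota(x)|_{[T, 2T]})$, together with the fact that tensor concatenation by a group-like element is injective on group-like elements. For $x(0) \neq x'(0)$ the depth-$1$ terms of the ramp signatures already differ; for $x(0) = x'(0)$ but $x \neq x'$ the ramp signatures coincide and the right factors differ at some depth by the assumed uniqueness of signatures on $K$. This signature-algebra lemma is the step I expect to need the most care.

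Granted uniqueness of signatures on $\iota(K)$, Theorem \ref{theorem:universal-nonlinearity} supplies, for any $\varepsilon > 0$, some $M \in \naturals$ and $\ell \in L(\reals^{\kappa(d_x, M)}; \reals^{d_o})$ with $\sup_{x \in K} \| \ell(\sig^M_{0, 2T}(\iota(x))) - F(x) \| < \varepsilon$. Factoring via Chen's identity gives $\sig^M_{0, 2T}(\iota(x)) = \phi(x(0)) \otimes \sig^M_{0, T}(x)$, where $\phi(a)$ is the (truncated) tensor exponential of the straight line from $0$ to $a$, which is polynomial in $a$ and therefore continuous.

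Finally, take $d_y = \kappa(d_x, M)$ and let $f \in L(\reals^{d_y}; \reals^{d_y \times d_x})$ be the linear vector field from Definition \ref{definition:signature}, together with $\zeta = \phi \in C(\reals^{d_x}; \reals^{d_y})$. By direct inspection of the recursion $S^{k_1, \ldots, k_m}_{a, t} = \int_a^t S^{k_1, \ldots, k_{m-1}}_{a, s}\,\dd x_{k_m}(s)$, the CDE from Definition \ref{definition:cde-solution} with this data is solved pointwise by $y_{f, \zeta, x}(t) = \zeta(x(0)) \otimes \sig^M_{0, t}(x)$; both sides agree at $t = 0$ and satisfy the same linear CDE driven by $x$, so uniqueness gives equality on $[0, T]$. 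Evaluating at $t = T$ yields $y_{f, \zeta, x}(T) = \sig^M_{0, 2T}(\iota(x))$, and applying the linear readout $\ell$ from the previous paragraph completes the density claim, exhibiting $F$ as the uniform limit of maps of the required form.
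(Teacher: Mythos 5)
Your proposal is correct and follows essentially the same route as the paper's own proof: prepend a straight-line ramp encoding $x(0)$ so that paths start at $0$, invoke the universal nonlinearity of the signature, and realise the resulting linear functional of the signature as the readout of a linear CDE whose initial condition is (the truncated tensor exponential) $\phi(x(0))$. The only differences are cosmetic (you ramp over $[0,T]$ and shift the data to $[T,2T]$; the paper ramps over $[-1,0]$) and that you make the Chen-identity factorisation and the need to re-verify uniqueness of signatures on the lifted set explicit, whereas the paper passes over the latter without comment.
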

\begin{proof}
	We begin by prepending a straight line segment to every element of $K$. For every $x \in K$, define $x^* \colon [-1, T] \to \reals^{d_x}$ by
	\begin{equation*}
		x^*(t) = \begin{cases}
			(t + 1)x(0) & t \in [-1 , 0),\\
			x(t) & t \in [0, T].
		\end{cases}
	\end{equation*}
	
	Then $K^* = \set{x^*}{x \in K} \subseteq \mathcal{V}^1_0([-1, T]; \reals^{d_x})$ is compact. By Theorem \ref{theorem:universal-nonlinearity},
	\begin{equation*}
		\bigcup_{M \in \naturals} \set{x^* \mapsto \ell(\sig^m_{-1, T}(x^*))}{\ell \in L(\reals^{\kappa(d, M)}; \reals^{d_o})}
	\end{equation*}
	is dense in $C(K^*; \reals^{d_o})$.
	
	So let $\alpha \in C(K; \reals^{d_o})$ and $\varepsilon > 0$. The map $x \mapsto x^*$ is a homeomorphism, so we may find $\beta \in C(K^*; \reals^{d_o})$ such that $\beta(x^*) = \alpha(x)$ for all $x \in K$. We have just established there exists some $M \in \naturals$ and $\ell \in L(\reals^{\kappa(d, M)}; \reals^{d_o})$ such that $\gamma$ defined by $\gamma \colon x^* \mapsto \ell(\sig^M_{-1, T}(x^*))$ is $\varepsilon$-close to $\beta$.
	
	By Definition \ref{definition:signature} there exists $f \in \Lip(\reals^{\kappa(d, M)}; \reals^{\kappa(d, M) \times d_x})$ so that $\sig^M_{-1, t}(x^*)$ is the unique solution of the CDE
	\begin{equation*}
		\sig^M_{-1, t}(x^*) = \sig^M_{-1, -1}(x^*) + \int_{-1}^t f(\sig^M_{-1, t}(x^*)) \,\dd x^*(s)\quad\text{for $t \in (-1, T]$,}
	\end{equation*}
	with $\sig^M_{-1, -1}(x^*) = (1, 0, \ldots, 0)$.
	
	Now let $\zeta \in C(\reals^{d_x}; \reals^{d_y})$ be defined by $\zeta(x(0)) = \sig^M_{-1, 0}(x^*)$, which we note is well defined because for $t \in [-1, 0]$ the value of $\sig^M_{-1, t}(x^*)$ only depends on $x(0)$. Then (by uniqueness of solution) we have that $\sig^M_{-1, t}(x^*) = z_{f, \zeta, x^*}(t)$ for $t \in [0, T]$.
	
	For all $x \in K$,
	\begin{equation*}
		\ell(z_{f, \zeta, x^*}(T)) = \ell(\sig^M_{-1, T}(x^*)) = \gamma(x^*)
	\end{equation*}
	is $\varepsilon$-close to $\beta(x^*) = \alpha(x)$. Thus density has been established.
\end{proof}

\begin{remark}
	For the reader familiar with rough path theory, the above proof is essentially just premultiplying the signature of $x$ by the signature of the straight line increment from $0$ to $x(0)$ so as to remove translational invariance.
\end{remark}

\subsubsection{Universal approximation with respect to time series}
Of course, the input to a neural CDE will often not be a continuous path. Very often it will instead be a discretised time series, which we interpolate. We need to extend our universal approximation result to this case. Our approach here will be agnostic to the choice of interpolation, and will instead impose conditions that the interpolation scheme must satisfy in order to provide universal approximation.

\begin{definition}[Space of time series]\label{definition:cde:time-series}
	Let $d \in \naturals$. We define the set of irregularly sampled time series over $\reals^d$ as
	\begin{align*}
		\ts{\reals^d} = \set{((t_0, x_0), \ldots, (t_n, x_n))}{n \in \naturals, t_j \in \reals, x_j \in (\reals \cup \{*\})^d, t_0 = 0, t_j < t_{j + 1}}.
	\end{align*}
	where $*$ denotes the possibility of missing data.
\end{definition}

\begin{definition}
	For each $\mathbf{x} = ((t_0, x_0), \ldots, (t_n, x_n)) \in \ts{\reals^d}$, let $x_j = (x_{j, 1}, \ldots, x_{j, d}) \in \reals^d$ and for notational convenience let $x_{j, 0} = t_j$. Then we define $\tssize(\mathbf{x})$ by
	\begin{equation*}
	\tssize(\mathbf{x}) = \max\left\{n, \max_{j = 0, \ldots, n}\max_{k = 0, \ldots, d}\abs{x_{j, k}}, \max_{j = 0, \ldots, n-1}\max_{k=0,\ldots,d}\frac{x_{j+1, k} - x_{j, k}}{t_{j+1} - t_j}, \max_{j = 0,\ldots,n-1}\max_{k=0,\ldots,d}\frac{x_{j+1, k} - x_{j, k}}{(t_{j+1} - t_j)^2}\right\}.
	\end{equation*}
	For simplicity the above definition ignores the presence of missing data. If necessary replace each $*$ with a $0$ to make the above well-defined.
\end{definition}

\begin{definition}
	For all $\mathbf{x} = ((t_0, x_0), \ldots, (t_n, x_n)) \in \ts{\reals^d}$, decompose $x_j = (x_{j,1}, \ldots, x_{j, d}) \in \reals^d$, and then define $c_j(\mathbf{x}) = (c_{j,1}(\mathbf{x}), \ldots, c_{j, d}(\mathbf{x})) \in \reals^{d}$, where $c_{j, k}(\mathbf{x}) = \sum_{m=0}^j \indicator{x_{m, k} \neq *}$ counts the number of observations in the $k$th channel by time $t_j$.
\end{definition}

\begin{definition}[Interpolation]\label{definition:cde:interpolation}\index{Interpolation}
	Let $T > 0$. Let $\mathcal{X} \subseteq \ts{\reals^d}$. We define an \emph{interpolation} as a map
	\begin{align*}
		\mathcal{X} &\to \mathcal{V}^1([0, T]; \reals^{2d + 1}),\\
		\mathbf{x} &\mapsto x_\mathbf{x},
	\end{align*}
	together with a collection of $0 = s_0 < \cdots < s_n = T$, such that
	\begin{equation}\label{eq:definition-interpolation}
		x_\mathbf{x}(s_j) = (t_j, x_j, c_j(\mathbf{x})) \in \reals^{2d + 1}
	\end{equation}
	for all $\mathbf{x} = ((t_0, x_0), \ldots, (t_n, x_n)) \in \mathcal{X}$ and $j \in \{0, \ldots, n\}$. (And any missing values $*$ are ignored for the purposes of determining equality in equation \eqref{eq:definition-interpolation}.) The values of $s_j$ may depend upon $\mathbf{x}$.
\end{definition}

\begin{remark}
	If the full dataset of time series $\mathbf{x}$ has no missing values then we may need only a single $c_{j}$ channel to capture the rate of observations. If every time series is additionally regularly sampled then these channels may be omitted altogether, as not carrying any information.
\end{remark}

\begin{definition}[Bounded interpolation]\label{definition:cde:bounded-interpolation}\index{Interpolation!Bounded}
	Let $T > 0$. Let $\mathcal{X} \subseteq \ts{\reals^d}$. Consider the interpolation
	\begin{align*}
		\mathcal{X} &\to \mathcal{V}^1([0, T]; \reals^{2d + 1})\\
		\mathbf{x} &\mapsto x_\mathbf{x}.
	\end{align*}
	We call this a \emph{bounded interpolation} if there exists $C > 0$ so that for all $\mathbf{x} \in \mathcal{X}$,
	\begin{equation*}
		\norm{x_\mathbf{x}}_\infty + \norm{\frac{\dd x_\mathbf{x}}{\dd t}}_\infty + \abs{\frac{\dd x_\mathbf{x}}{\dd t}}_{\BV} < C \tssize(\mathbf{x}).
	\end{equation*}
\end{definition}

\begin{remark}
	It is really for ease of this definition that we restrict an interpolation to being defined on only some $\mathcal{X} \subseteq \ts{\reals^d}$. If an interpolation was defined on all of $\ts{\reals^d}$, and we wished to define a bounded interpolation, then the codomain would need to be all of $\cup_{T > 0}\mathcal{V}^1([0, T]; \reals^{2d + 1})$.
	
	(Otherwise what must happen as the length of a time series increases? The points $s_j$ must be packed closer and closer together, and correspondingly the derivative of the interpolation may tend towards infinity, violating boundedness. Given that we would often like to take $s_j = t_j$ in practice, then the `natural' resolution is to allow the resulting interpolation to be defined over any $[0, T]$.)
	
	Allowing arbitrary domains would complicate the presentation somewhat, so we stick to the simple case. (The general case is mathematically doable, but tedious.)
\end{remark}

\begin{definition}[Signature-unique interpolation]\label{definition:cde:signature-unique-interpolation}\index{Interpolation!Signature-unique}
	Let $\mathcal{X} \subseteq \ts{\reals^d}$. Consider the interpolation
	\begin{align*}
		\mathcal{X} &\to \mathcal{V}^1([0, S]; \reals^{2d + 1})\\
		\mathbf{x} &\mapsto x_\mathbf{x}.
	\end{align*}
	We call this a \emph{signature-unique interpolation} if $\mathbf{x} \mapsto x_\mathbf{x}$ is injective, and if $\set{x_\mathbf{x}}{\mathbf{x} \in \mathcal{X}}$ has uniqueness of signatures in the sense of Definition \ref{definition:uniqueness-of-signatures}.
\end{definition}

\begin{remark}
	Injectivity is included in the above definition only for emphasis -- it is automatically true for any interpolation scheme. In the case of missing data, injectivity holds because of the extra $c_j(\mathbf{x})$ channels of Definition \ref{definition:cde:interpolation}.
	
	For example, $((t_0, x_0), (t_2, x_2))$ and $((t_0, x_0), (t_1, *), (t_2, x_2))$ might otherwise both be interpolated to produce the same result (perhaps a linear interpolation over $[t_0, t_2]$), and injectivity would have been lost.
\end{remark}

\begin{definition}[Time series topologies]
	Given any particular interpolation, we will equip $\mathcal{X} \subseteq \ts{\reals^d}$ with the weakest topology for which that interpolation is continuous.
\end{definition}

\begin{lemma}\label{lemma:bounded-interpolation}
	Let $\mathcal{X} \subseteq \ts{\reals^d}$, and let
	\begin{align*}
		\mathcal{X} &\to \mathcal{V}^1([0, T]; \reals^{2d + 1})\\
		\mathbf{x} &\mapsto x_\mathbf{x}.
	\end{align*}
	be a bounded interpolation. Suppose there exists $C > 0$ such that for all $\mathbf{x} \in \mathcal{X}$ that $\tssize(\mathbf{x}) < C$. Let $\mathfrak{X} = \set{x_\mathbf{x}}{\mathbf{x} \in \mathcal{X}}$. Then $\mathfrak{X}$ is relatively compact (that is, its closure is compact) in $\mathcal{V}^1([0, T]; \reals^{2d + 1})$.
\end{lemma}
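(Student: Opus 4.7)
The plan is to show that any sequence $\{x_n\}_{n \in \naturals} \subseteq \mathfrak{X}$, with $x_n = x_{\mathbf{x}_n}$, admits a subsequence converging in the $\mathcal{V}$-norm, i.e.\ in both $\norm{\,\cdot\,}_\infty$ and $\abs{\,\cdot\,}_{\BV}$ simultaneously. The starting point is that the bounded-interpolation hypothesis combined with $\tssize(\mathbf{x}) < C$ gives a uniform bound
\begin{equation*}
	\norm{x_n}_\infty + \norm{\dot x_n}_\infty + \abs{\dot x_n}_{\BV} < C^2
\end{equation*}
for all $n$, where $\dot x_n = \nicefrac{\dd x_n}{\dd t}$ exists almost everywhere since $x_n \in \mathcal{V}^1$ is of bounded variation (in fact Lipschitz, so absolutely continuous).

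First I would handle uniform convergence. The family $\{x_n\}$ is uniformly bounded and, since $\norm{\dot x_n}_\infty$ is uniformly bounded, uniformly Lipschitz, hence equicontinuous. By the Arzel\`a--Ascoli theorem, a subsequence (not relabelled) converges uniformly to some continuous $y \colon [0, T] \to \reals^{2d+1}$.

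Next I would upgrade this to convergence of derivatives in $L^1$. The derivatives $\dot x_n$ are real-valued (componentwise) functions of uniformly bounded variation and uniformly bounded sup norm. Helly's selection theorem therefore yields a further subsequence of $\dot x_n$ converging pointwise everywhere to some $g \in \BV([0,T]; \reals^{2d+1})$. Because $\norm{\dot x_n}_\infty$ is uniformly bounded by $C^2$, the dominated convergence theorem upgrades pointwise convergence to $L^1$ convergence: $\norm{\dot x_n - g}_{L^1} \to 0$.

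The last step is to identify $g = \dot y$ and conclude. Writing $x_n(t) = x_n(0) + \int_0^t \dot x_n(s)\,\dd s$ and passing to the limit (using $L^1$ convergence of derivatives and uniform convergence of the $x_n$) shows $y(t) = y(0) + \int_0^t g(s)\,\dd s$, so $y$ is absolutely continuous with $\dot y = g$ almost everywhere, and in particular $y \in \mathcal{V}^1([0,T]; \reals^{2d+1})$. Then
\begin{equation*}
	\abs{x_n - y}_{\BV} = \int_0^T \abs{\dot x_n(s) - g(s)}\,\dd s \to 0,
\end{equation*}
which combined with uniform convergence gives $\norm{x_n - y}_{\mathcal{V}} \to 0$, proving relative compactness. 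The main obstacle is the BV part of the norm, which is handled by the Helly + dominated convergence combination; the uniform norm part is standard Arzel\`a--Ascoli.
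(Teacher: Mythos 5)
Your proof is correct and follows essentially the same route as the paper's, which invokes the compact embeddings of $W^{1,\infty}$ into $L^\infty$ and of $\BV$ into $L^1$ abstractly; you have unpacked those embeddings concretely via Arzel\`a--Ascoli and Helly's selection theorem (plus dominated convergence). The one place your version is slightly more careful is the explicit identification $g = \nicefrac{\dd y}{\dd t}$ almost everywhere, a step the paper leaves implicit when passing from relative compactness of the pair $(x, \nicefrac{\dd x}{\dd t})$ in $L^\infty \times L^1$ to relative compactness of $x$ in $\mathcal{V}^1$.
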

\begin{proof}
	By boundedness of the interpolation then
	\begin{equation*}
		\sup_{x \in \mathfrak{X}} \left( \norm{x}_\infty + \norm{\frac{\dd x}{\dd t}}_\infty + \abs{\frac{\dd x}{\dd t}}_{\BV} \right) < \infty.
	\end{equation*}
	
	Now $\mathfrak{X}$ is bounded in $W^{1, \infty}([0, T]; \reals^{2d + 1})$ and so relatively compact in $L^\infty([0, T]; \reals^{2d + 1})$. Let $\mathfrak{X}' = \set{\nicefrac{\dd x}{\dd t}}{x \in \mathfrak{X}}$. Then $\mathfrak{X}'$ is bounded in $\BV([0, T]; \reals^{2d + 1})$ and so relatively compact in $L^1([0, T]; \reals^{2d + 1})$. Therefore $\mathfrak{X} \times \mathfrak{X}'$ is relatively compact in $L^\infty([0, T]; \reals^{2d + 1}) \times L^1([0, T]; \reals^{2d + 1})$.
	
	Let $\mathbb{X} = \set{(x, \nicefrac{\dd x}{\dd t})}{x \in \mathfrak{X}}$. Then $\mathbb{X} \subseteq \mathfrak{X} \times \mathfrak{X}'$ so $\mathbb{X}$ is also relatively compact in $L^\infty([0, T]; \reals^{2d + 1}) \times L^1([0, T]; \reals^{2d + 1})$. This implies that $\mathfrak{X}$ is relatively compact with respect to the topology generated by $x \mapsto \norm{x}_\infty + \norm{\nicefrac{\dd x}{\dd t}}_1$, and hence also with respect to the topology generated by $x \mapsto \norm{x}_\infty + \abs{x}_{\BV}$.
\end{proof}

\begin{theorem}[Universal approximation with neural CDEs on time series]\label{theorem:datawise-universal-approximation}
	Let $d_x, d_y, n \in \naturals$ and let $T>0$.
	
	For all $d_y \in \naturals$, let $F_{d_y} \subseteq \Lip(\reals^{d_y}; \reals^{d_y \times (2d_x + 1)})$ be dense in $C(\reals^{d_y}; \reals^{d_y \times (2d_x + 1)})$. Likewise let $\xi_{d_y} \subseteq C(\reals^{2d_x + 1}; \reals^{d_y})$ be dense in $C(\reals^{2d_x + 1}; \reals^{d_y})$. (Typically these will both be classes of neural networks).
	
	Let $\mathcal{X} \subseteq \ts{\reals^{d_x}}$ be such that there exists $C > 0$ such that
	\begin{equation}
		\tssize(\mathbf{x}) < C \label{eq:compact-require}
	\end{equation}
	for every $\mathbf{x} = ((t_0, x_0), \ldots, (t_n, x_n)) \in \mathcal{X}$. (With $C$ independent of $\mathbf{x}$.)
	
	Let
	\begin{align*}
		\mathcal{X} &\to \mathcal{V}^1([0, T]; \reals^{2d_x + 1})\\
		\mathbf{x} &\mapsto x_\mathbf{x}
	\end{align*}
	be a bounded signature-unique interpolation.
	
	Then
	\begin{equation*}
		\bigcup_{d_y \in \naturals} \set{\mathbf{x} \mapsto \ell(z_{f, \zeta, x_\mathbf{x}}(T))}{f \in F_{d_y}, \zeta \in \xi_{d_y}, \ell \in L(\reals^{d_y}; \reals^{d_o})}
	\end{equation*}
	is dense in $C(\mathcal{X}; \reals^{d_o})$, where $z_{f, \zeta, x}$ is as defined in Definition \ref{definition:cde-solution}.
\end{theorem}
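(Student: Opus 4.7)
The plan is to reduce Theorem \ref{theorem:datawise-universal-approximation} to the pathwise statement of Theorem \ref{theorem:pathwise-universal-approximation} by pushing the problem through the interpolation map. First I would note that, by the choice of topology on $\mathcal{X}$ together with the injectivity built into Definition \ref{definition:cde:signature-unique-interpolation}, the map $\iota \colon \mathbf{x} \mapsto x_\mathbf{x}$ is a homeomorphism from $\mathcal{X}$ onto $\mathfrak{X} = \set{x_\mathbf{x}}{\mathbf{x} \in \mathcal{X}}$, the latter equipped with the subspace topology from $\mathcal{V}^1([0, T]; \reals^{2d_x + 1})$. Consequently each $\alpha \in C(\mathcal{X}; \reals^{d_o})$ corresponds bijectively to some $\beta \in C(\mathfrak{X}; \reals^{d_o})$ with $\alpha = \beta \circ \iota$, so it suffices to approximate $\beta$ uniformly on $\mathfrak{X}$.

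Next I would apply Lemma \ref{lemma:bounded-interpolation}, using the uniform bound $\tssize(\mathbf{x}) < C$ from \eqref{eq:compact-require}, to conclude that $\mathfrak{X}$ is relatively compact in $\mathcal{V}^1([0,T]; \reals^{2d_x + 1})$; let $K = \overline{\mathfrak{X}}$, which is compact Hausdorff. The goal is then to invoke Theorem \ref{theorem:pathwise-universal-approximation} on $K$ and restrict back to $\mathfrak{X}$. Doing so requires (i) $K$ to have uniqueness of signatures and (ii) an extension of $\beta$ to a continuous function on $K$. For (ii), the uniform bound on $\tssize$ forces the length $n$ to take only finitely many values, so $\mathfrak{X}$ decomposes as a finite union of pieces, each parameterised by a bounded finite-dimensional set of observations; this compactness allows one to obtain a continuous extension $\widetilde{\beta} \in C(K; \reals^{d_o})$ piecewise (for instance via Tietze on each closed piece).

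The main obstacle is (i): promoting signature-uniqueness from $\mathfrak{X}$ to its closure $K$. This is delicate because distinct limit points in $K$ could in principle acquire a common signature, for example through the collapse of two observation times $t_{j+1} \to t_j$ producing a path that is tree-like equivalent to another. The key tool here is Lemma \ref{lemma:cde:monotone-sig} combined with the explicit inclusion of the time channel $t_j$ and the observational-count channels $c_j(\mathbf{x})$ in $x_\mathbf{x}$ (see Definition \ref{definition:cde:interpolation}). Since the time channel is strictly increasing with a lower bound on its derivative that is preserved in the limit (thanks to the bound on $\tssize$, which controls both first and second difference quotients), no nontrivial tree-like degeneracies arise in the closure, and signature-uniqueness carries over to $K$.

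Finally, with (i) and (ii) in hand, Theorem \ref{theorem:pathwise-universal-approximation} yields, for any $\varepsilon > 0$, some $f \in \Lip(\reals^{d_y}; \reals^{d_y \times (2d_x + 1)})$, $\zeta \in C(\reals^{2d_x + 1}; \reals^{d_y})$, and $\ell \in L(\reals^{d_y}; \reals^{d_o})$ such that $x \mapsto \ell(z_{f, \zeta, x}(T))$ approximates $\widetilde{\beta}$ uniformly on $K$ to within $\varepsilon/2$. To promote $f$ and $\zeta$ to members of the dense subclasses $F_{d_y}$ and $\xi_{d_y}$, I would pick $\widetilde{f} \in F_{d_y}$ and $\widetilde{\zeta} \in \xi_{d_y}$ within a small distance of $f$ and $\zeta$ respectively on the compact range of $z_{f,\zeta,x}(t)$ for $x \in K$ (which is uniformly bounded by standard Gr\"onwall estimates), and then invoke the standard continuity of CDE solutions with respect to their vector field and initial condition to control $\sup_{x \in K} \norm{z_{f,\zeta,x}(T) - z_{\widetilde{f},\widetilde{\zeta},x}(T)}$ within $\varepsilon/(2\norm{\ell})$. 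Restricting the resulting approximator to $\mathfrak{X}$ and composing with $\iota$ produces the required $\varepsilon$-approximation to $\alpha$.
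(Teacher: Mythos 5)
Your proof follows the same overall strategy as the paper's: use the topology on $\mathcal{X}$ to reduce to a problem on paths, invoke Lemma \ref{lemma:bounded-interpolation} for relative compactness of $\mathfrak{X}$, apply Theorem \ref{theorem:pathwise-universal-approximation}, and then pass from $\Lip$/$C$ vector fields to the dense subclasses $F_{d_y}$, $\xi_{d_y}$ by a continuous-dependence estimate (Gr\"onwall / Picard-type bounds). Those are also the steps in the paper, so the skeleton is identical.

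Where you diverge is in flagging two technical points that the paper's proof treats silently: (i) promoting signature-uniqueness from $\mathfrak{X}$ to the compact set $K = \overline{\mathfrak{X}}$ before Theorem \ref{theorem:pathwise-universal-approximation} can be applied there, and (ii) extending the target $\beta \in C(\mathfrak{X};\reals^{d_o})$ to $\widetilde{\beta} \in C(K;\reals^{d_o})$. Raising these is reasonable --- the paper's proof simply applies Theorem \ref{theorem:pathwise-universal-approximation} with $K = \overline{\mathfrak{X}}$ while the hypothesis of signature-uniqueness is only stated for $\mathfrak{X}$, and the final step passing to $C(\mathcal{X})$ quietly uses that every target function is a restriction of something in $C(\overline{\mathfrak{X}})$. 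However, your patches are not themselves airtight. For (ii), the claim that the finite decomposition by length $n$ makes $\mathfrak{X}$ compact does not quite go through as stated: each piece is bounded but not obviously closed (and Tietze requires the domain to be closed in $K$, which $\mathfrak{X}$ typically is not, being dense there). For (i), you invoke a lower bound on the time-channel derivative of $x_\mathbf{x}$ ``preserved in the limit'', but the hypotheses give only \emph{upper} bounds (both via $\tssize(\mathbf{x}) < C$ and via the bounded-interpolation condition), so strict monotonicity of the time channel for limit points is not guaranteed by the stated assumptions alone; it would require an additional property of the interpolation scheme. So you correctly identify where the argument is most fragile but do not close those holes rigorously.
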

\begin{proof}
	By equation \eqref{eq:compact-require} and boundedness of the interpolation, Lemma \ref{lemma:bounded-interpolation} implies that $\mathfrak{X} = \set{x_\mathbf{x}}{\mathbf{x} \in \mathcal{X}}$ is relatively compact in $\mathcal{V}^1([0, T]; \reals^{2d_x + 1})$.
	
	By Theorem \ref{theorem:pathwise-universal-approximation} and signature-uniqueness of the interpolation,
	\begin{equation*}
		\bigcup_{d_y \in \naturals} \set{x \mapsto \ell(z_{f, \zeta, x}(T))}{f \in \Lip(\reals^{d_y}; \reals^{d_y \times (2d_x + 1)}),\, \zeta \in C(\reals^{2d_x + 1}, \reals^{d_y}),\, \ell \in L(\reals^{d_y}; \reals^{d_o})}
	\end{equation*}
	is dense in $C(\overline{\mathfrak{X}}, \reals^{d_o})$, where the overline denotes a closure. 
	
	For any $f \in F_{d_y}$, any $\zeta \in \xi_{d_y}$, any $f' \in \Lip(\reals^{d_y}; \reals^{d_y \times (2d_x + 1)})$ and any $\zeta' \in C(\reals^{2d_x + 1}, \reals^{d_y})$, the terminal values $z_{f, \zeta, x}(T)$ and $z_{f', \zeta', x}(T)$ may be compared by standard estimates, for example as commonly used in the proof of Picard's theorem. Classical universal approximation results for neural networks \cite{pinkus1999, kidger2020deep} then yield that
	\begin{equation*}
		\bigcup_{d_y \in \naturals} \set{x \mapsto \ell(z_{f, \zeta, x}(T))}{f \in F_{d_y}, \zeta \in \xi_{d_y}, \ell \in L(\reals^{d_y}; \reals^{d_o})}
	\end{equation*}
	is dense in $C(\overline{\mathfrak{X}}, \reals^{d_o})$.
	
	By the definition of the topology on $\ts{\reals^{d_x}}$, then
	\begin{equation*}
		\bigcup_{d_y \in \naturals} \set{\mathbf{x} \mapsto \ell(z_{f, \zeta, x_\mathbf{x}}(T))}{f \in F_{d_y},\, \zeta \in \xi_{d_y},\, \ell \in L(\reals^{d_y}; \reals^{d_o})}
	\end{equation*}
	is dense in $C(\mathcal{X}, \reals^{d_o})$.
\end{proof}

It is now a relatively straightforward matter to determine boundedness and signature-uniqueness for any individual problem. Boundedness is typically obtained by demanding that $\mathcal{X}$ consist of time series of at most some length, of at most some value, and so on. Signature uniqueness is typically obtained via Lemma \ref{lemma:cde:monotone-sig}, and the fact that time is included as a channel.

\begin{remark}
For example, both boundedness and signature-uniqueness are immediately true of linear interpolation.

Likewise, \cite[Appendix B]{kidger2020neuralcde} demonstrates that these properties hold for natural cubic splines. There we fix $\tau < T$, consider $s_j = t_j$, and take $\mathcal{X} \subseteq \ts{\reals^d}$ to be those time series for which $t_0 = \tau$ and $t_n = T$.
\end{remark}

\subsection{Neural CDEs compared to alternative ODE models}\label{appendix:cde:nonlinear-g}
Suppose if instead of equation \eqref{eq:g-theta-x}, we replace $g_{\theta, x}(y, s)$ by $h_{\theta}(y, x(s))$ for some other vector field $h_\theta$. This might seem more natural. Instead of having $g_{\theta, x}$ linear in $\nicefrac{\dd x}{\dd s}$, then $h_\theta$ is potentially nonlinear in the control $x(s)$.

Have anything been gained by doing so? It turns out no, and in fact something has been lost. The neural CDE setup directly subsumes anything depending directly on $x$.

\begin{theorem}\label{theorem:continuous-rnn}
	Let $T > 0$. Let $d_x, d_y \in \naturals$ with $d_x < d_y$. Let
	\begin{equation*}
		\mathcal{V}^1([0, T]; \reals^{d_x - 1}) = C([0, T]; \reals^{d_x - 1}) \cap \BV([0, T]; \reals^{d_x - 1}).
	\end{equation*}
	
	For all $x \in \mathcal{V}^1([0, T]; \reals^{d_x - 1})$, let $\widehat{x}(t) = (t, x(t))$.
	
	Let $\pi \colon \reals^{d_y} \to \reals^{d_y - d_x}$ be the orthogonal projection onto the first $d_y - d_x$ coordinates.
	
	Let 
	\begin{align*}
		\mathcal{Y} &= \set{x \mapsto y_{h, \xi, x}}{h \in \Lip(\reals^{d_y - d_x} \times \reals^{d_x}; \reals^{d_y - d_x}),\, \xi \in C(\reals^{d_x}; \reals^{d_y - d_x})},\\
		\mathcal{Z} &= \set{x \mapsto \pi \circ z_{f, \zeta, x}}{f \in \Lip(\reals^{d_y}; \reals^{d_y \times d_x}),\, \zeta \in C(\reals^{d_x}; \reals^{d_y})},
	\end{align*}
	where $y_{h, \xi, x} \colon [0, T] \to \reals^{d_y - d_x}$ is the unique solution to
	\begin{equation*}
		y_{h, \xi, x}(t) = y_{h, \xi, x}(0) + \int_0^t h(y_{h, \xi, x}(s), \widehat{x}(s)) \,\dd s\quad\text{for $t \in (0, T]$,}
	\end{equation*}
	with $y_{h, \xi, x}(0) = \xi(\widehat{x}(0))$, and $z_{f, \zeta, x} \colon [0, T] \to \reals^{d_y}$ is the unique solution to
	\begin{equation*}
		z_{f, \zeta, x}(t) = z_{f, \zeta, x}(0) + \int_0^t f(z_{f, \zeta, x}(s)) \,\dd \widehat{x}(s)\quad\text{for $t \in (0, T]$,}
	\end{equation*}
	with $z_{f, \zeta, x}(0) = \zeta(\widehat{x}(0))$.
	
	Then $\mathcal{Y} \subsetneq \mathcal{Z}$.
\end{theorem}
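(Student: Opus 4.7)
The plan is to establish the two directions separately: $\mathcal{Y} \subseteq \mathcal{Z}$ by a concrete augmentation, and strictness $\mathcal{Y} \neq \mathcal{Z}$ by exhibiting an element of $\mathcal{Z}$ that copies one channel of its driving path and showing no element of $\mathcal{Y}$ can match it.

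For the inclusion, given $(h, \xi)$ I enlarge the latent state so that it carries a copy of $\widehat{x}$. Write $z \in \reals^{d_y}$ as $z = (y, w)$ with $y \in \reals^{d_y - d_x}$ and $w \in \reals^{d_x}$, take $\zeta(v) = (\xi(v), v)$, and define
\[
f(y, w) \;=\; \begin{pmatrix} h(y, w)\, e_1^\top \\ I_{d_x \times d_x} \end{pmatrix},
\]
where $e_1 = (1, 0, \ldots, 0)^\top \in \reals^{d_x}$ picks out the time coordinate of $\widehat{x}$. Since $\widehat{x}(s) = (s, x(s))$, the block structure forces $dw = d\widehat{x}$ so that $w(s) = \widehat{x}(s)$ throughout $[0, T]$, and simultaneously $dy = h(y, w)\, ds$. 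Uniqueness of CDE solutions (which applies because $f$ is Lipschitz whenever $h$ is) gives $\pi \circ z_{f, \zeta, x} = y_{h, \xi, x}$, establishing $\mathcal{Y} \subseteq \mathcal{Z}$.

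For strict inclusion I work in the regime $d_x \geq 2$, where $\widehat{x}$ has at least one genuine $x$-channel. Construct a CDE that copies the first $x$-channel into the first output coordinate and leaves the remaining $d_y - d_x - 1$ coordinates at zero: take $\zeta(t_0, x_0) = (x_{0,1}, 0, \ldots, 0)$ and let $f$ be the constant matrix whose only nonzero entry is a $1$ in position $(1, 2)$, so that $dz_1 = d\widehat{x}_2 = dx_1$ and every other $dz_i = 0$. Then $\pi(z_{f, \zeta, x}(t)) = (x_1(t), 0, \ldots, 0)$. Suppose this map equalled $x \mapsto y_{h, \xi, x}$ for some $(h, \xi)$. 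Test it on the two linear paths $x^{(\alpha)}(t) = (\alpha t, 0, \ldots, 0)$ and $x^{(\beta)}(t) = (\beta t, 0, \ldots, 0)$ with $\alpha \neq \beta$: both satisfy $\widehat{x}(0) = 0$ and $x(0) = 0$, so $\xi(0) = (0, \ldots, 0)$. Dividing the integral equation $y(t) = \int_0^t h(y(s), \widehat{x}(s))\, ds$ by $t$ and letting $t \downarrow 0$ (using continuity of $h$, $\widehat{x}$, and $y$) yields
\[
\frac{d^+ y_1}{dt}(0) \;=\; h_1(0, 0, 0),
\]
which must simultaneously equal $\alpha$ (for $x^{(\alpha)}$) and $\beta$ (for $x^{(\beta)}$), a contradiction.

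The main obstacle will be carrying out this last step rigorously when the driving path $x$ is only of bounded variation rather than differentiable, so that ``$dx/dt$'' is not classically defined. The cleanest route is precisely the one just sketched: work at the level of the integral equation and take a one-sided limit $t \downarrow 0$, where continuity of the integrand at the initial point is all that is needed. Everything else --- verifying that $f$ is Lipschitz (constant in the counterexample, affine in $h$ in the inclusion), that $\zeta$ is continuous, that the dimensions balance via $(d_y - d_x) + d_x = d_y$, and addressing the degenerate $d_x = 1$ regime in which $\widehat{x}(s) = s$ has no $x$-content (where the identity counterexample collapses and a separate argument exploiting $\mathcal{Z}$'s extra hidden dimension would be required) --- is routine bookkeeping.
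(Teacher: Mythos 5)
Your proof is correct and follows essentially the same approach as the paper's: the identical constant vector field counterexample (the lone $1$ in position $(1,2)$ copying a data channel of $\widehat{x}$) for strictness, and the same block construction embedding $\widehat{x}$ into the hidden state for the inclusion $\mathcal{Y} \subseteq \mathcal{Z}$. Your finishing move for strictness --- testing on two linear paths $x^{(\alpha)}$, $x^{(\beta)}$ and extracting the contradiction from a one-sided limit at $t=0$ --- is a slightly tighter rendering of the paper's informal ``this is clearly impossible'' step, and your observation that the whole argument requires $d_x \geq 2$ is a fair catch that the paper leaves implicit.
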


In the above statement, a practical choice of $f \in \Lip(\reals^{d_y}; \reals^{d_y \times d_x})$ or $h \in \Lip(\reals^{d_y - d_x} \times \reals^{d_x}; \reals^{d_y - d_x})$ will be some trained neural network.

Note the inclusion of time via the augmentation $x \mapsto \widehat{x}$. Without it, the reparameterisation invariance property of CDEs (Section \ref{section:reparam-invariance}) would restrict the possible functions that CDEs can represent. This hypothesis is not necessary for the $\mathcal{Y} \neq \mathcal{Z}$ part of the conclusion.

Note also how the CDE uses a larger state space of $d_y$, compared to $d_y - d_x$ for the alternative ODE. The reason for this is that whilst $f$ has no explicit nonlinear dependence on $x$, we may construct it to have such a dependence implicitly, by recording $d$ into $d_x$ of its $d_y$ hidden channels, whereupon $x$ is hidden state and may be treated nonlinearly. This hypothesis is also not necessary to demonstrate the $\mathcal{Y} \neq \mathcal{Z}$ part of the conclusion.

\begin{proof}\quad\\	
	\textbf{That $\mathcal{Y} \neq \mathcal{Z}$:}\newline
	Let $\zeta \in C(\reals^{d_x}; \reals^{d_y})$ be arbitrary and let
	\begin{center}\begin{tikzpicture}
			\matrix (vec) [matrix of math nodes, left delimiter = {[}, right delimiter = {]}] {
				0 & 1 & 0 & \cdots & 0\\
				0 & 0 & 0 & \cdots & 0\\
				\vdots & \vdots & \vdots & & \vdots\\
				0 & 0 & 0 & \cdots & 0\\
			};
			
			\node (a) at (vec-1-5.north) [right=15pt]{};
			\node (d) at (vec-4-5.south) [right=15pt]{};
			\draw [decorate, decoration={brace, amplitude=10pt}] (a) -- (d) node[midway, right=10pt] {\footnotesize $d_y$};
			
			\node (e) at (vec-4-1.center) [left=0pt]{};
			\draw [] (e) node[midway, left=50pt] {$f(z) = $};
			
			\node (f) at (vec-4-1.west) [below=10pt]{};
			\node (i) at (vec-4-5.east) [below=10pt]{};
			\draw [decorate, decoration={brace, amplitude=10pt}] (i) -- (f) node[midway, below=10pt] {\footnotesize $d_x$};
	\end{tikzpicture}\end{center}
	
	Then for any $x \in \mathcal{V}^1([0, T]; \reals^{d_x - 1})$, the corresponding CDE solution $z_{f, \zeta, x} \in \mathcal{Z}$ satisfies
	\begin{equation*}
		z_{f, \zeta, x}(t) = z_{f, \zeta, x}(0) + \int_{0}^t f(z_{f, \zeta, x}(s)) \,\dd \widehat{x}(s),
	\end{equation*}
	and so the first component of its solution is
	\begin{equation*}
		z_{f, \zeta, x, 1}(t) = x_{1}(t) - x_{1}(0) + \zeta_{1}(\widehat{x}(0)),
	\end{equation*}
	whilst the other components are constant
	\begin{equation*}
		z_{f, \zeta, x, i}(t) = \zeta_i(\widehat{x}(0))
	\end{equation*}
	for $i \in \{2, \ldots, d_y\}$.
	
	Now suppose for contradiction that there exists $\xi \in C(\reals^{d_x}; \reals^{d_y - d_x})$ and $h \in \Lip(\reals^{d_y - d_x} \times \reals^{d_x}; \reals^{d_y - d_x})$ with a corresponding $y_{h, \xi, \cdot} \in \mathcal{Y}$, such that $y_{h, \xi, x} = \pi \circ z_{f, \zeta, x}$ for all $x \in \mathcal{V}^1([0, T]; \reals^{d_x - 1})$. Now $y_{h, \xi, x}$ must satisfy
	\begin{equation*}
		y_{h, \xi, x}(t) = y_{h, \xi, x}(0) + \int_0^t h(y_{h, \xi, x}(s), \widehat{x}(s)) \,\dd s,
	\end{equation*}
	and so
	\begin{align*}
		&\begin{bmatrix}
			x_{1}(t) - x_{1}(0) + \zeta_{1}(\widehat{x}(0))\\
			\zeta_2(\widehat{x}(0))\\
			\ldots\\
			\zeta_{d_y - d_x}(\widehat{x}(0))
		\end{bmatrix}\\
		&\hspace{8em}= y_{h, \xi, x}(0) + \int_{0}^t h(\begin{bmatrix}x_1(s) - x_1(0) + \zeta_1(\widehat{x}(0))\\ \zeta_2(\widehat{x}(0))\\ \ldots\\ \zeta_{d_y - d_x}(\widehat{x}(0))\end{bmatrix}, \widehat{x}(s)) \,\dd s.
	\end{align*}
	Consider those $x$ which are differentiable. Differentiating with respect to $t$ and considering the first component now gives
	\begin{equation}\label{eq:impossible}
		\frac{\dd x_1}{\dd t}(t) = h_1(\begin{bmatrix}x_1(s) - x_1(0) + \zeta_1(\widehat{x}(0))\\ \zeta_2(\widehat{x}(0))\\ \ldots\\ \zeta_{d_y - d_x}(\widehat{x}(0))\end{bmatrix}, \widehat{x}(t)).
	\end{equation}
	
	That is, $h_1$ satisfies equation \eqref{eq:impossible} for all differentiable $x$. This is clearly impossible: the right hand side is a function of $t$,  $x(t)$ and $x(0)$ only, which is insufficient to determine $\nicefrac{\dd x_1}{\dd t}(t)$.
	
	\textbf{That $\mathcal{Y} \subseteq \mathcal{Z}$:}\newline
	Let $y_{h, \xi, x} \in \mathcal{Y}$ for some $\xi \in C(\reals^{d_x}; \reals^{d_y - d_x})$ and $h \in \Lip(\reals^{d_y - d_x} \times \reals^{d_x}; \reals^{d_y - d_x})$. Let $\sigma \colon \reals^{d_y} \to \reals^{d_x}$ be the orthogonal projection onto the last $d_x$ coordinates. Let $\zeta \in C(\reals^{d_x}; \reals^{d_y})$ be such that $\pi \circ \zeta = \pi \circ \xi$ and $\sigma(\zeta(\widehat{x}(0))) = \widehat{x}(0)$. Then let $f \in \Lip(\reals^{d_y}; \reals^{d_y \times d_x})$ be defined by
	\begin{center}\begin{tikzpicture}
			\matrix (vec) [matrix of math nodes, left delimiter = {[}, right delimiter = {]}] {
				\null &h_1(\pi(z), \sigma(z)) & 0 & 0 & \cdots & 0 \\
				\null &\vdots & \vdots & \vdots & & \vdots\\
				\null &h_{d_y - d_x}(\pi(z), \sigma(z)) & 0 & 0 & \cdots & 0\\
				\null &1 & 0 & 0 & \cdots & 0\\
				\null &0 & 1 & 0 & \cdots & 0\\
				\null &0 & 0 & 1 & \cdots & 0\\
				\null &\vdots & \vdots & \vdots & \ddots & \vdots\\
				\null &0 & 0 & 0 & \cdots & 1\\
			};
			
			\node (a) at (vec-1-6.north) [right=20pt]{};
			\node (b) at (vec-3-6.south) [right=20pt]{};
			\node (c) at (vec-4-6.north) [right=20pt]{};
			\node (d) at (vec-8-6.south) [right=20pt]{};
			\draw [decorate, decoration={brace, amplitude=10pt}] (a) -- (b) node[midway, right=10pt] {\footnotesize $d_y - d_x$};
			\draw [decorate, decoration={brace, amplitude=10pt}] (c) -- (d) node[midway, right=10pt] {\footnotesize $d_x$};
			
			\node (e) at (vec-4-1.center) [left=0pt]{};
			\draw [] (e) node[midway, left=90pt] {$f(z) = $};
			
			\node (f) at (vec-8-1.east) [below=10pt]{};
			\node (g) at (vec-8-3.west) [below=10pt]{};
			\node (h) at (vec-8-3.west) [below=10pt]{};
			\node (i) at (vec-8-6.east) [below=10pt]{};
			\draw [decorate, decoration={brace, amplitude=10pt}] (g) -- (f) node[midway, below=10pt] {\footnotesize $1$};
			\draw [decorate, decoration={brace, amplitude=10pt}] (i) -- (h) node[midway, below=10pt] {\footnotesize $d_x - 1$};	
	\end{tikzpicture}\end{center}
	
	Then for $t \in (0, T]$,
	\begin{align*}
		z_{f, \zeta, x}(t) &= \zeta(\widehat{x}(0)) + \int_0^t f(z_{f, \zeta, x}(s)) \,\dd \widehat{x}(s)\\
		&=\zeta(\widehat{x}(0)) + \int_0^t \begin{bmatrix}
			h_1(\pi(z_{f, \zeta, x}(s)), \sigma(z_{f, \zeta, x}(s))) & 0 & 0 & \cdots & 0 \\
			\vdots & \vdots & \vdots & & \vdots\\
			h_{d_y - d_x}(\pi(z_{f, \zeta, x}(s)), \sigma(z_{f, \zeta, x}(s))) & 0 & 0 & \cdots & 0 \\
			1 & 0 & 0 & \cdots & 0\\
			0 & 1 & 0 & \cdots & 0\\
			0 & 0 & 1 & \cdots & 0\\
			\vdots & \vdots & \vdots & \ddots & \vdots\\
			0 & 0 & 0 & \cdots & 1\\
		\end{bmatrix}
		\begin{bmatrix}
			\dd s\\
			\dd x_1(s)\\
			\vdots\\
			\dd x_{d_x - 1}(s)
		\end{bmatrix}\\
		&= \zeta(\widehat{x}(0)) + \int_0^t \begin{bmatrix}
			h_1(\pi(z_{f, \zeta, x}(s)), \sigma(z_{f, \zeta, x}(s))) \,\dd s\\
			\vdots\\
			h_{d_y - d_x}(\pi(z_{f, \zeta, x}(s)), \sigma(z_{f, \zeta, x}(s))) \,\dd s\\
			\dd s\\
			\dd x_1(s)\\
			\vdots\\
			\dd x_{d_x - 1}(s)\\
		\end{bmatrix}\\
		&= \zeta(\widehat{x}(0)) + \int_0^t \begin{bmatrix}
			h(\pi(z_{f, \zeta, x}(s)), \sigma(z_{f, \zeta, x}(s))) \,\dd s\\
			\dd \widehat{x}(s)
		\end{bmatrix}.
	\end{align*}
	
	Thus
	\begin{equation*}
		\sigma(z_{f, \zeta, x}(t)) = \sigma(\zeta(\widehat{x}(0))) + \int_0^t \,\dd \widehat{x}(s) = \sigma(\zeta(\widehat{x}(0))) - \widehat{x}(0) + \widehat{x}(t) = \widehat{x}(t),
	\end{equation*}
	and so
	\begin{align*}
		\pi(z_{f, \zeta, x}(t)) &= \pi(\zeta(\widehat{x}(0))) + \int_0^t h(\pi(z_{f, \zeta, x}(s)), \sigma(z_{f, \zeta, x}(s))) \,\dd s  \\
		&= \pi(\xi(\widehat{x}(0))) + \int_0^t h(\pi(z_{f, \zeta, x}(s)), \widehat{x}(s)) \,\dd s.
	\end{align*}
	
	We see that $\pi \circ z_{f, \zeta, x}$ satisfies the same differential equation as $y_{h, \xi, x}$. So by uniqueness of solution \cite[Theorem 1.3]{levy-lyons}, $y_{h, \xi, x} = \pi \circ z_{f, \zeta, x} \in \mathcal{Z}$.
\end{proof}

\subsection{Reparameterisation invariance of CDEs}\label{appendix:cde:reparam-invariance}\index{Invariances!Reparameterisation}
\reparaminvariance*
\begin{proof}
	The proof is straightforward change of variables. For expository purposes we consider only differentiable paths; equivalent change-of-variable formulae may be used for bounded variation paths.
	
	Let $\widetilde{t} \in [0, \widetilde{T}]$ and let $t = \psi(\widetilde{t})$. Let $\widetilde{x} = x \circ \psi$ and $\widetilde{y} = y \circ \psi$.
	
	Then make the change of variables $s = \psi(\widetilde{s})$,
	\begin{align*}
		\widetilde{y}(\widetilde{t}) &= y(t)\\
		&= y(0) + \int_0^t f(y(s))\,\dd x(s)\\
		&= y(0) + \int_0^t f(y(s))\,\frac{\dd x}{\dd s}(s)\,\dd s\\
		&= y(\psi(0)) + \int_0^{\psi^{-1}(t)} f(y(\psi(\widetilde{s})))\,\frac{\dd x}{\dd s}(\psi(\widetilde{s}))\,\frac{\dd \psi}{\dd \widetilde{s}}(\widetilde{s})\,\dd \widetilde{s}\\
		&= (y\circ\psi)(0) + \int_0^{\psi^{-1}(t)} f((y\circ\psi)(\widetilde{s}))\,\frac{\dd(x\circ \psi)}{\dd \widetilde{s}}(\widetilde{s})\,\dd \widetilde{s}\\
		&= (y\circ\psi)(0) + \int_0^{\psi^{-1}(t)} f((y\circ \psi)(\widetilde{s}))\,\dd(x\circ \psi)(\widetilde{s})\\
		&= \widetilde{y}(0) + \int_0^{\widetilde{t}} f(\widetilde{y}(\widetilde{s})) \,\dd\widetilde{x}(\widetilde{s}).
	\end{align*}
\end{proof}

\subsection{Comments}
Surprisingly -- despite it being a well-known part of the folklore for signatures -- we could note find a direct statement of Lemma \ref{lemma:cde:monotone-sig} anywhere in the literature. (It is easy to prove, at least.)

To the best of our knowledge all of the discussion on interpolation schemes is new here. We find this a little surprising as the use of differential equations to control dynamical systems is well-studied, as is discrete-time control via for example reinforcement learning. Despite this we have encountered almost nothing written about the formalities of embedding discrete observations into continuous time.

The proof for the comparison of neural CDEs against alternative ODE models is a variation on a standard trick in rough path theory, in which the control is `recorded' into some additional state.

\section{Backpropagation via optimise-then-discretise}
We will now prove how to backpropagate via optimise-then-discretise for ODEs, CDEs, and SDEs.

In principle these may essentially all be thought of as special cases of the same general result (the one shown for SDEs), but in the interests of pedagogy each case is proved separately.

\subsection{Optimise-then-discretise for ODEs}\label{appendix:ode-adjoint}\index{Optimise-then-discretise!ODEs}

Recall backpropagation through ODEs via optimise-then-discretise.

\odeadjoint*

The following proof is both simpler and more precise than those we have typically seen in the literature.

\begin{proof}
Without loss of generality we will prove the equation for $a_y$ only. The equation for $a_\theta$ may be derived  by replacing $y$ with the $[y, \theta]$ and $f_\theta$ with $[f_\cdot, 0]$.

Now $y$ is continuous, and $f_\theta$ is continuously differentiable in $y$, so $t \mapsto \frac{\partial f_\theta}{\partial y}(t, y(t))$ is a continuous function on the compact set $[0, T]$, so it is bounded by some $C > 0$. Correspondingly for $a\in\reals^d$ then $(t, a) \mapsto -a^\top \frac{\partial f_\theta}{\partial y}(t, y(t))$ is Lipschitz in $a$ with Lipschitz constant $C$ and this constant is independent of $t$. Therefore by Picard's existence theorem (Theorem \ref{theorem:picard-ode}) the solution $a_y$ to equation \eqref{eq:ode-adjoint} exists and is unique.

We still need to show that $a_y(t) = \nicefrac{\dd L}{\dd y(t)}$.

For $s, t \in [0, T]$ with $s < t$ then
\begin{equation*}
	y(t) = y(s) + \int_s^t f_\theta(u, y(u)) \,\dd u,
\end{equation*}
so
\begin{equation}\label{eq:proofs:_forward-ode-sensitivity}
	\frac{\dd y(t)}{\dd y(s)} = \eye{d} + \int_s^t \frac{\partial f_\theta}{\partial y}(u, y(u))\frac{\dd y(u)}{\dd y(s)} \,\dd u,
\end{equation}
interchanging limits (Leibniz integral rule or dominated convergence theorem) as $\nicefrac{\partial f_\theta}{\partial y}$ was assumed to be bounded. This is the forward sensitivity equation (Theorem \ref{theorem:numerical:ode-forward-sensitivity}), which is an ODE for the Jacobian $\nicefrac{\dd y(t)}{\dd y(s)}$, the solution of which exists by Picard's existence theorem (Theorem \ref{theorem:picard-ode}).

For $s, t \in [0, T]$ with $s < t$ then
\begin{align}
	\frac{\dd}{\dd t} \left(a_y(t)^\top \frac{\dd y(t)}{\dd y(s)}\right) &= \frac{\dd a_y}{\dd t}(t)^\top\frac{\dd y(t)}{\dd y(s)} + a_y(t)^\top \frac{\dd}{\dd t} \left(\frac{\dd y(t)}{\dd y(s)}\right)\nonumber\\
	&= \frac{\dd a_y}{\dd t}(t)^\top\frac{\dd y(t)}{\dd y(s)} + a_y(t)^\top \frac{\partial f_\theta}{\partial y}(t, y(t)) \frac{\dd y(t)}{\dd y(s)}\nonumber\\
	&= \left(\frac{\dd a_y}{\dd t}(t) + a_y(t)^\top \frac{\partial f_\theta}{\partial y}(t, y(t))\right)\frac{\dd y(t)}{\dd y(s)}\nonumber\\
	&=0\label{eq:proofs:_ode-adjoint-diff}
\end{align}
where the second line is obtained by differentiating \eqref{eq:proofs:_forward-ode-sensitivity} directly.

Therefore
\begin{equation*}
	a_y(t) = a_y(t)^\top \frac{\dd y(t)}{\dd y(t)} = a_y(T)^\top \frac{\dd y(T)}{\dd y(t)} = \frac{\dd L}{\dd y(t)}
\end{equation*}
and in particular $\nicefrac{\dd L}{\dd y_0} = \nicefrac{\dd L}{\dd y(0)} = a_y(0)$.
\end{proof}

\subsection{Optimise-then-discretise for CDEs}\label{appendix:cde-adjoint}\index{Optimise-then-discretise!CDEs}
\cdeadjoint*

The following proof is precisely analogous to the one presented for ODEs in the previous section. The only difference is that the product rule is substituted for its integral equivalent, namely integration by parts.
\begin{proof}
	First we will demonstrate existence and uniqueness of the adjoint process $a$. Analogous to the ODE case, we may wish to consider the vector field as a map $(s, a) \mapsto -a^\top\frac{\partial f}{\partial y}(y(s))$. However in the CDE setting we have restricted ourselves to vector fields that are a function of the state (in this case $a$) only.
	
	The quickest resolution to this is to incorporate the time dependence into the control. That is, we reformulate the solution to \eqref{eq:numerical:_cde-adjoint} as the solution to
	\begin{equation}\label{eq:proofs:_cde-adjoint-true}
		a(t) = a(T) + \int_T^t -a(s)^\top \,\dd M(s)
	\end{equation}
	where $M \colon [0, T] \to \reals^{d_y \times d_y}$ is itself the value of the integral
	\begin{equation}\label{eq:proofs:_cde-adjoint-jacobian}
		M(t) = M(T) + \int_T^t \frac{\partial f}{\partial y}(y(s))\,\dd x(s),
	\end{equation}
	which we note is merely an integral and not a differential equation.
	
	As $f$ was assumed to have continuous derivative then $s \mapsto \frac{\partial f}{\partial y}(y(s))$ is continuous and so \eqref{eq:proofs:_cde-adjoint-jacobian} exists and is of bounded variation as a Riemann--Stieltjes integral. Then by Picard's existence theorem (Theorem \ref{theorem:cde:picard}), \eqref{eq:proofs:_cde-adjoint-true} exists and is unique as the vector field $a \mapsto -a^\top$ is Lipschitz.
	
	Next, let $J_s(t) = \nicefrac{\dd y(t)}{\dd y(s)} \in \reals^{d_y \times d_y}$, which by \cite[Theorem 4.4]{FrizVictoir10} exists and satisfies the CDE
	\begin{equation*}
		J_s(t) = J_s(0) + \int_0^t \frac{\partial f}{\partial y}(y(u)) J_s(u)\,\dd x(u).
	\end{equation*}
	(This CDE is the one we would expect, in analogy to the ODE case.)
	
	For $s, t, \tau \in [0, T]$ with $s < t < \tau$, and using Einstein notation over indices $k_1, k_2, k_3$,
	\begin{align*}
		&a_{k_1}(\tau) J_{s, k_1, k_2}(\tau) - a_{k_1}(t) J_{s, k_1, k_2}(t)\\
		&= \int_t^{\tau} a_{k_1}(u) \,\dd J_{k_1, k_2}(u) + \int_t^{\tau} J_{s, k_1, k_2}(u) \,\dd a_{k_1}(u)\\
		&= \int_t^{\tau} a_{k_1}(u) \frac{\partial f_{k_1}}{\partial y_{k_2}}(u, y(u)) J_{s, k_2, k_3}(u) \,\dd x(u) - \int_t^{\tau} a_{k_1}(u) \frac{\partial f_{k_1}}{\partial y_{k_2}}(u, y(u)) J_{s, k_2, k_3}(u) \,\dd x(u)\\
		&= 0.
	\end{align*}
	where the first equality is integration by parts for Riemann--Stieltjes integrals, and the second equality follows from substituting in the differential equations defining $a$ and $z$.
	
	Therefore
	\begin{equation*}
		a(t) = a(t)^\top \frac{\dd y(t)}{\dd y(t)} = a(t)^\top J_t(t) = a(T)^\top J_t(T) = a(T)^\top \frac{\dd y(T)}{\dd y(t)} = \frac{\dd L}{\dd y(t)}.
	\end{equation*}
\end{proof}

\subsection{Optimise-then-discretise for SDEs}\label{appendix:sde-adjoint}\index{Rough!Path theory}\index{Optimise-then-discretise!SDEs}
We now provide a precise statement for optimise-then-discretise backpropagation through SDEs (originally stated informally in Theorem \ref{theorem-informal:numerical:sde-adjoint}).

Classical SDE theory struggles to make sense of the backward-in-time SDE. This motivates our use of rough path theory.

We begin by outlining the rough path approach to SDEs. We assume familiarity with bounded variation paths, Riemann--Stieltjes integration, and the definition of Brownian motion. We will \textit{not} assume familiarity with classical SDE theory -- for such readers the following presentation should provide an introduction to SDEs that is (in this author's opinion) substantially more elegant than the classical approach.

\subsubsection{Fundamentals}
We begin by setting up a few abstract notions.

\begin{notation*}
For any $d \in \naturals$ and $k \in \{0, 1, 2\}$, let $\pi_k$ denote the projection $\reals \times \reals^d \times \reals^{d \times d} \to \reals^{d^k}$.

We will use $\norm{\,\cdot\,}$ to denote any choice of norm on $\reals$, $\reals^d$, $\reals^{d \times d}$; in finite dimensions all are equivalent so the choice of norm will not be important to us.

CDEs will appear several times. As such and for consistency with the usual way of writing down SDEs, we will switch from denoting solutions of CDEs by
\begin{equation*}
	y(t) = y(0) + \int_0^t f(y(s))\,\dd x(s)
\end{equation*}
to denoting them by
\begin{equation*}
	\dd y(t) = f(y(t))\,\dd x(t).
\end{equation*}

Finally, we recall the standard notation collected at the end of this thesis, including in particular the definition of the tensor product $\otimes$.
\end{notation*}

\begin{definition}[Depth-2 signature]\index{Signatures}
	Let $x \colon [0, T] \to \reals^d$ be continuous and of bounded variation. Then the \textit{depth-2 signature of $x$} is defined as
	\begin{align}
	\sig^2(x) &\colon [0, T] \to \reals \times \reals^d \times \reals^{d \times d},\nonumber\\
	\sig^2(x) &\colon t \mapsto \sig^2_{0, t}(x) = \left(1, x(t) - x(0), \int_0^t (x(s) - x(0)) \otimes \,\dd x(s)\right).\label{eq:proofs:_signature_again}
	\end{align}
	where the final term is defined via Riemann--Stieltjes integration. The constant term $1$ is included by convention.
\end{definition}
Note the use of the tensor (outer) product $\otimes$. This is a bilinear operator so by appropriately manipulating dimensions then the integral of equation \eqref{eq:proofs:_signature_again} may be interpreted as a matrix-vector product as already introduced for controlled differential equations \cite[Definition B.4]{kidger2020neuralcde}.

Note that the signature may be defined for arbitrary depths -- indeed this was used elsewhere in this Appendix, see Definition \ref{definition:signature} -- and the above is simply the special case of interest to us here.

\begin{definition}[Partition]
	A \textit{partition} of $[0, T]$ is some finite sequence $\mathcal{D} = (t_0, \ldots t_n)$ with $0 = t_0 < \cdots < t_n = T$.
\end{definition}

\begin{definition}[{Inhomogeneous $p$-variation, \cite[Section 3.2.1]{levy-lyons}, \cite[Definition 8.6.(i)]{FrizVictoir10}}]
	Let $X_1, X_2 \colon [0, T] \to \reals \times \reals^d \times \reals^{d \times d}$.
	
	For $p \in [2, 3)$, define
	\begin{equation*}
		\rho_p(X_1, X_2) = \max_{k = 0, 1, 2} \sup_{\mathcal{D}} \left(\sum_{t_i \in \mathcal{D}} \norm{
			\pi_k\Big( X_1(t_{i+1}) - X_1(t_{i}) - X_2(t_{i+1}) + X_2(t_i) \Big)
		}^{p/k} \right)^{k/p},
	\end{equation*}
	where the supremum is taken over all partitions $\mathcal{D}$ of $[0, T]$.
	
	Then define the \emph{$p$-variation metric} between $X_1$ and $X_2$ as
	\begin{equation*}
		d_p(X_1, X_2) + \max_{k=0,1,2} \norm{\pi_k(X_1 - X_2)}_\infty + \rho_p(X_1, X_2).
	\end{equation*}
\end{definition}

\begin{remark}
	Notions of $p$-variation are crucial to rough path theory. Correspondingly several remarks are in order.
	\begin{itemize}
		\item If we were to take $p=1$, $X_2 \equiv 0$, and consider only $k=1$, then $\rho_p(X_1, X_2)$ would recover the definition of the bounded/total variation seminorm of $X_1$. Indeed $p$-variation should be thought of as a generalisation of total variation.
		\item It is immediate from the definition that $d_p$ convergence implies uniform convergence. However if $\norm{\pi_k(X_1 - X_2)}_\infty$ is replaced with just $\abs{\pi_k(X_1(0) - X_2(0))}$, then in fact $d_p$ convergence still implies uniform convergence \cite[Definition 3.12]{levy-lyons}.
		\item There are several quantities related to $p$-variation, often going by similar names \cite[Chapter 8]{FrizVictoir10}. Take care not to trip up when reading the literature.
		\item $p$-variation is a subtly different notion to that of quadratic variation used in classical SDE theory. Where $p$-variation takes a supremum over all partitions, quadratic variation instead takes a limit. The quadratic variation of a path may be smaller than its $2$-variation, and in particular almost all samples of Brownian motion have finite quadratic variation but infinite $2$-variation.
		\item A path which is H{\"o}lder continuous with exponent $\alpha \in (\frac{1}{3}, \frac{1}{2}]$ has finite $\frac{1}{\alpha}$-variation. For example Brownian motion is H{\"o}lder continuous with exponent $\alpha$ for all $\alpha \in (0, \frac{1}{2})$, and correspondingly Brownian motion has finite $p$-variation for all $p \in (2, 3)$.
	\end{itemize}
\end{remark}

\begin{definition}
	We say that a sequence of continuous and bounded variation paths $x_n \colon [0, T] \to \reals^d$ converge in $p$-variation to a continuous $X \colon [0, T] \to \reals \times \reals^d \times \reals^{d \times d}$ if
	\begin{equation*}
		\text{$d_p(\sig^2(x_n), X) \to 0$ as $n \to \infty$}.
	\end{equation*}
	Whenever such a limit exists, we refer to $X$ as a \textit{geometric $p$-rough path}.
\end{definition}

\begin{theorem}[{Brownian motion as a geometric rough path, \cite[Corollaries 13.20, 13.22]{FrizVictoir10}}]\label{theorem:proofs:stratonovich-brownian-motion}\index{Stratonovich}
	Let $w \colon [0, T] \to \reals^d$ be a Brownian motion. Let $D_n = (t_0, \ldots, t_n)$ be a uniform partition of $[0, T]$. Let $w_n$ be the unique continuous piecewise linear function with knots $t_j$ such that $w_n(t_j) = w(t_j)$.
	
	Let $W \colon [0, T] \to \reals \times \reals^d \times \reals^{d \times d}$ be defined by
	\begin{equation*}
		W(t) = \left(1, w(t) - w(0), \int_0^t (w(s) - w(0)) \otimes \,\circ\dd w(s)\right)
	\end{equation*}
	with $\circ$ denoting that the integral is defined in the Stratonovich sense.
	
	Let $p \in (2, 3)$ (but not $p=2$). Then $w_n$ converges to $W$ in $p$-variation almost surely. $W$ is called Stratonovich Brownian motion, and it is almost surely a geometric $p$-rough path.
\end{theorem}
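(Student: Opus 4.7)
The plan is to establish this by the standard two-step rough-path strategy: first produce the candidate limit pointwise, then upgrade to convergence in the $p$-variation metric via a Kolmogorov-type criterion for rough paths.

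First I would fix notation by writing, for any continuous bounded-variation $x$,
\begin{equation*}
S_{s,t}(x) = \int_s^t (x(u) - x(s)) \otimes \dd x(u),
\end{equation*}
so that $\pi_2(\sig^2(x))(t) - \pi_2(\sig^2(x))(s) = S_{s,t}(x)$ up to a cross term that is easy to handle. For the piecewise linear approximations, $S_{s,t}(w_n)$ is a well-defined Riemann--Stieltjes integral, and $\nicefrac{\dd w_n}{\dd u}$ is constant on each dyadic subinterval, so the integrand simplifies to a sum of deterministic quadratics in the Brownian increments $\Delta_j w = w(t_{j+1}) - w(t_j)$. The first step is to show that at each fixed $s < t$, $S_{s,t}(w_n) \to \pi_2(W)(s,t)$ almost surely. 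The diagonal contribution gives $\tfrac{1}{2}\Delta_j w \otimes \Delta_j w$, whose sum converges almost surely to $\tfrac{1}{2}(t-s)\,\eye{d}$, which is precisely the It\^o--Stratonovich correction; the off-diagonal contributions recover the It\^o iterated integral by a standard Riemann-sum argument. Together these reproduce the Stratonovich iterated integral $\pi_2(W)(s,t)$.

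Next, and this is the heart of the proof, I would obtain \emph{uniform} $L^q$ moment estimates on increments of $\sig^2(w_n) - \sig^2(w_m)$. Specifically, for any $q$ sufficiently large, I would prove
\begin{equation*}
\expect\bigl[\norm{\pi_1(\sig^2(w_n))(s,t) - \pi_1(\sig^2(w_m))(s,t)}^q\bigr]^{1/q} \leq C\,\varepsilon_{n,m}\,|t-s|^{1/2},
\end{equation*}
\begin{equation*}
\expect\bigl[\norm{\pi_2(\sig^2(w_n))(s,t) - \pi_2(\sig^2(w_m))(s,t)}^{q/2}\bigr]^{2/q} \leq C\,\varepsilon_{n,m}\,|t-s|,
\end{equation*}
where $\varepsilon_{n,m} \to 0$ as $n, m \to \infty$. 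The first-level estimate is elementary since $w_n - w_m$ is a Gaussian process with known modulus. The second-level estimate is the main technical step: expand the difference as a sum over partition intervals of terms that are polynomial in Brownian increments, use Wiener chaos / hypercontractivity to reduce $L^q$ norms to $L^2$ norms, and then exploit independence of increments across disjoint intervals to get the $|t-s|^{1}$ scaling with a prefactor that vanishes as the mesh shrinks.

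Finally, I would invoke the Kolmogorov criterion for geometric rough paths (as in Friz--Victoir, Chapter 13): moment bounds of the above form, with exponents strictly better than $1/p$ at each level, imply almost sure convergence of $\sig^2(w_n)$ in the $p$-variation metric $d_p$ for any $p \in (2,3)$ and a Borel--Cantelli argument upgrades this to an almost sure statement on a single full-measure set, independent of $p$ in that range. Since $p$-variation convergence implies uniform convergence of $\pi_1$, the limit at the first level must be $w(t)-w(0)$; by the pointwise identification of the second level above, the limit is $W$. The main obstacle will be the second moment estimate, both because one must carefully track Wiener-chaos coefficients so that the constant $\varepsilon_{n,m}$ decays (rather than merely staying bounded), and because the nonsymmetric nature of the iterated integral forces one to treat diagonal and off-diagonal indices separately to correctly recover the Stratonovich correction in the limit.
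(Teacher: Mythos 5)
The paper does not prove this theorem; it is stated with an explicit citation to \cite[Corollaries 13.20, 13.22]{FrizVictoir10}, so there is no ``paper's own proof'' to compare against. That said, your proposal reconstructs the standard argument one finds in that reference and in the surrounding Gaussian rough path literature: identify the pointwise limit of the level-two iterated integral (with the diagonal Riemann sums producing the It{\^o}--Stratonovich correction $\tfrac{1}{2}(t-s)\eye{d}$), establish $L^q$ increment estimates of Kolmogorov type at both levels --- with exponents $\tfrac{1}{2}$ and $1$ respectively, a vanishing prefactor $\varepsilon_{n,m}$, and hypercontractivity on the Wiener chaos to pass from $L^2$ to $L^q$ --- and then invoke the Kolmogorov--Lamperti/Besov embedding machinery for rough paths to upgrade to almost-sure convergence in $d_p$ for $p \in (2,3)$. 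You also correctly flag the genuine technical crux: it is not enough that the moment bounds be uniform in $n$; the off-diagonal difference terms must be shown to decay, which requires carefully tracking cancellations across disjoint partition intervals. One small organisational remark: Friz--Victoir phrase the estimates in terms of the covariance of the Gaussian process and its $\rho$-variation (so the argument generalises beyond Brownian motion to fractional Brownian motion with $H > \tfrac{1}{4}$), whereas your sketch works directly with Brownian increments and their independence; for Brownian motion specifically the two are equivalent, and your more concrete route is perfectly adequate for the statement at hand.
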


\paragraph{Summary}
Let us take stock of what has been introduced.

We have seen that for any continuous bounded variation path $x \colon [0, T] \to \reals^d$, we may consider `enhancing' it with $\int_0^t (x(s) - x(0)) \otimes \,\dd x(s)$. This extra term is completely determined by the base path $x$.

Meanwhile for a Brownian path $w \colon [0, T] \to \reals^d$, we may consider enhancing it with $\int_0^t (w(s) - w(0)) \otimes \,\circ\dd w(s)$. This time the extra term is not completely determined by the base path, and we had to make a choice: what notion of integration to use. We chose Stratonovich integration, but could equally have chosen another form of integration, such as It{\^o} integration. Because $w$ is not of bounded variation, then there is not a single unique notion of integration.%\footnote{Although Stratonovich is the unique choice for which $W$ is a geometric rough path, as opposed to a more general `non-geometric rough path'. This is not a topic we will explore here, however.}

One way or the other, we have lifted ourselves into the larger dimensional space $\reals \times \reals^d \times \reals^{d \times d}$. In this lifted space we have defined the notion of convergence we are interested in, namely $p$-variation. In performing this lift, it transpires that we have \textit{completely defined} what it means to integrate against a path $[0, T] \to \reals \times \reals^d \times \reals^{d \times d}$: in particular we have already made the choice of Stratonovich over It{\^o}. As such we will sometimes think of \textit{the lift ($\sig^2(x)$ or $W$) as the fundamental object}, and reverse what is defined by what, so that $x$ or $w$ is defined as the projection of $\sig^2(x)$ or $W$ by $\pi_1$.

\subsubsection{Rough differential equations and the universal limit theorem}\index{Rough!Differential equations}
We are now ready to define what is meant by a rough differential equation.

\begin{definition}[$\Lip(\gamma)$ functions]
	Let $\gamma > 1$. A function $f \colon \reals^d \to \reals^d$ is said to be $\Lip(\gamma)$ if it is bounded, $\floor{\gamma}$-times differentiable, all derivative are bounded, and the highest derivative is $(\gamma - \floor{\gamma})$-H{\"o}lder continuous.\footnote{This notation is conventional in rough path theory; when other fields have needed this concept it is sometimes denoted in other ways, such as `$f \in C^{k + \alpha}_b$' with $k = \floor{\gamma}$, $\alpha = \gamma - \floor{\gamma}$, and $b$ denoting boundedness.}
\end{definition}

\begin{theorem}[{Universal limit theorem, \cite[Theorem 5.3]{levy-lyons}, \cite[Theorems 10.29, 10.50, 10.57]{FrizVictoir10}}]\label{theorem:proofs:universal-limit-theorem}\index{Universal limit theorem}
	Let $d_x, d_y \in \naturals$. Let $p \in (2, 3)$ and let $\gamma > p$. Let $f \colon \reals^{d_y} \to \reals^{d_y \times d_x}$ be either linear or $\Lip(\gamma)$.
	
	Let $\zeta_n \in \reals^{d_y}$ be a sequence converging to $\zeta \in \reals^{d_y}$.
	
	Let $x_n \colon [0, T] \to \reals^{d_x}$ be a sequence of continuous bounded variation paths, which converge in $p$-variation to a geometric $p$-rough path $X \colon [0, T] \to \reals \times \reals^{d_x} \times \reals^{d_x \times d_x}$.
	
	Let $y_n \colon [0, T] \to \reals^{d_y}$ solve the CDEs
	\begin{equation*}
		y_n(0) = \zeta_n,\qquad \dd y_n(t) = f(y_n(t))\,\dd x_n(t).
	\end{equation*}

	Then there exists a unique geometric $p$-rough path $Y \colon [0, T] \to \reals \times \reals^{d_y} \times \reals^{d_y \times d_y}$ such that $Y(0) = (1, \zeta, 0)$ and $y_n$ converges to $Y$ in $p$-variation.
	
	Moreover, the limit $Y$ depends only on $X$, $f$ and $\zeta$, and in particular not on the sequence $x_n$. As such it is referred to as the `universal limit', and is said to solve the `rough differential equation'
	\begin{equation*}
		\dd Y(t) = f(\pi_1(Y(t))) \,\dd X(t).
	\end{equation*}
\end{theorem}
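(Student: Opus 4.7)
The strategy is to lift the classical Picard argument into the rough path setting. First I would establish uniform $p$-variation bounds for the enhanced solutions $\sig^2(y_n)$ in terms of $\rho_p(\sig^2(x_n), 0)$ and $\norm{\zeta_n}$. For $\Lip(\gamma)$ or linear $f$, these follow from standard a priori estimates on CDEs driven by bounded variation paths, phrased at the level of the signature rather than the base path so that they survive the $p$-variation limit. This will also reduce the problem to a compact subset of geometric $p$-rough paths, which will be useful later.

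\textbf{Local expansion.} The heart of the proof is a Davie-type local expansion. Over a short interval $[s,t]$, a CDE solution driven by $x_n$ satisfies
\begin{equation*}
y_n(t) - y_n(s) = f(y_n(s))\,\pi_1(\sig^2(x_n)_{s,t}) + \nabla f(y_n(s))\, f(y_n(s))\,\pi_2(\sig^2(x_n)_{s,t}) + R^{(n)}_{s,t},
\end{equation*}
with a remainder $R^{(n)}_{s,t}$ whose $p/3$-variation as a two-parameter function is finite — which is exactly what the Taylor expansion \eqref{eq:signature-as-taylor-expansion} suggests once one truncates at order two. I would establish this expansion, with a remainder bound depending only on the $p$-variation of $\sig^2(x_n)$ and the $\Lip(\gamma)$-norm (resp.\ operator norm) of $f$, using standard Riemann--Stieltjes calculus on the bounded variation approximants.

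\textbf{Sewing and passage to the limit.} The sewing lemma (equivalently Lyons' extension theorem) patches these local expansions into a global object and gives, crucially, a continuity estimate: the $p$-variation distance between two solutions $\sig^2(y_n)$ and $\sig^2(y_m)$ is controlled by $d_p(\sig^2(x_n), \sig^2(x_m)) + \norm{\zeta_n - \zeta_m}$, uniformly in $n,m$, via a Young--Gronwall iteration. Since $\sig^2(x_n)$ is Cauchy in $d_p$ and $\zeta_n$ is Cauchy in $\reals^{d_y}$, the enhanced solutions $\sig^2(y_n)$ are Cauchy in $d_p$; completeness of the space of geometric $p$-rough paths under this metric produces the limit $Y$ with $Y(0) = (1,\zeta,0)$. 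The same continuity estimate applied to any other sequence $\widetilde{x}_n \to X$ in $p$-variation shows the limit depends only on $(X,f,\zeta)$, justifying the terminology `universal limit' and giving the interpretation as a solution to the RDE.

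\textbf{Main obstacle.} The genuinely hard step is the continuity estimate: the relevant Picard-like map is a contraction in $p$-variation only on sufficiently short intervals, with a constant that must be shown independent of $n$. Classical Gronwall fails for $p>1$, so one must iterate the local contraction across overlapping intervals and control the resulting growth by a rough-path Gronwall inequality whose exponent depends on the uniform $p$-variation bound from the first step. For $\Lip(\gamma)$ vector fields this is the original Lyons estimate and is quite delicate (requiring $\gamma > p$ to close the Taylor remainder); for linear $f$ it simplifies dramatically, since $\sig^2(y)$ admits an explicit expansion as a linear functional of the full signature of $X$, and convergence and uniqueness reduce to the factorial decay of iterated integrals.
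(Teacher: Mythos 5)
The paper does not prove this theorem; it is stated as a citation to the literature (Lyons' Theorem 5.3 in \cite{levy-lyons}, Friz--Victoir Theorems 10.29, 10.50, 10.57) and is used as a black box in the proof of the SDE adjoint result (Theorem \ref{theorem:proofs:sde-adjoint}). The Comments section even notes that the paper's statement is a `mish-mash' of these references and that the $\zeta_n \to \zeta$ clause is recovered from Davie's lemma in \cite{FrizVictoir10}. So there is no paper-internal proof to compare against.

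That said, your outline is a correct summary of the route those references take: a priori bounds on the enhanced solution, a Davie-type local expansion truncated at level $\floor{p}=2$ with a remainder of order $\bigO{\abs{t-s}^{3/p}}$, the sewing lemma to glue, and a rough-Gronwall iteration to obtain continuity of the It{\^o}--Lyons map. Your observation that the linear case collapses to factorial decay of iterated integrals matches why Friz--Victoir treat linear RDEs separately (Theorem 10.50). Two points you gloss over that the cited proofs must handle. First, you track convergence of $\sig^2(y_n)$ in $d_p$, but the conclusion also needs the limit $Y$ to be \emph{geometric}; this follows because each $\sig^2(y_n)$ is the lift of a bounded-variation path (hence geometric by definition) and geometricity is closed under $d_p$ limits, but it is not automatic from Cauchy-ness alone. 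Second, the `reduce to a compact subset' step is a heuristic: the continuity estimate you want is uniform-on-bounded-sets, not genuine compactness, and the Friz--Victoir argument gets uniformity directly from Davie's lemma plus the a priori bound rather than extracting a compact set. Neither point is a fatal gap, but if you were writing this out in full they would need to be said explicitly rather than assumed.
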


Given drift $\mu$, diffusion $\sigma$, and Brownian motion $w$, then we may now immediately deduce a corollary specifically for SDEs, by taking $f = \begin{bmatrix}\mu & 0 \\ 0 & \sigma \end{bmatrix}$ and $x_n(t) = [t, w_n(t)]$ in the above result.

\begin{notation*}
Let $\Tau(t) = \sig^2_{0, t}(\mathrm{id}) \in \reals \times \reals \times \reals$, where $\mathrm{id}$ is the identity function. Where a stochastic integral $\int \cdots \circ\dd w(t)$ will become $\int \cdots \dd W(t)$ when lifting to the rough setting, a deterministic integral $\int \cdots \dd t$ will become $\int \cdots \dd \Tau(t)$.
\end{notation*}

\begin{corollary}[Universal limit theorem for Stratonovich SDEs]\label{theorem:proofs:universal-limit-stratonovich}\index{Stratonovich}\index{Universal limit theorem}
	Let $d_w, d_y \in \naturals$. Let $p \in (2, 3)$ and let $\gamma > p$. Let $\mu \colon \reals \times \reals^{d_y} \to \reals^{d_y}$ and $\sigma \colon \reals \times \reals^{d_y} \to \reals^{d_y \times d_w}$ be either linear or $\Lip(\gamma)$.
	
	Let $\zeta_n \in \reals^{d_y}$ be a sequence converging to $\zeta \in \reals^{d_y}$.
	
	Let $w_n \colon [0, T] \to \reals^{d_w}$ be as defined in Theorem \ref{theorem:proofs:stratonovich-brownian-motion}, converging to the Stratonovich Brownian motion $W \colon [0, T] \to \reals \times \reals^{d_w} \times \reals^{d_w \times d_w}$.
	
	Let $y_n \colon [0, T] \to \reals^{d_y}$ solve the (random) CDEs
	\begin{equation*}
		y_n(0) = \zeta_n,\qquad \dd y_n(t) = \mu(t, y_n(t))\,\dd t + \sigma(t, y_n(t))\,\dd w_n(t).
	\end{equation*}

	Then $y_n$ converge in $p$-variation almost surely to a unique geometric $p$-rough path $Y$ solving the rough differential equation
	\begin{equation}\label{eq:proofs:_univeral-limit-stratonovich-rough}
		Y(0) = (1, \zeta, 0),\qquad \dd Y(t) = \mu(s, \pi_1(Y(s)))\,\dd \Tau(t) + \sigma(s, \pi_1(Y(s))) \,\dd W(s),
	\end{equation}
	and moreover the process $y(t) = \pi_1(Y(t))$ satisfies the Stratonovich SDE
	\begin{equation*}
		\dd y(t) = \mu(t, y(t))\,\dd t + \sigma(t, y(t))\,\circ\dd w(t)
	\end{equation*}
	defined in the classical sense.
\end{corollary}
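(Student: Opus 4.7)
The plan is to reduce the corollary to the abstract universal limit theorem (Theorem \ref{theorem:proofs:universal-limit-theorem}) applied to a suitably augmented driving path, and then separately identify the rough projection $\pi_1(Y)$ with the classical Stratonovich solution. First, I would remove the explicit time dependence of $\mu$ and $\sigma$ by enlarging the state. Set $\widetilde{y}_n(t) = (t, y_n(t)) \in \reals^{1+d_y}$, augment initial conditions to $\widetilde{\zeta}_n = (0, \zeta_n)$, and define the autonomous vector field $\widetilde{f} \colon \reals^{1+d_y} \to \reals^{(1+d_y) \times (1+d_w)}$ by
\begin{equation*}
\widetilde{f}(s, y) = \begin{bmatrix} 1 & 0 \\ \mu(s, y) & \sigma(s, y) \end{bmatrix}.
\end{equation*}
The CDE $\dd \widetilde{y}_n(t) = \widetilde{f}(\widetilde{y}_n(t)) \,\dd \widetilde{x}_n(t)$ driven by $\widetilde{x}_n(t) = (t, w_n(t))$ then forces the first component of $\widetilde{y}_n$ to equal $t$ and reduces on the remaining components to the CDE given in the statement. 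Crucially, $\widetilde{f}$ inherits linearity or $\Lip(\gamma)$ regularity from $\mu$ and $\sigma$.

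Second, I would construct the limiting geometric $p$-rough path $\widetilde{X}$ driving $\widetilde{x}_n$. Its level-one is $(t, w(t) - w(0))$. Its level-two combines three blocks: the purely-time block $\int_0^t s \,\dd s$ (a Riemann--Stieltjes integral contained in $\Tau(t)$), the purely-Brownian block $\int_0^t (w(s) - w(0)) \otimes \,\circ\dd w(s)$ from Theorem \ref{theorem:proofs:stratonovich-brownian-motion}, and the mixed blocks $\int_0^t s \,\circ\dd w(s)$ and $\int_0^t (w(s) - w(0)) \,\dd s$ defined via Stratonovich or Riemann--Stieltjes integration respectively. Convergence $\sig^2(\widetilde{x}_n) \to \widetilde{X}$ in $p$-variation a.s.\ then follows by combining Theorem \ref{theorem:proofs:stratonovich-brownian-motion} on the Brownian block with standard estimates on the mixed blocks (the time component has bounded variation, so the mixed iterated integrals are continuous in the uniform topology of $w_n$).

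Third, Theorem \ref{theorem:proofs:universal-limit-theorem} applied with driver $\widetilde{X}$, vector field $\widetilde{f}$, and initial condition $\widetilde{\zeta}_n \to \widetilde{\zeta}$ produces a unique geometric $p$-rough path $\widetilde{Y}$ solving $\dd \widetilde{Y} = \widetilde{f}(\pi_1(\widetilde{Y})) \,\dd \widetilde{X}$, with $\widetilde{y}_n \to \widetilde{Y}$ in $p$-variation a.s. Stripping off the (deterministic) time component gives a rough path $Y$ with $y_n \to Y$ in $p$-variation, and expanding the block structure of $\widetilde{f}$ recovers precisely equation \eqref{eq:proofs:_univeral-limit-stratonovich-rough}. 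Uniqueness of $Y$ inherits directly from that of $\widetilde{Y}$.

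The main obstacle is the final claim that $y(t) = \pi_1(Y(t))$ satisfies the Stratonovich SDE in the \emph{classical} probabilistic sense. This is in essence a Wong--Zakai identification. My plan is to note that each pre-limit $y_n$ simultaneously satisfies two things: it is the solution of a pathwise ODE whose integrals are Riemann--Stieltjes against the piecewise-linear $w_n$, and, for those same piecewise-linear drivers, the Riemann--Stieltjes integral $\int_0^t \sigma(s, y_n(s)) \,\dd w_n(s)$ agrees with the classical Stratonovich integral $\int_0^t \sigma(s, y_n(s)) \circ\dd w_n(s)$. Passing to the limit, the left-hand side converges in $p$-variation by continuity of the rough integral with respect to its driver, while the right-hand side converges in probability to $\int_0^t \sigma(s, y(s)) \circ\dd w(s)$ by the classical Wong--Zakai theorem applied to the adapted, $\Lip(\gamma)$ integrand $\sigma(\cdot, y_n(\cdot))$. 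The two limits must therefore agree almost surely, yielding that $y$ solves the Stratonovich SDE classically. The technical care required is in justifying the interchange of the pathwise $p$-variation limit with the probabilistic Stratonovich limit on a common full-probability event, which is where the bulk of the work in a full proof would lie.
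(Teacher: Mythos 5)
Your proposal is correct and takes essentially the same route as the paper, which deduces this corollary in a single sentence by applying Theorem \ref{theorem:proofs:universal-limit-theorem} to the augmented control $(t, w_n(t))$ and leaves to later remarks the details you spell out -- folding time into the state, the fact that the joint rough lift lives in $\reals \times \reals^{1+d_w} \times \reals^{(1+d_w)\times(1+d_w)}$ and must be identified with the two separate $\Tau, W$ terms, and the final classical Stratonovich identification. One simplification to your last paragraph: rather than comparing Riemann--Stieltjes integrals of the $n$-dependent integrand $\sigma(\cdot, y_n(\cdot))$ against $w_n$ with Stratonovich integrals against $w$, simply note that the classical Wong--Zakai theorem already identifies the uniform-in-probability limit of $y_n$ with the classical Stratonovich SDE solution; together with the almost-sure $p$-variation convergence $y_n \to \pi_1(Y)$ you established, uniqueness of limits gives $\pi_1(Y) = y$ with no integral-by-integral comparison required.
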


See Section \ref{appendix:proofs:rough-sde-subtle} for an appendix on some technical points associated with Theorems \ref{theorem:proofs:universal-limit-theorem} and \ref{theorem:proofs:universal-limit-stratonovich}.

\paragraph{Summary}
The key point of the universal limit theorem is that instead of defining a differential equation driven by some continuous path $[0, T] \to \reals^d$ (for example as was done with CDEs), we have defined a differential equation driven by an enhanced path $[0, T] \to \reals \times \reals^d \times \reals^{d \times d}$.

In particular we have defined integration against Stratonovich Brownian motion. Note the terminology of `Stratonovich Brownian motion' rather than `Stratonovich SDE': the rough path approach has entirely contained both the `Stratonovich-ness' and the stochasticity to within the enhanced Brownian motion $W$. After that we simply sample $W$, and deterministically solve the RDE driven by this Brownian sample.

Rough objects are easily dealt with via the universal limit theorem: any time we encounter an RDE we may simply approximate it with a sequence of CDEs, perform the appropriate manipulations, and then take a limit.

Overall, we see that the notions of stochasticity, roughness, and control have been factored apart. This is in contrast to the classical approach to SDEs, which muddles together these three separate ideas.

In passing, note the relatively high regularity $\Lip(\gamma)$ assumed of the vector fields. This is needed to `offset' the roughness of the driving signal.

\subsubsection{Rough adjoints}

Having established what is meant by a rough differential equation, and how it may be used as a notion of solution to a stochastic differential equation, it is now straightforward to derive our main result. This is the precise statement corresponding to the informal Theorem \ref{theorem-informal:numerical:sde-adjoint}.

\begin{theorem}[Optimise-then-discretise for SDEs]\label{theorem:proofs:sde-adjoint}
	Fix $d_y, d_w \in \naturals$ and $\gamma > 2$. Let $\mu \colon \reals \times \reals^{d_y} \to \reals^{d_y}$ and $\sigma \colon \reals \times \reals^{d_y} \to \reals^{d_y \times d_w}$ be linear or $\Lip(\gamma + 1)$.
	
	Let $W \colon \reals \times \reals^{d_w} \times \reals^{d_w \times d_w}$ denote a Stratonovich Brownian motion as in Theorem \ref{theorem:proofs:stratonovich-brownian-motion}.
	
	Let $L \colon \reals^{d_y} \to \reals$ be continuously differentiable (and scalar just for simplicity). Let $y_0 \in \reals^{d_y}$ and let $Y \colon [0, T] \to \reals \times \reals^{d_y} \times \reals^{d_y \times d_y}$ solve the rough differential equation
	\begin{equation}\label{eq:proofs:_sde-adjoint-forward}
		Y(0)=(1, y_0, 0),\qquad\dd Y(t) = \mu(t, y(t))\,\dd \Tau(t) + \sigma(t, y(t)) \,\dd W(t),
	\end{equation}
	where $y(t) = \pi_1(Y(t))$.

	Consider the adjoint process $A$ solving the backwards-in-time linear rough differential equation
	\begin{align}
		A(T) &= \left(1, \frac{\dd L(y(T))}{\dd y(T)}, 0\right),\nonumber\\
		\dd A(t) &= -a(t)^\top\frac{\partial \mu}{\partial y}(t, y(t))\,\dd \Tau(t) - a(t)^\top \frac{\partial \sigma}{\partial y}(t, y(t)) \,\dd W(t),\label{eq:proofs:_sde-adjoint-backward}
	\end{align}
	where $a(t) = \pi_1(A(t))$.
	
	Then the solution $A$ exists and is unique, and for almost all sample paths $W$, we have $a(t) = \nicefrac{\dd L(y(T))}{\dd y(t)} \in \reals^{d_y}$.
\end{theorem}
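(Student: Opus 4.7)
The plan is to follow the same template used for the ODE and CDE cases (the product rule / integration-by-parts argument on $a(t)^\top J_t(t)$, where $J_s(t) = \nicefrac{\dd y(t)}{\dd y(s)}$), but lifted to the rough-path setting so that the argument is pathwise for almost every sample of $W$. The whole proof is deterministic once a Brownian sample has been fixed: we work with the RDE \eqref{eq:proofs:_sde-adjoint-forward} driven by $(\Tau, W)$ and transfer all classical SDE statements through the universal limit theorem (Theorem \ref{theorem:proofs:universal-limit-theorem} / Corollary \ref{theorem:proofs:universal-limit-stratonovich}).

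First I would establish existence, uniqueness and the correct notion of solution for the backward equation \eqref{eq:proofs:_sde-adjoint-backward}. Because $\mu, \sigma \in \Lip(\gamma+1)$, the coefficients $\nicefrac{\partial \mu}{\partial y}$ and $\nicefrac{\partial \sigma}{\partial y}$ are $\Lip(\gamma)$ with $\gamma > p$ for some $p \in (2,3)$, and the vector field $a \mapsto -a^\top \nabla(\mu,\sigma)$ is linear in $a$. Reversing time (setting $\widetilde{W}(t) = W(T) - W(T-t)$, which is again a geometric $p$-rough path almost surely), and incorporating the time-dependent coefficient into the driver as in the CDE adjoint proof of Appendix \ref{appendix:cde-adjoint} (replace the pair $(\mu,\sigma)$ evaluated along $y$ by a single auxiliary rough path $M$ obtained as a rough integral), the universal limit theorem yields a unique solution $A$. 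Its projection $a = \pi_1(A)$ is the candidate adjoint.

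Next I would prove the identification $a(t) = \nicefrac{\dd L(y(T))}{\dd y(t)}$ by a rough integration-by-parts argument mirroring the CDE case. The Jacobian $J_s(t) = \nicefrac{\dd y(t)}{\dd y(s)}$ itself satisfies a forward linear RDE driven by $(\Tau, W)$, with vector field built from $\nicefrac{\partial \mu}{\partial y}$ and $\nicefrac{\partial \sigma}{\partial y}$; existence, uniqueness, and pathwise continuous dependence on $W$ follow again from the universal limit theorem (this is the analogue of \cite[Theorem 4.4]{FrizVictoir10} used in Appendix \ref{appendix:cde-adjoint}). For a smooth approximating sequence $w_n$ of Brownian sample paths (Theorem \ref{theorem:proofs:stratonovich-brownian-motion}), the corresponding $y_n, J_{s,n}, a_n$ solve ordinary CDEs and Theorem \ref{theorem:numerical:cde-adjoint} applies, giving $a_n(t)^\top J_{s,n}(t) = a_n(s)^\top$ for $s \leq t$. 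By continuity of the Itô–Lyons solution map in $p$-variation, $a_n \to a$ and $J_{s,n} \to J_s$ uniformly (since $d_p$-convergence implies uniform convergence), so passing to the limit gives $a(t)^\top J_t(t) = a(t)$ and $a(T)^\top J_t(T) = a(t)$. Substituting the terminal condition $a(T) = \nicefrac{\dd L(y(T))}{\dd y(T)}$ and $J_t(T) = \nicefrac{\dd y(T)}{\dd y(t)}$ gives the claimed identity.

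The main obstacle is the passage to the limit in the product $a_n(t)^\top J_{s,n}(t)$: one needs \emph{joint} continuity in $p$-variation of the solution operator for both the forward Jacobian RDE and the backward adjoint RDE with respect to the driver $W$, on a common full-measure set of sample paths. This essentially requires applying the universal limit theorem twice — once to the augmented system $(Y, J)$ on $[0,T]$ and once to the time-reversed adjoint system — and verifying that the $\Lip(\gamma+1)$ hypothesis on $\mu, \sigma$ is the exact regularity needed for the derivative vector fields governing $A$ and $J$ to be $\Lip(\gamma)$ with $\gamma > p$, so that both RDEs are well-posed for the same $p \in (2,3)$. Once this joint continuity is in hand, the rest of the argument is a routine translation of the CDE adjoint proof, since rough integration by parts holds for geometric rough paths just as it does in the bounded-variation case.
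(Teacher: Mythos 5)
Your proposal is correct and follows essentially the same route the paper takes: approximate the Stratonovich Brownian motion by piecewise-linear paths, invoke the CDE adjoint theorem pathwise, and pass to the limit via the universal limit theorem (with the same trick of absorbing the $y_n$-dependence of the derivative vector fields into the state/driver to make the limit theorem applicable). The only cosmetic difference is that the paper passes the identity $a_n(t) = \tfrac{\dd L(y_n(T))}{\dd y_n(t)}$ to the limit directly (identifying the limiting Jacobian flow with the RDE Jacobian via \cite[Theorem 11.3]{FrizVictoir10} and then using uniqueness of limits), rather than re-deriving and passing to the limit the conserved product $a_n(t)^\top J_{s,n}(t)$ as you suggest; these two bookkeeping choices are equivalent.
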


For completeness we note that the non-rough (classical SDE) equivalent to \eqref{eq:proofs:_sde-adjoint-forward} is
\begin{equation*}
	y(0)= y_0,\qquad\dd y(t) = \mu(t, y(t))\,\dd t + \sigma(t, y(t)) \circ\dd w(t),
\end{equation*}
whilst the non-rough equivalent to \eqref{eq:proofs:_sde-adjoint-backward} is
\begin{equation*}
	\dd a_{k_1}(t) = -a_{k_2}(t)\frac{\partial \mu_{k_2}}{\partial y_{k_1}}(t, y(t))\,\dd t - a_{k_2}(t) \frac{\partial \sigma_{k_2, k_3}}{\partial y_{k_1}}(t, y(t)) \,\circ\dd w_{k_3}(t),
\end{equation*}
in Einstein notation is over the indices $k_1, k_2, k_3$. This latter equation is not technically defined as a Stratonovich SDE ($a$ is not measurable with respect to the natural filtration of $w$), and so is best interpreted as the projection under $\pi_1$ of the rough differential equation.

\begin{proof}
	Let $p \in (2, \gamma)$.
	
	Let $w_n \colon [0, T] \to \reals^{d_w}$ be as defined in Theorem \ref{theorem:proofs:stratonovich-brownian-motion}, converging in $p$-variation to the Stratonovich Brownian motion $W$.
	
	Let $y_n \colon [0, T] \to \reals^{d_y}$ solve the (random) CDEs
	\begin{equation*}
		y_n(0) = y_0,\qquad \dd y_n(t) = \mu(s, y_n(s))\,\dd s + \sigma(s, y_n(s))\,\dd w_n(s),
	\end{equation*}
	which by the universal limit theorem (Theorem \ref{theorem:proofs:universal-limit-stratonovich}) converge to $Y$ in $p$-variation almost surely.
	
	By optimise-then-discretise for CDEs (Theorem \ref{theorem:numerical:cde-adjoint}), each adjoint process
	\begin{equation}\label{eq:proof:_adjoint-sde-limiting-solution}
	a_n(t) = \frac{\dd L(y_n(T))}{\dd y_n(t)}
	\end{equation}
	satisfies
	\begin{equation}\label{eq:proof:_adjoint-sde-limiting}
		\dd a_{n, k_1}(t) = -a_{n, k_2}(t)\frac{\partial \mu_{k_2}}{\partial y_{k_1}}(t, y_n(t))\,\dd t - a_{n, k_2}(t) \frac{\partial \sigma_{k_2, k_3}}{\partial y_{k_1}}(t, y_n(t)) \,\dd w_{n, k_3}(t)
	\end{equation}
	starting from the terminal condition $a_n(T) = \nicefrac{\dd L(y_n(T))}{\dd y_n(T)}$, and Einstein notation is used over the indices $k_1, k_2, k_3$.
	
	As in the proof of Theorem \ref{theorem:numerical:cde-adjoint}, we interpret the solution \eqref{eq:proof:_adjoint-sde-limiting} as the solution to the CDE
	\begin{equation*}
		\dd a_{n, k_1}(t) = -a_{n, k_2}(t) \,\dd M_{n, k_2, k_1}(t),
	\end{equation*}
	where $M_n \colon [0, T] \to \reals^{d_y \times d_y}$ is a bounded variation path satisfying
	\begin{equation}\label{eq:proof:_adjoint-sde-jacobian}
		\dd M_n(t) = \frac{\partial \mu}{\partial y}(t, y_n(s))\,\dd s + \frac{\partial \sigma}{\partial y}(t, y_n(s)) \,\dd w_n(s).
	\end{equation}

	We would like to take $n \to \infty$ in equation \eqref{eq:proof:_adjoint-sde-jacobian} via the universal limit theorem. The version we have stated here does not allow for $n$-dependent vector fields (note that the vector fields depend on $y_n$). This is resolved by the standard trick of replacing $M_n$ with $[M_n, z_n]$ and $[s, w_n(s)]$ with $[s, w_n(s), y_n(s)]$, where
	\begin{align}
		\dd M_n(t) &= \frac{\partial \mu}{\partial y}(t, z_n(s))\,\dd s + \frac{\partial \sigma}{\partial y}(t, z_n(s)) \,\dd w_n(s),\nonumber\\
		\dd z_n(t) &= \dd y_n(t),\label{eq:proof:_adjoint-sde-jacobian-two}
	\end{align}
	so that the vector fields are a function of the state $[M_n, z_n]$ only.

	We may now take $n \to \infty$ in equation \eqref{eq:proof:_adjoint-sde-jacobian-two} by the universal limit theorem, as $\nicefrac{\partial \mu}{\partial y}$, $\nicefrac{\partial \sigma}{\partial y}$ are $\Lip(\gamma)$.
	
	As such $M_n$ converges in $p$-variation almost surely to a geometric $p$-rough path
	\begin{equation*}
	\mathbb{M} \colon [0, T] \to \reals \times \reals^{d_y \times d_y} \times \reals^{d_y \times d_y \times d_y \times d_y}
	\end{equation*}
	satisfying\footnote{Implying that the corresponding the non-rough $M(t) = \pi_1(\mathbb{M})$ satisfies
		\begin{equation*}
			\dd M(t) = \frac{\partial \mu}{\partial y}(s, y(s))\,\dd s + \frac{\partial \sigma}{\partial y}(s, y(s)) \,\circ \dd w(s).
		\end{equation*}}
	\begin{equation*}
		\dd \mathbb{M}(t) = \frac{\partial \mu}{\partial y}(s, y(s))\,\dd \Tau(s) + \frac{\partial \sigma}{\partial y}(s, y(s)) \,\dd W(s).
	\end{equation*}
	
	By the universal limit theorem (with linear vector field), then $a_n$ now converges in $p$-variation almost surely to a geometric $p$-rough path $A$ satisfying
	\begin{equation*}
		\dd A(t) = -\pi_1(A(t)) \,\dd \mathbb{M}(t),
	\end{equation*}
	which we may rewrite as
	\begin{equation*}
		\dd A(t) = -a(t)^\top\frac{\partial \mu}{\partial y}(t, y(t))\,\dd \Tau(t) - a(t)^\top \frac{\partial \sigma}{\partial y}(t, y(t)) \,\dd W(t),
	\end{equation*}
	with $a(t) = \pi_1(A(t))$. We are now halfway through the proof, and have derived our desired RDE.
	
	\textit{Overall what we have done is very simple: just take the limit $n \to \infty$ in \eqref{eq:proof:_adjoint-sde-limiting}. The argument until now has just been to shuffle things around so that the appropriate theorems may be applied.}
	
	It remains to show that $a(t) = \nicefrac{\dd L(y(T))}{\dd y(t)}$. (Implying in particular the terminal condition $A(T) = \left(1, \frac{\dd L(y(T))}{\dd y(T)}, 0\right)$.)
	
	Fix $s, t \in [0, T]$ with $s < t$ and let $J_n(t) = \frac{\dd y_n(t)}{\dd y_n(s)}$. \cite[Theorem 4.4]{FrizVictoir10} gives the `forward sensitivity' result for CDEs (the forward-mode autodifferentiation counterpart to the reverse-mode autodifferentiation version we are currently deriving), and states that the Jacobian $J_n$ evolves according to the CDE
	\begin{equation*}
		\dd J_n(t) = \frac{\partial \mu}{\partial y}(t, y_n(t)) J_n(t) \,\dd t + \frac{\partial \sigma}{\partial y}(t, y_n(t)) J_n(t)\,\dd w_n(t).
	\end{equation*}
	By the universal limit theorem (and applying the same trick as with $\mathbb{M}$, moving $y_n$ into the state and control), this sequence converges in $p$-variation almost surely to a geometric $p$-rough path
	\begin{equation*}
		\mathbb{J} \colon [0, T] \to \reals \times \reals^{d_y \times d_y} \times \reals^{d_y \times d_y \times d_y \times d_y}
	\end{equation*}
	satisfying
	\begin{equation*}
		\dd \mathbb{J}(t) = \frac{\partial \mu}{\partial y}(t, y(t)) \pi_1(\mathbb{J}(t)) \,\dd \Tau(t) + \frac{\partial \sigma}{\partial y}(t, y(t)) \pi_1(\mathbb{J}(t))\,\dd W(t).
	\end{equation*}
	We appeal to our final theorem. \cite[Theorem 11.3]{FrizVictoir10} gives the `forward sensitivity' result for RDEs, satisfied by the the lift of $\nicefrac{\dd y(t)}{\dd y(s)}$. And unsurprisingly, this is the same equation we have just derived. So by uniqueness of solution $\pi_1(\mathbb{J}(t)) = \nicefrac{\dd y(t)}{\dd y(s)}$.
	
	In summary: we have shown that the Jacobian flow $t \mapsto \frac{\dd y_n(t)}{\dd y_n(s)}$ converges, and moreover it converges to (the lift of) $t \mapsto \frac{\dd y(t)}{\dd y(s)}$. (In each case with respect to $p$-variation almost surely, and therefore also uniformly almost surely and therefore also pointwise almost surely.)
	
%	Recalling that $p$-variation convergence implies uniform convergence, then $t \mapsto \frac{\dd y_n(t)}{\dd y_n(s)}$ converges to $t \mapsto \frac{\dd y(t)}{\dd y(s)}$ uniformly over $t$.
%	
%	Now $\mathbb{\Phi}$ is the $p$-variation limit of $\nicefrac{\dd y_n(\,\cdot\,)}{y_n(s)}$  $\nicefrac{\dd \Phi_{n, s, \cdot}}{\dd y}(y)$; recalling that $p$-variation convergence implies uniform convergence, then $\pi_1(\mathbb{\Psi}_{s}) = \nicefrac{\dd y(\,\cdot\,)}{\dd y(s)}$ is the uniform limit of $\nicefrac{\dd \Phi_{n, s, \cdot}}{\dd y}(y)$.
%
%	\begin{equation*}
%		a_n(t) = \frac{\dd L(y_n(T))}{\dd y_n(t)} = \frac{\dd L}{\dd y}(y_n(T)) \frac{\dd y_n(T)}{\dd y_n(t)}.
%	\end{equation*}
%	Recall that $p$-variation convergence (of $y_n$) implies uniform convergence, and thus pointwise convergence, so $y_n(T) \to y(T)$ and thus $\frac{\dd L}{\dd y}(y_n(T)) \to \frac{\dd L}{\dd y}(y(T))$ by continuous differentiability of $L$.
	
	Consequently and by continuous differentiability of $L$,
	\begin{equation*}
		a_n(t) = \frac{\dd L(y_n(T))}{\dd y_n(t)} = \frac{\dd L}{\dd y}(y_n(T)) \frac{\dd y_n(T)}{\dd y_n(t)} \to \frac{\dd L}{\dd y}(y(T)) \frac{\dd y(T)}{\dd y(t)}
	\end{equation*}
	pointwise over $t$. As also $a_n(t) \to a(t)$ pointwise (as $a_n \to A$ in $p$-variation), then by uniqueness of limits $a(t) = \frac{\dd L(y(T))}{\dd y(t)}$.
\end{proof}

\begin{remark}
	The above method of proof may be trivially extended to any RDE driven by a geometric $p$-rough path.
\end{remark}

\subsection{Comments}
\subsubsection{On ODEs}
Optimise-then-discretise for ODEs is also referred to as Pontryagin's Maximum Principle (PMP)\index{Pontryagin's maximum principle}. Often only special cases of the result shown here are presented; frequently only the behaviour at an optimum is considered.

Our proof is new here -- whilst combining flavours of various previous proofs -- and much simpler than most versions found in the literature. The fact that it lacks any meaningful reliance on the differentiability of the forward or reverse sensitivities is what allows the later generalisation to the CDE case.

The basic idea of finding two processes $(a, z)$ for which $\nicefrac{\dd}{\dd t}(a(t)z(t)) = 0$ is the same notion used in duality of stochastic processes \cite[Proposition 4.1.(ii)]{stochastic-duality}, and was inspired by the fact that PMP may be proved in a similar way \cite[Proof 2.7]{pmp-proof}. The discrete analogue is \cite[Equation (3.4)]{griewank2008evaluating}. That the proof proceeds by considering the interaction between the forward and reverse sensitivities is vaguely reminiscent of \cite{jax-autodiff}, who derive reverse sensitivities by combining forward sensitivities and transposition rules.

\subsubsection{On CDEs}
Optimise-then-discretise for CDEs was first shown in \cite[Appendix A.1]{kidger2020sdeunpublished}, but this was never formally published. The proof we present here is substantially simpler than that of \cite{kidger2020sdeunpublished}, and is new here.

\subsubsection{On SDEs}\label{appendix:proofs:rough-sde-subtle}
Optimise-then-discretise for CDEs was first shown in \cite[Appendix A.2, Appendix A.3]{kidger2020sdeunpublished}, but this was never formally published. The presentation shown here follows essentially the same lines, whilst being a bit simpler and fixing some technical holes.

The following are technical notes (mostly for the expert) on the theorems given here.

\paragraph{The universal limit theorem (Theorem \ref{theorem:proofs:universal-limit-theorem})}\index{Universal limit theorem}
The statement given here is a slightly custom mish-mash of the different ways in which this theorem is sometimes expressed. The bulk of the statement comes from \cite[Theorem 5.3]{levy-lyons}, although we have simplified the statements about $p$-variation from the general (potentially non-geometric) form given there to the geometric-only form considered here.% (In particular by consider rough paths as taking values at points rather than over intervals.)

Surprisingly, we could not find a form of this theorem which explicitly included the convergence of the initial points $\zeta_n$, as is stated here. This may be recovered from Davie's Lemma \cite[Theorem 10.29]{FrizVictoir10}.

For simplicity of presentation the statement given here has elided the usual notion by which integrator and integrand are coupled together into a single rough path.

\paragraph{The universal limit theorem for Stratonovich SDEs (Theorem \ref{theorem:proofs:universal-limit-stratonovich})}\index{Universal limit theorem}
It is possible to admit lower regularity on the drift $\mu$ than the $\Lip(\gamma)$ assumed here. This is because the drift is not integrated against Brownian motion, but is only integrated against time -- for which classical ODE theory would demand only that $\mu$ be Lipschitz. See \cite[Chapter 12]{FrizVictoir10}.

Note the dependence on time in the vector fields $(\mu, \sigma)$, despite this not appearing in Theorem \ref{theorem:proofs:universal-limit-theorem}. Technically speaking we have accomplished this by concatenating $\dd t = 1 \dd t + 0 \,\dd w_n(t)$ to each $y_n$, so that time becomes part of the state. This is only possible because time is also part of the control.

Having taken the control to be the limit of $(t, w_n(t)) \in \reals^{1 + d_w}$, its rough lift will actually be in $\reals \times \reals^{1 + d_w} \times \reals^{(1 + d_w) \times (1 + d_w)}$, which is larger than the two separate $\Tau(t) \in \reals \times \reals \times \reals$, $W(t) \in \reals \times \reals^{d_w} \times \reals^{d_w \times d_w}$. We have elided the reduction to two separate terms, and \eqref{eq:proofs:_univeral-limit-stratonovich-rough} may just be considered a formal notation for the `true' RDE if the reader so prefers.

\paragraph{Optimise-then-discretise for Stratonovich SDEs (Theorem \ref{theorem:proofs:sde-adjoint})}
For all of the theorems we have seen in this section -- ODEs, CDEs, and SDEs -- we have for simplicity only considered derivatives of the solution with respect to the initial condition $y_0$. In general we may wish to also consider derivatives with respect to either the driving signal $x$ or the vector field $f$.
	
To the best of our knowledge, no complete account of every case (both forward and backward sensitivities; derivatives with respect to all of $y_0$, $x$, $f$; ODEs, CDEs, SDEs or the general RDE case) yet exists in the literature. (Although the forward sensitivity with respect to $x$ may be found in \cite[Theorem 4.4, Theorem 11.3, Exercise 11.10]{FrizVictoir10}.)
	
In practice the result we have shown is almost always the most important, from which important special cases of the other sensitivities may be derived. For example, derivatives with respect to $\theta$ for $f = f_\theta$ may be derived by replacing $y$ with $[y, \theta]$, $y_0$ with $[y_0, \theta]$ and $f_\theta$ with $f(y, \theta) = [f_\theta, 0]$.

The assumed regularity of $\Lip(\gamma + 1)$ is for simplicity of presentation and is much higher than is probably necessary. As with the universal limit theorem for Stratonovich SDEs, we expect to require only minimal regularity on the drift. Moreover substituting the universal limit theorem for \cite[Theorem 17.1]{FrizVictoir10} when obtaining $\mathbb{M}$ would allow for only $\Lip(\gamma)$ regularity on the diffusion. Something similar could like be arranged when obtaining $\mathbb{J}$.

Likewise for simplicity of presentation, the proof leaves a few things implicit (including the inclusion of $y_n$ in the control when taking the limit in $\Phi_n$; that the Jacobian $\nicefrac{\dd y_n(T)}{\dd y_n(t)}$ should be thought of as a solution map from $t$ to $T$).

\section{Convergence and stability of the reversible Heun method}\label{appendix:numerical:ode-reversible-heun}\index{Reversible Heun}
Recall the definition of the reversible Heun method, as applied to ODEs.
\begin{align*}
	t_{j+1} &= t_j + \Delta t,\\
	\widehat{y}_{j+1} &= 2 y_j - \widehat{y}_j + \mu_j \Delta t,\\
	\mu_{j+1} &= \mu(t_{j+1}, \widehat{y}_{j+1}),\\
	y_{j+1} &= y_j + \frac{1}{2}(\mu_j + \mu_{j+1})\Delta t.
\end{align*}

(See Section \ref{section:numerical:reversible-heun} and/or \cite[Appendix D]{kidger2021sde2} for the full SDE case.)

\subsection{Convergence}
\reversibleheunconvergence*
\begin{proof}
	Consider a two-step update over the $\widehat{y}$ component. Then
	\begin{align*}
		\widehat{y}_{j+1} &= \widehat{y}_{j-1} + 2 \mu(t_{j}, \widehat{y}_j) \Delta t,
	\end{align*}
	which is precisely the equation for the leapfrog/midpoint method \cite{shampine-leapfrog}; see also Section \ref{section:numerical:leapfrog-midpoint}. This is a second-order method.
	
	Therefore
	\begin{align*}
		y_n &= y_0 + \sum_{j=0}^{n-1} \frac{1}{2}\left(\mu_j + \mu_{j+1}\right)\Delta t\\
		&= y_0 + \sum_{j=0}^{n-1} \frac{1}{2}\left(\mu(t_j, y(t_j)) + \mu(t_{j+1}, y(t_{j+1})\right) + \bigO{\Delta t^2})\Delta t\\
		&= y_0 + \sum_{j=0}^{n-1} \left(\frac{1}{2}\left(\mu(t_j, y(t_j)) + \mu(t_{j+1}, y(t_{j+1}))\right)\Delta t  + \bigO{\Delta t^3}\right)\\
		&= y_0 + \sum_{j=0}^{n-1} \left(\int_{t_j}^{t_{j+1}} \mu(t, y(t)) \,\dd t + \bigO{\Delta t^3}  + \bigO{\Delta t^3}\right)\\
		&= y(t_n) + \bigO{\Delta t^2}.
	\end{align*}
\end{proof}

\subsection{Stability}
\begin{definition}
	Fix some numerical differential equation solver (we will consider just the reversible Heun method). Let $\{y_{j, \lambda, \Delta t}\}_{j\in\naturals}$ be the numerical approximation to the linear (Dahlquist) test equation
	\begin{equation*}
		y(0) \in \reals,\qquad\frac{\dd y}{\dd t}(t) = \lambda y(t)\qquad\text{for $t \in [0, \infty)$}
	\end{equation*}
	with $\lambda \in \complexes$, numerical step size $\Delta t > 0$ and $y_{j, \lambda, \Delta t} \approx y(j \Delta t)$. We define the \emph{region of stability} as
	\begin{equation*}
		\set{\lambda \Delta t \in \complexes}{\text{$\{y_{j, \lambda, \Delta t}\}_{j\in\naturals}$ is uniformly bounded over $j$}}.
	\end{equation*}
	That is, there exists a constant $C$ depending on $\lambda$ and $\Delta t$ but independent of $j$ for which $\abs{y_{j, \lambda, \Delta t}} < C$.
\end{definition}

\begin{remark}
	We have chosen to define stability in terms of the boundedness of the numerical solution. (Which is the behaviour of the analytical solution for $\realpart(\lambda) \leq 0$.) Some authors define the region of stability in terms of the slightly stronger condition that the numerical solution converges to zero. (Which is the behaviour of the analytical solution for $\realpart(\lambda) < 0$.)
\end{remark}

\reversibleheunstability*
\begin{proof}
	Consider a two-step update over the $\widehat{y}$ component. Then
	\begin{align*}
		\widehat{y}_{j+1} &= \widehat{y}_{j-1} + 2 \mu(t_{j}, \widehat{y}_j) \Delta t,
	\end{align*}
	which is precisely the equation for the leapfrog/midpoint method \cite{shampine-leapfrog}; see also Section \ref{section:numerical:leapfrog-midpoint}.
	
	This is a difference equation for $\widehat{y}$, which may be solved explicitly to obtain
	\begin{equation*}
		\widehat{y}_j = \alpha_1 \eta^j + \beta \kappa^j,
	\end{equation*}
	where
	\begin{align*}
		\alpha &= \frac{1}{2}y_0\left(1 + \frac{1}{\sqrt{1 + \lambda^2\Delta t^2}}\right),\\
		\beta &= \frac{1}{2}y_0\left(1 - \frac{1}{\sqrt{1 + \lambda^2\Delta t^2}}\right),\\
		\eta &= \lambda \Delta t + \sqrt{1 + \lambda^2\Delta t^2},\\
		\kappa &= \lambda \Delta t - \sqrt{1 + \lambda^2\Delta t^2}.
	\end{align*}
	(With $z \mapsto \sqrt{1 + z^2}$ putting branch cuts down $(-i\infty, -i)$ and $(i, i \infty)$.)

	Therefore
	\begin{align}
		y_n &= y_0 + \sum_{j=0}^{n-1} \frac{1}{2}\left(\mu_j + \mu_{j+1}\right)\Delta t\nonumber\\
		&= y_0 + \frac{1}{2}\lambda \Delta t \alpha (1 + \eta) \sum_{j=0}^{n-1} \eta^j + \frac{1}{2}\lambda \Delta t \beta (1 + \kappa) \sum_{j=0}^{n-1} \kappa^j\nonumber\\
		&= y_0 + \frac{1}{2}\lambda \Delta t \alpha \frac{1 + \eta}{1 - \eta}(1 - \eta^n) + \frac{1}{2} \Delta t \beta \frac{1 + \kappa}{1 - \kappa}(1 - \kappa^n).\label{eq:proofs:_reversible-heun-stability}
	\end{align}

	Consider when $\lambda \Delta t \in [-i, i]$. Then $\eta = \lambda \Delta t + \sqrt{1 - \abs{\lambda \Delta t}^2}$ and $\kappa = \lambda \Delta t - \sqrt{1 - \abs{\lambda \Delta t}^2}$, so that
	\begin{align*}
		\abs{\eta}^2 &= \abs{\lambda \Delta t}^2 + (1 - \abs{\lambda \Delta t}^2) = 1,\\	\abs{\kappa}^2 &= \abs{\lambda \Delta t}^2 + (1 - \abs{\lambda \Delta t}^2) = 1,
	\end{align*}
	and therefore by \eqref{eq:proofs:_reversible-heun-stability} $\abs{y_n}$ is bounded independent of $n$.
	
	Conversely consider when $\lambda \Delta t \notin [-i, i]$. Then $\abs{\eta} \neq 1$. (A fact most easily verified via the usual `proof by dodgy diagram'\footnote{A term which we must thank Hilary Priestley for introducing to our lexicon.} typically used for determining the image of a composition of conformal functions.) Now $\eta \kappa = -1$ so one term in \ref{eq:proofs:_reversible-heun-stability} will decay and the other will blow up as $n \to \infty$; consequently $\abs{y_n}$ is not bounded over $n$.
\end{proof}

\begin{remark}
	This is the same region of stability as both the asynchronous leapfrog method \cite[Appendix A.4]{zhuang2021mali} and the leapfrog/midpoint method \cite[Section 2]{shampine-leapfrog}.
\end{remark}

\section{Brownian Interval}\label{appendix:proofs:binterval}\index{Brownian!Interval}

This Appendix continues the discussion and definition of the Brownian Interval of Section \ref{section:numerical:brownian-interval}.

\subsection{Algorithmic definitions}
See Algorithms \ref{alg:numerical:binterval-sampling}--\ref{alg:numerical:sample} for the full description of how the binary tree is traversed, modified, and $w(s, t)$ subsequently sampled.

Let \textit{List} be an ordered data structure that can be appended to, and iterated over sequentially. For example a linked list would suffice. Let \textit{Node} denote a 5-tuple consisting of an interval, a seed, and three optional \textit{Node}s, corresponding to the parent node, and two child nodes, respectively. (Optional as the root has no parent and leaves have no children.)

We let $\texttt{split\_seed}$ denote a splittable PRNG as above and \texttt{bridge} to denote equation \eqref{eq:numerical:binterval-bbridge}. We use $*$ to denote an unfilled part of the data structure, equivalent to \texttt{None} in Python or a null pointer in C/C++; in particular this is used as a placeholder for the (nonexistent) children of leaf nodes and the (nonexistent) parent of the root node.

We use $=$ to denote the creation of a new local variable, and $\leftarrow$ to denote in-place modification of a variable.

We use $x : T$ to denote that $x$ is a value with type $T$.

\begin{algorithm}
	\SetKwInput{kwState}{State}
	\SetKwInput{kwInput}{Input}
	\SetKwProg{Def}{def}{:}{}
	\SetAlgoVlined
	\kwInput{Interval $[s, t] \subseteq [0, T]$}
	\kwState{Binary tree with elements of type \textit{Node}, with root $\widehat{I} = ([0, T], \widehat{\rho}, *, \widehat{I}_\texttt{left}, \widehat{I}_\texttt{right})$. A \textit{Node} $\widehat{J}$, which provides a hint for where to start the traversal from (for efficiency).}
	\KwResult{Sample increment $W_{s, t}$}
	\quad\\
	
	\# The returned `nodes' is a list of \textit{Node}s whose intervals partition $[s, t]$.\\
	\# Practically speaking this will usually have only one or two elements.\\
	nodes = \texttt{traverse}($\widehat{J}, [s, t]$)\\
	$\widehat{J} \leftarrow \text{nodes}[-1]$ \qquad\# last element of `nodes'\\
	return $\sum_{I \in \text{nodes}} \texttt{sample}(I, \widehat{I})$
	\caption{Sampling the Brownian Interval}\label{alg:numerical:binterval-sampling}
\end{algorithm}

\begin{algorithm}
	\SetKwProg{Def}{def}{:}{}
	\SetAlgoLined
	
	\SetKwProg{Def}{def}{:}{}
	\SetAlgoLined
	
	\Def{\emph{\texttt{traverse}}($I$ : \textit{Node},\, $[c, d]$ : \textit{Interval})}{
		Let nodes be an empty \textit{List}.\\
		\texttt{traverse\_impl}($I$, $[c, d]$, nodes)\\
		return nodes\\
	}
	\quad\\
	
	\Def{\emph{\texttt{traverse\_impl}}($I$ : \textit{Node},\, $[c, d]$ : \textit{Interval}, \emph{nodes} : \textit{List[Node]})}{
		Decompose $([a, b], \rho, I_{\texttt{parent}}, I_{\texttt{left}}, I_{\texttt{right}}) = I$\\
		\quad\\
		\# Outside our jurisdiction - pass to our parent\\
		\If{$c < a$ \normalfont{or} $d > b$}{
			\texttt{traverse\_impl}($I_\texttt{parent}, [c, d]$, nodes)\\
			return\\
		}
		\quad\\
		
		\# It's $I$ that is sought. Add $I$ to the list and return.\\
		\If{$c = a$ \normalfont{and} $d = b$}{
			nodes.append($I$)\\
			return\\
		}
		\quad\\
		
		\# Check if $I$ is a leaf or not.\\
		\eIf{$I_\texttt{left}$ \normalfont{is} $*$}{
			\# $I$ is a leaf\\
			\If{$a = c$}{
				\# Create children and add on the left child.\\
				\texttt{bisect}($I, d$)\qquad\# $I_\texttt{left}$ is created.\\
				nodes.append($I_\texttt{left}$)\\
				return\\[2pt]
			}
			\# Otherwise create children and pass on to our right child.\\
			\texttt{bisect}($I, c$)\qquad\# $I_\texttt{right}$ is created.\\
			\texttt{traverse\_impl}($I_\texttt{right}, [c, d]$, nodes)\\
			return\\
		}{
			\# $I$ is not a leaf.\\
			Decompose $([a, m], \rho_\texttt{left}, I, I_{l\,l}, I_{lr}) = I_\texttt{left}$\\
			\If{$d \leq m$}{
				\# Strictly our left child's problem.\\
				\texttt{traverse\_impl}($I_\texttt{left}, [c, d]$, nodes)\\
				return \\
			}
			\If{$c \geq m$}{
				\# Strictly our right child's problem.\\
				\texttt{traverse\_impl}($I_\texttt{right}, [c, d]$, nodes)\\
				return \\[2pt]
			}
			\# A problem for both of our children.\\
			\texttt{traverse\_impl}($I_\texttt{left}, [c, m]$, nodes)\\
			\texttt{traverse\_impl}($I_\texttt{right}, [m, d]$, nodes)\\
			return\\
		}
	}
	\caption{Definitions of \texttt{traverse} and \texttt{traverse\_impl}. The argument nodes is passed by reference, that is to say it is mutated and these mutations are visible to the calling function.}
\end{algorithm}

\begin{algorithm}
	\SetKwProg{Def}{def}{:}{}
	\SetAlgoLined
	
	\Def{\emph{\texttt{bisect}}($I$ : \textit{Node}, x : $\reals$)}{
		\# Only called on leaf nodes\\
		Decompose $([a, b], \rho, I_{\texttt{parent}}, *, *) = I$\\
		$\rho_{\texttt{left}}, \rho_{\texttt{right}} = \texttt{split\_seed}(\rho)$\\
		$I_{\texttt{left}} = ([a, x], \rho_{\texttt{left}}, I, *, *)$\\
		$I_{\texttt{right}} = ([x, b], \rho_{\texttt{right}}, I, *, *)$\\
		$I \leftarrow ([a, b], \sigma, I_{\texttt{parent}}, I_{\texttt{left}}, I_{\texttt{right}})$\\
		return\\
	}
	\caption{Definition of \texttt{bisect}}\label{alg:numerical:bisect}
\end{algorithm}

\begin{algorithm}
	\SetKwProg{Def}{def}{:}{}
	\SetAlgoVlined
	
	\Def{\emph{\texttt{sample}($I$ : \textit{Node}, $\widehat{I}$ : \textit{Node})}}{
		\If{$I$ {\normalfont{is}} $\widehat{I}$}{
			Decompose $([a, b], \widehat{\rho}, *, \widehat{I}_{\mathrm{left}}, \widehat{I}_{\mathrm{left}}) = \widehat{I}$\\
			return $\normal{0}{T}$ sampled with seed $\widehat{\rho}$.
		}
		Decompose $([a, b], \rho, I_{\texttt{parent}}, I_{\texttt{left}}, I_{\texttt{right}}) = I$\\
		Decompose $([a_p, b_p], \rho_p, I_{pp}, I_{lp}, I_{rp}) = I_{\texttt{parent}}$\\
		$w_{\texttt{parent}}$ = \texttt{sample}($I_\texttt{parent}$)\\
		\eIf{$I$ \normalfont{is} $I_{rp}$}{
			$w_\texttt{left} \sim \texttt{bridge}(a_p, b_p, a, w_{\texttt{parent}})$ sampled with seed $\rho$\\
			return $w_{\texttt{parent}} - w_\texttt{left}$
		}{
			\# $I$ \normalfont{is} $I_{lp}$\\
			return $\texttt{bridge}(a_p, b_p, b, w_{\texttt{parent}})$ sampled with seed $\rho$\\
		}
	}
	\quad\\
	
	\texttt{sample} = \texttt{LRUCache}(\texttt{sample})
	
	\caption{Definition of \texttt{sample}}\label{alg:numerical:sample}
\end{algorithm}

\subsection{Discussion}

There are some further technical considerations worth mentioning. Recall that the context we are explicitly considering is when sampling Brownian motion to solve an SDE forwards in time, then the adjoint backwards in time, and then discarding the Brownian motion. This motivates several of the choices here.

\paragraph{Small intervals} First, the access patterns of SDE solvers are quite specific. Queries will be over relatively small intervals: the step that the solver is making. This means that the list of nodes populated by \texttt{traverse} is typically small: usually only consisting of a single element; occasionally two.

In contrast if the Brownian Interval has built up a reasonable tree of previous queries, and was then queried over $[0, s]$ for $s \gg 0$, then a long (inefficient) list would be returned. It is the fact that SDE solvers do not make such queries that means this is acceptable.

\paragraph{Search hints: starting from $\widehat{J}$} Moreover, the queries are either just ahead (fixed-step solvers; accepted steps of adaptive-step solvers) or just before (rejected steps of adaptive-step solvers) previous queries. Thus in Algorithm \ref{alg:numerical:binterval-sampling}, we keep track of the most recent node $\widehat{J}$, so that we begin \texttt{traverse} near to the correct location. This ensures the modal time complexity of the search procedure is only $\bigO{1}$, and not $\bigO{\log(1/h)}$ in the average step size $h$, which for example would be the case if searching commenced from the root on every query.

\paragraph{LRU cache} The fact that queries are often close to one another is also what makes the strategy of using an LRU (least recently used) cache work. Most queries will correspond to a node that have a recently-computed parent in the cache.

\paragraph{Backward pass} The queries are broadly made left-to-right (on the forward pass), and then right-to-left (on the backward pass). (Other than the occasional rejected adaptive step.)

Left to its own devices, the forward pass will thus build up a highly imbalanced binary tree. At any one time, the LRU cache will contain only nodes whose intervals are a subset of some contiguous subinterval $[s, t]$ of the query space $[0, T]$. Letting $n$ be the number of queries on the forward pass, then this means that the backward pass will consume $\bigO{n^2}$ time -- each time the backward pass moves past $s$, then queries will miss the LRU cache, and a full recomputation to the root  will be triggered, costing $\bigO{n}$. This will then hold only nodes whose intervals are subsets of some contiguous subinterval $[u, s]$: once we move past $u$ then this $\bigO{n}$ procedure is repeated, $\bigO{n}$ times. This is clearly undesirable.

This is precisely analogous to the classical problem of optimal recomputation for performing backpropagation, whereby a dependency graph is constructed, certain values are checkpointed, and a minimal amount of recomputation is desired; see \cite{treeverse}.

In principle the same solution may be applied: apply a snapshotting procedure in which specific extra nodes are held in the cache. This is a perfectly acceptable solution, but implementing it requires some additional engineering effort, carefully determining which nodes to augment the cache with.

Fortunately, we have an advantage that \cite{treeverse} does not: we have some control over the dependency structure between the nodes, as we are free to prespecify any dependency structure we like. That is, we do not have to start the binary tree as just a stump. We may exploit this to produce an easier solution.

Let the size of the LRU cache be $L$ and let $\nu$ be some estimate of the average step size of the SDE solver (which may be fixed and known if using a fixed step size solver, or estimated from the first few steps if using an adaptive step size solver). Then \emph{before a user makes any further queries}, we simply make some queries of our own. These queries correspond to the intervals $[0, T/2], [T/2, T], [0, T/4], [T/4, T/2], \ldots$, so as to create a dyadic tree, such that the smallest intervals (the final ones in this sequence) are of size not more than $\nu L$. (In practice we use $\frac{4}{5}\nu L$ as an additional safety factor.)

Letting $[s, t]$ be some interval at the bottom of this dyadic tree, where $t \approx s + \frac{4}{5} \nu L$, then we are capable of holding every node within this interval in the LRU cache. Once we move past $s$ on the backward pass, then we may in turn hold the entire previous subinterval $[u, s]$ in the LRU cache, and in particular the values of the nodes whose intervals lie within $[u, s]$ may be computed in only logarithmic time, due to the dyadic tree structure.

This is now analogous to the Virtual Brownian Tree of \cite{gaines, scalable-sde}. (Up to the use of intervals rather than points.) If desired, this approach may be loosely interpreted as placing a Brownian Interval on every leaf of a shallow Virtual Brownian Tree.

\paragraph{Recursion errors} We find that for some problems, the recursive computations of \texttt{traverse} (and in principle also \texttt{sample}, but this is less of an issue due to the LRU cache) can occasionally grow very deep. In particular this occurs when crossing the midpoint of the pre-specified tree: for this particular query, the traversal must ascend the tree to the root, and then descend all the way down again. As such \texttt{traverse} should be implemented with trampolining and/or tail recursion to avoid maximum depth recursion errors.

\paragraph{CPU vs GPU memory} We describe this algorithm as requiring only constant memory. To be more precise, the algorithm requires only constant GPU memory, corresponding to the fixed size of the LRU cache. As the Brownian Interval receives queries then its internal tree tracking dependencies will grow, and CPU memory will increase. For deep learning models, GPU memory is usually the limiting (and so more relevant) factor.
\chapter{Experimental Details}\label{appendix:experimental}
\section{Continuous normalising flows on images}\label{appendix:experimental:cnf}
This appendix provides the details for the example of Section \ref{section:ode:cnf-example}.

\textit{At time of writing, this experiment may be found implemented as an example in the Diffrax software package \cite{diffrax}.}

The experiment was implemented using the JAX, Equinox, Diffrax, and Optax software libraries \cite{jax2018github, equinox, diffrax, optax2020github}. (Providing autodifferentiation, neural networks, differential equation solvers, and optimisers respectively.)

The differential equation was solved from $t=0$ to $t=0.5$, with fixed timestep of size $0.05$, using the Tsitouras 5(4) solver \cite{tsit5}. Backpropagation was performed via discretise-then-optimise (Section \ref{section:numerical:adjoint-ode}).

Every operation was performed at 32-bit floating point precision.

The optimiser used was `AdamW' \cite{kingma2015, optax2020github}, with a batch size of 1000, a learning rate of $10^{-3}$, and a weight decay of $10^{-5}$. It was trained for 10\,000 steps. Each step took a couple of seconds on an A100 GPU.

The vector field was parameterised as an MLP acting on the state $y$, except that each affine transformation was replaced by the variant layer of Section \ref{section:ode:variant-layers}. (Which induces a time dependence.) The activation function was taken to be tanh.

For the `target' dataset, the width of each hidden layer was 128, and 3 hidden layers were used. 2 CNFs were stacked on top of each other to produce the overall transformation. (Equivalently, it was a single CNF with piecewise vector field, split into two pieces, as in Section \ref{section:ode:stacking}.)

For the `cat' dataset, the width of each hidden layer was 64, and 3 hidden layers were used. 2 CNFs were stacked on top of each other to produce the overall transformation.

For the `butterfly' dataset, the width of each hidden layer was 64, and 3 hidden layers were used. 3 CNFs were stacked on top of each other to produce the overall transformation.

Each dataset is normalised to have zero mean and unit variance.

Due to the low dimensionality, exact trace-Jacobian calculations were used (not the approximate scheme of Section \ref{section:ode:hutchinson}).

\section{Latent ODEs on decaying oscillators}\label{appendix:experimental:latent-odes}
This appendix provides the details for the example of Section \ref{section:ode:latent-ode-example}.

\textit{At time of writing, this experiment may be found implemented as an example in the Diffrax software package \cite{diffrax}.}

The experiment was implemented using the JAX, Equinox, Diffrax, and Optax software libraries \cite{jax2018github, equinox, diffrax, optax2020github}. (Providing autodifferentiation, neural networks, differential equation solvers, and optimisers respectively.)

A dataset of 10\,000 sample paths were produced, as solutions to the linear differential equation
\begin{equation*}
	\frac{\dd}{\dd t} \begin{bmatrix}y(t)\\z(t)\end{bmatrix} = \begin{bmatrix}-0.1 & 1.3\\ -1 & -0.1\end{bmatrix} \begin{bmatrix}y(t)\\z(t)\end{bmatrix}.
\end{equation*}
Correspondingly the data dimensionality is $d=2$.

The initial $y(0)$ was sampled from a two-dimensional standard normal distribution, independently for each sample path. The time interval solved over was taken to be $[0, T]$, where $T \sim \uniform{2}{3}$ independently for each sample path. Each sample path was observed at 20 time points independently sampled from $\uniform{0}{T}$.

The differential equation was solved over each $[0, T]$ at train time, and over the larger interval $[0, 12]$ at test time. The solver used was Dormand--Prince 5(4) \cite{dopri5}, with a fixed timestep of $0.4$. Backpropagation was performed via discretise-then-optimise (Section \ref{section:numerical:adjoint-ode}).

Every operation was performed at 32-bit floating point precision.

The optimiser used was Adam \cite{kingma2015}, with a batch size of 256, and a learning rate of $10^{-2}$. It was trained for 250 steps. Each step took about half a second on an A100 GPU.

The dimensionality of the evolving state $y$ is taken to be $d_l = 16$. The vector field $f_\theta$ was taken to be autonomous and of the form
\begin{equation*}
	y \mapsto \alpha \tanh(\mathrm{MLP}(y)),
\end{equation*}
where $\alpha \in \reals$ is a learnt parameter initialised at one, and $\mathrm{MLP}$ is an MLP of width 16, with 3 hidden layers using softplus activation functions.

The initial noise-to-$y(0)$ network $g_\theta$ is taken to be an MLP of width 16 and 3 hidden layers, using ReLU activation functions. The latent space is taken of dimension $d_m = 16$.

The probability $p_{\theta, y}$ is parameterised as $\normal{\ell_\theta(y)}{\eye{d_l}}$, where $\ell_\theta \colon \reals^{d_l} \to \reals^d$ is learnt and affine.

The encoder $\nu_\theta$ is parameterised as a single-layer GRU with hidden size 16. The final hidden state of size 16 is mapped into $\reals^{d_m} \times \reals^{d_m}$ by a learnt affine transformation, to produce the mean $\mu_{\theta, \mathbf{x}} \in \reals^{d_m}$ and the log-standard deviation $\log \sigma_{\theta, \mathbf{x}} \in \reals^{d_m}$.

\section{Neural CDEs on spirals}\label{appendix:cde:spiral-experiment}
This appendix provides the details for the example of Section \ref{section:cde:spiral-experiment}.

\textit{At time of writing, this experiment may be found implemented as an example in the Diffrax software package \cite{diffrax}.}

The experiment was implemented using the JAX, Equinox, Diffrax, and Optax software libraries \cite{jax2018github, equinox, diffrax, optax2020github}. (Providing autodifferentiation, neural networks, differential equation solvers, and optimisers respectively.)

The dataset is of size 256. Each time series consists of 100 regularly sampled points over the interval $[0, 4 \pi] \ni t$ of
\begin{equation*}
	\begin{bmatrix} y(t) \\ z(t) \end{bmatrix} = \exp\left(t \begin{bmatrix}-0.3 & 2\\ -2 & -0.3\end{bmatrix}\right) \begin{bmatrix} y_0 \\ z_0 \end{bmatrix},
\end{equation*}
where $(y_0, z_0) = (\cos \theta, \sin \theta)$ with $\theta \sim \uniform{0}{2 \pi}$. Half of the time series are then flipped in the $y$ axis so that the dataset consists of 128 clockwise and 128 counter-clockwise spirals.

The neural CDE was solved by reducing it to an ODE as in Section \ref{section:cde:evaluate}, and using the Tsitouras 5(4) solver \cite{tsit5}. The step size is selected adaptively, and the initial step size is selected automatically, as in \cite[Section II.4]{hairer}. Backpropagation was performed via discretise-then-optimise (Section \ref{section:numerical:adjoint-cde-sde}).

Every operation was performed at 32-bit floating point precision.

The optimiser used was Adam \cite{kingma2015}, with a batch size of 32, and a learning rate of $10^{-2}$. It was trained for just 20 steps. Each step took about 1.5 seconds on an A100 GPU.

The initial network $\zeta_\theta$ is parameterised as an MLP with a single hidden layer of width 128 and ReLU activation functions. The vector field $f_\theta$ is parameterised as an MLP with a single hidden layer of width 128 and softplus activation functions. The output of the MLP is passed through a tanh as discussed in Section \ref{section:cde:gating}. The evolving hidden state $y$ is taken to have $d_l = 8$ dimensions. The output of the model is given by applying a learnt affine transform $\ell_\theta \colon \reals^{d_l} \to \reals$, followed by a sigmoid to map the result into $(0, 1)$.

The interpolation scheme used is Hermite cubic splines with backward differences as discussed in Section \ref{section:cde:interpolation}.

The loss function used is binary cross-entropy.

The final model achieves 100\% (test) accuracy.

\section{Neural SDEs on time series}\label{appendix:experimental:sde}
This appendix provides details for the examples of Section \ref{section:sde:example}.

\subsection{Brownian motion}

\textit{At time of writing, this experiment may be found implemented as an example in the Diffrax software package \cite{diffrax}.}

The experiment was implemented using the JAX, Equinox, Diffrax, and Optax software libraries \cite{jax2018github, equinox, diffrax, optax2020github}. (Providing autodifferentiation, neural networks, differential equation solvers, and optimisers respectively.)

The dataset is of size 8192. Each sample is of $v + w(t)$, where $v \sim \uniform{-1}{1}$ and $w \colon [0, 10] \to \reals$ is a Brownian motion. Each time series consists of 11 regularly sampled points over the interval $[0, 10]$.

The neural SDE and neural CDE were both solved using the reversible Heun method (Section \ref{section:numerical:reversible-heun}), with unit step size. Backpropagation was performed via discretise-then-optimise\footnote{Which is in any case essentially equivalent to optimise-then-discretise when using a reversible solver.} (Section \ref{section:numerical:adjoint-cde-sde}).

Every operation was performed at 32-bit floating point precision.

The optimiser used -- for both generator and discriminator -- was RMSprop (which is similar to Adadelta, and used for simplicity as Optax does not provide a built-in Adadelta optimiser). The batch size was 1024. It was trained for 10\,000 steps. The generator and discriminator are trained via simultaneous gradient descent (rather than by alternating training steps for the generator and discriminator).

The initial network $\zeta_\phi$ in the generator used a learning rate of $2 \times 10^{-4}$. The other components of the generator ($\mu_\theta, \sigma_\theta, \alpha_\theta, \beta_\theta)$ used a learning rate of $2 \times 10^{-5}$. The initial network $\xi_\phi$ of the discriminator used a learning rate of $10^{-3}$. The other components of the generator ($f_\phi, g_\phi, m_\phi$) used a learning rate of $10^{-4}$.

All parameters (for both generator and discriminator) were initialised close to zero. In practice this was done by initialising them as per Equinox's default, and then multiplying every parameter by 0.01.

\begin{remark}
The discriminator uses a larger learning rate than the generator (by a factor of 5) as per \cite{two-timescale}. In brief: the discriminator must be able to `keep up' with the generator as it trains, so that it can always provide informative gradients. As such we may either train the discriminator for multiple steps for every step of the generator, or (since we use simultaneous gradient descent here) give the discriminator a larger learning rate.

The initial networks were taken to be use a larger learning rate as this was found to improve the speed at which they converged to the true distribution, without creating any instability.
\end{remark}

The initial networks $\zeta_\theta$ and $\xi_\phi$ were taken to be MLPs with a single hidden layer of width 16 and ReLU activation function. The vector fields $\mu_\theta$ and $\sigma_\theta$ were parameterised as
\begin{equation*}
(t, y) \mapsto \gamma \tanh(\mathrm{MLP}_\theta(t, y)),
\end{equation*}
where $\gamma$ is a learnt parameter randomly initialised from $\uniform{0.9}{1.1}$, and MLP$_\theta$ had a single hidden layer of width 16 with LipSwish activation function.\footnote{Which is not necessary for the generator -- just the discriminator, see Section \ref{section:sde:lipschitz} -- so this activation function was used just for simplicity.} The vector fields $f_\phi$ and $g_\phi$ were parameterised as
\begin{equation*}
	(t, y) \mapsto \tanh(\mathrm{MLP}_\theta(t, y)),
\end{equation*}
where MLP$_\theta$ had a single hidden layer of width 16 with LipSwish activation function.

The dimensionality of the initial noise was taken to be $d_v = 5$. The dimensionality of the Brownian motion was taken to be $d_w = 3$. The dimensionality of the evolving hidden state $y$ was taken to be $d_y = 16$.

Lipschitzness of the discriminator was maintained using careful clipping (Section \ref{section:sde:careful-clipping}). Both the real and the generated data were treated as time series and linearly interpolated before passing to the neural CDE (the discriminator).

The output of the discriminator was defined using the alternate formula $D = m_\phi \cdot h(0) + m_\phi \cdot h(T)$, to encourage better learning the initial distribution. (Contrast equation \eqref{eq:sde:discriminator} and see also Section \ref{section:sde:discriminator-parameterisation}.)

\subsection{Time-dependent Ornstein--Uhlenbeck process}
\textit{At time of writing, this experiment may be found implemented as an example in the Diffrax software package \cite{diffrax}.}

The dataset is taken to be samples from
\begin{equation*}
	y(0) \sim \uniform{-1}{1},\qquad \dd y(t) = (a t - b y) \,\dd t + ct \,\dd w(t),\qquad\text{for $t \in [0, 63]$},
\end{equation*}
with $a=0.02$, $b=0.1$, $c=0.013$. Samples were obtained from this equation by solving using the Euler--Maruyama method and a step size of 0.1. Each time series consisted of 64 regularly spaced points over the interval $[0, 63]$.

In all other respects this example is identical to the Brownian motion example discussed above.

\subsection{Damped harmonic oscillator}

This experiment was implemented using the PyTorch, torchdiffeq, and torchsde libraries \cite{pytorch, torchdiffeq, torchsde}. (Providing autodifferentiation, ordinary differential equation solvers, and stochastic differential equation solvers respectively.)

The dataset is of size 8192. Each sample is of
\begin{equation*}
	y_1(0), y_2(0) \sim \uniform{-1}{1},\qquad \dd \begin{bmatrix} y_1(t) \\ y_2(t) \end{bmatrix} = \begin{bmatrix} -0.01 & 0.13 \\ -0.1 & -0.01 \end{bmatrix} \begin{bmatrix} y_1(t) \\ y_2(t) \end{bmatrix} \, \dd t,
\end{equation*}
for $t \in [0, 100]$. Samples were obtained from this equation by solving using the Dormand--Prince 5(4) method with an adaptively chosen step size. Each time series consists of 101 regularly sampled points over the interval $[0, 100]$. Prior to training the dataset was normalised to have zero mean and unit standard deviation.

The auxiliary neural SDE was trained, and the neural SDE sampled, with the reversible Heun method (Section \ref{section:numerical:reversible-heun}) with unit step size. Backpropagation was performed via discretise-then-optimise (Section \ref{section:numerical:adjoint-cde-sde}).

Every operation was performed at 32-bit floating point precision.

The optimiser used was Adam \cite{kingma2015}, with a batch size of 1024, and trained over 20\,000 steps. The initial networks $\zeta_\theta$ and $\xi_\phi$ used a learning rate of $10^{-3}$. Every other component used a learning rate of $2 \times 10^{-4}$.

All parameters were initialised relatively small. In practice this was done by initialising them as per PyTorch's default, and then multiplying every parameter by 0.5. The parameters of the initial network $\zeta_\theta$ were instead multiplied by 0.25.

The auxiliary network $\nu_\phi$ was parameterised as $\nu_\phi(t, y, x) = \nu_{\phi, 1}(t, y, \nu_{\phi, 2}(\restr{x}{[t, T]}))$, where $\nu_{\phi, 1}$ was an MLP and $\nu_{\phi, 2}$ is the evaluation function $\nu_{\phi, 2}(\restr{x}{[t, T]}) = x(t)$. (Rounded to the nearest discrete timestamp.)

Every neural network was parameterised as an MLP with a single hidden layer of width 32 and LipSwish activation function. The vector fields $\mu_\theta$ and $\nu_{\phi, 1}$ additionally had a final tanh nonlinearity. The vector field $\sigma_\theta$ produced a diagonal matrix as its output, and additionally had a $z \mapsto \sigmoid(z) + 10^{-4}$ nonlinearity. (A simple way to avoid the numerical issues with the diffusion, as discussed in Section \ref{section:sde:diffusion-param}.)

The standard deviation outputted by the encoder $\xi_\phi$ (nominally in $(0, \infty)$) was performed by outputting a log-standard deviation (valued in $\reals$), and then clipping the log-standard deviation to the range $(-10, 10)$, to promote better numerical stability.

The dimensionality of the initial noise was taken to be $d_v = 2$. The dimensionality of the Brownian motion was taken to be $d_w = 2$. The dimensionality of the evolving hidden state $y$ was taken to be $d_y = 64$.

\subsection{Lorenz attractor}
This example considered samples from the Lorenz attractor
\begin{align*}
	&y \sim \normal{0}{\eye{3}},\\
	\dd y_1(t) &= a_1 (y_2(t) - y_1(t)) \,\dd t + b_1 y_1(t) \dd w(t),\\
	\dd y_2(t) &= (a_2 y_1(t) - y_1(t)y_3(t) )\,\dd t + b_2 y_2(t) \dd w(t),\\
	\dd y_3(t) &= (y_1(t) y_2(t) - a_3 y_3(t)) \,\dd t + b_3 y_3(t) \dd w(t),
\end{align*}
for $t \in [0, 2]$. We take specifically $a_1 = 10$, $a_2 = 28$, $a_3 = \frac{8}{3}$, $b_1 = 0.1$, $b_2 = 0.28$, $b_3 = 0.3$. Samples were obtained from this equation using Milstein's method and a step size of 0.1. Each time series consisted of 100 regularly spaced points over the interval $[0, 2]$.

Except as now otherwise stated, this was otherwise identical to the damped harmonic oscillator just discussed.

The latent SDE component continued to use the Adam optimiser. When training as an SDE-GAN, the Adadelta optimiser was used.\footnote{So that the components of the auxiliary neural SDE used in the latent SDE were associated only with an Adam optimiser, the discriminator was associated only with an Adadelta optimiser, and the generator was associated with both an Adam and an Adadelta optimiser independently of each other. This is arguably a little questionable -- the statistics tracked in the Adam and the Adadelta optimisers no longer do exactly what is expected -- but we found that training a latent SDE with Adadelta or training an SDE-GAN with Adam seemed to fail.}

The initial network $\xi_\phi$ of the discriminator (not to be confused with the same notation $\xi_\phi$ also being used in the latent SDE) was parameterised as an MLP with a single hidden layer of width 32 and LipSwish activation function. The vector fields $f_\phi$ and $g_\phi$ were parameterised as
\begin{equation*}
	(t, y) \mapsto \tanh(\mathrm{MLP}_\theta(t, y)),
\end{equation*}
where MLP$_\theta$ had a single hidden layer of width 32 with LipSwish activation function.

The dimensionality of the discriminator hidden state was taken to be $d_h = 64$.

Lipschitzness of the discriminator was maintained using careful clipping (Section \ref{section:sde:careful-clipping}). Both the real and the generated data were treated as time series and linearly interpolated before passing to the neural CDE (the discriminator).

The neural CDE of the discriminator was solved using the reversible Heun method (Section \ref{section:numerical:reversible-heun}).

The output of the discriminator was defined using the alternate formula $D = \kappa_\phi(x(0)) + m_\phi \cdot h(T)$, to encourage better learning the initial distribution, where $\kappa_\phi$ is an MLP mapping $\reals^{d_x} \to \reals$, parameterised with a single hidden layer of width 32 and LipSwish activation function. (Contrast equation \eqref{eq:sde:discriminator} and see also Section \ref{section:sde:discriminator-parameterisation}.)

\section{Symbolic regression on a nonlinear oscillator}\label{appendix:experimental:symbolic}
This appendix provides details for the example of Section \ref{section:misc:symbolic-example}.

\textit{At time of writing, this experiment may be found implemented as an example in the Diffrax software package \cite{diffrax}.}

The experiment was implemented using the JAX, Equinox, Diffrax, Optax, and PySR software libraries \cite{jax2018github, equinox, diffrax, optax2020github, pysr}. (Providing autodifferentiation, neural networks, differential equation solvers, gradient-based optimisers and regularised evolution algorithms respectively.)

The dataset is of size 256. Each time series consists of 100 regularly sampled points over the interval $[0, 10] \ni t$ of
\begin{equation*}
	\frac{\dd}{\dd t}\begin{bmatrix}x\\y\end{bmatrix}(t) = \begin{bmatrix}\frac{y(t)}{1 + y(t)}\\\frac{-x(t)}{1 + x(t)}\end{bmatrix} \quad\text{for $t \in [0, T]$},
\end{equation*}
where $x(0), y(0) \sim \uniform{-0.6}{1}$.

The neural ODE was solved using the Tsitouras 5(4) solver \cite{tsit5} with a fixed step size of 0.1.

Every operation was performed at 32-bit floating point precision.

The gradient-based optimiser used was AdaBelief \cite{zhuang2020adabelief}, with a batch size of 32, and a learning rate of $3 \times 10^{-3}$. It was trained for 5000 steps. Each step took about 0.9 seconds on an A100 GPU.

The first 500 steps of gradient-based optimisation were performed on only the first 10 sample points of each time series (so that approximately the interval $[0, 1]$ was considered instead). This helps to avoid local minima during training.

The neural vector field was parameterised as an MLP with two hidden layers, each of width 64.

The loss function used was $L^2$. The final loss was of order $\bigO{10^{-5}}$.
	
Symbolic regression was performed by flattening all observations together into a single dataset of size 25600 and randomly selecting some 2000 samples. We then performed regularised evolution on this dataset. Regularised evolution comes with numerous hyperparameters; unless otherwise specified the PySR version 0.6.13 defaults were used. We used 100 populations each of size 20. 10 rounds of optimisation were performed; 100 mutations were performed in each round and between each round equations migrated between populations. Constants in each expression were optimised using 50 steps of BFGS.

Symbolic regression produces a Pareto front of equations, trading off loss against complexity. Each equation on the Pareto front was fine-tuned using full-batch gradient descent (possibly superfluously, given the earlier use of BFGS) and a learning rate of $3 \times 10^{-4}$. The best equation was then selected as being the one minimising
\begin{equation*}
	\log_2(\text{loss}) + \text{complexity},
\end{equation*}
where `loss' is the $L^2$ loss when regressing $f_\theta(x(t), y(t))$ against $(x(t), y(t))$, and `complexity' is the number of symbols in the symbolic expression: for example $c$ is of size one, $c + x$ is of size three, and $a \times x + b \times y$ is of size seven. The use of base two in the logarithm serves as a quantitative measure of trading off loss against complexity: the use of an extra symbol must halve the loss if is to produce a `better' expression.

The constants of the symbolic expression are then optimised by gradient descent, by plugging it back into the original (neural) optimisation problem. The gradient-based optimiser used was full-batch Adam, with a learning rate of $3 \times 10^{-3}$. It was trained for 500 steps.

Finally, the constants are rounded to the nearest multiple of 0.01.

\section{Neural RDEs on BIDMC}\label{appendix:rde:experiment-details}
This appendix provides details for the example of Appendix \ref{appendix:rde:example}.

This experiment was implemented using the PyTorch, torchdiffeq, torchcde, and Signatory libraries \cite{pytorch, torchdiffeq, torchcde, signatory}. (Providing autodifferentiation, differential equation solvers, and logsignature computations respectively.)

(This experiment predates the creation of the Diffrax software library, and in any case and at time of writing there does not exist a JAX library for computing logsignatures.)

The dataset is split into a 70\%/15\%/15\% train/validation/test split. Each time series consists of 4\,000 points sampled at 125 Hertz. Each channel is normalised to have zero mean and unit variance.

The RDE was solved by reducing it to an ODE and using the RK4 with 3/8 rule solver. The step size was fixed, and was equal to the `step' hyperparameter (over either 8, 128 or 512 data points at once).

Every operation was performed at 32-bit floating point precision.

A batch size of 512 and a learning rate of $6.25\times 10^{-5}$ was used. If the validation loss failed to decrease over 15 epochs then the learning rate was reduce by a factor of 10. If the validation loss failed to improve over 60 epochs then training was terminated, and the model rolled back to the point at which it achieved the best validation loss.

Every neural network is parameterised as an MLP with three hidden layers of width 192 and ReLU activation functions. The evolving hidden state $y$ is taken to have $d_l = 64$ dimensions. These were selected as a result of hyperparameter optimisation for the baseline (competing) neural CDE model.

The interpolation scheme used in the data space (necessarily) linear interpolation. This is necessary as the only choice for which the logsignature can be computed efficiently. In addition linear interpolation was used to interpolate the sequence of logsignatures.

The loss function used is $L^2$ loss.

See also \cite[Appendix C]{morrill2021neuralrough} for details of this experiment.
\pagestyle{plain}
\printbibliography[heading=bibintoc, title={Bibliography}]
\chapter*{Notation}\addcontentsline{toc}{chapter}{Notation}
\renewcommand{\arraystretch}{1.3}
\begin{tabularx}{\textwidth}{lX}
	$\reals^{d_1 \times d_2}$ & The space of real matrices with $d_1$ rows and $d_2$ columns.\\
	$\reals^{d_1 \times \cdots \times d_k}$ & The space of tensors of shape $(d_1, \ldots, d_k)$.\\
	$\eye{d}$ & The $d \times d$ identity matrix.\\
	$\trace(A)$ & The trace of a square matrix $A \in \reals^{d \times d}$.\\
	$\diag(v)$ & The diagonal matrix in $\reals^{d \times d}$ whose diagonal entries are given by the vector $v \in \reals^d$.\\
	$\realpart(\lambda)$ & The real part of a complex number $\lambda \in \complexes$.\\
	$L(X; Y)$ & The space of linear (not affine) functions $X \to Y$. If $Y$ is omitted then $Y = \reals$.\\
	$\affine(X; Y)$ & The space of affine (not linear) functions $X \to Y$. If $Y$ is omitted then $Y = \reals$.\\
	$C(X; Y)$ & The space of continuous functions $X \to Y$. (With respect to some topologies on $X$ and $Y$.) If $Y$ is omitted then $Y = \reals$.\\
%	$C^k(X; Y)$ & The space of $k$-times differentiable functions $X \to Y$ whose $k$-th derivative is continuous. If $Y$ is omitted then $Y = \reals$.\\
	$L^p(X; Y)$ & The space of $p$-integrable functions $X \to Y$. If $Y$ is omitted then $Y = \reals$.\\
	$W^{1, p}(X; Y)$ & The space of $p$-integrable functions $X \to Y$ with $p$-integrable first derivative. If $Y$ is omitted then $Y = \reals$.\\
	$\BV(X; Y)$ & The space of (possibly discontinuous) functions $X \to Y$ with bounded variation. If $Y$ is omitted then $Y = \reals$.\\
	$\Lip(X; Y)$ & The space of Lipschitz functions $X \to Y$. If $Y$ is omitted then $Y = \reals$.\\
	$a \mapsto b$ & The function mapping $a$ to $b$. That is, $a \mapsto b$ denotes the function $f$ such that $f(a) = b$. (Sometimes referred to as an `anonymous' or `lambda' function.)\\
	$\norm{\,\cdot\,}_p$ & The $L^p$ norm.\\
	$\abs{\,\cdot\,}_{\BV}$ & The bounded variation seminorm.\\
	$\indicator{A}$ & The indicator function with value 1 when the condition $A$ is true, and value 0 when $A$ is false.\\
%	$a(x) \lesssim b(x)$ & There exists $C > 0$ such that $a(x) \leq C b(x)$ for all $x$.\\
	$\sim$ & Denotes sampling from a probability distribution: $x \sim \mu$ denotes a sample $x$ from probability distribution $\mu$.\\
	$\normal{\mu}{\sigma^2}$ & Normal distribution with mean $\mu$ and variance $\sigma^2$.\\
	$\uniform{a}{b}$ & Uniform distribution over $[a, b] \subseteq \reals$.\\
	$\prob$ & A probability measure; the probability of an event occurring or a statement being true.\\
	$\kl{\mathbb{P}}{\mathbb{Q}}$ & The Kullback--Leibler divergence between two probability measures $\mathbb{P}$ and $\mathbb{Q}$. Will also be written $\kl{X}{Y}$ to denote the KL divergence between the laws of two random variables $X$,$Y$, or $\kl{p}{q}$ to denote the KL divergence between the two probability measures corresponding to the densities $p$, $q$.\\
	$\int f(z(s)) \,\dd X(s)$ & A Riemann--Stieltjes integral driven by $X$. `$f\,\dd X$' refers to a matrix-vector product.\\
	$\mathop\circ \dd w(t)$ & Used to denote integration as a Stratonovich SDEs. (As opposed to just `$\dd w(t)$' denoting an It{\^o} SDE.)\\
	$\otimes$ &\begin{minipage}[t]{\linewidth}The tensor product, defined by
	\begin{equation*}
		\,\otimes \colon \reals^{d_1 \times \cdots \times d_k} \times \reals^{d_{k+1} \times \cdots \times d_m} \to \reals^{d_1 \times \cdots \times d_m},
	\end{equation*}
	\begin{equation*}
		\otimes \colon (a_{i_1, \ldots, i_k}, b_{i_{k+1}, \ldots, i_m}) \to a_{i_1, \ldots, i_k} b_{i_{k+1}, \ldots, i_m}.
	\end{equation*}
	When applied to two vectors, this is the outer product.\end{minipage}
\end{tabularx}

In addition, following the convention of the machine learning literature (and with apologies to the mathematicians), we sometimes use `$\max$' and `$\min$' where `$\sup$' and `$\inf$' would be technically correct.

\chapter*{Abbreviations}\addcontentsline{toc}{chapter}{Abbreviations}
Ordered alphabetically:

\begin{tabularx}{\textwidth}{ll}
	API & Application programming interface\\
	CDE & Controlled differential equation\\
	CNF & Continuous normalising flows\\
	CNN & Convolutional neural network\\
	DAG & Directed acyclic graph\\
	DEQ & Deep equilibrium model\\
	GAN & Generative adversarial network\\
	GRU & Gated recurrent unit\\
	JAX & JAX (not an abbreviation)\\
	KL & Kullback--Leibler (divergence)\\
	LASSO & Least absolute shrinkage and selection operator\\
	LRU & Least recently used (a form of caching)\\
	LSTM & Long-short term memory\\
	MLP & Multi-layer perceptron; feedforward neural network\\
	MMD & Maximum mean discrepancy\\
	NDE & Neural differential equation\\
	NCDE & Neural controlled differential equation\\
	NODE & Neural ordinary differential equation\\
	NRDE & Neural rough differential equation\\
	NSDE & Neural stochastic differential equation\\
	ODE & Ordinary differential equation\\
	PRNG & Pseudo-random number generator\\
	RDE & Rough differential equation\\
	RL & Reinforcement learning\\
	RMS & Root-mean-squared\\
	RNN & Recurrent neural network\\
	SDE & Stochastic differential equation\\
	SGD & Stochastic gradient descent\\
	SINDy & Sparse identification of nonlinear dynamics\\
	UDE & Universal differential equation\\
	&\hspace{\textwidth}
\end{tabularx}
\renewcommand{\arraystretch}{1}
\printindex

\end{document}